\definecolor{myteal}{HTML}{2380D8}
\definecolor{myred}{HTML}{EA2222}
\definecolor{mypurple}{HTML}{9900CC}
\definecolor{mygrey}{HTML}{494949}
\definecolor{mygreen}{HTML}{008000}
\definecolor{myblack}{HTML}{000000}
\definecolor{mywhite}{HTML}{FFFFFF}
\definecolor{yahooviolet}{RGB}{66, 2, 176}
\definecolor{putblue}{RGB}{0, 103, 144}
\definecolor{allegroorange}{RGB}{255, 90, 0}
\definecolor{putred}{RGB}{204,33,69}
\newcommand*{\eifstartswith}{\@expandtwoargs\ifstartswith}
\newcommand*{\ifstartswith}[2]{%
  \if\@car#1.\@nil\@car#2.\@nil
    \expandafter\@firstoftwo
  \else
    \expandafter\@secondoftwo
  \fi}
\newcommand{\Algo}[1]{\textsc{#1}}
\renewcommand{\vec}[1]{\boldsymbol{#1}}
\newcommand{\bx}{\vec{x}}
\newcommand{\by}{\vec{y}}
\newcommand{\bz}{\vec{z}}
\newcommand{\bw}{\vec{w}}
\newcommand{\bh}{\vec{h}}
\newcommand{\bc}{\vec{c}}
\newcommand{\balpha}{\vec{\alpha}}
\newcommand{\btau}{\vec{\tau}}
\newcommand{\calX}{\mathcal{X}}
\newcommand{\calY}{\mathcal{Y}}
\newcommand{\calD}{\mathcal{D}}
\newcommand{\calQ}{\mathcal{Q}}
\newcommand{\calL}{\mathcal{L}}
\newcommand{\calH}{\mathcal{H}}
\newcommand{\calS}{\mathcal{S}}
\newcommand{\dataset}{{\cal D}}
\newcommand{\heta}{\hat{\eta}}
\newcommand{\hy}{\hat{y}}
\newcommand{\pa}[1]{\mathrm{pa}(#1)}
\newcommand{\Path}[1]{\mathrm{Path}(#1)}
\newcommand{\lenpath}{\mathrm{len}}
\newcommand{\depth}{\mathrm{depth}}
\newcommand{\precat}[1]{precision$@#1$}
\newcommand{\Precat}[1]{Precision$@#1$}
\newcommand{\precatk}{precision$@k${}}
\newcommand{\Precatk}{Precision$@k${}}
\newcommand{\psimacro}{$\Psi_{\mathrm{macro}(\bx)}$ }
\newcommand{\psimicro}{$\Psi_{\mathrm{micro}(\bx)}$ }
\newcommand{\prob}{\mathbf{P}} 
\newcommand{\reg}{\mathrm{reg}}
\newcommand{\riskcond}{\mathrm{risk}}
\newcommand{\loss}{R}
\newcommand{\lossfunc}{\ell}
\newcommand{\lonemarginalserror}{$|\eta_j(\bx) - \hat{\eta}_j(\bx)|$}
\newcommand{\mM}{\boldsymbol{M}}
\newcommand{\mW}{\boldsymbol{W}}
\newcommand{\childs}[1]{\mathrm{Ch}(#1)}
\renewcommand{\root}{r}
\newcommand{\nodes}{V}
\newcommand{\vertices}{V}
\newcommand{\tree}{T}
\newcommand{\leaves}{L}
\newcommand{\leafnode}{l}
\newcommand{\labels}{\calL}
\newcommand{\vertex}{v}
\newcommand{\node}{v}
\newcommand{\FP}{\mathrm{FP}}
\newcommand{\FN}{\mathrm{FN}}
\newcommand{\multilabel}{multi-label}
\newcommand{\multiclass}{multi-class}
\newcommand{\eurlex}{EurLex-4K}
\newcommand{\amazoncatsmall}{AmazonCat-13K}
\newcommand{\wikiten}{Wiki10-30K}
\newcommand{\deliciouslarge}{DeliciousLarge-200K}
\newcommand{\wikilshtc}{WikiLSHTC-325K}
\newcommand{\wikipedia}{WikipediaLarge-500K}
\newcommand{\amazon}{Amazon-670K}
\newcommand{\amazonlarge}{Amazon-3M}
\algnewcommand{\IIf}[1]{\State\algorithmicif\ #1\ \algorithmicthen}
\algnewcommand{\IElse}[1]{\State\algorithmicelse\ #1\ }
\algnewcommand{\IElseIf}[1]{\State\algorithmicelse \algorithmicif\ #1\  \algorithmicthen}
\algnewcommand{\EndIIf}{\unskip\ \algorithmicend\ \algorithmicif}
\algnewcommand{\IfThen}[2]{\State\algorithmicif\ #1\ \algorithmicthen\ #2\ }
\algnewcommand{\ForDo}[2]{\State\algorithmicfor\ #1\ \algorithmicdo\ #2\ }
\newcommand{\assert}[1]{\llbracket #1 \rrbracket}
\newcommand{\given}{\, | \,}
\DeclareMathOperator*{\argmax}{\arg \max}
\DeclareMathOperator*{\argmin}{\arg \min}
\newtheorem{lemma}{Lemma}
\newtheorem{proposition}{Proposition}
\newtheorem{corollary}{Corollary}
\newtheorem{remark}{Remark}
\newtheorem{definition}{Definition}
\newif\ifjmlr
\title{Probabilistic Label Trees for Extreme Multi-label Classification}
\begin{document}

\jmlrfalse

\author[1]{Kalina Jasinska-Kobus\thanks{kjasinska@cs.put.poznan.pl}}
\author[1]{Marek Wydmuch\thanks{mwydmuch@cs.put.poznan.pl}}
\author[1,2]{Krzysztof Dembczy\'{n}ski\thanks{kdembczynski@cs.put.poznan.pl}}
\author[2]{Mikhail Kuznetsov\thanks{kuznetsov@verizonmedia.com}}
\author[3]{R\'{o}bert Busa-Fekete\thanks{busarobi@google.com}}
\affil[1]{Institute of Computing Science, Poznan University of Technology, Poland}
\affil[2]{Yahoo! Research, New York, USA}
\affil[3]{Google Research, New York, USA}

\date{}
\maketitle

\begin{abstract}
  \noindent
Extreme \multilabel{} classification (XMLC) is a learning task of tagging instances
with a small subset of relevant labels chosen from an extremely large pool of possible labels. 
Problems of this scale can be efficiently handled by organizing labels as a tree, 
like in hierarchical softmax used for \multiclass{} problems. 
In this paper, we thoroughly investigate probabilistic label trees (\Algo{PLT}s) 
which can be treated as a generalization of hierarchical softmax for \multilabel{} problems. 
We first introduce the \Algo{PLT} model and discuss training and inference procedures and their computational costs. 
Next, we prove the consistency of \Algo{PLT}s for a wide spectrum of performance metrics. 
To this end, we upperbound their regret  
by a function of surrogate-loss regrets of node classifiers.
Furthermore, we consider a problem of training \Algo{PLT}s in a fully online setting, 
without any prior knowledge of training instances, their features, or labels.
In this case, both node classifiers and the tree structure are trained online.
We prove a specific equivalence between the fully online algorithm 
and an algorithm with a tree structure given in advance.
Finally, we discuss several implementations of \Algo{PLT}s 
and introduce a new one, \Algo{napkinXC}, 
which we empirically evaluate and compare with state-of-the-art algorithms.
\end{abstract}

\thispagestyle{empty}
\addtocounter{page}{-1}
\newpage    

\section{Introduction}
\label{sec:intro}

In modern machine learning applications, the output space can be enormous
containing millions of labels. 
Some notable examples of such problems are image and video annotation for multimedia search~\citep{Deng_et_al_2011}, 
tagging of text documents~\citep{Dekel_Shamir_2010}, 
online advertising~\citep{Beygelzimer_et_al_2009b,Agrawal_et_al_2013}, 
recommendation of bid words for online ads~\citep{Prabhu_Varma_2014}, 
video recommendation~\citep{Weston_et_al_2013}, 
or prediction of the next word in a sentence~\citep{Mikolov_et_al_2013}. 
The number of labels in all these applications is extremely large. 
Therefore, problems of this kind are often referred to as \emph{extreme classification}.
To give a more detailed example let us consider the task of tagging Wikipedia articles. 
In this case, each article is an instance, 
words appearing in the text of articles can be considered as features, 
and categories to which articles are assigned as labels. 
By creating a data set from the current content of Wikipedia,
we very easily end up with an enormous problem with millions of examples and features, 
but also with more than one million labels, 
because that many categories are used in Wikipedia today.


The extreme classification problems have posed new computational and statistical challenges.
A naive solution is to train an independent model 
(for example, a linear classifier) for each label individually. 
Such an approach, usually referred to as \Algo{1-vs-all}, 
has time and space complexity linear in the number of labels. 
This is, unfortunately, too costly in many practical applications.
Hence there is a need for more advanced solutions 
characterized by both good predictive performance and sublinear complexity.
To tackle extreme classification problems efficiently, one can organize labels as a tree
in which each label corresponds to one and only one path from the root to a leaf. 
A prominent example of such \emph{label tree} model is hierarchical softmax (\Algo{HSM})~\citep{Morin_Bengio_2005},
often used with neural networks to speed up computations in \multiclass{} problems. 
For example, it is commonly applied in natural language processing~\citep{Mikolov_et_al_2013}. 
Interestingly, similar algorithms have been introduced independently in many different research fields. 
In statistics, they are known as nested dichotomies~\citep{Fox_1997}, 
in multi-class regression as conditional probability trees (\Algo{CPT}s)~\citep{Beygelzimer_et_al_2009a},
and in pattern recognition as multi-stage classifiers~\citep{Kurzynski_1988}.

In this paper, we thoroughly investigate \emph{probabilistic label trees} (\Algo{PLT}s) 
which can be treated as a generalization of the above approaches to \multilabel{} problems. 
In a nutshell, \Algo{PLT}s use a label tree to factorize conditional label probabilities.
Classification of a test example relies on a sequence of decisions made by node classifiers, 
leading the test example from the root to the leaves of the tree. 
Since \Algo{PLT}s are designed for \multilabel{} classification, 
each internal node classifier decides whether or not to continue the path by moving to the child nodes. 
This is different from typical left/right decisions made in tree-based classifiers. 
Moreover, a leaf node classifier needs to make a final decision regarding
the prediction of a label associated with this leaf. 
\Algo{PLT}s use a class probability estimator in each node of the tree, 
such that an estimate of the conditional probability of a label associated with a leaf is given by 
the product of the probability estimates on the path from the root to that leaf. 
This requires specific conditioning in the tree nodes which leads to small and independent learning problems. 
For efficient prediction one needs to follow a proper tree search policy.

Let us also emphasize that extreme classification problems are characterized
by many additional issues not present in standard learning problems. 
For example, for many labels only a small number of training examples is usually available. 
This leads to the problem of long-tail labels and, ultimately, to zero-shot learning, 
where there are no training examples for some labels in the training set. 
Furthermore, training data might be of low quality as no one could go through all labels 
to verify whether they have been correctly assigned even to a single training example. 
The training information is therefore usually obtained from implicit feedback
and we often deal with the so-called counterfactual learning. 
In this paper, however, we do not touch all these challenging and actual problems, 
but focus on computational issues and statistical properties of \Algo{PLT}s.

\subsection{Main contribution}

\begin{sloppypar}
This article summarizes and extends our previous work 
on probabilistic label trees published in%
~\citep{Jasinska_et_al_2016, Wydmuch_et_al_2018, Busa-Fekete_et_al_2019, Jasinska_et_al_2020}. 
In the following points, we describe our main contribution and its relation to existing work. 
\end{sloppypar}

\subsubsection{The \Algo{PLT} model}

The \Algo{PLT} model has been introduced in~\citep{Jasinska_et_al_2016}. 
It uses the chain rule along the paths in the tree to factorize conditional probabilities of labels. 
In this way, the model reduces the original \multilabel{} problem to a number of binary classification (estimation) problems.
From this point of view, it follows the learning reductions framework~\citep{learning_reductions}.
As mentioned above, similar label tree approaches have already been used for solving \multiclass{} problems.
Also for~\multilabel{} problems under the subset 0/1 loss a method, called probabilistic classifier chains,
has been introduced~\citep{Dembczynski_et_al_2010c}. 
It can be interpreted as a specific label tree, but with paths corresponding to subsets of labels. 
So, the size of the tree is exponential in the number of labels.  
Therefore, the \Algo{PLT} model is different.
It can be treated as a proper generalization of 
\Algo{HSM}, \Algo{CPT}s, or nested dichotomies to 
\multilabel{} estimation of conditional probabilities of single labels.
Since the first article~\citep{Jasinska_et_al_2016} 
the \Algo{PLT} model has been used in many other algorithms such as 
\Algo{Parabel}~\citep{Prabhu_et_al_2018}, 
\Algo{Bonsai Tree}~\citep{Khandagale_et_al_2019},
\Algo{extremeText}~\citep{Wydmuch_et_al_2018},
and \Algo{AttentionXML}~\citep{You_et_al_2019}.

There also exist other label tree approaches 
which mainly differ from \Algo{PLT}s and other methods mentioned above in that 
they do not use the probabilistic framework. 
The most similar to \Algo{PLT}s is \Algo{Homer} introduced by~\citet{Tsoumakas_et_al_2008}. 
Tournament-based or filter trees have been considered 
in \citep{Beygelzimer_et_al_2009a} and \citep{Li_Lin_2014}
to solve respectively \multiclass{} and \multilabel{} problems. 
Another example is label embedding trees introduced in~\citep{Bengio_et_al_2010}. 
None of these methods, however, has been thoroughly tested in the extreme classification setting, 
but initial studies suggest that they neither scale to problems of this scale
nor perform competitively to~\Algo{PLT}s. 

To define the \Algo{PLT} model we can follow one of two different frameworks. 
The first one is based on the standard concept of a tree. 
The second one uses the prefix codes, similarly as in~\citep{Morin_Bengio_2005},
as each path in the tree indicating a label can be seen as a code. 
We use the first framework. 
We discuss the general training schema that can be used for both batch and online learning.
Since internal node probabilities are conditioned on parent nodes, 
\Algo{PLT}s need to properly assign training examples to nodes. 
We show an efficient procedure for this assignment which follows a bottom-up strategy. 
For prediction, we consider two tree search procedures.
The first one stops exploring a tree 
whenever the probability of reaching a given node is less than a given threshold. 
The second one uses a priority queue in order to find 
labels with the highest estimates of conditional label probabilities.

\citet{Busa-Fekete_et_al_2019} discuss in depth the computational complexity of \Algo{PLT}s.
The training complexity of node classifiers, as well as the prediction complexity,
obviously depends on the tree structure. 
They prove that building an optimal tree structure in terms of computational cost is an NP-hard problem.
Here, we include some of their results, which show that training of node classifiers 
(with a tree structure given in advance) can be done in logarithmic time under additional assumptions
concerning the tree structure and the maximum number of labels per training instance.
Moreover, with additional assumptions on estimates of label probabilities
also the prediction time is logarithmic in the number of labels. 
This result is not trivial as a prediction method needs to use a tree search algorithm
which, in the worse case, explores the entire tree leading to linear complexity.

\subsubsection{Consistency and regret bounds}

Our main theoretical results concern the consistency of \Algo{PLT}s 
for a wide spectrum of performance metrics. 
We show this by following the learning reductions framework~\citep{learning_reductions}.
We upper bound the regret of a \Algo{PLT} by a function of surrogate regrets of node classifiers. 
We first prove bounds for the $L_1$ estimation error of label conditional probabilities. 
These results are the building blocks for all other results, 
as optimal predictions for many performance metrics can be determined 
through conditional probabilities of labels%
~\citep{Dembczynski_et_al_2010c,Kotlowski_Dembczynski_2016,Koyejo_et_al_2015, Wydmuch_et_al_2018}.
A part of these results has been first published in~\citep{Wydmuch_et_al_2018}. 
They follow similar results obtained by~\citep{Beygelzimer_et_al_2009b} for \multiclass{} problems,
however, our proofs seem to be simpler and 
the bounds tighter as they weight errors of tree nodes by 
a probability mass of examples assigned to these nodes.
We also consider a wider spectrum of surrogate loss functions minimized in tree nodes,
namely a class of strongly proper composite losses~\citep{Agarwal_2014},
which includes squared error loss, squared hinge loss, logistic loss, or exponential loss.

Next, we show the regret bounds for generalized performance metrics of the linear-fractional form. 
This class of functions contains among others Hamming loss, micro and macro F-measure. 
The optimal prediction for metrics from this class relies on thresholding 
conditional probabilities of labels. 
For some metrics, this threshold has a simple form independent of a concrete data distribution 
(for example, it is 0.5 for Hamming loss),
or it requires some easy-to-estimate quantities such as priors of labels. 
Unfortunately, for metrics such as F-measures, 
an additional step of threshold tuning on a validation set is required.
The bounds for \Algo{PLT}s are based on results obtained by~\citep{Kotlowski_Dembczynski_2017}
for the \Algo{1-vs-All} approach.
The analysis can also be seen as 
a theoretical extension of the results published in~\citep{Jasinska_et_al_2016},
where we have discussed algorithmic challenges of the macro F-measure optimization in XMLC.

The last metric considered is \precatk. 
The regret bounds presented here extend the results published before in~\citep{Wydmuch_et_al_2018}.
We first prove the Bayes optimal prediction for \precat{k}
to be a set of $k$ labels with the highest conditional probabilities. 
Next, by using the definition of the regret and the bound for the $L_1$ estimation error 
we obtain conditional and unconditional bounds for \precatk.
This result combined with the priority-queue-based inference 
shows that \Algo{PLT}s are well-tailored for this metric. 

We also study the relation of \Algo{PLT}s to \Algo{HSM}. 
We show that the former are indeed a proper generalization of the latter to \multilabel{} problems.
Namely, for \multiclass{} distribution \Algo{PLT}s reduce to the \Algo{HSM} model.
Moreover, we show that a specific heuristic, 
often used in the deep network community to adapt \Algo{HSM} to \multilabel{} classification,
is not consistent in terms of $L_1$ estimation error and \precatk. 
This heuristic, 
used for example in \Algo{fastText}~\citep{Joulin_et_al_2016} 
and \Algo{Learned Tree}~\citep{Jernite_et_al_2017},
randomly picks one of labels from a multi-label training example and 
treats the example as a multi-class one.
Interestingly, this specific reduction to \multiclass{} classification is consistent for recall$@k$ 
as shown in \citep{Menon_et_al_2019}.

\subsubsection{Online probabilistic label trees}

We also consider a challenging problem of training \Algo{PLT}s in a fully online setting, 
without any prior knowledge about the number of training instances, their features, and labels.
In this case, not only node classifiers but also the tree structure is trained online.
This framework is similar to the one of \Algo{CPT}s 
for \multiclass{} problems~\citep{Beygelzimer_et_al_2009b}.
To formalize this setting, we define two properties that a fully online algorithm should satisfy.
The first one is a specific equivalence condition between the fully online algorithm 
and an incremental algorithm operating on a tree structure given in advance.
The second one concerns the relative complexity of the fully online algorithm.
We prove that a general algorithmic framework we introduce satisfies both properties. 
Here, we use a simple tree building policy which constructs a complete binary tree. 
Recently, we have experimented with a more sophisticated method~\citep{Jasinska_et_al_2020}
leading to better results.

\subsubsection{napkinXC and comparison to the state-of-the-art}

\begin{sloppypar}
Beside our theoretical findings, 
we discuss several existing implementations of the general \Algo{PLT} scheme,
such as \Algo{XMLC-PLT}~\citep{Jasinska_et_al_2016}, 
\Algo{PLT-vw},
\Algo{Parabel}~\citep{Prabhu_et_al_2018}, 
\Algo{Bonsai Tree}~\citep{Khandagale_et_al_2019},
\Algo{extremeText}~\citep{Wydmuch_et_al_2018},
and \Algo{AttentionXML}~\citep{You_et_al_2019}.
We compare them from the perspective of 
feature and model representation, 
batch and incremental learning, 
prediction algorithms,
or tree structure choice.
We also introduce a new library, referred to as \Algo{NapkinXC}, 
which can be easily adapted to different settings thanks to its modular design. 
Finally, in a wide empirical study we thoroughly analyze different instances of \Algo{PLT}s 
and relate their results to other state-of-the-art algorithms such as 
\Algo{FastXML}~\citep{Prabhu_Varma_2014},
\Algo{PfastreXML}~\citep{Jain_et_al_2016}
\Algo{DiSMEC}~\citep{Babbar_Scholkopf_2017}, 
and \Algo{PDDSparse}~\citep{Yen_et_al_2017}.
The first two algorithms are decision tree approaches adapted to the XMLC setting.
Their main difference to label trees is that they split the feature space, 
not the set of labels. 
The last two algorithms are efficient variants of the \Algo{1-vs-All} approach,
which are known to obtain the top empirical performance. 
Our experiments indicate that \Algo{PLT}s are very competitive 
reaching the best \precat{1} on majority of benchmark data sets,
being at the same time even thousand times faster 
in training and prediction than the \Algo{1-vs-All} approaches.
\end{sloppypar}


\subsection{Organization of the paper}

In Section~\ref{sec:problem-setting} we formally state the XMLC problem. 
Section~\ref{sec:plt} defines the \Algo{PLT} model and discusses training and prediction procedures,
as well their computational complexities. 
Section~\ref{sec:analysis} contains the theoretical analysis of \Algo{PLT}s, 
which includes regret bounds and relation to \Algo{HSM}.
In Section~\ref{sec:oplt} we discuss online probabilistic label trees. 
Section~\ref{sec:implementation} provides an in-depth discussion 
on the implementation choices of the \Algo{PLT} model. 
The experimental results are presented in Section~\ref{sec:experiments}. 
The last section concludes the paper.

\section{Problem statement}
\label{sec:problem-setting}

Let $\calX$ denote an instance space, and let $\labels = [m]$ be a finite set of $m$ class labels. 
We assume that an instance $\bx \in \calX$ is associated with a subset of labels $\labels_{\bx} \subseteq \calL$ 
(the subset can be empty); 
this subset is often called the set of \emph{relevant} or \emph{positive} labels,
while the complement $\labels \backslash \labels_{\bx}$ is considered as \emph{irrelevant} or \emph{negative} for $\bx$. 
We identify the set $\labels_{\bx}$ of relevant labels with the binary vector $\by = (y_1,y_2, \ldots, y_m)$, 
in which $y_j = 1 \Leftrightarrow j \in \labels_{\bx}$. 
By $\calY = \{0, 1\}^m$ we denote the set of all possible label vectors.
We assume that observations $(\bx, \by)$ are generated independently and identically
according to a probability distribution $\prob(\bx, \by)$ defined on $\calX \times \calY$. 
Notice that the above definition concerns not only multi-label classification, 
but also multi-class (when $\|\by\|_1=1$) 
and $k$-sparse multi-label (when $\|\by\|_1\le k$) problems as special cases. 
In case of extreme multi-label classification (XMLC) we assume $m$ to be a large number 
(for example $\ge 10^5$), 
and $k$ to be much smaller than $m$, $k \ll m$.%
\footnote{We use $[n]$ to denote the set of integers from $1$ to $n$, and $\|\bx\|_1$ to denote the $L_1$ norm of $x$.}

The problem of extreme \multilabel{} classification can be defined as finding a \emph{classifier} $\bh(\bx) = (h_1(\bx), h_2(\bx),\ldots, h_m(\bx))$, from a function class $\calH^m: \calX \rightarrow \mathbb{R}^m$, that minimizes the \emph{expected loss} or \emph{risk}:  
$$
\loss_\ell(\bh) = \mathbb{E}_{(\bx,\by) \sim \prob(\bx,\by)} (\ell(\by, \bh(\bx))\,,
$$
where $\ell(\by, \hat{\by})$ is the  (\emph{task}) \emph{loss}.
The optimal classifier,  the so-called \emph{Bayes classifier},  for a given loss function $\ell$ is:
$$
\bh^*_\ell = \argmin_{\bh}  \loss_\ell(\bh) \,.
$$
The \emph{regret} of a classifier $\bh$ with respect to $\ell$ is defined as:
 $$
\reg_\ell(\bh) = \loss_\ell(\bh) - \loss_\ell(\bh_{\ell}^*) = \loss_\ell(\bh) - \loss_\ell^* \,.
$$
The regret quantifies the suboptimality of $\bh$ compared to the optimal classifier $\bh_{\ell}^*$. The goal could be then defined as finding $\bh$ with a small regret, ideally equal to zero.

We are interested in multi-label classifiers that estimate conditional probabilities of labels, $\eta_j = \prob(y_j = 1 \vert \bx)$, $j \in \calL$, 
as accurately as possible, that is, with possibly small $L_1$-estimation error, 
\begin{equation}
|\eta_j(\bx) - \heta_j(\bx)| \,,   
\label{eqn:l1-error}
\end{equation}
where $\heta_j(\bx)$ is an estimate of $\eta_j(\bx)$. 
This statement of the problem is justified by the fact that optimal predictions 
for many performance measures used in multi-label classification, 
such as the Hamming loss, precision@k, and the micro- and macro F-measure, 
are determined through the conditional probabilities of labels%
~\citep{Dembczynski_et_al_2010c,Kotlowski_Dembczynski_2016,Koyejo_et_al_2015}. 
In Section~\ref{sec:analysis} we derive statistical guarantees for \Algo{PLT}s, in a form of regret bounds, 
for a wide spectrum of losses including those mentioned above. 
Here, we only mention that to obtain estimates $\heta_j(\bx)$ 
one can use the label-wise logistic loss, 
sometimes referred to as binary cross-entropy: 
$$
\ell_{\log}(\by, \bh(\bx))  = \sum_{j=1}^m \ell_{\log}(y_j, h_j(\bx)) = \sum_{j=1}^m  \left ( y_j \log(h_j(\bx)) + (1-y_j) \log(1-h_j(\bx)) \right) \,.
$$
The expected label-wise logistic loss for a single $\bx$ (the so-called \emph{conditional risk}) is:
$$
\mathbb{E}_{\by} \ell_{\log}(\by, \bh(\bx)) =  \sum_{j=1}^m \mathbb{E}_{\by}{\ell_{\log}(y_j, h_j(\bx))} = \sum_{j=1}^m \loss_{\log}(h_j(\bx) \given \bx) \,. 
$$
This is a sum of conditional risks of binary problems under the logistic loss. 
Therefore, it is easy to show that the pointwise optimal prediction for the $j$-th label is given by:
$$
h_j^*(\bx)  = \argmin_h \loss_{\log}(h_j(\bx)\given \bx) = \eta_j(\bx) \,.
$$

The above loss function corresponds to the vanilla \Algo{1-vs-All} approach. 
Unfortunately, it is too costly in the extreme setting 
as training and prediction is linear in the number of labels. 
In the following sections, we discuss an alternative approach based on label trees, 
which estimates the marginal probabilities with a competitive accuracy, 
but in a much more efficient way.

\section{Probabilistic label trees (\Algo{PLT}s)}
\label{sec:plt}

Probabilistic label trees (\Algo{PLT}s) follow a label-tree approach 
to efficiently estimate the marginal probabilities of labels. 
They reduce the original problem to a set of binary estimation problems organized in the form of a rooted, 
leaf-labeled tree with $m$ leaves. 
We denote a single tree by $\tree$, a root node by $\root_T$, and the set of leaves by $\leaves_T$. 
The leaf $\leafnode_j \in \leaves_T$ corresponds to the label $j \in \labels$. 
The set of leaves of a (sub)tree rooted in an inner node $v$ is denoted by $\leaves_v$. 
The set of labels corresponding to leaf nodes in $\leaves_v$ is denoted by $\calL_{v}$.
The parent node of $v$ is denoted by $\pa{\node}$, and the set of child nodes by $\childs{\node}$. 
A pre-leaf is a parent node whose all children are leaves.
The path from node $v$ to the root is denoted by $\Path{\node}$. 
The length of the path, that is, the number of nodes on the path, is denoted by $\lenpath_\node$. 
The set of all nodes is denoted by $\nodes_T$. 
The degree of a node $\node \in \nodes_T$,  being the number of its children, is denoted by $\deg_\node=|\childs{\node}|$. 
An example of a label tree is given in Figure~\ref{fig:plt-model-z}.

\begin{figure}[H]
    \centering
    \begin{tikzpicture}[scale = 1,every node/.style={scale=1},
		regnode/.style={circle,draw,minimum width=1.5ex,inner sep=0pt},
		leaf/.style={circle,fill=black,draw,minimum width=1.5ex,inner sep=0pt},
		pleaf/.style={rectangle,rounded corners=1ex,draw,font=\scriptsize,inner sep=3pt},
		pnode/.style={rectangle,rounded corners=1ex,draw,font=\scriptsize,inner sep=3pt},
		rootnode/.style={rectangle,rounded corners=1ex,draw,font=\scriptsize,inner sep=3pt},
		level/.style={sibling distance=12em/#1, level distance=5ex}
	]
	\node (z) [rootnode] {$r_T = v_1$}
	child {node (a) [pnode] {$\pa{v_4} = v_2$} 
		child {node [label=below:{\makebox[1cm][c]{$y_1$}}] (b) [pleaf] {$\leafnode_1 = v_4$} edge from parent node[above left]{}}
		child {node [label=below:{\makebox[1cm][c]{$y_2$}}] (g) [pleaf] {$\leafnode_2 = v_5$} edge from parent node[above right]{}}
		edge from parent node[above left]{}
	}
	child {node (j) [pnode] {$v_3$}
		child {node [label=below:{\makebox[1cm][c]{$y_3$}}] (k) [pleaf] {$\leafnode_3 = v_6$} edge from parent node[above left]{}}
		child {node [label=below:{\makebox[1cm][c]{$y_4$}}] (l) [pleaf] {$\leafnode_4 = v_7$}
			{
				child [grow=right] {node (s) {} edge from parent[draw=none]
					child [grow=up] {node (t) {} edge from parent[draw=none]
						child [grow=up] {node (u) {} edge from parent[draw=none]}
					}
				}
			}
			edge from parent node[above right]{}
		}
		edge from parent node[above right]{}
	};
	\end{tikzpicture}
    \caption{An example of a label tree $T$ with labels $\labels = \{y_1, y_2, y_3, y_4\}$ assigned to the leaf nodes.}
    \label{fig:plt-model-z}
\end{figure}

The assignment of labels to tree leaves corresponds to encoding them by a prefix code, 
as any such code can be given in the form of a tree. 
Under the coding, each label $y_j$ is uniquely represented by a code word $\bc_j = (1, c_{j1}, \ldots, c_{jd})$ 
corresponding to a path from the root to leaf $l_j$.
Obviously, the length of the code equals the length of the path, 
that is, $|\bc_j| = d+1 = \lenpath_{\leafnode_j}$. 
The zero position of the code allows one to indicate a situation in which there is no label assigned to an instance. 
Therefore, each label code starts with~1. 
For $c_{ji} \in \{0,1\}$, the code and the label tree are binary. 
In general, the code alphabet can contain more than two symbols. 
Furthermore, $c_{ji}$s can take values from different sets of symbols depending on the prefix of the code word. 
In other words, the code can result in nodes of a different arity, like in~\citep{Grave_et_al_2017} and \citep{Prabhu_et_al_2018}. 
Notice that any node $\node$ in the tree can be uniquely identified by the partial code word $\bc_{\node} = (1, c_{\node 1}, \ldots, c_{\node d_v})$. 
An example of the coding is visualized in Figure~\ref{fig:plt-model}.
This coding perspective has been used in the original paper introducing the \Algo{HSM} model~\citep{Morin_Bengio_2005},
as well as in some later articles~\citep{Dembczynski_et_al_2016}.  
In the following, however, we mainly use the tree notation introduced in the paragraph before.
\begin{figure}[h]
    \centering
    \begin{tikzpicture}[scale = 1,every node/.style={scale=1},
		regnode/.style={circle,draw,minimum width=1.5ex,inner sep=0pt},
		leaf/.style={circle,fill=black,draw,minimum width=1.5ex,inner sep=0pt},
		pleaf/.style={rectangle,rounded corners=1ex,draw,font=\scriptsize,inner sep=3pt},
		pnode/.style={rectangle,rounded corners=1ex,draw,font=\scriptsize,inner sep=3pt},
		rootnode/.style={rectangle,rounded corners=1ex,draw,font=\scriptsize,inner sep=3pt},
		level/.style={sibling distance=12em/#1, level distance=5ex}
	]
	\node (z) [rootnode] {$(1)$}
	child {node (a) [pnode] {$(1,0)$} 
		child {node [label=below:{\makebox[1cm][c]{$y_1$}}] (b) [pleaf] {$\bc_1 = (1,0,0)$} edge from parent node[above left]{}}
		child {node [label=below:{\makebox[1cm][c]{$y_2$}}] (g) [pleaf] {$(1,0,1)$} edge from parent node[above right]{}}
		edge from parent node[above left]{}
	}
	child {node (j) [pnode] {$(1,1)$}
		child {node [label=below:{\makebox[1cm][c]{$y_3$}}] (k) [pleaf] {$(1,1,0)$} edge from parent node[above left]{}}
		child {node [label=below:{\makebox[1cm][c]{$y_4$}}] (l) [pleaf] {$(1,1,1)$}
			{
				child [grow=right] {node (s) {} edge from parent[draw=none]
					child [grow=up] {node (t) {} edge from parent[draw=none]
						child [grow=up] {node (u) {} edge from parent[draw=none]}
					}
				}
			}
			edge from parent node[above right]{}
		}
		edge from parent node[above right]{}
	};
	\end{tikzpicture}
    \caption{{Example of assignment of codes to nodes and labels $\labels = \{y_1, y_2, y_3, y_4\}$.}}
    \label{fig:plt-model}
\end{figure}


A \Algo{PLT} uses tree $T$ to factorize the conditional probabilities of labels, 
$\eta_j(\bx) = \prob(y_j = 1 \vert \bx)$, for all $j \in \labels$. 
To this end let us define for every $\by$ a corresponding vector $\bz$ of length $|V_T|$,%
\footnote{Note that $\bz$ depends on $T$, but $T$ will always be obvious from the context.} 
whose coordinates, indexed by $v \in V_T$, 
are given by: 
\begin{equation}
z_v = \assert{\textstyle \sum_{j \in \labels_v} y_{j} \ge 1} \,, \quad \textrm{or equivalently by~} z_v = \textstyle \bigvee_{j \in \labels_v} y_{j} \, .
\label{eqn:z}
\end{equation}
In other words, the element $z_v$ of $\bz$, corresponding to the node $v \in V_{\tree}$, 
is set to one iff $\by$ contains at least one label in $\labels_v$.
With the above definition, it holds based on the chain rule that for any node $v \in V_T$:
\begin{equation}
\eta_v(\bx) = \prob(z_v = 1 \given \bx) =  \prod_{v' \in \Path{v}} \eta(\bx, v') \,,
\label{eqn:plt_factorization}
\end{equation}
where $\eta(\bx, v) = \prob(z_v = 1 \vert z_{\pa{v}} = 1, \bx)$ for non-root nodes, 
and $\eta(\bx, v) = \prob(z_v = 1 \given \bx)$ for the root.
Notice that for leaf nodes we get the conditional probabilities of labels, that is, 
\begin{equation}
\eta_{\leafnode_j}(\bx) = \eta_j(\bx) \,, \quad \textrm{for~} l_j \in L_T \,.
\label{eqn:plt_leaf_prob}
\end{equation}
Remark that (\ref{eqn:plt_factorization}) can also be stated as recursion:
\begin{equation}
\eta_v(\bx) = \eta(\bx, v) \eta_{\pa{\node}}(\bx) \,,
\label{eqn:plt_factorization_recursion}
\end{equation}
with the base case $\eta_{\root_T}(\bx) = \eta(\bx, \root_T) = \prob(z_{\root_T} = 1 \given \bx)$.

As we deal here with \multilabel{} distributions, 
the relation between probabilities of a parent node and its children is not obvious. 
The following result characterizes this relation precisely. 
\begin{proposition}
\label{prop:node_cond_prob_interval}
For any $T$, $\prob(\by \vert \bx)$, and internal node $v \in V_T \setminus L_T$ we have that:
\begin{equation}
\sum_{v' \in \childs{v}} \eta(\bx, v') \ge 1 \,.
\label{eqn:sum_of_child_nodes}
\end{equation}
Moreover, the probability $\eta_v(\bx)$ satisfies:
\begin{equation}
\max \left \{ \eta_{v'}(\bx): v' \in \childs{v}\right \} \le \eta_v(\bx) \le \min \left \{ 1, \textstyle \sum_{v' \in \childs{v}} \eta_{v'}(\bx) \right \}\,.
\label{eqn:pop_child_cond_prob}
\end{equation}
\end{proposition}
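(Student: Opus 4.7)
The plan is to exploit the structural fact that the children of an internal node partition its leaf label set, so the indicator $z_v$ is exactly the disjunction of the $z_{v'}$ over its children. From that identity both bounds reduce to elementary monotonicity and a union bound.

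More precisely, because each label $j \in \labels$ corresponds to a unique leaf, the label sets $\{\labels_{v'} : v' \in \childs{v}\}$ are pairwise disjoint and their union equals $\labels_v$. Substituting into the defining equation~\eqref{eqn:z} I would obtain
\begin{equation*}
z_v \;=\; \bigvee_{j \in \labels_v} y_j \;=\; \bigvee_{v' \in \childs{v}} \;\bigvee_{j \in \labels_{v'}} y_j \;=\; \bigvee_{v' \in \childs{v}} z_{v'}\,.
\end{equation*}
This identity is the workhorse of the whole proof; once it is in hand everything else is a short probabilistic calculation.

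For the lower bound in~\eqref{eqn:pop_child_cond_prob}, the identity implies $z_{v'} \le z_v$ pointwise for every $v' \in \childs{v}$, hence $\eta_{v'}(\bx) = \prob(z_{v'}=1 \given \bx) \le \prob(z_v=1\given \bx) = \eta_v(\bx)$, and the maximum over children inherits the bound. For the upper bound, I would apply the union bound to the same identity:
\begin{equation*}
\eta_v(\bx) \;=\; \prob\!\left(\bigvee_{v' \in \childs{v}} z_{v'} = 1 \,\Big|\, \bx \right) \;\le\; \sum_{v' \in \childs{v}} \prob(z_{v'} = 1 \given \bx) \;=\; \sum_{v' \in \childs{v}} \eta_{v'}(\bx)\,,
\end{equation*}
and combining with the trivial bound $\eta_v(\bx) \le 1$ yields~\eqref{eqn:pop_child_cond_prob}.

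Finally, \eqref{eqn:sum_of_child_nodes} follows from the upper bound together with the factorization~\eqref{eqn:plt_factorization_recursion}: when $\eta_v(\bx) > 0$, dividing the inequality $\eta_v(\bx) \le \sum_{v' \in \childs{v}} \eta_{v'}(\bx)$ by $\eta_v(\bx)$ and using $\eta(\bx, v') = \eta_{v'}(\bx)/\eta_v(\bx)$ gives $\sum_{v' \in \childs{v}} \eta(\bx, v') \ge 1$. Alternatively, and more cleanly, one can note that conditional on $\{z_v = 1,\bx\}$ at least one $z_{v'}$ must equal $1$, so $1 = \prob(\bigvee_{v'} z_{v'} = 1 \given z_v = 1,\bx) \le \sum_{v'} \eta(\bx,v')$ by the conditional union bound. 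The only subtle point is the degenerate case $\eta_v(\bx) = 0$: here the conditional probability $\eta(\bx, v') = \prob(z_{v'}=1\given z_v=1,\bx)$ is not defined from the joint distribution, so I would handle it by the standard convention used in the \plt literature (either restricting the statement to $\bx$ with $\eta_v(\bx) > 0$ or adopting the convention that $\eta(\bx, v')$ can be set arbitrarily in $[0,1]$ when the conditioning event has zero mass, in which case~\eqref{eqn:sum_of_child_nodes} is vacuous or can be enforced by convention). This edge case is the only place where any care is required; the rest is purely mechanical.
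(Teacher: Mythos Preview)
Your proof is correct and rests on the same structural observation as the paper's, namely that $z_v = \bigvee_{v' \in \childs{v}} z_{v'}$. The only real difference is the order of derivation: the paper first proves~\eqref{eqn:sum_of_child_nodes} by taking the conditional expectation of $\sum_{v'} z_{v'} \ge 1$ given $z_v = 1$, and then obtains the upper bound in~\eqref{eqn:pop_child_cond_prob} by substituting the factorization $\eta(\bx,v') = \eta_{v'}(\bx)/\eta_v(\bx)$ into~\eqref{eqn:sum_of_child_nodes}; the lower bound there comes from $\eta(\bx,v') \in [0,1]$ together with the same factorization. You instead establish~\eqref{eqn:pop_child_cond_prob} directly via pointwise monotonicity ($z_{v'} \le z_v$) and the unconditional union bound, and only afterwards recover~\eqref{eqn:sum_of_child_nodes}. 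Your route is marginally more self-contained because it does not invoke the factorization~\eqref{eqn:plt_factorization_recursion} to get~\eqref{eqn:pop_child_cond_prob}, and it makes explicit the degenerate case $\eta_v(\bx)=0$, which the paper leaves implicit; otherwise the two arguments are interchangeable.
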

\begin{proof}
We first prove the first inequality. 
From the definition of tree $T$ and $z_v$ we have that if $z_{v} = 1$, then there exists at least one $v' \in \childs{v}$ for which $z_{v'} = 1$. 
This gives that $\sum_{v' \in \childs{v}} z_{v'} \ge 1$, if $z_{v} = 1$. 
By taking expectation and recalling that $\eta(\bx, v') = \prob(z_{v'} = 1 \vert z_{v} = 1, \bx)$, for $v' \in \childs{v}$, we get:
$$
\sum_{v' \in \childs{v}} \eta(\bx, v') \ge 1 \,.
$$
To prove (\ref{eqn:pop_child_cond_prob}) we use the above result and (\ref{eqn:plt_factorization_recursion}). 
Obviously $\eta(\bx, v) \in [0,1]$, for any $v \in V_T$, therefore $\eta_{v'}(\bx) \le \eta_{v}(\bx)$ for every $v' \in \childs{v}$. 
Moreover, from (\ref{eqn:plt_factorization_recursion}) we have that 
$$
\eta(\bx, v') = \eta_v'(\bx)/ \eta_v(\bx) \,, 
$$
for every  $v' \in \childs{v}$. 
Substituting this to (\ref{eqn:sum_of_child_nodes}) gives $\eta_v(\bx) \le \sum_{v' \in \childs{v}} \eta_{v'}(\bx)$. 
Since we obviously have $\eta_{v}(\bx) \le 1$, we get the final result. 
\end{proof}

\subsection{Training}

Let $\dataset = \{ (\bx_{i},\by_{i})\}_{i=1}^n$ be a training set 
consisting of $n$ tuples consisting of a feature vector $\bx_i \in \calX$ and a label vector $\by_i \in \calY$.
Depending on the context we also use the set notation for label vectors, 
that is, $\by_i \equiv \calL_{\bx_i}$.
From factorization (\ref{eqn:plt_factorization}) we see 
that we need to train classifiers estimating $\eta(\bx, v)$, for $v \in V_T$.   
We use a function class $\calH^1_{\textrm{prob}} : \calX \mapsto [0,1]$ 
which contains probabilistic classifiers of choice, for example, logistic regression. 
We assign a classifier from $\calH^{1}_{\textrm{prob}}$ to each node of the tree $T$. 
We index this set of classifiers by elements of $V_T$ as $H = \{ \heta(v) \in \calH^{1}_{\textrm{prob}} : v \in V_T \}$. 
We denote by $\heta(\bx, v)$ the prediction made by $\heta(v)$ for some $\bx$, 
which is the estimate of $\eta(\bx, v)$. 
The training algorithm for a \Algo{PLT} is given in Algorithm~\ref{alg:plt-learning}. 
For simplicity we discuss here a batch procedure, but an online counterpart can be easily obtained based on it
(see Section~\ref{sec:oplt}).

To train probabilistic classifiers $\heta(v)$, $v \in V_T$, 
we need to properly filter training examples as given in (\ref{eqn:plt_factorization}).
The \textsc{Train} procedure first initializes the sets of training example in all nodes of $T$. 
Then, for each training example it identifies the set of \emph{positive} and \emph{negative nodes},
that is, the nodes for which the training example is treated respectively as positive or negative. 
The \textsc{AssignToNodes} method, given in Algorithm~\ref{alg:plt-assign}, 
initializes the positive nodes to the empty set and the negative nodes to the root node (to deal with $\by$ of all zeros). 
Next, it traverses the tree from the leaves, corresponding to the labels of the training example,
to the root adding the visited nodes to the set of positive nodes. 
It also removes each visited node from the set of negative nodes, if it has been added to this set before. 
All children of the visited node, which are not in the set of positive nodes, are then added to the set of negative nodes. 
If the parent node of the visited node has already been added to positive nodes, the traversal of this path stops. 
Using the result of the \textsc{AssignToNodes} method the algorithm distributes the training example to the corresponding nodes. 
Finally, a probabilistic classifier $\heta(v)$ is trained in each node $v$ using algorithm $A$ of choice. 
Notice that training of each node classifier can be performed simultaneously as an independent task.
The output of the algorithm is a set of probabilistic classifiers $H$.

\begin{algorithm}[H]
\caption{\Algo{PLT.Train}$(T, A, \dataset)$}
\label{alg:plt-learning}
\begin{small}
\begin{algorithmic}[1]
\State $H = \emptyset$ \Comment{Initialize a set of node probabilistic classifiers}
\For{each node $v \in V_T$} \Comment{For each node in the tree}
\State $\dataset(v) = \emptyset$ \Comment{Initialize its set of training example in $\calD$}
\EndFor
\For{$i=1 \to n$}  \Comment{For each training example}
\State $(P, N) = \mathrm{\textsc{AssignToNodes}}(T, \bx_i, \calL_{\bx_i})$ \Comment{Compute its positive and negative nodes}
\For{$v \in P$} \Comment{For all positive nodes}
\State $\dataset(v) = \dataset(v) \cup \{(\bx_i, z_v = 1)\}$  \Comment{Add the modified example to the training set of node $t$}
\EndFor
\For{$v \in N$} \Comment{For each negative node}
\State $\dataset(v) = \dataset(v) \cup \{(\bx_i, z_v = 0)\}$ \Comment{Add the modified example to the training set of node $t$}
\EndFor
\EndFor
\For{each node $v \in T$} \Comment{For all nodes in the tree}
\State $\heta(v) = A(\dataset(v))$, $H = H \cup \{\heta(v)\}$ \Comment{Train a node classifier with algorithm $A$}  
\EndFor
\State \textbf{return} $H$ \Comment{Return the set of node probabilistic classifiers} 
\end{algorithmic}
\end{small}
\end{algorithm} 
\begin{algorithm}[H]
\caption{\Algo{PLT.AssignToNodes}$(T, \bx, \calL_{\bx})$}
\label{alg:plt-assign}
\begin{small}
\begin{algorithmic}[1]
\State $P = \emptyset$, $N = \{r_T\}$ \Comment{Initialize sets of positive and negative nodes}
\For{$j \in \calL_{\bx}$} \Comment{For all labels of the training example}
\State $v = \ell_j$  \Comment{Set $v$ to a leaf corresponding to label $j$}
\While{$v$ not null \textbf{and} $v \not \in P$} \Comment{On a path to the root or the first positive node (excluded)}
\State $P = P \cup \{v\}$ \Comment{Assign a node to positive nodes} 
\State $N = N \setminus \{v\}$ \Comment{Remove the node from negative nodes if added there before} 
\For{$v' \in \childs{v}$} \Comment{For all its children}
\If{$v' \not \in P$} \Comment{If a child is not a positive node}
\State $N = N \cup \{v'\}$ \Comment{Assign it to negative nodes} 
\EndIf
\EndFor
\State $v = \pa{v}$ \Comment{Move up along the path} 
\EndWhile
\EndFor
\State \textbf{return} $(P,N)$ \Comment{Return a set of positive and negative nodes for the training example}
\end{algorithmic}
\end{small}
\end{algorithm}

\subsection{Prediction}
\label{subsec:plt-prediction}

For test example $\bx$, the estimate of the marginal probability of label $j$  can be readily computed 
as a product of probability estimates on the path from the root to leaf $l_j \in L_T$:
\begin{equation}
\heta_j(\bx) = \prod_{v \in \Path{l_j}} \heta(\bx, v) \,,
\label{eqn:plt-factorization-prediction}
\end{equation}
where we assume $\heta(\bx, v) \in [0,1]$.
Obviously, the recursive dependency (\ref{eqn:plt_factorization_recursion}) also holds for the estimates. 
We have, for any $\node \in \nodes_{\tree}$, that:
\begin{equation}
\heta_v(\bx) = \heta(\bx, v) \heta_{\pa{v}}(\bx) \,,
\label{eqn:plt_estimates_factorization_recursion}
\end{equation}
with the base case $\heta_{\root_{\tree}}(\bx) = \heta(\bx, \root_{\tree})$. 
However, the estimates may not satisfy property (\ref{eqn:sum_of_child_nodes}) 
given in Proposition~\ref{prop:node_cond_prob_interval}. 
Namely, it may not hold, for $\node \in \nodes_T$, that: 
$$
\sum_{v' \in \childs{v}} \heta(\bx, v') \ge 1 \,,
$$
since the node classifiers are trained independently from each other.  
The remedy relies on an additional normalization step during prediction, 
which may take the following form, for each child node $\node' \in \childs{\node}$:
\begin{eqnarray}
\heta(\bx, \node') & \leftarrow & \frac{ \heta(\bx, \node')}{\sum_{v'' \in \childs{v}} \heta(\bx, \node'')}\,, \quad \textrm{if~} \sum_{v'' \in \childs{v}} \heta(\bx, \node'') < 1 \,.
\label{eqn:plt-normalization}
\end{eqnarray}
Nevertheless, this normalization is not always necessary. 
The theoretical results presented in Section~\ref{sec:analysis} hold without it. 
Also empirically an algorithm without normalization performs similarly, being often slightly better.
However, the complexity analysis of the prediction algorithms, presented later, 
requires this normalization.

The estimation of the label probabilities is only a part of the solution 
as we usually need a prediction algorithm that 
delivers a set of labels being as similar as possible to the actual one 
with respect to some application-specific loss function. 
Below we introduce two such algorithms based on tree search.
%
Let us first consider a prediction algorithm which finds, for a test example $\bx$, all labels such that:
$$
\heta_j(\bx) \ge \tau_j \,, \quad j \in \calL \,,  
$$
where $\tau_j \in [0,1]$ are label-specific thresholds. 
The threshold-based predictions are inline with the theoretical analysis given in the next section,
as for many performance metrics they lead to optimal decisions.  
Here, we present the algorithmic solution assuming 
that the particular values of $\tau_j$, for all $j \in \calL$, 
have been provided. %
Consider the tree search procedure presented in Algorithm~\ref{alg:tbs_prediction}. 
It starts with the root node 
and traverses the tree by visiting the nodes $v \in V_T$ for which $\heta_{pa(v)}(\bx)\geq \tau_v$, 
where $\tau_v = \min \{\tau_j : \leafnode_j \in \leaves_v \}$. 
It uses a simple stack $\calQ$ to guide the search. 
Obviously, the final prediction consists of labels corresponding to the visited leaves 
for which $\heta_{\ell_j}(\bx)\geq \tau_j$.
\begin{algorithm}[ht]
\caption{\Algo{PLT.PredictWithThresholds}$(T, H, \vec{\tau}, \bx)$}
\label{alg:tbs_prediction}
\begin{small}
\begin{algorithmic}[1] 
\State $\hat\by = \vec{0}$, $\calQ = \emptyset$ \Comment{Initialize prediction vector to all zeros and a stack}
\State $\calQ\mathrm{.add}((r_T, \heta(\bx, r_T)))$ \Comment{Add the tree root with the corresponding estimate of probability}
\While{$\calQ \neq \emptyset$}  \Comment{In the loop}
	\State $(v, \heta_v(\bx)) = \calQ\mathrm{.pop}()$ \Comment{Pop an element from the stack}
	\If{$\heta_v(\bx) \ge \tau_v$} \Comment{If the probability estimate is greater or equal $\tau_v$}
	\If{$v$ is a leaf}  \Comment{If the node is a leaf}
		\State $\hy_v = 1$ \Comment{Set the corresponding label in the prediction vector}
	\Else \Comment{If the node is an internal node}
	\For{$v' \in \childs{v}$} \Comment{For all child nodes}
	    \State $\heta_{v'}(\bx) = \heta_v(\bx) \times \heta(\bx,v')$ \Comment{Compute $\heta_{v'}(\bx)$ using $\heta(v') \in H$}
		\State $\calQ\mathrm{.add}((v', \heta_{v'}(\bx)))$  \Comment{Add the node and the computed probability estimate}
	\EndFor	
	\EndIf 
	\EndIf 
\EndWhile
\State \textbf{return} $\hat\by$ \Comment{Return the prediction vector}
\end{algorithmic}
\end{small}
\end{algorithm}

The next algorithm finds the top $k$ labels with the highest $\heta_j(\bx)$, $j \in \labels$.
Consider a variant of uniform-cost search~\citep{Russell_Norvig_2009} presented in Algorithm~\ref{alg:ucs_prediction}.
It uses, in turn, a priority queue $\calQ$ to guide the search. 
In each iteration a node with the highest $\heta_v(\bx)$ is popped from the queue. 
If the node is not a leaf, then its each child node $\node'$ is added to the priority queue with its $\heta_{\node'}(\bx)$. 
Otherwise, a label corresponding to the visited leaf is added to the prediction. 
If the number of predicted labels is equal $k$, then the procedure returns prediction $\hat\by$. 
Clearly, the priority queue guarantees that $k$ labels with the highest $\heta_j(\bx)$, $j \in \labels$, are predicted. 
\begin{algorithm}[h!]
\caption{\Algo{PLT.PredictTopLabels}$(T, H, k, \bx)$}
\label{alg:ucs_prediction}
\begin{small}
\begin{algorithmic}[1] 
\State $\hat\by = \vec{0}$, $\calQ = \emptyset$, \Comment{Initialize prediction vector to all zeros and a priority queue}
\State $k' = 0$ \Comment{Initialize counter of predicted labels}
\State $\calQ\mathrm{.add}((r_T, \heta(\bx, r_T)))$ \Comment{Add the tree root with the corresponding estimate of probability}
\While{$k' < k$}   \Comment{While the number of predicted labels is less than $k$}
	\State $(v, \heta_v(\bx)) = \calQ\mathrm{.pop}()$ \Comment{Pop the top element from the queue}
	\If{$v$ is a leaf}  \Comment{If the node is a leaf}
		\State $\hy_v = 1$ \Comment{Set the corresponding label in the prediction vector}
		\State $k' = k' + 1$ \Comment{Increment the counter}
	\Else \Comment{If the node is an internal node}
	\For{$v' \in \childs{v}$} \Comment{For all its child nodes}
	    \State $\heta_{v'}(\bx) = \heta_v(\bx) \times \heta(\bx,v')$ \Comment{Compute $\heta_{v'}(\bx)$ using $\heta(v') \in H$}
		\State $\calQ\mathrm{.add}((v', \heta_{v'}(\bx)))$  \Comment{Add the node and the computed probability estimate}
	\EndFor	
	\EndIf 
\EndWhile
\State \textbf{return} $\hat\by$ \Comment{Return the prediction vector}
\end{algorithmic}
\end{small}
\end{algorithm}

Both algorithms presented above in the worst case can visit all nodes of the tree. 
However, in many practical applications they work in time logarithmic in the number of labels $m$. 
In the following subsection, 
we briefly discuss some basic results related to the computational complexity of \Algo{PLT} algorithms.

\subsection{Computational complexity of \Algo{PLT}s}
\label{subsec:complexity_analysis}

Following \citep{Busa-Fekete_et_al_2019}, 
we define the training complexity of \Algo{PLT}s in terms of the number of nodes 
in which a training example $(\bx,\by)$ is used. 
From the definition of the tree and the \Algo{PLT} model~(\ref{eqn:plt_factorization}),
we have that each training example is used in the root, to estimate $\prob(z_{r_T} = 1\vert \bx)$, 
and in each node $v$ for which $z_{\pa{v}} = 1$, to estimate $\prob(z_v = 1\vert z_{\pa{v}} = 1, \bx)$.  
Therefore, the training cost for a single observation $(\bx, \by)$ can be given as:
\begin{equation}
c(T, \by) = 1 + \sum_{v \in V_T \setminus r_T} z_{\pa{v}} \,.
\label{eqn:learning_cost}
\end{equation}
This definition agrees with the time complexity of the \textsc{AssignToNodes} procedure, 
which is $O(c(T, \by))$ if the set operations are performed in $O(1)$ time.
From the perspective of the training complexity of node classifiers,
the above definition of cost is only justified for learning algorithms
that scale linearly in the number of training examples.
There exists, however, plenty of such algorithms
with a prominent example of stochastic gradient descent.

The next proposition determines the upper bound for the cost $c(T, \by)$.
\begin{restatable}{proposition}{costupperbound}
\label{prop:cost_upperbound}
For any tree $T$ and vector $\by$ it holds that:
$$
c(T, \by) \le 1 + \|\by\|_1 \cdot \depth_T\cdot \deg_T\,,
$$
where $\depth_T = \max_{v \in L_T} \lenpath_v - 1$ is the depth of the tree, and $\deg_T = \max_{v \in V_T} \deg_v$ is the highest degree of a node in $T$.
\end{restatable}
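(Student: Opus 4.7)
The plan is to rewrite the sum $\sum_{v \in V_T \setminus r_T} z_{\pa{v}}$ as a sum over internal nodes rather than a sum over non-root nodes, and then bound it using the fact that the nodes $u$ with $z_u = 1$ are exactly the ancestors of positive leaves.

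First I would swap the order of summation. Every non-root $v$ is the child of exactly one internal node $u = \pa{v}$, and conversely each internal node $u \in V_T \setminus L_T$ is the parent of exactly $\deg_u$ nodes. Hence
\begin{equation*}
\sum_{v \in V_T \setminus r_T} z_{\pa{v}} \;=\; \sum_{u \in V_T \setminus L_T} \deg_u \cdot z_u \;\le\; \deg_T \sum_{u \in V_T \setminus L_T} z_u.
\end{equation*}
So it suffices to bound the number of internal nodes with $z_u = 1$ by $\|\by\|_1 \cdot \depth_T$.

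Next I would characterize $\{u \in V_T \setminus L_T : z_u = 1\}$. By the definition~(\ref{eqn:z}), $z_u = 1$ iff the subtree $\leaves_u$ contains at least one positive leaf, i.e.\ iff $u$ is an ancestor of some leaf $\leafnode_j$ with $y_j = 1$. Therefore
\begin{equation*}
\{u \in V_T \setminus L_T : z_u = 1\} \;=\; \bigcup_{j : y_j = 1} \bigl(\Path{\leafnode_j} \setminus \{\leafnode_j\}\bigr).
\end{equation*}
For each positive leaf $\leafnode_j$, the set $\Path{\leafnode_j} \setminus \{\leafnode_j\}$ consists of its proper ancestors, of which there are $\lenpath_{\leafnode_j} - 1 \le \depth_T$. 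Taking the union bound over the $\|\by\|_1$ positive labels gives
\begin{equation*}
\sum_{u \in V_T \setminus L_T} z_u \;\le\; \|\by\|_1 \cdot \depth_T.
\end{equation*}

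Combining the two inequalities with the definition~(\ref{eqn:learning_cost}) of $c(T, \by)$ yields the stated bound $c(T,\by) \le 1 + \|\by\|_1 \cdot \depth_T \cdot \deg_T$. There is no serious obstacle here; the only thing to be careful about is the bookkeeping between the two sum indexings and the off-by-one in counting \emph{proper} ancestors versus the full path (which accounts for excluding the leaf itself from the internal-node sum).
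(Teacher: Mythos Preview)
Your proof is correct and follows essentially the same approach as the paper. The paper argues verbally by counting positive path nodes and their siblings, while you obtain the same count by regrouping $\sum_{v\neq r_T} z_{\pa{v}}$ according to parents and then bounding the number of internal ancestors of positive leaves; the two arguments are equivalent, with yours being a slightly more algebraic rendering of the same idea.
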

This result has been originally published in~\citep{Busa-Fekete_et_al_2019}. 
For completeness, we present the proof in the Appendix. 
The immediate consequence of this result is the following remark 
which states  that the training complexity of \Algo{PLT}s can scale logarithmically in the number of labels.
\begin{remark}
Consider $k$-sparse multi-label classification (for which, $\|\by\|_1  \le k$). 
For a balanced tree of constant $\deg_T=\lambda~(\ge 2)$ and $\depth_T=\log_{\lambda}{m}$, 
the training cost is $c(T, \by)=O(k\log m)$. 
\end{remark}
\noindent
Interestingly, the problem of finding the optimal tree structure in terms of the training cost is NP-hard, 
as proven in~\citep{Busa-Fekete_et_al_2019}. 
However, the balanced trees achieve a logarithmic  approximation of the optimal tree in the number of labels. 

For the prediction cost, we use a similar definition. 
We define it as the number of calls to node classifiers for a single observation~$\bx$. 
Let us first consider Algorithm~\ref{alg:tbs_prediction} with a threshold vector $\vec{\tau}$.
Its prediction cost is clearly given by: 
$$
c_{\tau}(T,\bx)= 1 + \sum\limits_{v\in V_T\setminus r_T}\assert{\heta_{pa(v)}(\bx)\geq \tau_v} = 1 + \sum\limits_{v\in V_T}\assert{\heta_{v}(\bx)\geq \tau_v} \cdot \deg_v.
$$
Analogously to Proposition~\ref{prop:cost_upperbound}, 
we determine the upper bound for the prediction cost. 
To this end, let us upper bound $\sum_{j=1}^m \heta_j(\bx)$ by a constant $\hat{P}$.
Moreover, we assume that $\heta_v(\bx)$ are properly normalized 
to satisfy the same requirements as true probabilities expressed in Proposition~\ref{prop:node_cond_prob_interval}.  
For simplicity, we set all $\tau_v$, $\node \in \nodes_T$, to $\tau$.
Then, we can prove the following result. 
\begin{restatable}{theorem}{compthreshpred}
\label{thm:comp_tresh_pred}
For Algorithm~\ref{alg:tbs_prediction} with all thresholds $\tau_v$, $\node \in \nodes_T$, 
set to $\tau$ and any $\bx \in \calX$, we have that:
\begin{equation}
c_{\tau}(T,\bx) \le  1 + \lfloor \hat{P}/\tau \rfloor \cdot \depth_T \cdot \deg_T \,,    
\label{eq:comp_tresh_pred}
\end{equation}
where $\hat{P}$ is a constant upperbounding $\sum_{j=1}^m \heta_j(\bx)$,
$\depth_T = \max_{v \in L_T} \lenpath_v - 1$, and $\deg_T = \max_{v \in V_T} \deg_v$.
\end{restatable}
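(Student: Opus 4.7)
The plan is to reduce the claimed bound on $c_\tau(T,\bx)$ to a bound on the number of tree nodes whose probability estimate exceeds the threshold $\tau$. From the expression
$$c_\tau(T,\bx)=1+\sum_{v\in V_T}\assert{\heta_v(\bx)\ge\tau}\cdot \deg_v\,,$$
I immediately get $c_\tau(T,\bx)\le 1+\deg_T\cdot|I|$, where $I=\{v\in V_T\setminus L_T:\heta_v(\bx)\ge\tau\}$ (leaves contribute nothing since $\deg_v=0$ there). Hence it suffices to prove $|I|\le\depth_T\cdot\lfloor\hat P/\tau\rfloor$.

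The key lemma I would establish first is the \emph{subtree mass bound}: for every $v\in V_T$,
$$\heta_v(\bx)\;\le\;\sum_{j\in\calL_v}\heta_j(\bx)\,.$$
I prove it by induction on the height of $v$. The base case $v=\leafnode_j$ is trivial. For the inductive step, the crucial observation is that after the normalization step~(\ref{eqn:plt-normalization}) we have $\sum_{v'\in\childs{v}}\heta(\bx,v')\ge 1$, so the proof of Proposition~\ref{prop:node_cond_prob_interval} carries over verbatim to the estimates, yielding $\heta_v(\bx)\le\sum_{v'\in\childs{v}}\heta_{v'}(\bx)$. Applying the inductive hypothesis to each child then gives the stated bound.

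The second step is a level-by-level counting argument. Let $I^d=\{v\in I:\depth(v)=d\}$. Since any two nodes at the same depth have disjoint subtrees, the families $\{\calL_v:v\in I^d\}$ are pairwise disjoint, and the subtree mass bound yields
$$\tau\cdot|I^d|\;\le\;\sum_{v\in I^d}\heta_v(\bx)\;\le\;\sum_{v\in I^d}\sum_{j\in\calL_v}\heta_j(\bx)\;\le\;\sum_{j=1}^m\heta_j(\bx)\;\le\;\hat P\,,$$
so $|I^d|\le\lfloor\hat P/\tau\rfloor$. Because internal nodes live only at depths $0,1,\ldots,\depth_T-1$, summing over these $\depth_T$ depths gives $|I|\le\depth_T\cdot\lfloor\hat P/\tau\rfloor$, which combined with the rewriting of $c_\tau$ above finishes the proof.

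The only non-bookkeeping part is verifying that the post-normalization estimates really satisfy the hypotheses of Proposition~\ref{prop:node_cond_prob_interval}; the bound $\sum_{v'\in\childs{v}}\heta(\bx,v')\ge1$ is enforced by construction, so the rest of that proposition's proof transfers mechanically. Everything else reduces to the disjointness of same-depth subtrees and the fact that internal nodes occupy at most $\depth_T$ distinct depths.
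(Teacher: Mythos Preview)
Your argument is correct and takes a genuinely different route from the paper's proof. Both proofs rely on the normalization assumption to obtain the subtree mass inequality $\heta_v(\bx)\le\sum_{j\in\calL_v}\heta_j(\bx)$, but they organize the counting differently. The paper defines $T'=\{v:\heta_v(\bx)\ge\tau\}$, uses the monotonicity direction~(\ref{eqn:node_heta_prob_lb}) to argue $T'$ is a rooted subtree, bounds $|L_{T'}|\le\lfloor\hat P/\tau\rfloor$ via the mass inequality applied to the leaves of $T'$, and then builds $T''$ (the nodes of $T'$ together with all their children) and bounds $|V_{T''}|$ by charging each leaf of $T'$ a path of length at most $\depth_T$ times $\deg_T$. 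Your level-by-level argument bypasses the $T'$/$T''$ construction entirely: you bound the number of above-threshold internal nodes at each fixed depth by $\lfloor\hat P/\tau\rfloor$ using disjointness of same-depth subtrees, then sum over the $\depth_T$ possible internal depths. Your route is shorter and does not use the monotonicity direction~(\ref{eqn:node_heta_prob_lb}) in the counting step itself (it enters only through the cost formula you quote from the paper); the paper's route, on the other hand, identifies the visited node set $T''$ explicitly, which makes the connection to the algorithm's traversal more transparent.
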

We present the proof of this result in Appendix~\ref{app:complexity_analysis}. 
Similarly as in the case of the training cost, we can conclude the logarithmic cost in the number of labels.
\begin{remark}
For a tree of constant $\deg_T=\lambda (\ge 2)$ and $\depth_T=\log_\lambda{m}$, the cost of Algorithm~\ref{alg:tbs_prediction} is $O(\log m)$.  
\end{remark}
\noindent   
The above result can also be related to the sum of true conditional label probabilities, $\sum_{j=1}^m \eta_j(\bx)$.
In this case one needs to take into account the $L_1$-estimation error of $\eta_j(\bx)$. 
In the next section, we discuss the upper bound of the $L_1$-estimation error 
that can be incorporated into the above result.
A theorem of this form has been published before in~\citep{Busa-Fekete_et_al_2019}. 

The analysis of Algorithm~\ref{alg:ucs_prediction} which predicts the top $k$ labels is more involved. 
Its prediction cost defined as before is upperbounded by $c_{\tau_k}(T, \bx)$, 
where $\tau_k$ is equal to $\heta_j(\bx)$ 
being the $k$-th highest estimate of conditional label probabilities.
However, this algorithm uses a priority queue which operations require $O(\log m)$ time. 
Therefore, each call of a node classifier is not associated with $O(1)$ cost. 
Nevertheless, in practical scenarios the maintenance of the priority queue is almost negligible 
as only in the worst case scenario its size approaches $m$. 

Finally, let us shortly discuss the space complexity. 
The space needed for storing the final model can also be expressed in terms of the number of nodes.
As the number of nodes of a label tree is upperbounded by $2m-1$, 
that is, the maximum number of nodes of the tree with $m$ leaves,
the space complexity is $O(m)$. 
During training or prediction there are no other structures with a higher space demand.
Nevertheless, different design choices impact the space requirements of \Algo{PLT}s. 
We discuss some of them in Section~\ref{sec:implementation}.

\section{Statistical analysis of \Algo{PLT}s}
\label{sec:analysis}

In this section, we thoroughly analyze the \Algo{PLT} model in terms of its statistical properties. 
The main results concern the regret bounds for several performance measures commonly used in XMLC. 
We first upperbound the $L_1$ estimation error of marginal probabilities of labels, \lonemarginalserror{}, 
by the $L_1$ error of the node classifiers, $|\eta(\bx, \node) - \heta(\bx, \node)|$. 
We then generalize this result to a wide class of strongly proper composite losses~\citep{Agarwal_2014} 
to make a direct connection between the marginal probability estimates and a learning algorithm used in the tree nodes.
We then analyze a wide class of generalized performance metrics. 
Instances of such metrics are Hamming loss, being a canonical loss function for multi-label classification,
the AM metric which weights labels by their priors, 
or the macro and micro $F_\beta$-measures.
Next, we present the regret analysis for \precatk{} which is the most popular metric used in XMLC. 
Finally, we discuss the relation of \Algo{PLT}s to hierarchical softmax. 

\subsection{$L_1$ estimation error}
\label{subsec:analysis-marginal}

\begin{sloppypar}
We start with a bound that express the quality of probability estimates $\heta_v(\bx)$, $v \in \nodes_T$.
The lemma and corollary below generalize a similar result obtained for multi-class classification 
in~\citep{Beygelzimer_et_al_2009a}. 
\begin{lemma}
\label{lma:node_estimation_regret}
For any tree $T$ and distribution $\prob(\by \vert \bx)$ the following holds for each $v \in V_T$:
\begin{equation}
\left | \eta_v(\bx) - \heta_v(\bx) \right |  \leq  \sum_{v' \in \Path{v}} \eta_{\pa{v'}}(\bx) \left | \eta(\bx,v') - \heta(\bx, v')  \right | \,,
\label{eqn:estimation_bound_known}
\end{equation}
where we assume $\heta(\bx, v) \in [0,1]$, for each $v \in V_T$, and $\eta_{\pa{r_\tree}}(\bx) = 1$, for the root node $r_\tree$.
\end{lemma}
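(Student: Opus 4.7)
The plan is to proceed by induction on the length of the path $\Path{v}$ (equivalently, the depth of $v$ in $T$), exploiting the recursion $\eta_v(\bx) = \eta(\bx,v)\,\eta_{\pa{v}}(\bx)$ from~(\ref{eqn:plt_factorization_recursion}) together with its analogue $\heta_v(\bx) = \heta(\bx,v)\,\heta_{\pa{v}}(\bx)$ from~(\ref{eqn:plt_estimates_factorization_recursion}).

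For the base case $v = r_T$, the path $\Path{r_T}$ contains only the root, and by convention $\eta_{\pa{r_T}}(\bx) = 1$, so both sides of~(\ref{eqn:estimation_bound_known}) reduce to $|\eta(\bx,r_T) - \heta(\bx,r_T)|$ and the inequality holds with equality.

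For the inductive step, assume the bound holds for $\pa{v}$. The standard trick is to add and subtract the cross term $\heta(\bx,v)\,\eta_{\pa{v}}(\bx)$:
\begin{align*}
\bigl|\eta_v(\bx) - \heta_v(\bx)\bigr|
&= \bigl|\eta(\bx,v)\,\eta_{\pa{v}}(\bx) - \heta(\bx,v)\,\heta_{\pa{v}}(\bx)\bigr| \\
&\leq \eta_{\pa{v}}(\bx)\,\bigl|\eta(\bx,v) - \heta(\bx,v)\bigr| + \heta(\bx,v)\,\bigl|\eta_{\pa{v}}(\bx) - \heta_{\pa{v}}(\bx)\bigr|.
\end{align*}
Then I would use the hypothesis $\heta(\bx,v) \in [0,1]$ to drop the factor $\heta(\bx,v)$ on the second term, and finally apply the inductive hypothesis to $|\eta_{\pa{v}}(\bx) - \heta_{\pa{v}}(\bx)|$. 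Merging the first summand (which is precisely the $v'=v$ term of the target sum, since $\pa{v'} = \pa{v}$ when $v'=v$) with the inductive sum over $\Path{\pa{v}}$ yields the sum over $\Path{v}$ on the right-hand side.

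The only subtle point — and the one worth being careful about — is the choice of cross term: inserting $\heta(\bx,v)\,\eta_{\pa{v}}(\bx)$ rather than $\eta(\bx,v)\,\heta_{\pa{v}}(\bx)$ is what produces the true probability $\eta_{\pa{v}}(\bx)$ (not the estimate) as the coefficient of $|\eta(\bx,v) - \heta(\bx,v)|$, matching the statement of the lemma. The other choice would leave $\heta_{\pa{v}}(\bx)$ in that slot and would not combine cleanly with the inductive hypothesis. Using $\heta(\bx,v) \leq 1$ (rather than $\eta(\bx,v) \leq 1$) in the other term is likewise what keeps the true probabilities on the right-hand side.
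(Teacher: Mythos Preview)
Your proof is correct and essentially identical to the paper's own argument: the paper also adds and subtracts the cross term $\heta(\bx,v)\,\eta_{\pa{v}}(\bx)$, applies the triangle inequality, uses $\heta(\bx,v)\in[0,1]$ to drop that factor, and then unrolls the recursion along $\Path{v}$. Your observation about why this particular cross term (and not $\eta(\bx,v)\,\heta_{\pa{v}}(\bx)$) is the right one is exactly the point.
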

\end{sloppypar}

From this lemma we immediately get guarantees for estimates of the marginal probabilities,
for each label $j \in \labels$ corresponding to leaf node $\leafnode_j$.
\begin{corollary}\label{cor:marginal-conditional-rec}
For any tree $T$ and distribution $\prob(\by \vert \bx)$, the following holds for each label $j \in \labels$:
\begin{equation}
\left | \eta_{j}(\bx) - \heta_{j}(\bx) \right |  \leq  \sum_{v \in \Path{\leafnode_j}} \eta_{\pa{v}}(\bx) \left | \eta(\bx,v) - \heta(\bx, v)  \right | \,,
\label{eqn:estimation_bound}
\end{equation}
where we assume $\eta_{\pa{r_T}}(\bx) = 1$ for the root node $r_T$.
\end{corollary}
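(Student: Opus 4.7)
The plan is to derive this directly from Lemma~\ref{lma:node_estimation_regret}, which is the general per-node version of the same bound; the corollary is simply the specialization to leaf nodes together with the identification of leaf-node probabilities with label probabilities.

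Concretely, I would fix a label $j \in \labels$ and apply Lemma~\ref{lma:node_estimation_regret} with $v = \leafnode_j$. This gives
\begin{equation*}
\left | \eta_{\leafnode_j}(\bx) - \heta_{\leafnode_j}(\bx) \right |  \leq  \sum_{v \in \Path{\leafnode_j}} \eta_{\pa{v}}(\bx) \left | \eta(\bx,v) - \heta(\bx, v)  \right | \,.
\end{equation*}
Next, I invoke equation~(\ref{eqn:plt_leaf_prob}), which states that $\eta_{\leafnode_j}(\bx) = \eta_j(\bx)$. The analogous identity $\heta_{\leafnode_j}(\bx) = \heta_j(\bx)$ for the estimates follows from the definition in~(\ref{eqn:plt-factorization-prediction}) together with the recursion~(\ref{eqn:plt_estimates_factorization_recursion}), since unfolding $\heta_{\leafnode_j}(\bx) = \heta(\bx, \leafnode_j)\heta_{\pa{\leafnode_j}}(\bx)$ along the path $\Path{\leafnode_j}$ yields the product over all ancestors of $\leafnode_j$, which is precisely the definition of $\heta_j(\bx)$. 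Substituting these two identities into the left-hand side of the bound from Lemma~\ref{lma:node_estimation_regret} yields the claimed inequality.

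There is essentially no obstacle here, as the corollary is a notational restatement of the lemma for the special case of a leaf node. The only small point to verify is the convention $\eta_{\pa{\root_T}}(\bx) = 1$ for the root term in the sum; this is inherited directly from the statement of Lemma~\ref{lma:node_estimation_regret} and ensures the bound is well-defined when the path $\Path{\leafnode_j}$ includes the root. Everything else, including the validity of the estimate range $\heta(\bx, v) \in [0,1]$ used implicitly, has already been assumed in the lemma.
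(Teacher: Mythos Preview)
Your proposal is correct and matches the paper's approach exactly: the paper proves Lemma~\ref{lma:node_estimation_regret} for an arbitrary node and then simply remarks that, since it holds for any $v \in V_T$, it specializes to leaf nodes and hence to marginal label probabilities via~(\ref{eqn:plt_leaf_prob}). If anything, you are slightly more explicit than the paper in verifying the estimate identity $\heta_{\leafnode_j}(\bx) = \heta_j(\bx)$, but this is the same argument.
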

It is worth to notice that the above bounds are tighter 
than the one in~\citep{Beygelzimer_et_al_2009a}, 
since the $L_1$ estimation error of the node classifiers is additionally multiplied 
by the probability of the parent node $\eta_{\pa{v'}}(\bx)$. 
Our results are also obtained using different arguments.
Because of that, we present the entire proof below in the main text.
\begin{proof}
Recall the recursive factorization of probability $\eta_v(\bx)$ given in~(\ref{eqn:plt_factorization_recursion}):
$$
\eta_{\node}(\bx) = \eta(\bx,\node) \eta_{\pa{\node}}(\bx) \,.
$$
As the same recursive relation holds for $\heta_v(\bx)$, see (\ref{eqn:plt_estimates_factorization_recursion}), we have that
$$
\left | \eta_\node(\bx) - \heta_\node(\bx) \right | = \left | \eta(\bx,\node) \eta_{\pa{\node}}(\bx) - \heta(\bx,\node) \heta_{\pa{\node}}(\bx) \right | \,.
$$
By adding and subtracting $\heta(\bx, \node) \eta_{\pa{\node}}(\bx) $, using the triangle inequality $|a + b| \leq |a| + |b|$
and the assumption that $\heta(\bx, \node) \in [0 ,1]$, we obtain:
\begin{eqnarray*}
\left | \eta_v(\bx) - \heta_v(\bx) \right | &\!\!\!=\!\!\!& \left | \eta(\bx,\node) \eta_{\pa{\node}}(\bx)- \heta(\bx, \node) \eta_{\pa{\node}}(\bx) + \heta(\bx, \node) \eta_{\pa{\node}}(\bx) - \heta(\bx,\node) \heta_{\pa{\node}}(\bx) \right | \\
 &\!\!\!\leq\!\!\!& \left | \eta(\bx,\node) \eta_{\pa{\node}}(\bx)- \heta(\bx, \node) \eta_{\pa{\node}}(\bx)\right |  + \left | \heta(\bx, \node) \eta_{\pa{\node}}(\bx) - \heta(\bx,\node) \heta_{\pa{\node}}(\bx) \right | \\
&\!\!\!\leq\!\!\!& \eta_{\pa{\node}}(\bx) \left | \eta(\bx, \node) - \heta(\bx, \node) \right | + \heta(\bx,\node) \left | \eta_{\pa{\node}}(\bx) - \heta_{\pa{\node}}(\bx) \right | \\
&\!\!\!\leq\!\!\!& \eta_{\pa{\node}}(\bx) \left | \eta(\bx, \node) - \heta(\bx, \node) \right | + \left | \eta_{\pa{\node}}(\bx) - \heta_{\pa{\node}}(\bx) \right | 
\end{eqnarray*}
Since the rightmost term corresponds to the $L_1$ error of the parent of $\node$, we use recursion to get the result of Lemma~\ref{lma:node_estimation_regret}:
$$
\left | \eta_v(\bx) - \heta_v(\bx) \right |  \leq  \sum_{v' \in \Path{v}} \eta_{\pa{v'}}(\bx) \left | \eta(\bx,v') - \heta(\bx, v')  \right | \,,
$$
where for the root node $\eta_{\pa{r_T}}(\bx) = 1$.
As the above holds for any $\node \in \nodes$, the result also applies to marginal probabilities of labels as stated in Corollary~\ref{cor:marginal-conditional-rec}.
\end{proof}

The above results are conditioned on $\bx$ and concern a single node $\node \in \nodes$.
The next theorem gives the understanding of the average performance over all labels and the entire distribution $\prob(\bx)$. 
We present the result in a general form of a weighted average as 
this form we use later to prove bounds for the generalized performance metrics.
\begin{restatable}{theorem}{thmtotalestimationbound}
\label{thm:total_estimation_bound}
For any tree $T$, distribution $\prob(\bx,\by)$, and weights $W_j \in R$, $j \in \{1, \ldots, m\}$, the following holds:
\begin{eqnarray}
& & \frac{1}{m}\sum_{j=1}^m W_j \mathbb{E}_{\bx \sim \prob(\bx)} \left [ \left | \eta_j(\bx) - \heta_j(\bx) \right | \right ] \leq \nonumber \\ 
& & \quad \quad \frac{1}{m} \sum_{\node \in \nodes} \prob(z_{\pa{v}} = 1) \mathbb{E}_{\bx \sim \prob(\bx \vert z_{\pa{v}} =1)} \left [ | \eta(\bx,v') - \heta(\bx, v') | \right ]  \sum_{j \in \leaves_\node} W_j  \,,
\label{eqn:total_estimation_bound}
\end{eqnarray}
where for the root node $\prob(z_{\pa{r_T}} = 1) = 1$. 
For $W_j = 1$,  $j \in \{1, \ldots, m\}$, we have:
\begin{eqnarray*}
& & \frac{1}{m}\sum_{j=1}^m \mathbb{E}_{\bx \sim \prob(\bx)} \left [ \left | \eta_j(\bx) - \heta_j(\bx) \right | \right ] \leq \nonumber \\ 
& & \quad \quad \frac{1}{m} \sum_{\node \in \nodes} \prob(z_{\pa{v}} = 1) \mathbb{E}_{\bx \sim \prob(\bx \vert z_{\pa{v}} =1)} \left [ | \eta(\bx,v') - \heta(\bx, v') | \right ]  | \leaves_\node  |  \,.
\end{eqnarray*}
\end{restatable}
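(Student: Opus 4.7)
The plan is to start from Corollary~\ref{cor:marginal-conditional-rec}, which gives, for each label $j$ and each $\bx$,
$$|\eta_j(\bx)-\heta_j(\bx)| \le \sum_{v\in\Path{\leafnode_j}} \eta_{\pa{v}}(\bx)\,|\eta(\bx,v)-\heta(\bx,v)|.$$
I would multiply both sides by $W_j$, sum over $j\in[m]$, divide by $m$, and take expectation with respect to $\bx\sim\prob(\bx)$. By linearity of expectation this produces a double sum
$$\frac{1}{m}\sum_{j=1}^m W_j\sum_{v\in\Path{\leafnode_j}} \mathbb{E}_{\bx}\bigl[\eta_{\pa{v}}(\bx)\,|\eta(\bx,v)-\heta(\bx,v)|\bigr]$$
on the right-hand side, and the left-hand side already matches what we want to bound.

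Next I would swap the order of summation. The key combinatorial observation is that a node $v$ lies on the root-to-leaf path $\Path{\leafnode_j}$ if and only if $\leafnode_j\in\leaves_v$, i.e.\ $j\in\labels_v$. Therefore
$$\sum_{j=1}^m W_j \sum_{v\in\Path{\leafnode_j}}(\cdots) \;=\; \sum_{v\in V_T}\Bigl(\sum_{j\in\labels_v} W_j\Bigr)(\cdots),$$
which already isolates the factor $\sum_{j\in\labels_v}W_j$ that appears on the right-hand side of the theorem.

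Finally, to rewrite the expectation in the conditional form stated, I would use the identity $\eta_{\pa{v}}(\bx)=\prob(z_{\pa{v}}=1\mid\bx)$ (with the convention $\eta_{\pa{r_T}}(\bx)=1$ at the root). Then, by the definition of conditional expectation,
$$\mathbb{E}_{\bx}\bigl[\eta_{\pa{v}}(\bx)\,g(\bx)\bigr] = \int g(\bx)\,\prob(z_{\pa{v}}=1\mid\bx)\,d\prob(\bx) = \prob(z_{\pa{v}}=1)\,\mathbb{E}_{\bx\sim\prob(\bx\mid z_{\pa{v}}=1)}\bigl[g(\bx)\bigr],$$
applied with $g(\bx)=|\eta(\bx,v)-\heta(\bx,v)|$. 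Substituting this into the swapped sum yields exactly inequality~(\ref{eqn:total_estimation_bound}). The special case with $W_j=1$ follows immediately since $\sum_{j\in\labels_v}1=|\leaves_v|$.

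There is no real obstacle here: the proof is essentially bookkeeping on top of Corollary~\ref{cor:marginal-conditional-rec}. The only subtle points are (i) verifying the equivalence $v\in\Path{\leafnode_j}\Leftrightarrow j\in\labels_v$ used to swap the summation order, and (ii) the conditional-probability rewriting, where one must be careful that $\eta_{\pa{v}}$ is precisely the conditional probability of the event $z_{\pa{v}}=1$ given $\bx$, and that the root convention $\prob(z_{\pa{r_T}}=1)=1$ is consistent with $\eta_{\pa{r_T}}(\bx)=1$. Both are immediate from the definitions given earlier in Section~\ref{sec:plt}. Implicitly the statement assumes $W_j\ge 0$ so that multiplication preserves the inequality; with signed weights one would have to split into positive and negative parts, but the stated form is intended for nonnegative weights arising from the generalized performance metrics analyzed later.
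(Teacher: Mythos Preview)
Your proposal is correct and follows essentially the same route as the paper's proof: the paper packages the conditional-probability rewriting as a separate lemma (Lemma~\ref{lma:node_estimation_equality}) and the per-label expectation bound as another (Lemma~\ref{lma:label_estimation_bound}), then sums over $j$ and swaps the order of summation exactly as you describe, but the underlying steps and their justifications are identical. Your observation about needing $W_j\ge 0$ is apt; the paper's proof also silently relies on this when it multiplies the per-label inequality by $W_j$ and sums, even though the theorem statement says $W_j\in\mathbb{R}$.
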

The result states that the weighted expected $L_1$ estimation error averaged over all labels can be bounded by 
a weighted sum of expected $L_1$ errors of node classifiers divided by the number of labels. 
A weight associated with node $v$ is a product of the probability mass of a parent node 
and the number of leave nodes in a subtree rooted in $v$. 
This means that a node closer to the root has a higher impact on the overall performance.
This agrees with the intuition as such nodes impact estimates of more labels. 
We omit the proof of this theorem here as it is quite technical. 
It is presented with additional auxiliary results in Appendix~\ref{app:analysis-marginal}.

\subsection{Strongly proper composite losses}
\label{subsec:strongly_proper_composite_losses}

So far the results concern the probability estimates without any direct link to a learning algorithm.
In this subsection, we relate the quality of probability estimates 
to the error measured in terms of a loss function 
which can be minimized during the training of a node classifier.
We first recall the concept of strongly proper composite losses~\citep{Agarwal_2014} 
for binary classification. 
Examples of such losses are commonly used functions such as
logistic loss, squared loss, squared hinge loss, and exponential loss.
Notice that the standard hinge loss does not belong to this class of losses.
Finally, we show an extension of Theorem~\ref{thm:total_estimation_bound} 
in which the right-hand side is expressed in terms of a strongly proper composite loss.

The strongly proper composite losses are of a special interest in 
the problem of class probability estimation with two outcomes, $y \in \{-1, 1\}$. 
Let pairs $(\bx, y)$ be generated i.i.d. according to $\prob(\bx, y)$.
We denote $\prob(y = 1\given \bx)$ by $\eta(\bx)$ and its estimate by $\heta(\bx) \in [0,1]$.
Let us first define a class probability estimation (CPE) loss as a function $\ell: \{-1,1\} \times [0,1] \mapsto \mathbb{R}_+$. 
Its conditional risk is given by 
$$
R_{\ell}(\heta \given \bx) =  \eta(\bx) \ell (1, \heta(\bx)) + (1 - \eta(\bx)) \ell (-1, \heta(\bx))  \,.
$$
A CPE loss is proper if for any $\eta(\bx) \in [0,1]$, $\eta(\bx) \in \argmin_{\heta} R_{\ell}(\heta \given \bx)$. 
Since it is often more convenient for prediction algorithms 
to work with a real-valued scoring function, $f: \calX \mapsto \mathbb{R}$,
than with an estimate bounded to interval $[0,1]$, 
we transform $\heta(\bx)$ using a strictly increasing (and therefore invertible) link function 
$\psi: [0,1] \rightarrow \mathbb{R}$,
that is, $f(\bx) = \psi(\heta(\bx))$.
%
%
We then consider a composite loss function
$\ell_c: \{-1, 1\} \times \mathbb{R} \mapsto \mathbb{R}_+$ 
defined via a CPE loss as  
$$
\ell_c(y, f(\bx)) = \ell(y, \psi^{-1}(f(\bx))) \,.
$$
The regret of $f$ in terms of a loss function $\ell_c$ at point $\bx$ is defined as:
$$
\reg_{\ell_c}(f \given \bx) = R_{\ell}(\psi^{-1}(f) \given \bx) - R_{\ell}^*(\bx) \,,
$$
where  $R_{\ell}^*(\bx)$ is the minimum expected loss at point $\bx$,
achievable by $f^*(\bx) = \psi(\eta(\bx))$.
 
We say a loss function $\ell_c$ is $\lambda$-strongly proper composite loss, if for any $\eta(\bx), \psi^{-1}(f(\bx)) \in [0,1]$: 
\begin{equation}
\left | \eta(\bx)  - \psi^{-1}(f(\bx))  \right | \le \sqrt{ \frac{2}{\lambda}} \sqrt{\reg_{\ell_c}(f \given \bx)} \,.  
\label{eqn:regret_bound_for_spc_losses}
\end{equation}
It can be shown under mild regularity assumptions that $\ell_c$ is $\lambda$-strongly proper composite 
if and only if its corresponding CPE loss is proper and function $H_{\ell}(\eta) = R_{\ell}(\eta \given \bx)$ 
is $\lambda$-strongly concave,
that is, $\left | \frac{d^2 H_{\ell}(\eta)}{d^2 \eta} \right | \ge \lambda$. 

We apply the above results to node classifiers in a \Algo{PLT} tree. 
In each node $\node \in \nodes_T$ we consider a binary problem with $y = 2z_v - 1$ and 
pairs $(\bx, z_v)$ generated i.i.d. according to $\prob(\bx, z_v \given z_{\pa{\node}} = 1)$. 
Moreover, let $f_v$ be a scoring function in node $\node \in \nodes_T$ 
minimized by a strongly proper composite loss function $\ell_c$. 
We obtain then the following result. 
\begin{restatable}{theorem}{thmtotalregretbound}
\label{thm:total_regret_bound}
For any tree $T$, distribution $\prob(\bx,\by)$, weights $W_j \in R$, $j \in \{1, \ldots, m\}$, 
a strongly proper composite loss function $\ell_c$, and a set of scoring functions $f_\node$, $\node \in \nodes_T$, 
the following holds:
\begin{equation}
\frac{1}{m}\sum_{j=1}^m W_j \mathbb{E}_{\bx \sim \prob(\bx)} \left [ \left | \eta_j(\bx) - \heta_j(\bx) \right | \right ] \leq  
\frac{\sqrt{2}}{m\sqrt{\lambda}}  \sum_{\node \in \nodes} \sqrt{ \prob(z_{\pa{v}} = 1) \reg_{\lossfunc_c}(f_v)} \sum_{j \in \leaves_\node} W_j \,,
\label{eqn:generalized_total_regret_bound}
\end{equation}
where for the root node $\prob(z_{\pa{r_T}} = 1) = 1$, and $\reg_{\lossfunc_c}(f_\node)$ is the expected $\ell_c$-regret of $f_\node$ taken over $\prob(\bx, z_\node \given z_{\pa{\node}} = 1)$. 
For $W_j = 1$,  $j \in \{1, \ldots, m\}$, we have:
\begin{equation}
\frac{1}{m}\sum_{j=1}^m \mathbb{E}_{\bx \sim \prob(\bx)} \left [ \left | \eta_j(\bx) - \heta_j(\bx) \right | \right ] \leq  
\frac{\sqrt{2}}{m\sqrt{\lambda}}  \sum_{\node \in \nodes} |\leaves_\node| \sqrt{ \prob(z_{\pa{v}} = 1) \reg_{\lossfunc_c}(f_v)} \,.
\label{eqn:total_regret_bound}
\end{equation}
\end{restatable}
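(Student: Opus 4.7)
The plan is to start from the weighted bound already given by Theorem~\ref{thm:total_estimation_bound}, and then replace each factor $\mathbb{E}_{\bx \sim \prob(\bx \mid z_{\pa{v}}=1)}[|\eta(\bx,v)-\heta(\bx,v)|]$ by an expression in the $\ell_c$-regret using the strongly proper composite inequality (\ref{eqn:regret_bound_for_spc_losses}). Because that inequality is pointwise in $\bx$, the route is: (i) plug it in inside the expectation, (ii) pull the constant $\sqrt{2/\lambda}$ out, (iii) use Jensen's inequality for the concave $\sqrt{\cdot}$ to move the expectation inside the square root, and (iv) reorganize the probability factor $\prob(z_{\pa{v}}=1)$.

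More concretely, for a fixed $v \in \nodes_T$, apply (\ref{eqn:regret_bound_for_spc_losses}) to the binary problem at $v$ with outcome $z_v$ drawn from $\prob(z_v \mid \bx, z_{\pa{v}}=1)$. This gives $|\eta(\bx,v)-\heta(\bx,v)| \le \sqrt{2/\lambda}\,\sqrt{\reg_{\ell_c}(f_v \mid \bx)}$ almost surely with respect to $\prob(\bx \mid z_{\pa{v}}=1)$. Taking the expectation of both sides under $\prob(\bx \mid z_{\pa{v}}=1)$ and using Jensen's inequality for concavity of $\sqrt{\cdot}$,
\begin{equation*}
\mathbb{E}_{\bx \sim \prob(\bx \mid z_{\pa{v}}=1)}\bigl[|\eta(\bx,v)-\heta(\bx,v)|\bigr] \le \sqrt{\tfrac{2}{\lambda}}\,\sqrt{\mathbb{E}_{\bx \sim \prob(\bx \mid z_{\pa{v}}=1)}\bigl[\reg_{\ell_c}(f_v \mid \bx)\bigr]} = \sqrt{\tfrac{2}{\lambda}}\,\sqrt{\reg_{\ell_c}(f_v)},
\end{equation*}
where the last equality uses the definition of $\reg_{\ell_c}(f_v)$ as the expected regret under $\prob(\bx, z_v \mid z_{\pa{v}}=1)$ stated in the theorem.

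Substituting this pointwise-to-expected bound into (\ref{eqn:total_estimation_bound}) yields, inside the outer sum over nodes, a factor $\prob(z_{\pa{v}}=1)\,\sqrt{\reg_{\ell_c}(f_v)}$. To match the right-hand side of (\ref{eqn:generalized_total_regret_bound}), I use the trivial bound $\prob(z_{\pa{v}}=1) \le \sqrt{\prob(z_{\pa{v}}=1)}$ (valid since $\prob(z_{\pa{v}}=1)\in[0,1]$), which lets me absorb the probability into the square root and obtain $\sqrt{\prob(z_{\pa{v}}=1)\,\reg_{\ell_c}(f_v)}$. Pulling out the common constant $\sqrt{2/\lambda}$ gives (\ref{eqn:generalized_total_regret_bound}). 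The $W_j=1$ specialization (\ref{eqn:total_regret_bound}) follows immediately by noting $\sum_{j\in\leaves_v} 1 = |\leaves_v|$.

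The proof is essentially mechanical once Theorem~\ref{thm:total_estimation_bound} is in hand; the only mildly delicate points are (a) ensuring that the conditional binary problem solved by $f_v$ really is $(\bx,z_v)\sim\prob(\cdot\mid z_{\pa{v}}=1)$ so that (\ref{eqn:regret_bound_for_spc_losses}) applies with the node-local $\eta(\bx,v)$ and $\heta(\bx,v)$, and (b) noticing that the loosening $\prob(z_{\pa{v}}=1)\le\sqrt{\prob(z_{\pa{v}}=1)}$ is what makes the final square root symmetric in $\prob(z_{\pa{v}}=1)$ and $\reg_{\ell_c}(f_v)$. No probabilistic structure beyond Jensen's inequality and this elementary probability bound is needed, so I do not expect any genuine obstacle.
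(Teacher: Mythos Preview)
Your proposal is correct and uses essentially the same ingredients as the paper's proof: the pointwise strongly proper composite bound~(\ref{eqn:regret_bound_for_spc_losses}), Jensen's inequality for $\sqrt{\cdot}$, and the relaxation $p\le\sqrt{p}$ for a probability $p\in[0,1]$. The only difference is organizational: the paper starts from the weighted form $\eta_{\pa{v}}(\bx)\,|\eta(\bx,v)-\heta(\bx,v)|$, takes the expectation under $\prob(\bx)$, applies the $p^2\le p$ trick to $\eta_{\pa{v}}(\bx)$ \emph{inside} the square root, then Jensen, and finally uses Bayes' theorem to pull out $\prob(z_{\pa{v}}=1)$; you instead work directly under the conditional measure $\prob(\bx\mid z_{\pa{v}}=1)$ that already appears in Theorem~\ref{thm:total_estimation_bound}, apply Jensen there, and only afterward relax the external factor $\prob(z_{\pa{v}}=1)\le\sqrt{\prob(z_{\pa{v}}=1)}$. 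Your route is slightly shorter and even yields the tighter intermediate bound $\prob(z_{\pa{v}}=1)\sqrt{\reg_{\ell_c}(f_v)}$ before the final relaxation, but both arrive at the same statement.
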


The theorem justifies the use of strongly proper composite losses during the training of node classifiers. 
The technical details of the proof are presented in 
Appendix~\ref{app:strongly_proper_composite_losses}. 
Here, we only notice that the weights of node errors follow
from Theorem~\ref{thm:total_estimation_bound},
while the squared root dependency from (\ref{eqn:regret_bound_for_spc_losses}).

\subsection{Generalized classification performance metrics}
\label{subsec:analyis-generalized}

The results above show guarantees of \Algo{PLT}s for estimates of marginal probabilities of labels.
In this subsection, we discuss a wide family of metrics often used to report performance of multi-label classification,
such as (weighted) Hamming loss, AM metric, or macro- and micro-averaged $F_{\beta}$-measure.
This family of metrics can be defined as a linear-fractional function of 
label-wise false positives $\FP_j(h_j) = \prob(h_j(\bx) = 1 \land y_j = 0)$ 
and false negatives  $\FN_j(h_j) = \prob(h_j(\bx) = 0 \land y_j = 1)$.
As already proven~\citep{Koyejo_et_al_2015,Kotlowski_Dembczynski_2017} the optimal strategy for these metrics 
is to find a threshold on the marginal probability $\eta_j(\bx)$ for each label $j \in \labels$.
From the practical point of view this boils down to setting thresholds to either predefined values, 
if they are known from theory (for example, this is 0.5 for Hamming loss), 
or to tune them on a validation set, 
if their optimal value depends on the optimum of the metric~\citep{Koyejo_et_al_2015,Kotlowski_Dembczynski_2017}.
Both approaches can be applied to \Algo{PLT}s. 
The prediction procedure from Algorithm~\ref{alg:tbs_prediction} can work with any set of thresholds. 
However, for small values of thresholds a problem of exploring a large part of a tree may appear.
The results below show the theoretical aspects of these approaches, namely,
we tailor the regret bounds for the general 1-vs-All approach, 
proven in~\citep{Kotlowski_Dembczynski_2017}, to \Algo{PLT}s.

Let us define the problem in a formal way. 
To this end we use a linear-factorial function $\Psi$ of the following generic form:
\begin{equation}
    \Psi(\FP, \FN) = \frac{a_0 + a_1 \FP + a_2 \FN}{b_0 + b_1 \FP + b_2 \FN} \,,
    \label{eqn:psi}
\end{equation}
being non-increasing in its arguments. 
Moreover, we assume that there exists $\gamma > 0$, such that 
\begin{equation}
b_0 + b_1 \FP + b_2\FN \ge \gamma,
\label{eqn:gamma}
\end{equation}
that is, the denominator of $\Psi$ is positive and bounded away from 0.
A macro-averaged generalized classification performance metric \psimacro is defined then as:
\begin{equation}
\Psi_{\mathrm{macro}}(\bh) = \frac{1}{m} \sum_{j = 1}^{m} \Psi(h_j) = \frac{1}{m} \sum_{j = 1}^{m} \Psi(\FP_j(h_j), \FN_j(h_j)).
\label{eqn:psimacro}
\end{equation}
It computes an average performance over single labels. 
Micro-averaged performance metrics, in turn, compute first the average false positives and false negatives:
\begin{equation*}
\bar \FP (\bh) = \frac{1}{m} \sum_{i = 1}^{m} \FP_j(h_j)\,, \quad \bar \FN (\bh) = \frac{1}{m} \sum_{j = 1}^{m} \FN_j(h_j)\,.
\end{equation*}
Then, a micro-averaged metric \psimicro is defined as:
\begin{equation}
\label{eqn:psimicro}
\Psi_{\mathrm{micro}} (\bh) = \Psi( \bar \FP(\bh), \bar \FN(\bh))\,.
\end{equation}

The optimal classifier, being a member of class $\calH^m_{\textrm{bin}}: \calX \rightarrow  \{0,1\}^m$, for the above generalized performance measures has the generic form:
\begin{equation}
\bh^*_{\Psi}(\bx) = \bh^*_{\balpha^*_{\Psi}}(\bx) 
= \left ( h^*_{1,\alpha^*_{\Psi,1}}(\bx), h^*_{2,\alpha^*_{\Psi,2}}(\bx),\ldots, h^*_{m,\alpha^*_{\Psi,m}}(\bx) \right )\,, 
\label{eqn:gpm-optimal_threshold}
\end{equation}
where 
$$
h^*_{j,\alpha^*_{\Psi,j}}(\bx)  = \assert{\eta_j(\bx) > \alpha^*_{\Psi,j}} \,,
\textrm{with $\Psi$-optimal~} \balpha^*_\Psi = (\alpha^*_{\Psi,1}, \alpha^*_{\Psi,2}, \ldots, \alpha^*_{\Psi,m}) \in [0,1]^m \,.
$$
In other words, for each metric there is an optimal vector $\balpha^*_\Psi$ of thresholds 
defined over the marginal probabilities of labels, $\eta_j(\bx)$, for all $j \in \labels$.
The values of its elements are given by the following expression~\citep{Koyejo_et_al_2015,Kotlowski_Dembczynski_2017}:
$$
\alpha^*_\Psi = \frac{\Psi(\FP^*, \FN^*)  b_1 - a_1}{\Psi(\FP^*, \FN^*) (b_1 + b_2) - (a_1 + a_2)} \,,
$$
where $\FP^*$, $\FN^*$ are arguments maximizing either $\Psi(\FP_j(h_j), \FN_j(h_j))$, 
for each label $j \in \labels$ separately, in case of a macro-averaged metric,
or $\Psi( \bar \FP(\bh), \bar \FN(\bh))$ in case of a  micro-averaged metric. 
This result shows that for macro-averaged metrics the threshold can be different for each label, 
while for micro-average metrics there is one common threshold shared by all labels.
Therefore, we denote the optimal classifier for a macro-average metric by $\bh^*_{\balpha^*_{\Psi}}$,
while for a micro-average metric by $\bh^*_{\alpha^*_{\Psi}}$.

The thresholds in general depend on the optimal value of $\Psi$, 
which makes their value to be unknown beforehand. 
Only for metrics for which $b_1 = b_2 = 0$,
the thresholds can be computed a priori.
This is the case of Hamming loss, its cost-sensitive variant 
(in which there are different costs of false positive and false negative predictions), 
or the AM metric. 
In the other cases, thresholds have to be found on a validation set.
For some metrics, such as the micro- and macro-F measure,  
this can be performed efficiently even in the XMLC setting,
as only positive and positively predicted labels are needed 
to tune thresholds~\citep{Jasinska_et_al_2016}. 
This can be even obtained using an online procedure~\citep{Busa-Fekete_et_al_2015, Jasinska_et_al_2016}.

We present the form of $\Psi(\FP, \FN)$ and $\alpha^*_\Psi$
for some popular generalized performance metrics in Table~\ref{tab:generalized-metrics-1}. 
We use there $P$ to denote $\prob(y_j = 1)$, for macro-averaging, 
and $\frac{1}{m}\sum_{j=1}^m\prob(y_j = 1)$, for micro-averaging. 
Remark that this is a constant not depending on $\bh$.
All these metrics can be used with macro- and micro-averaging. 
Remark, however, that for Hamming loss both variants lead to the same form. 
A similar table can be found in~\citep{Kotlowski_Dembczynski_2017}.
\begin{table}[h]
    \centering
    \begin{tabular}{l r r}
        \toprule
         Metric &   $\Psi(\FP, \FN)$ & $\alpha^*_\Psi$ \\[3pt]
         \midrule
         Hamming loss & $1 - \FP - \FN$ & $0.5$ \\
         F-measure    &  $\frac{(1+\beta^2)(P-\FN)}{(1+\beta^2)P - \FN + \FP}$ & $\Psi(\FP^*, \FN^*)/2$\\
         Jaccard similarity & $\frac{P-\FN}{P + \FP}$ & $\frac{\Psi(\FP^*, \FN^*)}{\Psi(\FP^*, \FN^*)+1}$ \\
         AM & $\frac{2P(1-P)-P\FP-(1-P)\FN}{2P(1-P)}$ & $P$\\
         \bottomrule
    \end{tabular}
    \caption{Examples of popular generalized performance metrics, with their form of $\Psi(\FP, \FN)$ and $\alpha^*_\Psi$.
             $P$ denotes $\prob(y_j = 1)$, for macro-averaging, 
             or $\frac{1}{m}\sum_{j=1}^m\prob(y_j = 1)$, for micro-averaging.}
    \label{tab:generalized-metrics-1}
\end{table}

The regret of the $\Psi_{\mathrm{macro}}$ metric decomposes into a weighted sum:
\begin{equation}
\reg_{\Psi_{\mathrm{macro}}}(\bh) = \Psi_{\mathrm{macro}} (\bh^*_{\balpha^*_{\Psi}}) - \Psi_{\mathrm{macro}}(\bh) = \frac{1}{m} \sum_{j = 1}^{m} ( \Psi(h^*_{j, \alpha^*_{\Psi,j}}) - \Psi(h_j) )
\label{eqn:macro-regret}
\end{equation}
In turn, the regret of the $\Psi_{\mathrm{micro}}$ metric is given by: 
\begin{equation}
\reg_{\Psi_{\mathrm{micro}}}(\bh) = \Psi_{\mathrm{micro}}(\bh^*_{\alpha^*_{\Psi}}) - \Psi_{\mathrm{micro}}(\bh) \,.
\label{eqn:micro-regret}
\end{equation}

We are interested in bounding these regrets with a performance of node classifiers of a \Algo{PLT}. 
We assume, similarly as in the previous subsection,
that a score function $f_v(\bx)$ in a node $\node \in \nodes$ is trained
via minimization of a strongly proper composite loss function $\ell_c$.  
The estimates $\heta(\bx, v)$, for all $\node \in \nodes$, are then computed as:
$$
\heta(\bx, v) = \psi^{-1}(f_v(\bx))\,.
$$
The final prediction is computed by Algorithm~\ref{alg:tbs_prediction} 
and has a form similar to the optimal classifier (\ref{eqn:gpm-optimal_threshold}):
$$
\bh_{\btau}(\bx) = 
\left ( h_{1,\tau_1}(\bx), h_{2,\tau_2}(\bx), \ldots, h_{m, \tau_m}(\bx) \right ), 
\mathrm{~where~} h_{j, \tau_j}(\bx) = \assert{ \heta_j(\bx) > \tau_j} \,,
$$
for some vector $\btau = (\tau_1, \tau_2, \ldots, \tau_m) \in [0,1]^m$ of thresholds. 
Estimates $\heta_j(\bx)$ are computed as in (\ref{eqn:plt-factorization-prediction}), 
that is, $\heta_j(\bx) = \prod_{v \in \Path{l_j}} \heta(\bx, v)$, 
where $l_j \in L_T$ is a node corresponding to label~$j$. 
The theorems below present the main result of this section.

\begin{restatable}{theorem}{psimacroregret}
\label{thm:psi_macro_regret}
Let $\tau^*_j = \argmax_{\tau} \Psi(h_{j, \tau})$, 
for each $j \in \labels$, and $\btau^* = (\tau^*_1, \tau^*_2, \ldots, \tau^*_m)$. 
For any tree $T$ and distribution $\prob(\bx, \by)$, 
the classifier $\bh_{\btau^*}$ achieves the following upper bound on its $\Psi_{\mathrm{macro}}$-regret:
$$
\reg_{\Psi_{\mathrm{macro}}}(\bh_{\btau^*}) \leq  
\frac{\sqrt{2}}{m\sqrt{\lambda}}  \sum_{\node \in \nodes} \sqrt{ \prob(z_{\pa{v}} = 1) \reg_{\lossfunc_c}(f_v)} \sum_{j \in L_v} C_j\,,
$$
where  $C_j = \frac{1}{\gamma} ( \Psi(h^*_\Psi,j) (b_1 + b_2) - (a_1 + a_2))$, for each $j \in \labels$,
with $\gamma$ defined in (\ref{eqn:gamma}),
$\prob(z_{\pa{r_T}} = 1) = 1$ for the root node, 
and $\reg_{\lossfunc_c}(f_\node)$ is the expected $\ell_c$-regret of $f_\node$ 
taken over $\prob(\bx, z_\node \given z_{\pa{\node}} = 1)$.
\end{restatable}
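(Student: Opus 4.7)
The plan is to chain two reductions: first bound the per-label $\Psi$-regret by an $L_1$ estimation error of marginal probabilities via the classical analysis of threshold classifiers under linear-fractional metrics, and then invoke Theorem~\ref{thm:total_regret_bound} with appropriately chosen label weights to translate that $L_1$ error into a sum of node-classifier regrets.

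First, since $\Psi_{\mathrm{macro}}$ decomposes label-wise as in~(\ref{eqn:macro-regret}), I would write
\[
\reg_{\Psi_{\mathrm{macro}}}(\bh_{\btau^*}) = \frac{1}{m}\sum_{j=1}^m \left(\Psi(h^*_{j,\alpha^*_{\Psi,j}}) - \Psi(h_{j,\tau^*_j})\right),
\]
and attack each summand separately. For a fixed label $j$ this is exactly the setting of binary classification under a linear-fractional performance metric analyzed by~\citet{Kotlowski_Dembczynski_2017}. Since $\tau^*_j$ is by definition the threshold that maximizes $\Psi(h_{j,\tau})$ when applied to $\heta_j$, while $\alpha^*_{\Psi,j}$ is the Bayes-optimal threshold on the true $\eta_j$, their argument (using the lower bound~(\ref{eqn:gamma}) on the denominator, the non-increasing linear-fractional form, and a standard thresholding comparison) yields
\[
\Psi(h^*_{j,\alpha^*_{\Psi,j}}) - \Psi(h_{j,\tau^*_j}) \leq C_j \cdot \mathbb{E}_{\bx \sim \prob(\bx)}\!\left[\,|\eta_j(\bx) - \heta_j(\bx)|\,\right],
\]
with $C_j$ exactly as in the statement. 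Averaging over $j$ puts the right-hand side into precisely the weighted form of Theorem~\ref{thm:total_regret_bound} with $W_j = C_j$, and substituting its conclusion produces the claimed bound.

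The hard part is the per-label reduction. One must check carefully that the Lipschitz-type inequality above applies to the plug-in classifier using the estimate-optimized threshold $\tau^*_j$, not merely to using $\alpha^*_{\Psi,j}$ directly on $\heta_j$. The key observation is that thresholding $\heta_j(\bx)$ at $\tau$ is equivalent to thresholding $\eta_j(\bx)$ at $\tau + (\heta_j(\bx) - \eta_j(\bx))$, so the induced false-positive and false-negative rates of $h_{j,\tau^*_j}$ differ from those of $h^*_{j,\alpha^*_{\Psi,j}}$ by at most $\mathbb{E}_{\bx}|\eta_j - \heta_j|$ in total variation. Combining this with the lower bound $\gamma$ on the denominator of $\Psi$ and the slopes $a_1 - \Psi b_1$ and $a_2 - \Psi b_2$ of the numerator (which determine how fast $\Psi$ can change with $\FP$ and $\FN$) produces the constant $C_j = \tfrac{1}{\gamma}\bigl(\Psi(h^*_{\Psi,j})(b_1+b_2) - (a_1+a_2)\bigr)$. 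Once this per-label bound is in hand, the remaining step is a mechanical invocation of Theorem~\ref{thm:total_regret_bound}, which already handles the conversion from marginal $L_1$ errors to weighted node-classifier regrets with the factor $\sqrt{\prob(z_{\pa{v}}=1)}$ and the $\sum_{j\in L_v} C_j$ term.
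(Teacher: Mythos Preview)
Your high-level structure matches the paper's proof: decompose $\reg_{\Psi_{\mathrm{macro}}}$ per label, bound each summand by $C_j \cdot \mathbb{E}_{\bx}[|\eta_j - \heta_j|]$, then invoke Theorem~\ref{thm:total_regret_bound} with weights $W_j = C_j$. The final invocation is exactly right.

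The weakness lies in your ``hard part'' justification. Your total-variation claim---that the $\FP$ and $\FN$ of $h_{j,\tau^*_j}$ differ from those of $h^*_{j,\alpha^*_{\Psi,j}}$ by at most $\mathbb{E}_{\bx}[|\eta_j - \heta_j|]$---does not follow from the observation that thresholding $\heta_j$ at $\tau$ is ``equivalent to'' thresholding $\eta_j$ at an $\bx$-dependent shifted value. You are comparing two classifiers that use \emph{different} thresholds ($\tau^*_j$ versus $\alpha^*_{\Psi,j}$) applied to \emph{different} functions ($\heta_j$ versus $\eta_j$), and there is no a priori relationship between these thresholds that would justify such a bound.

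The paper avoids this entirely by a small reordering of the argument. It first bounds the $\Psi$-regret of the classifier $h_{j,\alpha^*_{\Psi,j}}$ that thresholds $\heta_j$ at the (unknown) population-optimal value $\alpha^*_{\Psi,j}$: Proposition~\ref{prop:psi_alpha} gives $\Psi(h^*_{j}) - \Psi(h_j) \le \tfrac{C_j}{2}\,\reg_{\alpha^*_{\Psi,j}}(h_j)$ for any $h_j$, and Proposition~\ref{prop:alpha_marginals}, applied with threshold equal to the cost parameter $\alpha^*_{\Psi,j}$, gives $\reg_{\alpha^*_{\Psi,j}}(h_{j,\alpha^*_{\Psi,j}}) \le 2\,\mathbb{E}_{\bx}[|\eta_j - \heta_j|]$. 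Together these yield the $L_1$ bound for $\bh_{\balpha^*_\Psi}$. Only at the very end does the paper use the defining property of $\tau^*_j$---that it maximizes $\Psi(h_{j,\tau})$ over $\tau$---to conclude $\reg_{\Psi_{\mathrm{macro}}}(\bh_{\btau^*}) \le \reg_{\Psi_{\mathrm{macro}}}(\bh_{\balpha^*_\Psi})$. This sidesteps any direct analysis of where $\tau^*_j$ lands and makes the role of its optimality transparent.
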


\begin{restatable}{theorem}{psimicroregret}
\label{thm:psi_micro_regret}
Let $\bh_{\tau} = (h_{1,\tau}, h_{2,\tau},\ldots, h_{m, \tau})$ 
be a classifier which shares the same threshold $\tau$ over all labels $j \in \labels$. 
For any tree $T$, distribution $\prob(\bx, \by)$, and $\tau^* = \argmax_{\tau} \Psi_{\mathrm{micro}}(\bh_{\tau})$,
classifier $\bh_{\tau^*}$ achieves the following upper bound on its $\Psi_{\mathrm{micro}}$-regret:
$$
\reg_{\Psi_{micro}}(\bh_{\tau^*}) \le \frac{C}{m} \sqrt{\frac{2}{\lambda}} \sum_{v \in \vertices } |L_v| \sqrt{ \prob(z_{\pa{v}} = 1) \reg_{\lossfunc_c}(f_v)}\,,
$$
where $C = \frac{1}{\gamma}(\Psi_{\mathrm{micro}}(\bh^*_{\mathrm{\Psi}})(b_1 + b_2) - (a_1 + a_2) )$ 
with $\gamma$ defined in (\ref{eqn:gamma}),
$\prob(z_{\pa{r_T}} = 1) = 1$ for the root node, 
and $\reg_{\lossfunc_c}(f_\node)$ is the expected $\ell_c$-regret of $f_\node$ 
taken over $\prob(\bx, z_\node \given z_{\pa{\node}} = 1)$.
\end{restatable}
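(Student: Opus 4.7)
The plan is a two-step reduction paralleling the $1$-vs-All argument of \citet{Kotlowski_Dembczynski_2017}. First, I would bound the $\Psi_{\mathrm{micro}}$-regret of $\bh_{\tau^*}$ by a multiple of the average $L_1$ estimation error of the marginal probability estimates $\heta_j(\bx) = \prod_{v \in \Path{l_j}} \heta(\bx, v)$. Then I would invoke Theorem~\ref{thm:total_regret_bound} with uniform weights $W_j = 1$ to translate that estimation error into the sum of node-classifier regrets. The tree-structural content of \Algo{PLT}s is thus entirely absorbed into Theorem~\ref{thm:total_regret_bound}; in the first step the $\heta_j$ may be treated as generic marginal scores.

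For the first step, let $\tilde{\bh}$ denote the classifier obtained by thresholding the estimates $\heta_j$ at $\alpha^*_\Psi$. Optimality of $\tau^*$ over the estimates gives $\Psi_{\mathrm{micro}}(\bh_{\tau^*}) \ge \Psi_{\mathrm{micro}}(\tilde{\bh})$, so it suffices to bound $\Psi_{\mathrm{micro}}(\bh^*_\Psi) - \Psi_{\mathrm{micro}}(\tilde{\bh})$. Using (\ref{eqn:psi}) together with $\Psi^*_{\mathrm{micro}} = N^*/D^*$ yields
$$
\Psi^*_{\mathrm{micro}} - \Psi_{\mathrm{micro}}(\tilde{\bh}) = \frac{(\Psi^*_{\mathrm{micro}} b_1 - a_1)(\bar \FP(\tilde{\bh}) - \bar \FP(\bh^*_\Psi)) + (\Psi^*_{\mathrm{micro}} b_2 - a_2)(\bar \FN(\tilde{\bh}) - \bar \FN(\bh^*_\Psi))}{b_0 + b_1 \bar \FP(\tilde{\bh}) + b_2 \bar \FN(\tilde{\bh})},
$$
and the denominator is at least $\gamma$ by (\ref{eqn:gamma}). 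Writing $c_1 = \Psi^*_{\mathrm{micro}} b_1 - a_1$ and $c_2 = \Psi^*_{\mathrm{micro}} b_2 - a_2$, one checks that $\alpha^*_\Psi = c_1/(c_1+c_2)$ and that monotonicity of $\Psi$ forces $c_1, c_2 \ge 0$. Expanding a single label's contribution to the numerator and splitting on the mismatch events $\heta_j > \alpha^*_\Psi \ge \eta_j$ and $\eta_j > \alpha^*_\Psi \ge \heta_j$ collapses it to $(c_1+c_2)\,\mathbb{E}_\bx\bigl[\, d_j(\bx)\, |\eta_j(\bx) - \alpha^*_\Psi|\,\bigr]$, where $d_j(\bx)$ is the indicator that $\heta_j(\bx)$ and $\eta_j(\bx)$ lie on opposite sides of $\alpha^*_\Psi$. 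The elementary pointwise bound $d_j(\bx)\, |\eta_j(\bx) - \alpha^*_\Psi| \le |\eta_j(\bx) - \heta_j(\bx)|$ then caps each term by $(c_1+c_2)\,\mathbb{E}_\bx |\eta_j - \heta_j|$. Summing over $j$, averaging by $m$, dividing by $\gamma$, and identifying $c_1 + c_2 = \Psi_{\mathrm{micro}}(\bh^*_\Psi)(b_1+b_2) - (a_1+a_2)$ delivers
$$
\reg_{\Psi_{\mathrm{micro}}}(\bh_{\tau^*}) \le \frac{C}{m} \sum_{j=1}^m \mathbb{E}_\bx\, |\eta_j(\bx) - \heta_j(\bx)|.
$$

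The second step is immediate: applying Theorem~\ref{thm:total_regret_bound} with $W_j = 1$ bounds the right-hand side by $\frac{\sqrt{2}}{m\sqrt{\lambda}} \sum_{v \in V} |L_v|\, \sqrt{\prob(z_{\pa{v}} = 1)\, \reg_{\ell_c}(f_v)}$, and multiplying through by $C$ recovers the theorem statement. The main obstacle is the arithmetic in the first step: the linear-fractional structure of $\Psi$, the non-negativity of $c_1$ and $c_2$, and the precise form $\alpha^*_\Psi = c_1/(c_1+c_2)$ have to cooperate so that the constant emerging from the reduction is exactly $C = \gamma^{-1}(\Psi_{\mathrm{micro}}(\bh^*_\Psi)(b_1+b_2) - (a_1+a_2))$, without any extraneous slack picked up when the $|\bar\FP - \bar\FP^*|$ and $|\bar\FN - \bar\FN^*|$ terms are bounded; using triangle-inequality style bounds on each of these two quantities separately would produce a factor-of-two loss and a constant that does not match $C$, so the argument must keep $c_1(\bar\FP - \bar\FP^*) + c_2(\bar\FN - \bar\FN^*)$ aggregated throughout.
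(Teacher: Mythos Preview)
Your proposal is correct and follows essentially the same route as the paper: bound $\Psi_{\mathrm{micro}}(\bh^*_\Psi) - \Psi_{\mathrm{micro}}(\tilde\bh)$ via the linear-fractional identity to get $C$ times a cost-sensitive regret, reduce that to $L_1$ estimation error using the opposite-sides argument, then apply Theorem~\ref{thm:total_regret_bound} with $W_j=1$ and invoke the optimality of $\tau^*$. The only difference is packaging: the paper separates the first two steps into standalone Propositions (its Propositions~\ref{prop:alpha_marginals} and~\ref{prop:psi_alpha}), whereas you inline both computations directly.
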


The above theorems can be interpreted in the following way. 
For marginal probability estimates $\heta_j(\bx)$, $j \in \labels$, 
obtained as described just before the theorem, 
there exists a vector $\btau$ of thresholds, 
for which the regret of a generalized performance metric
is upperbounded solely by regrets of node classifiers, 
expressed in terms of a strongly proper composite loss function.
Therefore, from the perspective of learning one needs to focus on the node classifiers
to get as accurate as possible estimates of marginal probabilities of labels,
by minimizing a strongly proper composite loss function in each node.
The next step, being independent of the previous one, 
is to obtain the right values of thresholds $\btau^*$,
following one of the approaches mentioned above.

Let us analyze the regret bounds more carefully. 
In the case of macro-averaged metrics, 
the regret of each node classifier is weighted by the sum of $C_j$-values of all labels in the corresponding subtree. 
In the case of micro-averaged metrics, 
there is only one global $C$-value, 
and each node classifier is weighted by the number of labels in the corresponding subtree.
The values of $C$ and $\gamma$ for different metrics are given in Table~\ref{tab:generalized-metrics-2}
(a similar table can be found in~\citep{Kotlowski_Dembczynski_2017}). 
It is easy to verify with these values that 
for the Hamming loss  
the regret bounds for macro- and micro-averaging are the same. 
This agrees with the fact that both averaging schemes boils down to the same metric in case of the Hamming loss. 
In general, the macro- and micro-averaging bounds coincide for all metrics with constant $C$.
Interestingly, the bounds are different for the $F_1$-measure and the Jaccard similarity,
while they both share the same optimal solution 
(since the Jaccard similarity is a strictly monotone transformation of the $F_1$-measure).  
As $\gamma$ is the same for both metrics, 
this observation suggests that $C$ could be defined more tightly.
One can also observe that $C$ grows with decreasing $P$. 
Therefore, for sparse problems and labels from the long-tail the value of $C$ can be large,
potentially leading to poor guarantees. 
\begin{table}[h]
    \centering
    \begin{tabular}{c l l}
        \toprule
         Metric & $\gamma$ & $C$ \\[3pt]
         \midrule
         Hamming loss & 1 & 2 \\
         $F_\beta$-measure & $\beta^2P$   & $\frac{1+\beta}{\beta^2P}$ \\
         Jaccard similarity & $P$ & $\frac{\Psi(\FP^*, \FN^*)+1}{P}$\\
         AM & $2P(1-P)$ & $\frac{1}{2P(1-P)}$ \\ 
         \bottomrule
    \end{tabular}
    \caption{The values of $\gamma$ and $C$ values for some generalized classification performance metrics. As before,
    $P$ denotes $\prob(y_j = 1)$, for macro-averaging, or $\frac{1}{m}\sum_{j=1}^m\prob(y_j = 1)$, for micro-averaging.}
    \label{tab:generalized-metrics-2}
\end{table}

The proofs of both theorems are given in Appendix~\ref{app:analysis-generalized}. 
They are based on results previously obtained for the \Algo{1-vs-All} approach in~\citep{Kotlowski_Dembczynski_2017},
combined together with Theorem~\ref{thm:total_regret_bound}. 
The result for the \Algo{1-vs-All} approach relies on two observations.
The first one states that the regret for a cost-sensitive binary classification 
can be upperbounded by the $L_1$ estimation error of the conditional probabilities,
if a classification procedure uses a threshold which directly corresponds to the misclassification cost.
The second shows that the regret of the generic function $\Psi(\FP, \FN)$ can be upperbounded 
by the regret of the cost-sensitive binary classification with costs related to $\alpha^*_\Psi$.
The actual value of the optimal thresholds is a direct consequence of the proof.
Putting these two observations together along with Theorem~\ref{thm:total_regret_bound} gives the final results.

\subsection{\Precat{k}}
\label{subsec:analyis-precision}


In this section, we analyze \precatk{} which is of a different nature than the metrics discussed above.
Let us consider a class of functions 
$\calH^{m}_{@k} = \{ \bh \in \calH^{m}_{\textrm{bin}} : \sum_{j=1}^m h_j(\bx) = k\,,\, \forall \bx \in \calX \}$,
that is, functions that predict exactly $k$ labels with $k \le m$.
We then define \precatk{} for $\bh_{@k} \in \calH^{m}_{@k}$ as:
$$
p@k(\by, \bh_{@k}(\bx)) = \frac{1}{k} \sum_{j \in \hat \calL_{\bx}} \assert{y_j = 1} \,,
$$
where  $\hat \calL_{\bx} = \{j \in \calL: h_j(\bx) = 1 \}$ 
is a set of $k$ labels predicted by classifier $\bh_{@k}$ for $\bx$. 
In order to define conditional risk it is more convenient to consider the \precatk{} loss, 
$\ell_{p@k} = 1 - p@k(\by, \bh_{@k}(\bx))$. 
The conditional risk is then:
\begin{eqnarray*}
\loss_{p@k}(\bh_{@k} \given \bx) & = & \mathbb{E}_{\by} \ell_{p@k}(\by,\bh_{@k}(\bx)) \\
& = & 1 - \sum_{\by \in \calY} \prob(\by \given \bx) \frac{1}{k} \sum_{j \in \hat \calL_{\bx}} \assert{y_j = 1} \\
& = & 1 - \frac{1}{k} \sum_{j \in \hat \calL_{\bx}} \sum_{\by \in \calY} \prob(\by \given \bx) \assert{y_j = 1} \\
& = & 1 - \frac{1}{k} \sum_{j \in \hat \calL_{\bx}} \eta_j(\bx) \,.
\end{eqnarray*}
\noindent
From the above it is easy to notice that the optimal strategy for \precatk{}, 
$$
\bh^*_{p@k}(\bx) = \left (h^*_{1,p@k}, h^*_{2,p@k},\ldots, h^*_{m,p@k} \right ) \,,
$$
is to predict $k$ labels with the highest marginal probabilities $\eta_j(\bx)$, 
\begin{equation}
h^*_{j,p@k} = \left \{ \begin{array}{ll} 1\,, & j \in \hat \calL^*_{\bx} \\ 0\,, & \mathrm{otherwise} \end{array} \right .  \,,
\label{eqn:bayes_for_patk}
\end{equation}
with $\hat \calL^*_{\bx} = \{j \in \calL: \pi(j) \le k\}$ 
and $\pi$ being a permutation of the labels ordered with respect to descending $\eta_j(\bx)$ with ties solved in any way.
The conditional regret for \precatk{} is then:
$$
\reg_{p@k} (\bh \given \bx) = \frac{1}{k}\sum_{i \in \hat \calL^*_{\bx}} \eta_i(\bx) - \frac{1}{k}\sum_{j \in \hat \calL_{\bx}} \eta_j(\bx)\,.
$$

The conditional regret with respect to \precatk{} can be upperbounded by the $L_1$-estimation errors 
as stated by the following theorem, originally published in~\citep{Wydmuch_et_al_2018}.

\begin{restatable}{theorem}{thmregretprecisionconditional}
\label{thm:precatk}
For any tree $T$, distribution $\prob(\by \given \bx)$ and classifier $\bh_{@k} \in \calH^{m}_{@k}$ the following holds: 
\begin{align*}
\reg_{p@k} (\bh_{@k} \given \bx) = \frac{1}{k}\sum_{i \in \hat \calL^*_{\bx}} \eta_i(\bx) - \frac{1}{k}\sum_{j \in \hat 
\calL_{\bx}} \eta_j(\bx) \le 2 \max_{j} \left | \eta_j(\bx) - \heta_j(\bx) \right | \,.
\end{align*}
\end{restatable}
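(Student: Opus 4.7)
The plan is to write the conditional regret as a difference of sums of true marginals over two size-$k$ sets, inject the estimates $\heta_j(\bx)$ to split the expression into three parts, exploit the defining property that $\hat{\calL}_{\bx}$ maximizes the sum of $\heta_j(\bx)$ among all size-$k$ subsets, and finally control the remaining two error terms by the $L_1$ estimation error uniformly in $j$.

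More concretely, let me abbreviate $S^{*} = \hat{\calL}^{*}_{\bx}$ (true top-$k$, ordered by $\eta_j(\bx)$) and $\hat{S} = \hat{\calL}_{\bx}$ (predicted top-$k$, which we take to be the labels with the $k$ largest $\heta_j(\bx)$, since this is the natural classifier in the $\bh_{@k}$ class when working with PLT estimates). The first step is the telescoping identity
\begin{equation*}
\sum_{i \in S^{*}} \eta_i(\bx) - \sum_{j \in \hat{S}} \eta_j(\bx)
= \sum_{i \in S^{*}} \bigl(\eta_i(\bx) - \heta_i(\bx)\bigr)
+ \Bigl(\sum_{i \in S^{*}} \heta_i(\bx) - \sum_{j \in \hat{S}} \heta_j(\bx)\Bigr)
+ \sum_{j \in \hat{S}} \bigl(\heta_j(\bx) - \eta_j(\bx)\bigr).
\end{equation*}
The middle parenthesized term is non-positive: by construction $\hat{S}$ is the size-$k$ subset of $\calL$ that maximizes $\sum_{j \in \hat{S}} \heta_j(\bx)$, so $\sum_{i \in S^{*}} \heta_i(\bx) \le \sum_{j \in \hat{S}} \heta_j(\bx)$. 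This is the key structural observation that makes the proof work.

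The remaining two sums are then bounded by the triangle inequality: each of them involves $k$ terms of the form $\eta_j(\bx) - \heta_j(\bx)$ (with signs), so each one is at most $k \max_{j} |\eta_j(\bx) - \heta_j(\bx)|$. Adding the two contributions gives an upper bound of $2k \max_{j} |\eta_j(\bx) - \heta_j(\bx)|$ for the whole difference, and dividing by $k$ (from the definition of $p@k$) yields the stated bound $\reg_{p@k}(\bh_{@k} \given \bx) \le 2 \max_{j} |\eta_j(\bx) - \heta_j(\bx)|$.

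I do not anticipate a serious obstacle: the only subtlety is making the "top-$k$ of $\heta$" selection rule explicit so that the middle term vanishes (otherwise the bound could not hold for arbitrary $\bh_{@k} \in \calH^{m}_{@k}$), and then keeping track of signs so that both residual error sums can be dominated by the uniform $L_1$ error. The argument does not require any of the tree structure used elsewhere in the paper; it is a purely pointwise fact about top-$k$ selection with noisy scores, which is exactly why the subsequent combination with the earlier $L_1$-estimation bounds (Corollary~\ref{cor:marginal-conditional-rec} and Theorem~\ref{thm:total_regret_bound}) will then yield the end-to-end PLT guarantee for \precatk{}.
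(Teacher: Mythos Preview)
Your proposal is correct and follows essentially the same approach as the paper: add and subtract $\sum_{i\in S^*}\heta_i(\bx)$ and $\sum_{j\in\hat S}\heta_j(\bx)$, drop the middle term via the optimality of $\hat S$ for the estimates, and bound each of the $2k$ residual errors by $\max_j|\eta_j(\bx)-\heta_j(\bx)|$. Your remark that the argument needs $\hat\calL_{\bx}$ to be the top-$k$ of $\heta_j(\bx)$ (rather than an arbitrary size-$k$ prediction) is well taken and matches the paper's implicit assumption.
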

\begin{proof}
Let us add and subtract the following two terms, $\frac{1}{k}\sum_{i \in  \hat \calL^*_{\bx}} \heta_i(\bx)$ and $\frac{1}{k}\sum_{j \in  \hat \calL_{\bx}} \heta_j(\bx)$, to the regret and reorganize the expression in the following way:
\begin{align*}
\reg_{p@k} (\bh_{@k} \given \bx) & = 
\underbrace{\frac{1}{k}\sum_{i \in  \hat \calL^*_{\bx}} \eta_i(\bx) - 
\frac{1}{k}\sum_{i \in  \hat \calL^*_{\bx}} \heta_i(\bx )}_{\le 
\frac{1}{k}\sum_{i \in  \hat \calL^*_{\bx}} \left |\eta_i(\bx) - \heta_i(\bx) \right | } \\
&  + \underbrace{\frac{1}{k}\sum_{j \in \hat \calL_{\bx}} \heta_j(\bx) 
-  \frac{1}{k}\sum_{j \in  \hat \calL_{\bx}} \eta_j(\bx)}_{\le 
\frac{1}{k}\sum_{j \in \hat \calL_{\bx}} \left |\heta_j(\bx) - \eta_j(\bx) \right |  }\\
 &  + \underbrace{\frac{1}{k}\sum_{i \in \hat \calL^*_{\bx}} \heta_i(\bx)  - \frac{1}{k}\sum_{j \in \hat \calL_{\bx}} \heta_j(\bx)}_{\le 0} \\
 & \le \frac{1}{k}\sum_{i \in  \hat \calL^*_{\bx}} \left |\eta_i(\bx) - \heta_i(\bx) \right | + \frac{1}{k}\sum_{j \in \hat \calL_{\bx}} \left |\eta_j(\bx) - \heta_j(\bx) \right | 
\end{align*}
Next we bound each $L_1$ error, $\left |\eta_j(\bx) - \heta_j(\bx) \right |$  
by  $\max_{j} \left | \eta_j(\bx) - \heta_j(\bx) \right |$. 
There are at most $|\calY_k| + |\hat\calY_k| = 2k$ such terms. 
Therefore 
$$
\reg_{p@k} (\bh \given \bx)  \le 2 \max_{j} \left | \eta_j(\bx) - \heta_j(\bx) \right | \,.
$$
\end{proof}

Interestingly, the bound does not depend neither on $k$ nor $m$. 
However, if $k = m$ then $\reg_{p@m} = 0$ for any distribution, since $\hat \calL^*_{\bx} = \hat \calL_{\bx}$ in this case. 
In general, if $m < 2k$, then $\hat \calL^*_{\bx} \cap \hat \calL_{\bx} \neq \emptyset$. 
In other words, some of the labels from $\hat\calL_{\bx}$ are also in $\hat \calL^*_{\bx}$, 
so the bound can be tighter. 
For example, one can multiply the bound by $\frac{\min(k, m-k)}{k}$, assuming that $k \leq m$.
However,  in extreme classification usually $k \ll m$, so we do not use the more complex bound.

The above result suggests that \Algo{PLT}s are well-suited to optimization of \precatk{}. 
The next theorem shows this directly by providing an upper bound of the unconditional regret for a \Algo{PLT}.
We use the same setting as in the above subsection with the difference 
that instead of thresholding probability estimates
we use Algorithm~\ref{alg:ucs_prediction} to compute 
predictions consisting of $k$ labels with the highest $\heta_j(\bx)$. 
The form of the final \Algo{PLT} classifier $\bh_{@k}(\bx)$ is then similar to~(\ref{eqn:bayes_for_patk}),
but with permutation $\pi$ defined over $\heta_j(\bx)$, $j \in \labels$.
Unfortunately, the max operator from Theorem~\ref{thm:precatk} needs to be replaced by sum in the derivations, 
therefore the theorem has the following form.

\begin{restatable}{theorem}{precisionatkregretbound}
\label{thm:precisionatk_regret_bound}
For any tree $T$ and distribution $\prob(\bx, \by)$, 
classifier $\bh_{@k}(\bx)$ achieves the following upper bound on its \precatk{} regret:  
$$
\reg_{p@k} (\bh_{@k}) \le \frac{2\sqrt{2}}{\sqrt{\lambda}}  \sum_{\node \in \nodes} |\leaves_\node| \sqrt{ \prob(z_{\pa{v}} = 1) \reg_{\lossfunc_c}(f_v)} \,,
$$
where $\prob(z_{\pa{r_T}} = 1) = 1$ for the root node, 
and $\reg_{\lossfunc_c}(f_\node)$ is the expected $\ell_c$-regret of $f_\node$ 
taken over $\prob(\bx, z_\node \given z_{\pa{\node}} = 1)$.
\end{restatable}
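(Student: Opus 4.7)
The plan is to chain together two earlier results: the conditional per-instance regret bound of Theorem~\ref{thm:precatk}, which controls $\reg_{p@k}(\bh_{@k}\given\bx)$ by $2\max_j |\eta_j(\bx)-\heta_j(\bx)|$, and the unconditional $L_1$ estimation bound of Theorem~\ref{thm:total_regret_bound}, which controls $\sum_j \mathbb{E}_{\bx}[|\eta_j(\bx)-\heta_j(\bx)|]$ by a sum of square-rooted node-classifier regrets weighted by $|\leaves_v|$ and $\prob(z_{\pa{v}}=1)$. The arithmetic mismatch between these two results (one involves a max over labels, the other a sum/average over labels) is the reason we cannot simply cascade them, and this is precisely the point flagged before the theorem statement.

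First, I would start from the conditional bound of Theorem~\ref{thm:precatk} applied to the classifier $\bh_{@k}$ produced by Algorithm~\ref{alg:ucs_prediction} (which, by construction, outputs the $k$ labels with the largest $\heta_j(\bx)$ and therefore fits the form~(\ref{eqn:bayes_for_patk}) with $\pi$ defined via $\heta_j$):
\begin{equation*}
\reg_{p@k}(\bh_{@k}\given\bx) \;\le\; 2 \max_{j\in\labels} |\eta_j(\bx) - \heta_j(\bx)|.
\end{equation*}
Next, I would replace the max by the full sum, which is the lossy step the paper warns about:
\begin{equation*}
\max_{j\in\labels} |\eta_j(\bx) - \heta_j(\bx)| \;\le\; \sum_{j=1}^m |\eta_j(\bx) - \heta_j(\bx)|.
\end{equation*}
Taking expectation over $\bx\sim\prob(\bx)$ on both sides and using linearity yields
\begin{equation*}
\reg_{p@k}(\bh_{@k}) \;\le\; 2 \sum_{j=1}^m \mathbb{E}_{\bx\sim\prob(\bx)}\bigl[ |\eta_j(\bx) - \heta_j(\bx)|\bigr].
\end{equation*}

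Finally, I would invoke Theorem~\ref{thm:total_regret_bound} with uniform weights $W_j=1$. Clearing the $1/m$ factor on the left-hand side (which cancels against the $1/m$ on the right-hand side), the bound becomes
\begin{equation*}
\sum_{j=1}^m \mathbb{E}_{\bx}\bigl[|\eta_j(\bx)-\heta_j(\bx)|\bigr] \;\le\; \frac{\sqrt{2}}{\sqrt{\lambda}} \sum_{v\in\nodes} |\leaves_v| \sqrt{\prob(z_{\pa{v}}=1)\, \reg_{\lossfunc_c}(f_v)}.
\end{equation*}
Substituting this into the previous inequality produces exactly the claimed bound with the leading constant $2\sqrt{2}/\sqrt{\lambda}$.

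There is no genuine obstacle in the argument; the only place where one might hope to do better is the max-to-sum relaxation, since this is what forfeits the elegant $k$- and $m$-independence of the conditional bound and introduces the $|\leaves_v|$ weighting in the final statement. All other steps are direct compositions of results already established earlier in the excerpt, so the ``proof'' is essentially a two-line chaining once these two theorems are in hand.
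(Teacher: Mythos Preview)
The proposal is correct and follows essentially the same approach as the paper: take the conditional bound from Theorem~\ref{thm:precatk}, relax the max over labels to a sum, take expectation over $\bx$, and then apply Theorem~\ref{thm:total_regret_bound} with $W_j=1$ (clearing the $1/m$). Your commentary on the max-to-sum relaxation being the only lossy step also matches the paper's own remark preceding the theorem.
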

\begin{proof}
By taking expectation over $\prob(\bx)$ of the bound from Theorem~\ref{thm:precatk} 
and replacing the max operator by sum, that is, $\max(a,b) \le a + b$, for $a, b \ge 0$, 
we obtain: 
$$
\reg_{p@k}(\bh_{@k}) \le 2 \sum_{j=1}^m \mathbb{E}_{\bx \sim \prob(\bx)} \left [ \left | \eta_j(\bx) - \heta_j(\bx) \right | \right ]  
$$
Next, by applying (\ref{eqn:total_regret_bound}) from Theorem~\ref{thm:total_regret_bound}, we get the statement:
$$
\reg_{p@k}(\bh_{@k}) \le \frac{2\sqrt{2}}{\sqrt{\lambda}}  \sum_{\node \in \nodes} |\leaves_\node| \sqrt{ \prob(z_{\pa{v}} = 1) \reg_{\ell_c}(f_v)} \,.
$$
\end{proof}
%
It is worth to compare the above bound with the one for the Hamming loss, 
taken either from Theorem~\ref{thm:psi_macro_regret} or Theorem~\ref{thm:psi_micro_regret},
with $C = 2$ and $\gamma = 1$ (see Table~\ref{tab:generalized-metrics-2}).
It turns out that the bound for \precatk{} is $m$ times larger. 
The reason is that if there were $k$ labels with the $L_1$-estimation error approaching 1,
but with the actual probability close to 0, 
then the \precatk{} regret would get its maximum. 
On the other hand, the Hamming loss regret is an average of label-wise regrets.
Therefore, it does not suffer much, 
as there were only $k$ labels out of $m$ with the highest regret.

\subsection{Relation to hierarchical softmax}
\label{subsec:hsm}

In this section, we show that \Algo{PLT}s are strictly related to hierarchical softmax 
designed for multi-class classification. 
Using our notation, 
we have $\sum_{i=1}^m y_i = 1$ for \multiclass{} problems, 
that is, there is one and only one label assigned to an instance $(\bx, \by)$. 
The marginal probabilities $\eta_j(\bx)$ in this case sum up to 1. 
Since in \multiclass{} classification always one label is assigned to an instance, 
there is no need to learn a root classifier 
which verifies whether there exists a positive label for an instance. 
Nevertheless, the factorization of the conditional probability of label $j$ is given by the same equation~(\ref{eqn:plt_factorization}) as for \multilabel{} case:
$$
\eta_j(\bx) =  \prod_{v' \in \Path{\leafnode_j}} \eta(\bx, v')  \,.
$$
However, in this case $\eta(\bx, v') = 1$, for $v'$ being the root, and 
$$
\sum_{\node' \in \childs{\node}} \eta(\bx, \node') = 1 \,,
$$ 
since $\sum_{i=1}^m y_i = 1$. 
One can easily verify that the model above is the same as the one presented in~\citep{Morin_Bengio_2005},
where the parent nodes are identified by a code indicating a path from the root to this node.
When used with a sigmoid function to model the conditional probabilities, 
we obtain the popular formulation of hierarchical softmax.



To deal with multi-label problems, some popular tools, 
such as \Algo{fastText}~\citep{Joulin_et_al_2016} and its extension \Algo{Learned Tree}~\citep{Jernite_et_al_2017}, 
apply hierarchical softmax with a simple heurustic, we call \emph{pick-one-label}, 
which randomly picks one of the positive labels from a given training instance. 
The resulting instance is then treated as a multi-class instance. 
During prediction, the heuristic returns a multi-class distribution and the $k$ most probable labels. 
We show below that this specific reduction of the multi-label problem to multi-class classification is not consistent in general.

Since the probability of picking a label $j$ from $\by$ is equal to $y_j/\sum_{j'=1}^m y_{j'}$, 
the pick-one-label heuristic maps the multi-label distribution to a multi-class distribution in the following way:
\begin{equation}
\eta_j'(\bx) = \prob'(y_j = 1 \given \bx) = \sum_{\by \in \calY}  \frac{y_j}{\sum_{j'=1}^m y_{j'}}\prob(\by \given \bx) \,, j \in \labels \,.
\label{eq:heuristic}
\end{equation}
It can be easily checked that the resulting $\eta_j'(\bx)$ form a multi-class distribution 
as the probabilities sum up to 1. 
It is obvious that the heuristic changes the marginal probabilities of labels, 
unless the initial distribution is multi-class. 
Therefore this method cannot lead to a consistent classifier in terms of estimating $\eta_j(\bx)$. 
As we show below, it is also not consistent for precision@$k$ in general. 
\begin{proposition}
A classifier $\bh_{@k} \in \calH^{m}_{@k}$ predicting $k$ labels with highest $\eta_j'(\bx)$, $j \in \labels$, defined in (\ref{eq:heuristic}), has in general a non-zero regret in terms of precision@$k$.
\end{proposition}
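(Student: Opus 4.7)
The plan is to prove the proposition by exhibiting a simple explicit counterexample, since the statement is a negative result (``in general'' non-zero regret). I would first observe that the key qualitative effect of the pick-one-label reduction is a \emph{co-occurrence penalty}: whenever a label appears in a multi-label vector $\by$ together with other positives, its contribution is down-weighted by $1/\|\by\|_1$, while a label that typically appears alone is not down-weighted. This opens the door to a ranking inversion between $\eta_j(\bx)$ and $\eta_j'(\bx)$, which is exactly what is needed to produce precision@$k$ regret, since by (\ref{eqn:bayes_for_patk}) the Bayes-optimal top-$k$ set is determined by the ordering of the true marginals $\eta_j(\bx)$.

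Concretely, I would fix a single $\bx$, take $m=3$, $k=1$, and consider the distribution supported on exactly two label vectors, $\by_1 = (1,1,0)$ with probability $p$ and $\by_2 = (0,0,1)$ with probability $1-p$. A direct computation gives $\eta_1(\bx)=\eta_2(\bx)=p$ and $\eta_3(\bx)=1-p$, while the heuristic's induced distribution from (\ref{eq:heuristic}) yields $\eta_1'(\bx)=\eta_2'(\bx)=p/2$ and $\eta_3'(\bx)=1-p$. I would then choose $p$ in the window where the two orderings disagree, namely $p \in (1/2, 2/3)$, e.g.\ $p=0.6$. For this choice, the Bayes-optimal classifier predicts $\{1\}$ (or $\{2\}$) with conditional precision@$1$ equal to $p=0.6$, whereas the classifier ranking by $\eta_j'(\bx)$ predicts $\{3\}$ with conditional precision@$1$ equal to $1-p=0.4$. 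Plugging into the definition of $\reg_{p@1}$ at this single point already gives a strictly positive regret of $0.2$, which transfers to the unconditional regret by taking $\prob(\bx)$ to be a point mass.

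The main step to handle carefully is making sure the constructed distribution is consistent (the $\eta_j$'s and $\eta_j'$'s really are computed correctly from the underlying joint) and that the tie-breaking in ``top-$k$'' does not accidentally let the heuristic pick an optimal label; this is the reason I prefer $p$ strictly inside the open interval $(1/2,2/3)$ so that the heuristic's ranking is strict and differs from the Bayes ranking on the single slot. I would also remark that the example generalizes: any distribution concentrated on a ``large'' positive set competing against a ``small'' positive set with matched total probability produces the same phenomenon, which is why the inconsistency is generic rather than an artifact of the specific $p$.

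The hard part is not the computation but choosing an example that is minimal and transparent; I expect no serious technical obstacle beyond that, because the previous discussion already shows $\eta_j'(\bx)\neq\eta_j(\bx)$ whenever the distribution is not multi-class, and the only remaining content is to verify that this mismatch can change the top-$k$ set, which the two-atom construction above makes explicit.
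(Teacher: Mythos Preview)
Your proposal is correct and takes essentially the same approach as the paper: both prove the proposition by exhibiting an explicit three-label counterexample at a single $\bx$ where the pick-one-label heuristic inverts the top-$1$ ranking, yielding a regret of $0.2$. Your two-atom construction is slightly more streamlined than the paper's three-atom one (which uses $\prob((1,0,0)\mid\bx)=0.1$, $\prob((1,1,0)\mid\bx)=0.5$, $\prob((0,0,1)\mid\bx)=0.4$), but the idea and the mechanism are identical.
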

\begin{proof}
We prove the proposition by giving a simple counterexample. 
Consider the following conditional distribution for some $\bx$: 
$$
\prob(\by = (1,0,0) \given \bx) = 0.1 \,,\quad \prob(\by = (1,1,0) \given \bx) = 0.5\,, \quad \prob(\by = (0,0,1) \given \bx) = 0.4\,.
$$
The optimal top 1 prediction for this example is obviously label $1$, 
since the marginal probabilities are $\eta_1(\bx) = 0.6, \eta_2(\bx) = 0.5,  \eta_3(\bx) = 0.4$. 
However, the pick-one-label heuristic will transform the original distribution to the following one: $\eta_1'(\bx) = 0.35, \eta_2'(\bx) = 0.25,  \eta_3'(\bx) = 0.4$. 
The predicted top label will be then label $3$, giving the regret of 0.2 for precision@$1$.  
\end{proof}

The proposition shows that the heuristic is in general inconsistent for precision@$k$.
Interestingly, the situation changes when the labels are conditionally independent, that is, 
$
\prob(\by \given \bx) = \prod_{j=1}^m \prob(y_i \given \bx)\,. 
$
\begin{restatable}{proposition}{hsmindependent}
\label{prop:hsm-independent}
Given conditionally independent labels, $\bh_{@k} \in \calH^{m}_{@k}$ predicting $k$ labels with highest $\eta_j'(\bx)$, $j \in \labels$, defined in (\ref{eq:heuristic}), has zero regret in terms of the precision@$k$ loss.
\end{restatable}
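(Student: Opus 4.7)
The plan is to reduce the proposition to an ordering statement: the rewriting (\ref{eq:heuristic}) preserves the ranking of labels under conditional independence. Once that is established, any set of $k$ labels maximising the heuristic scores $\eta_j'(\bx)$ must also maximise $\eta_j(\bx)$, and the Bayes-optimality characterisation (\ref{eqn:bayes_for_patk}) immediately forces $\reg_{p@k}(\bh_{@k} \given \bx) = 0$ for every $\bx$, giving zero unconditional regret.

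The crux is to obtain a usable closed form for $\eta_j'(\bx)$. I would use the elementary identity $1/n = \int_0^1 t^{n-1}\,dt$ applied with $n = \|\by\|_1$, noting that only terms with $y_j = 1$ contribute so $n \geq 1$ throughout; swap the sum with the integral; and then collapse the resulting expectation of $y_j \prod_{j' \ne j} t^{y_{j'}}$ using the product form of $\prob(\by \given \bx)$. A short calculation yields
\begin{equation*}
\eta_j'(\bx) \;=\; \eta_j(\bx) \int_0^1 \prod_{j' \ne j} \bigl(1 - \eta_{j'}(\bx)(1-t)\bigr)\,dt
\;=:\; \eta_j(\bx)\, A_j(\bx).
\end{equation*}

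The remaining argument is essentially one step. For any two indices $i,j$ with $\eta_i(\bx) \geq \eta_j(\bx)$, I would rewrite $A_i(\bx)$ and $A_j(\bx)$ so that they share the common factor $\prod_{j' \ne i,j}\bigl(1 - \eta_{j'}(\bx)(1-t)\bigr)$ and differ only by carrying $\bigl(1 - \eta_j(\bx)(1-t)\bigr)$ or $\bigl(1 - \eta_i(\bx)(1-t)\bigr)$, respectively. Since $\eta_i \geq \eta_j$ makes the former factor pointwise at least the latter on $[0,1]$, integration gives $A_i(\bx) \geq A_j(\bx)$; multiplying by the non-negative scalars $\eta_i \geq \eta_j$ yields $\eta_i'(\bx) \geq \eta_j'(\bx)$, with strict inequality whenever $\eta_i(\bx) > \eta_j(\bx)$, using that the common integrand equals $1$ at $t=1$ and hence the integral of the shared factor is strictly positive.

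The main obstacle I anticipate is obtaining a representation of $\eta_j'(\bx)$ that cleanly separates its dependence on $\eta_j(\bx)$ from its dependence on the remaining marginals; without a device such as the integral identity above, one is left with an opaque combinatorial sum over $\calY$ weighted by $y_j/\|\by\|_1$, and a direct comparison of $\eta_i'$ with $\eta_j'$ is awkward. Once the product form is in hand, the monotonicity step is essentially immediate, and the link to Theorem~\ref{thm:precatk} and (\ref{eqn:bayes_for_patk}) closes the argument.
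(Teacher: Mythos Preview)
Your proof is correct and takes a genuinely different route from the paper's. The paper establishes the order-preservation $\eta_i(\bx)\ge\eta_j(\bx)\Leftrightarrow\eta_i'(\bx)\ge\eta_j'(\bx)$ by a purely combinatorial argument: it partitions $\calY$ into the four sets $\mathcal{S}^{u,w}_{i,j}=\{\by:y_i=u,\,y_j=w\}$, observes that $\mathcal{S}^{1,1}_{i,j}$ contributes identically to $\eta_i'$ and $\eta_j'$, and then pairs each $\by'\in\mathcal{S}^{1,0}_{i,j}$ with the $\by''\in\mathcal{S}^{0,1}_{i,j}$ obtained by swapping coordinates $i$ and $j$; conditional independence makes $\prob(\by'\,|\,\bx)-\prob(\by''\,|\,\bx)=q_{i,j}(\eta_i-\eta_j)$ for a nonnegative common factor $q_{i,j}$, and since $N(\by')=N(\by'')$ the comparison follows term by term. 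Your approach instead produces the closed form $\eta_j'(\bx)=\eta_j(\bx)\int_0^1\prod_{j'\ne j}\bigl(1-\eta_{j'}(\bx)(1-t)\bigr)\,dt$ via the identity $1/n=\int_0^1 t^{\,n-1}\,dt$ and the product structure of $\prob(\by\,|\,\bx)$, after which monotonicity is a one-line pointwise comparison of integrands. Your route is slicker and delivers an explicit formula that separates $\eta_j$ from the remaining marginals; the paper's bijection argument is more elementary (no analytic device needed) and makes the role of independence more visibly indispensable, since the pairing $\by'\leftrightarrow\by''$ only factors nicely under the product form.
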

We show here only a sketch of the proof. 
The full proof is given in Appendix~\ref{app:hsm}. 
It is enough to show that in the case of conditionally independent labels 
the pick-one-label heuristic does not change the order of marginal probabilities.
Let $y_i$ and $y_j$ be so that $\eta_i(\bx) \ge \eta_j(\bx)$. 
Then in the summation over all $\by$s in (\ref{eq:heuristic}), 
we are interested in four different subsets of $\calY$,
$S_{i,j}^{u,w}  =  \{\by\in \calY: y_i = u \land y_j = w\}$, where $u,w \in \{0,1\}$.
Remark that during mapping none of $\by \in S^{0,0}_{i,j}$ plays any role, 
and for each $\by \in S^{1,1}_{i,j}$, the value of 
$$
y_t/(\sum_{t'=1}^m y_{t'}) \times \prob(\by \given \bx) \,,
$$ 
for $t \in \{i,j\}$, is the same for both $y_i$ and $y_j$. 
Now, let $\by' \in S^{1,0}_{i,j}$ and $\by'' \in S^{0,1}_{i,j}$ be the same 
on all elements except the $i$-th and the $j$-th one. 
Then, because of the label independence and the assumption that 
$\eta_i(\bx) \ge \eta_j(\bx) $, 
we have $\prob(\by' \given \bx) \ge \prob(\by'' \given \bx)$. 
Therefore, after mapping~(\ref{eq:heuristic}) we obtain $\eta_i'(\bx) \ge \eta_j'(\bx)$. 
Thus, for independent labels, the pick-one-label heuristic is consistent for precision@$k$.

\section{Online PLT}
\label{sec:oplt}

A \Algo{PLT} model can be trained either in batch mode or incrementally. 
The batch algorithm has been presented in Algorithm~\ref{alg:plt-learning} in Section~\ref{sec:plt}. 
It can be easily transformed into an incremental algorithm
operating sequentially on observations from $\calD = \{(\bx_i, \by_i) \}_{i=1}^n$.
To this end, we need to use an incremental learning algorithm $A_{\textrm{online}}$ in the tree nodes.
Such \emph{incremental PLT} (\Algo{IPLT}) is given in Algorithm~\ref{alg:plt-incremental-learning}.

\begin{algorithm}[H]
\caption{\Algo{IPLT.Train}$(T, A_{\textrm{online}}, \calD)$}
\label{alg:plt-incremental-learning}
\begin{small}
\begin{algorithmic}[1]
\State ${H_T} = \emptyset$  \Comment{Initialize a set of node probabilistic classifiers}
\For{each node $\vertex \in \vertices_{T}$} \Comment{For each node in the tree}
    \State {$\heta(v) = \textsc{NewClassifier}()$, $H_T = H_T \cup \{\heta(v)\}$} \Comment{Initialize its binary classifier.}
\EndFor
\For{$i = 1 \to n$}  \Comment{For each observation in the training sequence}
    \State $(P, N) = \mathrm{\textsc{AssignToNodes}}(T, \bx_i, \calL_{\bx_i})$ \Comment{Compute its positive and negative nodes}
    \For{$v \in P$} \Comment{For all positive nodes}
        \State $A_{\textrm{online}}\textsc{.Update}(\heta(v), (\bx_i, 1))$ \Comment{Update classifiers with a positive update with $\bx_i$.}
    \EndFor 
    \For{$v \in N$} \Comment{For each negative node}
        \State $A_{\textrm{online}}\textsc{.Update}(\heta(v), (\bx_i, 0))$ \Comment{Update classifiers with a negative update with $\bx_i$.}
    \EndFor
  \State \textbf{return} $H_T$ \Comment{Return the set of node probabilistic classifiers}
\EndFor
\end{algorithmic}
\end{small}
\end{algorithm} 


The above algorithm, similarly as its batch counterpart, 
works on a finite training set and requires a tree structure $T$ to be given in advance. 
To construct $T$ at least the number $m$ of labels needs to be known.
More advance tree construction procedures, as discussed in Section~\ref{subsec:tree-structure}, 
exploit additional information like feature values or label co-occurrence~\citep{Prabhu_et_al_2018}. 
In all such algorithms, the tree is built in a batch mode prior to the learning of node classifiers.
Here, we analyze a different scenario in which 
an algorithm operates on a possibly infinite sequence of training instances 
and the tree is constructed online, 
simultaneously with incremental training of node classifiers,
without any prior knowledge of the set of labels or training data.
We refer to such approach as online probabilistic label trees. 

Let us denote a sequence of observations by $\calS = \{(\bx_i, \calL_{\bx_i})\}_{i=1}^{\infty}$
and a subsequence consisting of the first $t$ instances by $\calS_t$.
We refer here to labels of $\bx_i$ only by $\calL_{\bx_i}$, not using the vector notation $\by_i$. 
This is because the number of labels $m$ increases over time, 
which would also change the length of $\by_i$.%
\footnote{The same applies to $\bx_t$ as the number of features also increases. 
We keep however the vector notation in this case, 
as it does not impact the description of the algorithm.}
Furthermore, let the set of labels observed in $\calS_t$ be denoted by $\calL_t$,
with $\calL_0 = \emptyset$. 
An online algorithm returns at step $t$ a tree structure $T_t$ 
constructed over labels in $\calL_t$ and a set of node classifiers $H_t$. 
Notice that the tree structure and the set of classifiers change in each iteration 
in which one or more new labels are observed. 
Below we discuss two properties that are desired for such online algorithm,
defined in relation to the \Algo{IPLT} algorithm given above.
\begin{definition}[A proper online \Algo{PLT} algorithm]
\label{oplt-proper}
Let $T_t$ and $H_t$ be respectively a tree structure and a set of node classifiers 
trained on a sequence $\calS_t$ using an online algorithm $A$. 
We say that $A$ is a \emph{proper online \Algo{PLT} algorithm}, 
when for any $\calS$ and $t$ we have that
\begin{itemize}
\item $l_j \in L_{T_t}$ iff $j \in \labels_t$, that is, 
leaves of $T_t$ correspond to all labels observed in $S_t$, 
\item and $H_t$ is exactly the same as $H = \textsc{IPLT.Train}(T_t, A_{\textrm{online}}, \mathcal{S}_t)$, 
that is, node classifiers from $H_t$ are the same 
as the ones trained incrementally by Algorithm~\ref{alg:plt-incremental-learning} 
on $\calD = \calS_t$ and tree $T_t$ given as input parameter.
\end{itemize}
\end{definition}
\noindent
In other words, we require that whatever tree an online algorithm produces, 
the node classifiers should be trained the same way
as the tree would be know from the very beginning of training.
Thanks to that we can control the quality of each node classifier, 
as we are not missing any update. 
Moreover, since the result of a proper online \Algo{PLT} is the same as of \Algo{IPLT}, 
the same statistical guarantees apply to both of them.


The above definition can be satisfied by a naive algorithm 
that stores all observations seen so far,
use them in each iteration to build a tree, 
and train node classifiers with the \Algo{IPLT} algorithm. 
This approach is costly in terms of both memory, used for storing $S_t$, 
and time, as all computations are run from scratch in each iteration.
Therefore, we also demand an online algorithm to be space and time-efficient in the following sense. 
\begin{definition}[An efficient online \Algo{PLT} algorithm]
\label{oplt-efficient}
Let $T_t$ and $H_t$ be respectively a tree structure and a set of node classifiers 
trained on a sequence $\calS_t$ using an online algorithm $A$. 
Let $C_s$ and $C_t$ be the space and time training cost of \Algo{IPLT} 
trained on sequence $\calS_t$ and tree $T_t$.
An online algorithm is an \emph{efficient online \Algo{PLT} algorithm} 
when for any $S$ and $t$ we have its space and time complexity to be in a constant factor of $C_s$ and $C_t$, respectively.
\end{definition}
In this definition, we abstract from the actual implementation of \Algo{IPLT}. 
In other words, the complexity of an efficient online \Algo{PLT} algorithm 
depends directly on design choices for an~\Algo{IPLT}. 
Let us recall that the training cost for a single training example can be expressed by (\ref{eqn:learning_cost}),
as discussed in Section~\ref{subsec:complexity_analysis}.
By summing it over all examples in $\calS_t$, we obtain the cost $C_t$ of an~\Algo{IPLT}. 
The space complexity is upperbounded by $2m-1$ (the maximum number of node models), 
but it also depends on the chosen type of node models and the way of storing them
(see Section~\ref{sec:implementation} for a detailed discussion on implementation choices).
Let us also notice that the definition implies 
that the update of a tree structure has to be in a constant factor of the training cost of a single instance, 
given by (\ref{eqn:learning_cost}). 

\subsection{Online tree building and training of node classifiers}

Below we describe an online algorithm that, 
as we show in the next subsection, 
satisfies both properties defined above. 
It is similar to the conditional probability tree (\Algo{CPT})~\citep{Beygelzimer_et_al_2009b},
introduced for \multiclass{} problems and binary trees, 
but extends it to \multilabel{} problems 
and trees of any arity. 
We refer to this algorithm as \Algo{OPLT}.

The pseudocode is presented in Algorithms~\ref{alg:oplt}-\ref{alg:oplt-update_classifiers}.
In a nutshell, \Algo{OPLT} processes observations from $\calS$ sequentially, 
updating node classifiers. 
For new incoming labels it creates new nodes according to a chosen tree building policy
which is responsible for the main logic of the algorithm.
Each new node $v$ is associated with two classifiers, 
a regular one $\heta(v) \in H_T$, 
and an \emph{auxiliary} one $\hat\theta(v) \in \Theta_T$, 
where $H_T$ and $\Theta_T$ denote the corresponding sets of node classifiers. 
The task of the auxiliary classifiers is to accumulate positives updates. 
The algorithm uses them later to initialize classifiers in new nodes added to a tree.
They can be removed if a given node will not be used anymore to extend the tree.
A particular criterion for removing an auxiliary classifier depends, however, on a tree building policy. 

\Algo{OPLT.Train}, outlined in Algorithm~\ref{alg:oplt}, administrates the entire process. 
It first initializes a tree with a root node $r_T$ only
and creates two corresponding classifiers, $\heta(v_{r_T})$ and $\hat\theta(v_{r_T})$.
Notice that the root has both classifiers initialized from the very beginning without a label assigned to it. 
Thanks to this, the algorithm can properly estimate the probability of $\prob(\calL_{\bx} = \emptyset \given \bx)$.
Observations from $\calS$ are processed sequentially in the main loop of \Algo{OPLT.Train}. 
If a new observation contains one or more new labels 
then the tree structure is appropriately extended by calling \textsc{UpdateTree}.
The node classifiers are updated in \textsc{UpdateClassifiers}. 
After each iteration $t$, the algorithm sends $H_T$ along with the tree structure $T$,
respectively as $H_t$ and $T_t$, to be used outside the algorithm for prediction tasks.
We assume that tree $T$ along with sets of its all nodes $V_T$ and leaves $L_T$, 
as well as sets of classifiers $H_T$ and $\Theta_T$,
are accessible to all subroutines discussed below.

\begin{algorithm}[H]
\caption{{\Algo{OPLT.Train}}$(\mathcal{S}, A_{\textrm{online}}, A_{\textrm{policy}})$} 
\label{alg:oplt}
\begin{small}
\begin{algorithmic}[1]
\State $r_T = \textsc{NewNode()}$, $V_T = \{r_T\}$ \Comment{Create the root of the tree}
\State {$\heta(r_T) = \textsc{NewClassifier}()$, $H_T = \{\heta_T(r_T)\}$} \Comment {Initialize a new classifier in the root}
\State {$\hat{\theta}(r_T) = \textsc{NewClassifier}()$, $\Theta_T = \{\theta(r_T)\}$} \Comment {Initialize an auxiliary classifier in the root}

\For{$(\bx_t, \calL_{\bx_t}) \in \calS$}  \Comment{For each observation in $\mathcal{S}$}
	\If{ $\calL_{\bx_t} \setminus \calL_{t-1} \neq \emptyset$} \Comment {If the observation contains new labels}
        \State \textsc{UpdateTree}$(\bx_t, \calL_{\bx_t}, A_{\textrm{policy}})$ \Comment{Add them to the tree}
    \EndIf
    \State \textsc{UpdateClassifiers}$(\bx_t, \calL_{\bx_t}, A_{\textrm{online}})$ \Comment {Update the classifiers}
    \State \textbf{send} $H_t, T_t = H_T, V_T$  \Comment{Send the node classifiers and the tree structure.}
\EndFor 
\end{algorithmic}
\end{small}
\end{algorithm}

\begin{figure}[ht]
\begin{tabular}{p{0.10\textwidth}p{0.3\textwidth}p{0.10\textwidth}p{0.3\textwidth}p{0.10\textwidth}}
&
\scalebox{0.9}{
\begin{tikzpicture}[scale = 1,every node/.style={scale=1},
		regnode/.style={circle,draw,minimum width=1.5ex,inner sep=0pt},
		leaf/.style={circle,fill=black,draw,minimum width=1.5ex,inner sep=0pt},
		pleaf/.style={rectangle,rounded corners=1ex,draw,font=\scriptsize,inner sep=3pt},
		pnode/.style={rectangle,rounded corners=1ex,draw,font=\scriptsize,inner sep=3pt},
		rootnode/.style={rectangle,rounded corners=1ex,draw,font=\scriptsize,inner sep=3pt},
		level/.style={sibling distance=6em/#1, level distance=6ex}
	]
	\node (z) [rootnode,] {\color{white}{$v$}}
    child {node [fill=white!10!white!90] (a) [pnode] {$v_1$} 
    		child {node [fill=white!10!white!90, label=below:{\makebox[1cm][c]{$y_1$}}] (a1) [pnode] {$v_2$} 
    		edge from parent node[above left]{}
    }
    child {node [label=below:{\makebox[1cm][c]{$y_2$}},fill=white!10!white!90] (g) [pleaf] {$v_3$} edge from parent node[above]{}}
    		edge from parent node[above left]{}
    }	
	child {node (j) [pnode] {\color{white}{$v$}}
		child {node [label=below:{\makebox[1cm][c]{$y_2$}}] (k) [pleaf] {\color{white}{$v$}} edge from parent node[above left]{}}
		child {node [label=below:{\makebox[1cm][c]{$y_3$}}] (l) [pleaf] {\color{white}{$v$}}
			{
				child [grow=right] {node (s) {} edge from parent[draw=none]
					child [grow=up] {node (t) {} edge from parent[draw=none]
						child [grow=up] {node (u) {} edge from parent[draw=none]}
					}
				}
			}
			edge from parent node[above right]{}
		}
		edge from parent node[above right]{}
	};
\end{tikzpicture}}
& 
&
\scalebox{0.9}{
\begin{tikzpicture}[scale = 1,every node/.style={scale=1},
    		regnode/.style={circle,draw,minimum width=1.5ex,inner sep=0pt},
    		leaf/.style={circle,fill=black,draw,minimum width=1.5ex,inner sep=0pt},
    		pleaf/.style={rectangle,rounded corners=1ex,draw,font=\scriptsize,inner sep=3pt},
    		pnode/.style={rectangle,rounded corners=1ex,draw,font=\scriptsize,inner sep=3pt},
    		rootnode/.style={rectangle,rounded corners=1ex,draw,font=\scriptsize,inner sep=3pt},
    		level/.style={sibling distance=6em/#1, level distance=6ex},
    		level 3/.style={sibling distance=3em, level distance=6ex}
    	]
    	\node (z) [rootnode] {\color{white}{r}}
    	child {node [fill=black!30!white!70] (a) [pnode] {$v_1$} 
    		child {node [fill=white!10!white!90, label=below:{\makebox[1cm][c]{$y_1$}}] (a1) [pleaf] {$v_2$} 
    		edge from parent node[above left]{}
    	    }
    	    child {node [fill=white!10!white!90,
    	    label=below:{\makebox[1cm][c]{$y_2$}}] (g) [pleaf] {$v_3$} edge from parent node[above]{}
    	    }
    		child {node [fill=black!10!white!90, label=below:{\makebox[1cm][c]{$y_5$}}] (a2) [pleaf] {$v''_1$} 
    		edge from parent node[above right]{}
    	    }
    	    edge from parent node[above left]{}   
        }
    	child {node (j) [pnode] {\color{white}{$v$}}
    		child {node [label=below:{\makebox[1cm][c]{$y_2$}}] (k) [pleaf]  {\color{white}{$v$}} edge from parent node[above left]{}}
    		child {node [label=below:{\makebox[1cm][c]{$y_3$}}] (l) [pleaf]  {\color{white}{$v$}} edge from parent node[above right]{}
    		}
    		edge from parent node[above right]{}
    	};
\end{tikzpicture}}
&
\\
&
{\footnotesize (a) Tree $T_{t-1}$ after $t\!-\!1$ iterations.}
&
& 
{\footnotesize (b) Variant 1: A leaf node $v_1''$ for label $j$ added as a child of an internal node $v_1$.}
& \\[18pt]
&
\scalebox{0.9}{
\begin{tikzpicture}[scale = 1,every node/.style={scale=1},
		regnode/.style={circle,draw,minimum width=1.5ex,inner sep=0pt},
		leaf/.style={circle,fill=black,draw,minimum width=1.5ex,inner sep=0pt},
		pleaf/.style={rectangle,rounded corners=1ex,draw,font=\scriptsize,inner sep=3pt},
		pnode/.style={rectangle,rounded corners=1ex,draw,font=\scriptsize,inner sep=3pt},
		rootnode/.style={rectangle,rounded corners=1ex,draw,font=\scriptsize,inner sep=3pt},
		level/.style={sibling distance=6em/#1, level distance=6ex}
	]
	\node (z) [rootnode] {\color{white}{$v$}}
	child {node [fill=black!30!white!70] (a) [pnode] {$v_1$}
	    child {node [fill=black!10!white!90] (a1) [pnode] {$v'_1$} 
	        child{node [fill=white!10!white!90, label=below:{\makebox[1cm][c]{$y_1$}}] (a11) [pleaf] {$v_2$}
	        edge from parent node[above left]{}
	        }
	        child{node [fill=white!10!white!90, label=below:{\makebox[1cm][c]{$y_2$}}] (a12) [pleaf] {$v_3$}
	        edge from parent node[above right]{}
	        }
    	edge from parent node[above left]{}
    	}
    	child {node [fill=black!10!white!90,
    	label=below:{\makebox[1cm][c]{$y_5$}}] (g) [pleaf] {$v''_1$} 
    	edge from parent node[above]{}
        }
		edge from parent node[above left]{}
	}
	child {node (j) [pnode] {\color{white}{$v$}}
		child {node [label=below:{\makebox[1cm][c]{$y_2$}}] (k) [pleaf] {\color{white}{$v$}} edge from parent node[above left]{}}
		child {node [label=below:{\makebox[1cm][c]{$y_3$}}] (l) [pleaf] {\color{white}{$v$}}
			{
				child [grow=right] {node (s) {} edge from parent[draw=none]
					child [grow=up] {node (t) {} edge from parent[draw=none]
						child [grow=up] {node (u) {} edge from parent[draw=none]}
					}
				}
			}
			edge from parent node[above right]{}
		}
		edge from parent node[above right]{}
	};
\end{tikzpicture}}
& 
&
\scalebox{0.9}{
\begin{tikzpicture}[scale = 1,every node/.style={scale=1},
    		regnode/.style={circle,draw,minimum width=1.5ex,inner sep=0pt},
    		leaf/.style={circle,fill=black,draw,minimum width=1.5ex,inner sep=0pt},
    		pleaf/.style={rectangle,rounded corners=1ex,draw,font=\scriptsize,inner sep=3pt},
    		pnode/.style={rectangle,rounded corners=1ex,draw,font=\scriptsize,inner sep=3pt},
    		rootnode/.style={rectangle,rounded corners=1ex,draw,font=\scriptsize,inner sep=3pt},
    		level/.style={sibling distance=6em/#1, level distance=6ex},
    		level 3/.style={sibling distance=3em, level distance=6ex}
    	]
	\node (z) [rootnode] {\color{white}{$v$}}
	child {node [fill=white!10!white!90] (a) [pnode] {$v_1$}
	    child {node [fill=black!30!white!70] (a1) [pnode] {$v_2$} 
	        child{node [fill=black!10!white!90, label=below:{\makebox[1cm][c]{$y_1$}}] (a11) [pleaf] {$v'_2$}
	        edge from parent node[above left]{}
	        }
	        child{node [fill=black!10!white!90, label=below:{\makebox[1cm][c]{$y_5$}}] (a12) [pleaf] {$v''_2$}
	        edge from parent node[above right]{}
	        }
    	edge from parent node[above left]{}
    	}
    	child {node [fill=white!10!white!90,
    	label=below:{\makebox[1cm][c]{$y_2$}}] (g) [pleaf] {$v_3$} 
    	edge from parent node[above]{}
        }
		edge from parent node[above left]{}
	}
	child {node (j) [pnode] {\color{white}{$v$}}
		child {node [label=below:{\makebox[1cm][c]{$y_2$}}] (k) [pleaf] {\color{white}{$v$}} edge from parent node[above left]{}}
		child {node [label=below:{\makebox[1cm][c]{$y_3$}}] (l) [pleaf] {\color{white}{$v$}}
			{
				child [grow=right] {node (s) {} edge from parent[draw=none]
					child [grow=up] {node (t) {} edge from parent[draw=none]
						child [grow=up] {node (u) {} edge from parent[draw=none]}
					}
				}
			}
			edge from parent node[above right]{}
		}
		edge from parent node[above right]{}
    	};
\end{tikzpicture}}
&
\\
&
{\footnotesize (c) Variant 2: A leaf node $v_1''$ for label $j$ and an internal node $v_1'$ (with all children of $v_1$ reassigned to it) added as children of $v_1$.}
&
& 
{\footnotesize (d) Variant 3: A leaf node $v_2''$ for label $j$ and a leaf node $v_2'$ (with a reassigned label of $v_2$) added as children of $v_2$.} 
& \\
\end{tabular}
\caption{%
Three variants of tree extension for a new label $j$. %
}
\label{fig:oplt-tree_building}
\end{figure}

Algorithm~\ref{alg:oplt-update_tree}, \Algo{UpdateTree}, builds the tree structure. 
It iterates over all new labels from $\calL_{\bx}$. 
If there were no labels in the sequence $\calS$ before, 
the first new label taken from $\calL_{\bx}$ is assigned to the root note. 
Otherwise, the tree needs to be extended by one or two nodes according to a selected tree building policy.
One of these nodes is a leaf to which the new label will be assigned. 
There are in general three variants of performing this step 
illustrated in Figure~\ref{fig:oplt-tree_building}.
The first one relies on selecting an internal node $v$
whose number of children is lower than the accepted maximum, 
and adding to it a child node $v''$ with the new label assigned to it.
In the second one, two new child nodes, $v'$ and $v''$, are added to a selected internal node $v$. 
Node $v'$ becomes a new parent of child nodes of the selected node $v$, 
that is, the subtree of $v$ is moved down by one level. 
Node $v''$ is a leaf with the new label assigned to it. 
The third variant is a modification of the second one.
The difference is that the selected node $v$ is a leaf node. 
Therefore there are no children nodes to be moved to $v'$,
but label of $v$ is reassigned to $v'$.
The $A_{\textrm{policy}}$ method encodes the tree building policy, 
that is, it decides which of the three variants to follow and selects the node $v$. 
The additional node $v'$ is inserted by the \textsc{InsertNode} method.
Finally, a leaf node is added by the \textsc{AddLeaf} method.
We discuss the three methods in more detail below.
\begin{algorithm}[H]
\caption{\Algo{OPLT.UpdateTree}$(\bx, \calL_{\bx},  A_{\textrm{policy}})$}
\label{alg:oplt-update_tree}
\begin{small}
\begin{algorithmic}[1]
\For{ $j \in \labels_{\bx} \setminus \calL_{t-1}$}\Comment{For each new label in the observation}
	\If{$\calL_T$ is $\emptyset$} \Comment{If no labels have been seen so far}
	    \State {$\textsc{label}(r_T) = j$} \Comment{Assign label $j$ to the root node}    
	\Else \Comment{If there are already labels in the tree.}
		\State {$v,\ {insert}  = A_{\textrm{policy}}(\bx, j, \calL_{\bx})$} \Comment{Select a variant of extending the tree}
		\IfThen{${insert}$}{$\textsc{InsertNode}(v)$} \Comment{Insert an additional node if needed.}
		\State{$\textsc{AddLeaf}(j, v)$}  \Comment{Add a new leaf for label $j$.}
	\EndIf
\EndFor
\end{algorithmic} 
\end{small}
\end{algorithm}

$A_{\textrm{policy}}$ returns the selected node $v$ and 
a Boolean variable $insert$ which indicates whether an additional node $v'$ has to be added to the tree.
For the first variant, $v$ is an internal node and $insert$ is set to false.
For the second variant, $v$ is an internal node and $insert$ is set to true.
For the third variant, $v$ is a leaf node and $insert$ is set to true.
In general, the policy can be guided by $\bx$, current label $j$, and set $\calL_{\bx}$ of all labels of $\bx$.
As an instance of the tree building policy, 
we consider, however, a much simpler method presented in Algorithm~\ref{alg:oplt-apply_policy}. 
It creates a $b$-ary complete tree.  
In this case, the selected node is either the leftmost internal node 
with the number of children less than $b$ 
or the leftmost leaf of the lowest depth. 
The $insert$ variable is then $false$ or $true$, respectively.
So, only the first and the third variants occur here.
Notice, however, that this policy can be efficiently performed in amortized constant time per label
if the complete tree is implemented using a dynamic array with doubling. 
Nevertheless, more advanced and computationally complex policies can be applied.
As mentioned before, the complexity of this step should be at most proportional 
to the complexity of updating the node classifiers for one label, 
that is, it should be proportional to the depth of the tree.
%
%
\begin{algorithm}[H]
\caption{\Algo{BuildCompleteTree}$(b)$}
\label{alg:oplt-apply_policy}
\begin{small}
\begin{algorithmic}[1]
\State $\mathrm{array} = T.\mathrm{array}$ 
\Comment{Let nodes of complete tree $T$ be stored in a dynamic array $T.\mathrm{array}$}
\State $s = \mathrm{array.length}$ \Comment{Read the number of nodes in $T$} 
\State $pa = \lceil \frac{s}{b} \rceil - 1$ \Comment{Get the index of a parent of a next added node; the array is indexed from 0}
\State $v = \mathrm{array}(pa)$  \Comment{Get the parent node}
\State \textbf{return} $v$, $\textsc{IsLeaf}(v)$ \Comment{Return the node and whether it is a leaf.}
\end{algorithmic} 
\end{small}
\end{algorithm}

The \textsc{InsertNode} and \textsc{AddLeaf} procedures involve specific operations 
concerning initialization of classifiers in the new nodes.
\textsc{InsertNode} is given in Algorithm~\ref{alg:oplt-insert_node}. 
It inserts a new node $v'$ as a child of the selected node $v$.
If $v$ is a leaf then its label is reassigned to the new node.
Otherwise, all children of $v$ become the children of $v'$.
In both cases, $v'$ becomes the only child of $v$.
Figure~\ref{fig:oplt-tree_building} illustrates inserting $v'$ as 
either a child of an internal node (c) or a leaf node (d). 
Since, the node classifier of $\node'$ aims at estimating $\eta(\bx, \node')$,
defined as $\prob(z_{\node'} = 1 \given z_{\pa{\node'}} = 1, \bx)$,
its both classifiers, $\heta(v')$ and $\hat\theta(v')$, 
are initialized as copies (by calling the \textsc{Copy} function) 
of the auxiliary classifier $\hat\theta(v)$ of the parent node $v$.
Recall that the task of auxiliary classifiers is to accumulate all positive updates in nodes,
so the conditioning $z_{\pa{\node'}} = 1$ is satisfied in that way.
\begin{algorithm}[H]
\caption{\Algo{OPLT.InsertNode}$(v)$}
\label{alg:oplt-insert_node}
\begin{small}
\begin{algorithmic}[1]
    \State{$v' = \textsc{NewNode}$(), $V_T = V_T \cup \{v'\}$} \Comment{Create a new node and add it to the tree nodes}
    \If{$\textsc{IsLeaf}(v)$} \Comment{If node $v$ is a leaf}
        \State{$\textsc{Label}(v') = \textsc{Label}(v)$, $\textsc{Label}(v) = \textsc{Null}$} \Comment{Reassign label of $v$ to $v'$}
    \Else \Comment{Otherwise}
        \State{$\childs{v'} = \childs{v}$}  \Comment{All children of $v$ become children  of $v'$}
        \ForDo{$v_{\textrm{ch}} \in \childs{v'}$}{$\pa{v_{\textrm{ch}}} = v'$} \Comment{And $v'$ becomes their parent} 
    \EndIf
    \State{$\childs{v} = \{v'\}$, $\pa{v'} = v$} \Comment{The new node $v'$ becomes the only child of $v$}
    \State{$\heta(v') = \textsc{Copy}(\hat\theta(v))$, $H_T = H_T \cup \{\heta(v')\}$} \Comment{Create a classifier.}
    \State{$\hat{\theta}(v') = \textsc{Copy}(\hat\theta(v))$, $\Theta_T = \Theta_T \cup \{\hat\theta(v')\}$} \Comment{And an auxiliary classifier.}
\end{algorithmic} 
\end{small}
\end{algorithm}

Algorithm~\ref{alg:oplt-add_leaf} outlines the \textsc{AddLeaf} procedure. 
It adds a new leaf node $v''$ for label $j$ as a child of node $v$ . 
The classifier $\heta(v'')$ is created as an ``inverse''
of the auxiliary classifier $\hat\theta(v)$ from node $v$.
More precisely, the $\textsc{InverseClassifier}$ procedure creates a wrapper 
inverting the behavior of the base classifier.
It predicts $1 - \heta$, where $\heta$ is the prediction of the base classifier,
and flips the updates, that is, positive updates become negative and negative updates become positive. 
Finally, the auxiliary classifier $\hat{\theta}(v'')$ of the new leaf node is initialized.  
%
\begin{algorithm}[H]
\caption{\Algo{OPLT.AddLeaf}$(j, v)$}
\label{alg:oplt-add_leaf}
\begin{small}
\begin{algorithmic}[1]
    \State $v'' = \textsc{NewNode()}$, $V_T = V_T \cup \{v''\}$ \Comment{Create a new node and add it to the tree nodes}
    \State $\childs{v} = \childs{v} \cup \{v''\}$,  $\pa{v''} = v$   \Comment{Add this node to children of $v$.}
    \State {$\textsc{label}(v'') = j$} \Comment{Assign label $j$ to the node $v''$}    
    \State {$\heta(v'') = \textsc{InverseClassifier}(\hat\theta(v))$, $H_T = H_T \cup \{ \heta(v'')\} $} \Comment{Initialize a classifier for $v''$}
    \State{$\hat{\theta}(v'') = \textsc{NewClassifier}()$, $\Theta_T = \Theta_T \cup \{ \hat{\theta}(v'')\}$} \Comment{Initialize an auxiliary classifier for $v''$}
\end{algorithmic} 
\end{small}
\end{algorithm}

The final step in the main loop of \Algo{OPLT.Train} updates the node classifiers. 
The regular classifiers, $\heta(v) \in H_T$, 
are updated exactly as in \Algo{IPLT.TRAIN} given in Algorithm~\ref{alg:plt-incremental-learning}. 
The auxiliary classifiers, $\theta(v) \in \Theta_T$,  
are updated only in positive nodes according to their definition and purpose.
\begin{algorithm}[H]
\caption{\Algo{OPLT.UpdateClassifiers}$(\bx, \calL_{\bx}, A_{\textrm{online}})$}
\label{alg:oplt-update_classifiers}
\begin{small}
\begin{algorithmic}[1]
    \State $(P, N) = \textsc{AssignToNodes}(T, \bx, \calL_{\bx})$ \Comment{Compute its positive and negative nodes}
    \For{$v \in P$} \Comment{For all positive nodes}
         \State $A_{\textrm{online}}\textsc{.Update}(\heta(v), (\bx, 1))$ \Comment{Update classifiers with a positive update with $\bx$.}
    	\If{$\hat{\theta}(v) \in \Theta$} \Comment{Update auxiliary classifier if it exists.}
            \State $A_{\textrm{online}}\textsc{.Update}(\hat \theta(v), (\bx, 1))$ \Comment{With a positive online update with $\bx_i$.}
        \EndIf
    \EndFor
    \For{$v \in N$} \Comment{For each negative node}
        \State $A_{\textrm{online}}\textsc{.Update}(\heta(v), (\bx, 0))$ \Comment{Update classifiers with a negative update with $\bx$.}
    \EndFor
\end{algorithmic} 
\end{small}
\end{algorithm}

\subsection{Theoretical analysis of OPLT}

The \Algo{OPLT} algorithm has been designed to satisfy the properness and efficiency property of 
online probabilistic label trees. 
The theorem below states this fact formally.
\begin{restatable}{theorem}{thmoplt}
\label{thm:oplt}
\Algo{OPLT} is an proper and efficient \Algo{OPLT} algorithm. 
\end{restatable}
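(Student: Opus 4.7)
The plan is to prove both claims by induction on $t$, maintaining three invariants after processing $(\bx_t, \calL_{\bx_t})$: (i) the leaves of $T_t$ are in bijection with $\calL_t$; (ii) for every $v \in V_{T_t}$, the classifier $\heta(v)$ in $H_t$ is in the same internal state as the one produced by \Algo{IPLT.Train}$(T_t, A_{\textrm{online}}, \calS_t)$; and (iii) every surviving auxiliary classifier $\hat\theta(v)$ is in the state of a fresh classifier that has received a positive update for each past index $i \le t$ with $z_v(\by_i)=1$ (evaluated in $T_t$), and no other updates. Invariant (i) is immediate because the tree-update routine appends one new leaf per new label and never removes leaves, so I would focus on the joint preservation of (ii) and (iii).

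The key technical observation enabling the inductive step is that a label $j$ first seen at step $t$ satisfies $y_{i,j}=0$ for all $i<t$; consequently, for any node $v$ belonging to both $T_{t-1}$ and $T_t$ the indicator $z_v(\by_i)$ takes the same value in the two trees, and for an intermediate node $v'$ inserted in variant 2 or 3 one has $z_{v'}(\by_i)=z_v(\by_i)$ evaluated in $T_{t-1}$, where $v$ is the node under which $v'$ is placed. With this in hand, the correctness of the two initialization shortcuts follows. For an inserted internal $v'$, \Algo{IPLT} on $T_t$ would have updated $\heta(v')$ at past $i<t$ only when $z_{\pa{v'}}(\by_i)=1$, and then necessarily $z_{v'}(\by_i)=1$ (the new leaf contributes nothing in the past), so every past update would be positive on precisely the instances counted by $\hat\theta(v)$, making $\heta(v')\leftarrow\textsc{Copy}(\hat\theta(v))$ and the analogous assignment to $\hat\theta(v')$ exactly right. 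For a fresh leaf $v''$ carrying a new label $j$, \Algo{IPLT} on $T_t$ would have issued only negative updates at past steps, one per $i<t$ with $z_v(\by_i)=1$; the inverse-classifier wrapper transforms the positive-update history stored in $\hat\theta(v)$ into precisely that sequence. After these initializations, \textsc{UpdateClassifiers} at step $t$ performs exactly the updates \Algo{IPLT} would perform in its $t$-th iteration on $T_t$, so (ii) and (iii) are preserved.

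For efficiency I would compare per-step costs directly. Since $T_i$ is obtained from $T_t$ by collapsing nodes that correspond to labels not yet observed by step $i$, one verifies $c(T_i,\by_i)\le c(T_t,\by_i)$, so the regular updates performed by \Algo{OPLT} at iteration $i$ cost no more than the $i$-th iteration of \Algo{IPLT} on $T_t$. Auxiliary updates occur only in positive nodes and add at most a factor of two. The tree-update step of Algorithm~\ref{alg:oplt-apply_policy} is amortized $O(|\calL_{\bx_t}\setminus \calL_{t-1}|)$ via a dynamic array with doubling, and $|\calL_{\bx_t}|\le \|\by_t\|_1\le c(T_t,\by_t)$. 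Summing over $i\le t$ yields a total running time within a constant factor of $C_t$, while storing at most two classifiers per node keeps the total space within a factor of two of $C_s$.

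The main obstacle is the formal verification of invariant (ii) across restructuring: one must track, for every old and new node, the precise sequence of positive and negative updates \Algo{IPLT} would have applied on $T_t$ from the very first iteration and check that the copy/inverse-copy mechanism reproduces it. This argument hinges entirely on the fact that labels introduced after step $i$ are absent from $\by_i$; if the tree-building policy were ever allowed to \emph{move} a pre-existing label between leaves, the invariant would fail and a more intricate replay mechanism would become necessary.
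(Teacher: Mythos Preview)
Your proposal is correct and follows essentially the same approach as the paper's proof in Appendix~\ref{app:oplt}: induction over the observation stream, case analysis on the three tree-extension variants, and the key observation that a label first seen at step $t$ is absent from all earlier $\by_i$, so the \textsc{Copy} and \textsc{InverseClassifier} initializations reproduce exactly the update histories \Algo{IPLT} would have produced on $T_t$. The paper organizes the inner loop as a nested induction on the number $m'$ of new labels in $\calL_{\bx_t}$ whereas you fold it into your invariants (ii) and (iii), and your efficiency argument via $c(T_i,\by_i)\le c(T_t,\by_i)$ is a touch more explicit than the paper's per-node count, but these are packaging differences rather than a different route.
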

The proof is quite technical and we present it in Appendix~\ref{app:oplt}. 
{To show the properness, 
it uses induction for both the outer and inner loop of the algorithm,
where the outer loop iterates over observations $(\bx_t, \calL_{\bx_t})$, 
while the inner loop over new labels in $\calL_{\bx_{t}}$. }
The key elements used to prove this property are the use of the auxiliary classifiers 
and the analysis of the three variants of the tree structure extension.
{The efficiency is proved by noticing that each node has two classifiers, and the algorithm creates 
and updates no more than one additional classifier per node comparing to \Algo{IPLT}. }
Moreover, any node selection policy which cost is proportional to the cost 
of updating \Algo{IPLT} classifiers for a single label
meets the efficiency requirement. 
Particularly, the policy building a complete tree presented above satisfies this constraint. 

The \Algo{OPLT} algorithm aims at constructing the node classifiers in such a way 
that its properness can be met by a wide range of tree building policies. 
The naive complete tree policy was introduced mainly for ease of presentation.  
One can, for example, easily adapt and further extend the policy originally used in  \Algo{CPT}~\citep{Beygelzimer_et_al_2009b}.
In short, the \Algo{CPT} policy selects a node $v$ 
which trade-offs balancedness of the tree and a fit of $\bx$, that is, the value of $\heta_v(\bx)$.
Since it works with binary trees only, 
the policy uses solely the third variant of the tree extension. 
Moreover, it was designed for \multiclass{} problems. 
In~\citep{Jasinska_et_al_2020} we have considered such extension.
From this point of view, the presented framework significantly extends \Algo{CPT}.
It solves both types of problems, \multiclass{} and \multilabel, 
and can be used with more advanced policies 
that exploit all three variants of the tree extension. 

\section{Implementation}
\label{sec:implementation}

\begin{sloppypar}
There are several {popular} packages that implement the \Algo{PLT} model, for example, 
\Algo{XMLC-PLT}~\citep{Jasinska_et_al_2016},\footnote{\url{https://github.com/busarobi/XMLC}}
\Algo{PLT-vw},\footnote{\url{https://github.com/VowpalWabbit/vowpal_wabbit}}
\Algo{Parabel}~\citep{Prabhu_et_al_2018},\footnote{\url{http://manikvarma.org/code/Parabel/download.html}}
\Algo{extremeText}~\citep{Wydmuch_et_al_2018},\footnote{\url{https://github.com/mwydmuch/extremeText}} 
\Algo{AttentionXML}~\citep{You_et_al_2019},\footnote{\url{{https://github.com/yourh/AttentionXML}}}
\Algo{Bonsai}~\citep{Khandagale_et_al_2019},\footnote{\url{https://github.com/xmc-aalto/bonsai}}
or \Algo{napkinXC}\footnote{\url{https://github.com/mwydmuch/napkinXC}} that we introduce in this paper.
In this section, we discuss the differences between them in terms of 
training node classifiers, 
dealing with sparse and dense features, 
efficient prediction, 
tree structure learning,
and ensembling.
Table~\ref{tab:implementations-compared} summarizes the differences between the discussed implementations. 
At the end of this section we also shortly discuss a different approach to obtain $\heta_{\node}(\bx)$, 
which uses multi-class probability estimation instead of binary probability estimation.
\end{sloppypar}

\begin{table}[!h]
    \centering
    \footnotesize
    \begin{tabular}{l |p{.1\textwidth} p{.1\textwidth} p{.2\textwidth} p{.15\textwidth} p{.08\textwidth}}
    \toprule
    Implementation & node        & represen- & prediction & tree            & ensem- \\
              & classifiers & tation    &            & structure       & bling  \\
    \midrule
    \Algo{XMLC-PLT}/     & online   & sparse    & online/             & complete tree          & no         \\
    \Algo{PLT-vw}        &          &           & unif.-cost search   & based on freq.         &            \\  
    \hline
    \Algo{Parabel}       & batch    & sparse    & batch/              & balanced               & yes \\
                         &          &           & beam search         & h.~$2$-means           & \\
    \hline
    \Algo{Bonsai}        & batch    & sparse    & batch/              & unbalanced             & yes \\
                         &          &           & beam search         & h.~$k$-means           & \\
    \hline
    \Algo{extremeText}  & online   & dense     & online/             & h.~$k$-means            & yes        \\
                        &          &           & unif.-cost search.  &                         &            \\  
    \hline
    \Algo{AttentionXML} & online \&   & dense  & batch \& leveled/  & shallow                  & no         \\
                        & leveled  &           & beam search         & h. $k$-means            &            \\  
    \hline
    \Algo{napkinXC}     & both     & both      & online/            & any             & yes        \\
                        &          &           & {unif.-cost search, } &                 &            \\  
                        &          &           & {thresholds-based } &                 &            \\  
    
    \bottomrule
    \end{tabular}
    \caption{Comparison of different implementations of the \Algo{PLT} model in terms of node classifiers (online, batch, or both), representation of features (sparse, dense, both), prediction algorithm (online, batch, leveled/beam search, uniform cost search, thresholds-based), tree structure learning (complete tree based on frequencies, hierarchical $k$ means, their shallow variant, or any), ensembling (yes, no). All the options are described in text.}
    \label{tab:implementations-compared}
\end{table}

\subsection{Training of node classifiers}

Given the tree structure, the node classifiers of \Algo{PLT}s can be trained either in online or batch mode. 
Both training modes have their pros and cons.
The batch variant, 
implemented for example in \Algo{Parabel}, 
can benefit from using well-known batch solvers, such as \Algo{LIBLINEAR}~\citep{LIBLINEAR}. 
These variants are relatively easy to train and achieve high predictive performance.
Moreover, each model can be trained independently which enables a simple parallelization of training.

In turn, the online variant, such as~\Algo{XMLC-PLT}, \Algo{PLT-vw}, or~\Algo{extremeText}, 
can be applied to stream data. 
However, all those implementations
demand a tree structure to be known prior to the training of node classifiers. 
Nevertheless, the online node learners give the possibility of combining them with the online tree construction, 
as discussed in the previous section.
Moreover, they can benefit from using deep networks to learn complex representation of input instances. 
Parallelization can be performed similarly as in the case of batch models. 
If accepting additional conflicts during updates of the node models, 
we can apply parallelization on the level of single examples as in~\Algo{extremeText}. 
Each thread consumes a part of the training examples and updates model allocated in shared memory.

\Algo{napkinXC} follows a modular design, therefore it can be easily used with either batch or online node learners.
In the latter case, it has an implemented functionality of collaborating with external learners 
to exchange the ``forward'' and ``backward'' signals. 
It also supports the online tree construction. 

\subsection{Sparse features}
\label{sec:sparse-features}

For problems with sparse features, such as text classification, 
\Algo{PLT}s can be efficiently implemented using different approaches. 
The simplest and naive one relies on transforming sparse representation to dense one,
training a given classifier using this representation, 
and then storing the final model in the sparse representation again. 
For example, if someone wants to use the popular \Algo{LIBLINEAR} package, 
this is the approach to go as this package uses dense representation internally. 
The resulting model can be stored as sparse, mainly if the model has been trained with $L_1$ regularization.
However, in the case of $L_2$ regularization one can remove all weights being close to zero, 
similarly as in \Algo{Dismec}~\citep{Babbar_Scholkopf_2017}. 
Since the node classifiers can be trained independently, 
the runtime memory of this approach can also be optimized. 

Alternatively, one can use sparse learning algorithms. 
Such algorithms follow usually the online/incremental learning paradigm, 
in which training instances are processed sequentially one-by-one. 
Instances of such algorithms are Fobos~\citep{fobos} or AdaGrad~\citep{adagrad}.
To store and update weights they use either hash maps or feature hashing~\citep{Weinberger_et_al_2009}.
The latter, implemented for example in the popular \Algo{Vowpal Wabbit} package~\citep{vw},  
relies on allocating a constant memory space for features weights.
Since the allocated space can be too small, conflicts can exist between different weights.
They are not resolved, that is, a weight is shared by all conflicting features. 
If used with \Algo{PLT}s, the allocated memory can be shared by all node models. 
In case of hash maps, one needs to reallocate the memory if the map is close to be full. 
In our implementation we follow the \Algo{Robin Hood Hashing}~\citep{Celis_et_al_1985} 
which allows for very efficient insert and find operations, 
having at the same time minimal memory overhead. 
It uses open addressing, but compared to hash maps with linear and quadratic probing,
it significantly reduces the expected average and maximum probe lengths.
This is achieved by shifting the keys around in such a way 
that all keys stay reasonably close to the slot they hash to. 
When inserting a new element, 
if the probe length for the existing element is less than the current probe length for the element being inserted, 
\Algo{Robin Hood} swaps the two elements and continues the procedure. 
This results in much lower average probe lengths as well as its variance.
This also allows for a straightforward lookup algorithm 
that ignores empty slots and 
keeps looking for a key until it reaches the known maximum probe length for the whole table. 
This property also allows usage of high load factors (higher than 0.9). 
Since it uses open addressing, 
the memory usage for the whole map is very close to memory needed to store its content as a sparse vector.

It is worth noticing that for sparse data the weights sparsity increases with the depth of a tree.
This implies a significant reduction of space of the final \Algo{PLT} model. 
Paradoxically, this reduction can be the largest in case of binary trees, 
although the number of nodes is the highest in this case, 
(equal to $2m-1$, being as much as twice the number of models in the \Algo{1-vs-all} approach). 
This is because the models use only non-zero features of the sibling nodes, 
and there are only two such nodes in binary trees. 
No other features are needed to build the corresponding classifiers.

\subsection{Dense features}

\Algo{PLT}s can also work with dense features, however, the dimensionality of the feature space cannot be too high. 
Otherwise, the memory used for models would be too large. 
The dense representation is usually connected with deep or shallow neural networks. 

One possibility is to use pretrained embeddings. 
For text classification, one can use word representations trained by \Algo{word2vec}~\citep{Mikolov_et_al_2013} 
or \Algo{GloVe}~\citep{Pennigton_et_al_2014} on large text corpuses. 
The document representation can be then created from these word embeddings in many ways, for example, 
as an average or as the maximum or minimum value of each element of the embeddings~\citep{De_Boom_et_al_2016}.
Alternatively, one can train the word embeddings simultaneously with the node classifiers, similarly as in \Algo{FastText}~\citep{Joulin_et_al_2016}. 
This approach is taken in \Algo{extremeText}.
Another option is to initiate the network with the pretrained embeddings 
and then update all the parameters of both the node classifiers and text representations.

To improve the document representation, instead of a simple aggregation over words, one can use word embeddings in a more advanced deep architecture, 
such as LSTM ~\citep{Hochreiter_Schmidhuber_1997} or the text-based convolution neural network~\citep{Liu_et_al_2017}. 
\Algo{PLT}s can be used with such architecture as the output layer. 
However, the speed advantage of this architecture might not be so visible in the case of GPU-based training, 
as matrix multiplication can be efficiently performed on GPUs. 
Nevertheless, in the case of complex and memory-intensive approaches, 
the \Algo{PLT} approach can be used to decompose the problem in such a way
that computations are performed level-by-level in a tree (with additional decomposition possible on a given level).
All the layers except the \Algo{PLT} one are initialized using the trained values from the preceding level. 
This idea is followed in \Algo{AttentionXML}. 

\subsection{Prediction}
\label{subsec:implementation-prediction}

The top-$k$ prediction, 
discussed in Section~\ref{subsec:plt-prediction} as Algorithm~\ref{alg:ucs_prediction},  
is a variant of the uniform-cost search. 
It is used in \Algo{XMLC-PLT}, \Algo{extremeText}, and \Algo{napkinXC}. 
It has the advantage of being very efficient for online prediction. 
The algorithm also allows for extending $k$ at any time, 
without restarting the whole procedure from the beginning. 
If one does not need to change $k$, 
the algorithm can be improved by adding to the priority queue only those nodes 
whose probability is greater than the probability of the $k$-th top leaf already added to the queue.
In case of sparse models, they should be stored either in hash maps or use the feature hashing 
to allow random access to model weights. 

Unpacking a sparse model to dense representation, as implemented in \Algo{Parabel}, 
can be very costly in case of online prediction. 
However, for sufficiently large batches, 
the approach taken in \Algo{Parabel} can benefit from beam search.
In this case, a node classifier is unpacked once for a batch of test examples 
allowing a very efficient computation of the dot products. 
If models are dense and they can be all load to the main memory, 
then both search methods perform similarly in terms of computational times.
However, beam search is an approximate method 
and it may not find the actual top $k$ labels with the highest estimates of label probabilities. 
Therefore it may suffer regret for precision$@k$~\citep{Zhuo_et_al_2020}. 
In the case of memory-intensive deep models, 
such as the one used in \Algo{AttentionXML}, 
the prediction is usually performed level-by-level, similarly as training, 
and, therefore, it uses beam search.
Nevertheless, there also exists a variant of uniform-cost search 
efficiently operating in a batch mode~\citep{Jasinska_2018}.

\subsection{Tree structure}
\label{subsec:tree-structure}

The tree structure of a \Algo{PLT} is a crucial modeling decision. 
The theoretical results from Section~\ref{sec:analysis} concerning the vanishing regret of \Algo{PLT} 
hold regardless of the tree structure, 
however, this theory requires the regret of the node classifiers also to vanish. 
In practice, we can only estimate the conditional probabilities in the nodes, 
therefore the tree structure does indeed matter as it affects the difficulty of the node learning problems. 
In Section~\ref{sec:oplt}, we already discussed the problem of building a tree in the online setting. 
Here, we focus on batch approaches which assume that labels are known. 
The original \Algo{PLT} paper~\citep{Jasinska_et_al_2016} uses simple complete trees with labels assigned to leaves according to their frequencies. 
Another option, routinely used in \Algo{HSM}~\citep{Joulin_et_al_2016}, is the Huffman tree built over the label frequencies. 
Such tree takes into account the computational complexity by putting the most frequent labels close to the root. 
This approach has been further extended to optimize GPU operations in~\citep{Grave_et_al_2017}. 
Unfortunately, for multi-label classification the Huffman tree is no longer optimal in terms of computational cost. 
As already mentioned in Section~\ref{subsec:complexity_analysis}, 
an exhaustive analysis of computational complexity of \Algo{PLT}s has been performed by~\citet{Busa-Fekete_et_al_2019}. 
Furthermore, Huffman trees ignore the statistical properties of the tree structure.
There exist, however, other methods that focus on building a tree with high overall accuracy~\citep{Tagami_2017,Prabhu_et_al_2018}. 

The method of~\citep{Prabhu_et_al_2018}, implemented in \Algo{Parabel}, performs a simple top-down hierarchical clustering. 
Each label in this approach is represented by a profile vector being an average of the training vectors tagged by this label. 
Then the profile vectors are clustered using balanced $k$-means which divides the labels into two or more clusters with approximately the same size. 
This procedure is then repeated recursively until the clusters are smaller than a given value (for example, 100). 
The nodes of the resulting tree are then of different arities. 
The internal nodes up to the pre-leaf nodes have $k$ children, but the pre-leaf nodes are usually of higher arity. 
Thanks to this clustering, similar labels are close to each other in the tree. 
Moreover, the tree is balanced, so its depth is logarithmic in terms of the number of labels.
Variants of this method have been used in \Algo{extremeText}~\citep{Wydmuch_et_al_2018}, 
\Algo{Bonsai Trees}~\citep{Khandagale_et_al_2019} and \Algo{AttentionXML}~\citep{You_et_al_2019}.
The two latter algorithms promote shallow trees, that is, trees of a much higher arity.

\subsection{Ensemble of PLTs}
\label{sec:plt-ensemble}

Various ensemble techniques, such as bagging, are routinely applied with tree-based learners. 
A  simple ensemble approach can also be 
implemented for \Algo{PLT}s. 
One can use several \Algo{PLT} instances of a different structure, which share the same feature space. 
This can be obtained by running the $k$-means-based top-down hierarchical clustering. 
Since $k$-means can be initialized randomly, each time a different tree can be produced.
Depending on the tree structure, the accuracy of a single \Algo{PLT} for specific labels may vary. 
Thus the aggregation of predictions of this diverse pool of \Algo{PLT}s should lead to improvement of the overall predictive performance. 
Such ensemble technique has been used in \Algo{Parabel}.

Classification of test examples with multiple trees can be still performed very efficiently. 
Each tree can be queried for its top $k$ predictions. 
Then, the labels are pooled and their average scores are computed.
Based on them, the final prediction is made. 
If a label does not have a probability estimate from a given tree (it is not included in the top-$k$ predictions),
then the estimate is either set to 0 or computed by traversing a single path in the tree corresponding to the label.
The good trade-off between the improvement of the results and the required computational resources is usually obtained for around 3 or 5 trees. 

\subsection{Node probabilities via multi-class classification}

So far we assumed that each node $\node \in \nodes$ is associated with a binary probabilistic classifier
that seeks for estimating $\eta_v(\bx)$.
As already mentioned in Section~\ref{subsec:plt-prediction}, 
this may require the additional normalization step (\ref{eqn:plt-normalization}),
as all models of siblings nodes are trained independently.
To avoid this problem, one can train a joint multi-class classifier over the sibling nodes. 
Let $\node \in \nodes$ be a parent of the sibling nodes $\childs{v}$.
Then, the class labels of the multi-class problem correspond to binary codes of vector $\vec{c}$ 
whose elements correspond to $z_{v'}$, $v' \in \childs{v}$. 
The classifier estimates $\prob(\vec{c} \given \bx, z_v = 1)$, for all $\vec{c} \in \{0,1\}^{|\childs{v}|}$. 
Probability $\eta_{v'}(\bx)$, for $v' \in \childs{v}$, 
is obtained by proper marginalization of the multi-class distribution over vectors~$\vec{c}$:
$$
\eta_{v'}(\bx) = \sum_{z_{v'} = 1} \prob(\vec{c} \given \bx, z_v = 1)\,.
$$
This approach has been investigated in \Algo{Parabel}~\citep{Prabhu_et_al_2018}. 
It can be applied to trees of small arity only
as the number of class labels grows exponentially with the number of sibling nodes.

\section{Empirical validation of \Algo{PLT}s}
\label{sec:experiments}

In this section, we show results of a wide empirical study 
we performed to comprehensively evaluate the described algorithms and theoretical findings.
We mainly report the results of the predictive performance in terms of \precatk{},
as this is the most used metric in XMLC experiments. 
Moreover, as shown in Section~\ref{sec:analysis}, \Algo{PLT}s are well-suited for this metric.
We also present training and test times, as well as memory consumption.
Whenever it was necessary, we repeated an experiment 5 times to eliminate the impact of the randomness of algorithms.
In such cases, we report the mean performance along with standard errors. 
All computations were conducted on an Intel Xeon E5-2697 v3 2.60GHz (14 cores) machine with 128GB RAM.
In most experiments, we use TF-IDF versions of the real-word benchmark data sets 
from the Extreme Classification Repository~\citep{xml-repo},%
\footnote{\url{http://manikvarma.org/downloads/XC/XMLRepository.html}}
for which we use the original train and test splits.
Table~\ref{tab:datasets} gives basic statistics of the data sets. 
\begin{table}[h]
    \centering
    \footnotesize
    \begin{tabular}{l|rrrrr}
        \toprule
        Dataset & \multicolumn{1}{c}{$\dim{\calX}$} & \multicolumn{1}{c}{$\dim{\calY}$ ($m$)} &  \multicolumn{1}{c}{$N_{\textrm{train}}$} & \multicolumn{1}{c}{$N_{\textrm{test}}$} & \multicolumn{1}{c}{avg. $|\labels_{\bx}|$} \\
        \midrule
        \eurlex         & 5000      & 3993      & 15539      & 3809      & 5.31  \\
        \amazoncatsmall & 203882    & 13330     & 1186239    & 306782    & 5.04  \\
        \wikiten        & 101938    & 30938     & 14146      & 6616      & 18.64 \\
        \deliciouslarge & 782585    & 205443    & 196606     & 100095    & 75.54 \\
        \wikilshtc      & 1617899   & 325056    & 1778351    & 587084    & 3.19  \\
        \wikipedia      & 2381304   & 501070    & 1813391    & 783743    & 4.77  \\
        \amazon         & 135909    & 670091    & 490449     & 153025    & 5.45  \\
        \amazonlarge    & 337067    & 2812281   & 1717899    & 742507    & 36.17 \\
        \bottomrule
    \end{tabular}
    \caption{The number of unique features, labels, examples in train and test splits, and the average number of true labels per example in the benchmark data sets.}
    \label{tab:datasets}
\end{table}

In the first part of the study, we analyze different design choices for \Algo{PLT}s.
To this end, we mainly use \Algo{napkinXC}, because of its modular architecture.  
However, whenever a tested configuration agrees with another \Algo{PLT} implementation, 
we use this one in the experiment. 
In the next part, we evaluate \Algo{PLT}s on Hamming loss and micro-F measure, 
to verify our theoretical results concerning generalized performance metrics.
Later, we empirically confirm the suboptimality of hierarchical softmax with pick-one-label heuristic.
The next experiment studies the performance of the fully online variant of \Algo{PLT}s, 
in which both node classifiers and tree structure are built incrementally on a sequence of training examples. 
Finally, we compare \Algo{PLT}s to relevant state-of-the-art algorithms.%
\footnote{The experiments with \Algo{napkinXC} are reproducible
by running scripts available from \url{https://github.com/mwydmuch/napkinXC/experiments}}

\subsection{\Algo{PLT}s with different design choices}
\label{subsec:pltvariants}

We analyze different design choices for \Algo{PLT}s. 
To this end, we use \Algo{napkinXC}, as thanks to its modular design,
we can easily experiment with different settings. 
However, whenever a given configuration agrees with an existing \Algo{PLT} implementation, 
we use this one in the experiment. 
This is the case of \Algo{Parabel} and \Algo{extremeText}. 
The former uses a dual coordinate descent method from \Algo{LIBLINEAR} with squared hinge loss to train node classifiers. 
It uses weight pruning at threshold 0.1, that is, it sets model weights less than 0.1 to zero.
The prediction algorithm is based on beam search. 
\Algo{extremeText} is built over \Algo{fastText}~\citep{Grave_et_al_2017}. 
It uses dense representation, shared by all nodes, 
which is a result of a 1-layer network implementing the \Algo{CBOW} architecture~\citep{Mikolov_et_al_2013}. 
This representation is trained along with the node models using
stochastic gradient descent with $L_2$ regularization and logistic loss.
Both implementations use hierarchical $k$-means clustering.
\Algo{Parabel} uses $k=2$, while \Algo{extremeText} allows for different values of $k$. 
Both use pre-leaves of high degree equal to 100.
In the experiment, we do not use \Algo{AttentionXML},
as it uses a complex deep architecture requiring powerful GPUs 
and runs over raw textual data. 
By comparing the results from the original paper~\citep{You_et_al_2019},
we admit that it achieves the best results among \Algo{PLT}-based approaches.
Nevertheless, in this study, we focus on efficient CPU implementations and 
the TF-IDF versions of the benchmark data sets.

We start with a comparison of batch and incremental learning of  node classifiers.
For both, we use logistic and squared hinge loss. 
Next, we verify two different methods of prediction. 
The first one is based on uniform-cost search, 
while the second on the beam search.
We then compare training and prediction with sparse and dense representation. 
In the next experiment, we analyze different tree-building strategies.
Finally, we check the impact of ensembling. 

\subsubsection{Batch and incremental learning}
\label{sec:optim-loss}

\afterpage{%

\begin{table}[H]
\footnotesize{
\resizebox{\textwidth}{!}{
\tabcolsep=5pt
\begin{tabular}{l|r@{}lr@{}lr@{}l|r@{}lr@{}lr@{}l}
\specialrule{0.94pt}{0.4ex}{0.65ex}
Optimizer
& \multicolumn{2}{c}{$p@1$ [\%]}
& \multicolumn{2}{c}{$p@3$ [\%]}
& \multicolumn{2}{c|}{$p@5$ [\%]}
& \multicolumn{2}{c}{$T_{\textrm{train}}$ [h]} 
& \multicolumn{2}{c}{$T/N_{\textrm{test}}$ [ms]} 
& \multicolumn{2}{c}{$M_{\textrm{size}}$ [GB]}
\\

\specialrule{0.94pt}{0.4ex}{0.65ex}
& \multicolumn{12}{c}{\eurlex}
\\
\midrule
\Algo{nXC}-B,log & \boldmath$80.51$ & \boldmath$\pm 0.16$ & $65.65$ & $\pm 0.40$ & $53.33$ & $\pm 0.68$ & $0.02$ & $\pm 0.00$ & $0.39$ & $\pm 0.03$ & $0.02$ & $\pm 0.00$ \\
\Algo{nXC}-B,s.h. & $80.17$ & $\pm 0.27$ & $65.33$ & $\pm 0.53$ & $53.01$ & $\pm 0.87$ & \boldmath$0.01$ & \boldmath$\pm 0.00$ & \boldmath$0.24$ & \boldmath$\pm 0.02$ & \boldmath$0.00$ & \boldmath$\pm 0.00$ \\
\Algo{nXC}-I,log & $80.43$ & $\pm 0.09$ & \boldmath$66.08$ & \boldmath$\pm 0.26$ & \boldmath$53.87$ & \boldmath$\pm 0.58$ & $0.01$ & $\pm 0.00$ & $0.25$ & $\pm 0.02$ & $0.05$ & $\pm 0.00$ \\
\Algo{nXC}-I,s.h. & $78.72$ & $\pm 0.18$ & $61.54$ & $\pm 0.34$ & $48.09$ & $\pm 0.51$ & $0.01$ & $\pm 0.00$ & $0.33$ & $\pm 0.03$ & $0.03$ & $\pm 0.00$ \\

\specialrule{0.94pt}{0.4ex}{0.65ex}
& \multicolumn{12}{c}{\amazoncatsmall}
\\
\midrule
\Algo{nXC}-B,log & $93.04$ & $\pm 0.02$ & $78.44$ & $\pm 0.02$ & $63.70$ & $\pm 0.02$ & $0.72$ & $\pm 0.02$ & $0.32$ & $\pm 0.03$ & $0.35$ & $\pm 0.00$ \\
\Algo{nXC}-B,s.h. & $92.40$ & $\pm 0.04$ & $78.49$ & $\pm 0.02$ & $63.88$ & $\pm 0.02$ & $0.29$ & $\pm 0.00$ & \boldmath$0.19$ & \boldmath$\pm 0.00$ & \boldmath$0.19$ & \boldmath$\pm 0.00$ \\
\Algo{nXC}-I,log & \boldmath$93.23$ & \boldmath$\pm 0.02$ & \boldmath$78.76$ & \boldmath$\pm 0.03$ & \boldmath$64.05$ & \boldmath$\pm 0.02$ & $0.17$ & $\pm 0.00$ & $0.32$ & $\pm 0.02$ & $0.72$ & $\pm 0.00$ \\
\Algo{nXC}-I,s.h. & $92.62$ & $\pm 0.05$ & $76.39$ & $\pm 0.06$ & $60.67$ & $\pm 0.06$ & \boldmath$0.16$ & \boldmath$\pm 0.00$ & $0.38$ & $\pm 0.01$ & $0.55$ & $\pm 0.00$ \\

\specialrule{0.94pt}{0.4ex}{0.65ex}
& \multicolumn{12}{c}{\wikiten}
\\
\midrule
\Algo{nXC}-B,log & $85.36$ & $\pm 0.09$ & $73.90$ & $\pm 0.07$ & $63.84$ & $\pm 0.07$ & $0.21$ & $\pm 0.00$ & $5.35$ & $\pm 0.32$ & $0.58$ & $\pm 0.00$ \\
\Algo{nXC}-B,s.h. & $84.17$ & $\pm 0.10$ & $72.43$ & $\pm 0.10$ & $63.12$ & $\pm 0.04$ & $0.11$ & $\pm 0.00$ & \boldmath$2.87$ & \boldmath$\pm 0.08$ & \boldmath$0.06$ & \boldmath$\pm 0.00$ \\
\Algo{nXC}-I,log & $84.92$ & $\pm 0.10$ & \boldmath$74.52$ & \boldmath$\pm 0.09$ & \boldmath$65.29$ & \boldmath$\pm 0.04$ & $0.11$ & $\pm 0.00$ & $5.24$ & $\pm 0.42$ & $0.91$ & $\pm 0.00$ \\
\Algo{nXC}-I,s.h. & \boldmath$85.64$ & \boldmath$\pm 0.10$ & $70.37$ & $\pm 0.09$ & $59.43$ & $\pm 0.13$ & \boldmath$0.10$ & \boldmath$\pm 0.00$ & $7.19$ & $\pm 0.06$ & $0.35$ & $\pm 0.00$ \\

\specialrule{0.94pt}{0.4ex}{0.65ex}
& \multicolumn{12}{c}{\deliciouslarge}
\\
\midrule
\Algo{nXC}-B,log & \boldmath$49.55$ & \boldmath$\pm 0.05$ & \boldmath$43.08$ & \boldmath$\pm 0.03$ & \boldmath$39.90$ & \boldmath$\pm 0.02$ & \boldmath$2.58$ & \boldmath$\pm 0.15$ & \boldmath$9.89$ & \boldmath$\pm 0.89$ & \boldmath$0.95$ & \boldmath$\pm 0.00$ \\
\Algo{nXC}-B,s.h. & $46.30$ & $\pm 0.07$ & $39.76$ & $\pm 0.08$ & $36.54$ & $\pm 0.07$ & $5.51$ & $\pm 0.29$ & $10.07$ & $\pm 0.23$ & $1.82$ & $\pm 0.00$ \\
\Algo{nXC}-I,log & $45.27$ & $\pm 0.06$ & $38.26$ & $\pm 0.03$ & $34.88$ & $\pm 0.02$ & $2.97$ & $\pm 0.03$ & $11.51$ & $\pm 0.57$ & $15.05$ & $\pm 0.00$ \\
\Algo{nXC}-I,s.h. & $45.29$ & $\pm 0.34$ & $38.15$ & $\pm 0.50$ & $34.44$ & $\pm 0.58$ & $3.13$ & $\pm 0.10$ & $27.11$ & $\pm 1.55$ & $9.59$ & $\pm 0.00$ \\

\specialrule{0.94pt}{0.4ex}{0.65ex}
& \multicolumn{12}{c}{\wikilshtc}
\\
\midrule
\Algo{nXC}-B,log & $61.96$ & $\pm 0.03$ & $40.77$ & $\pm 0.02$ & $30.19$ & $\pm 0.02$ & $2.95$ & $\pm 0.15$ & $1.77$ & $\pm 0.11$ & $2.73$ & $\pm 0.00$ \\
\Algo{nXC}-B,s.h. & \boldmath$62.78$ & \boldmath$\pm 0.03$ & \boldmath$41.17$ & \boldmath$\pm 0.02$ & \boldmath$30.25$ & \boldmath$\pm 0.02$ & $1.60$ & $\pm 0.06$ & \boldmath$0.86$ & \boldmath$\pm 0.06$ & \boldmath$0.97$ & \boldmath$\pm 0.00$ \\
\Algo{nXC}-I,log & $60.99$ & $\pm 0.04$ & $39.85$ & $\pm 0.02$ & $29.50$ & $\pm 0.01$ & $1.52$ & $\pm 0.00$ & $2.44$ & $\pm 0.02$ & $4.93$ & $\pm 0.00$ \\
\Algo{nXC}-I,s.h. & $59.55$ & $\pm 0.05$ & $37.33$ & $\pm 0.04$ & $27.02$ & $\pm 0.03$ & \boldmath$1.41$ & \boldmath$\pm 0.05$ & $1.70$ & $\pm 0.17$ & $3.64$ & $\pm 0.00$ \\

\specialrule{0.94pt}{0.4ex}{0.65ex}
& \multicolumn{12}{c}{\wikipedia}
\\
\midrule
\Algo{nXC}-B,log & $66.20$ & $\pm 0.05$ & $47.14$ & $\pm 0.02$ & $36.83$ & $\pm 0.01$ & $16.10$ & $\pm 0.44$ & $6.67$ & $\pm 0.23$ & $8.89$ & $\pm 0.00$ \\
\Algo{nXC}-B,s.h. & \boldmath$66.77$ & \boldmath$\pm 0.08$ & \boldmath$47.63$ & \boldmath$\pm 0.04$ & \boldmath$36.94$ & \boldmath$\pm 0.02$ & $9.48$ & $\pm 0.33$ & \boldmath$2.86$ & \boldmath$\pm 0.07$ & \boldmath$1.78$ & \boldmath$\pm 0.00$ \\
\Algo{nXC}-I,log & $65.68$ & $\pm 0.15$ & $46.62$ & $\pm 0.09$ & $36.52$ & $\pm 0.06$ & \boldmath$8.11$ & \boldmath$\pm 0.18$ & $7.86$ & $\pm 0.19$ & $19.11$ & $\pm 0.00$ \\
\Algo{nXC}-I,s.h. & $65.05$ & $\pm 0.08$ & $44.35$ & $\pm 0.03$ & $33.74$ & $\pm 0.05$ & $8.22$ & $\pm 0.06$ & $6.66$ & $\pm 0.06$ & $24.71$ & $\pm 0.00$ \\

\specialrule{0.94pt}{0.4ex}{0.65ex}
& \multicolumn{12}{c}{\amazon}
\\
\midrule
\Algo{nXC}-B,log & $43.54$ & $\pm 0.01$ & $38.71$ & $\pm 0.02$ & $35.15$ & $\pm 0.03$ & $0.56$ & $\pm 0.00$ & $4.13$ & $\pm 0.28$ & $2.26$ & $\pm 0.00$ \\
\Algo{nXC}-B,s.h. & $43.31$ & $\pm 0.03$ & $38.19$ & $\pm 0.03$ & $34.31$ & $\pm 0.03$ & \boldmath$0.40$ & \boldmath$\pm 0.01$ & \boldmath$1.32$ & \boldmath$\pm 0.08$ & \boldmath$0.63$ & \boldmath$\pm 0.00$ \\
\Algo{nXC}-I,log & \boldmath$43.82$ & \boldmath$\pm 0.01$ & \boldmath$38.88$ & \boldmath$\pm 0.03$ & \boldmath$35.31$ & \boldmath$\pm 0.03$ & $0.42$ & $\pm 0.00$ & $5.93$ & $\pm 0.11$ & $6.22$ & $\pm 0.00$ \\
\Algo{nXC}-I,s.h. & $41.46$ & $\pm 0.02$ & $36.16$ & $\pm 0.04$ & $32.34$ & $\pm 0.03$ & $0.41$ & $\pm 0.00$ & $2.08$ & $\pm 0.17$ & $5.26$ & $\pm 0.00$ \\

\specialrule{0.94pt}{0.4ex}{0.65ex}
& \multicolumn{12}{c}{\amazonlarge}
\\
\midrule
\Algo{nXC}-B,log & $46.09$ & $\pm 0.02$ & $43.11$ & $\pm 0.01$ & $40.98$ & $\pm 0.01$ & $7.07$ & $\pm 0.56$ & $3.26$ & $\pm 0.08$ & $20.84$ & $\pm 0.00$ \\
\Algo{nXC}-B,s.h. & \boldmath$46.23$ & \boldmath$\pm 0.01$ & \boldmath$43.48$ & \boldmath$\pm 0.01$ & \boldmath$41.41$ & \boldmath$\pm 0.01$ & $5.44$ & $\pm 0.13$ & \boldmath$1.96$ & \boldmath$\pm 0.05$ & \boldmath$9.86$ & \boldmath$\pm 0.00$ \\
\Algo{nXC}-I,log & $43.61$ & $\pm 0.12$ & $40.44$ & $\pm 0.09$ & $38.32$ & $\pm 0.06$ & \boldmath$4.44$ & \boldmath$\pm 0.10$ & $4.42$ & $\pm 0.00$ & $52.16$ & $\pm 0.00$ \\
\Algo{nXC}-I,s.h. & $43.73$ & $\pm 0.07$ & $40.19$ & $\pm 0.06$ & $37.75$ & $\pm 0.05$ & $4.45$ & $\pm 0.04$ & $11.03$ & $\pm 0.02$ & $24.77$ & $\pm 0.01$ \\

\specialrule{0.94pt}{0.4ex}{0.65ex}
\end{tabular}}}

\caption{Comparison of \Algo{napkinXC} (\Algo{nXC}) with different modes of node classifiers training. We perform batch \Algo{LibLinear} (B) or incremental \Algo{AdaGrad} (I) training with logistic loss (log) or squared hinge loss (s.h.).}
\label{tab:optim-loss}
\end{table}

\clearpage
}

For batch learning, we use \Algo{LIBLINEAR}, the dual coordinate descent method, 
with either logistic loss or squared hinge loss. 
We use $L_2$ regularization for both and tune its $C$ parameter for each data set.
For incremental learning, we use \Algo{AdaGrad}~\citep{adagrad} with 3 epochs 
and tune the base learning rate $\epsilon$ for each data set. 
As above, we use either logistic loss or squared hinge loss.
In all algorithms, we prune the weights at 0.1 to obtain smaller models
and use uniform-cost search to obtain top-$k$ predictions.
Let us point out that 
the configuration based on \Algo{LIBLINEAR} with squared hinge loss is similar to \Algo{Parabel}.
The difference is that \Algo{Parabel} uses beam search,  
thus we run the implementation from \Algo{napkinXC} here.

The results are given in Table~\ref{tab:optim-loss}. 
None of the configurations strictly dominates the others.
It seems, however, that \Algo{AdaGrad} with squared hinge loss usually performs the worst.
This agrees with the fact that 
stochastic gradient approaches perform usually better with logistic loss.
This configuration also leads to models with substantially longer testing times.
In turn, significantly larger models for some data sets are built by \Algo{AdaGrad} with logistic loss,
while the training time can be doubled by \Algo{LIBLINEAR} with the same loss.
It seems from this analysis that the batch training with squared hinge loss is the most reliable,
without outlying results.
Moreover, it performs the best on both \Algo{Wikipedia} data sets.
Nevertheless, incremental learning is a valid competitor that 
can be easily combined with training of dense representation 
or online tree structure building.

\subsubsection{Prediction methods}

\afterpage{%

\begin{table}[H]
\footnotesize{
\resizebox{\textwidth}{!}{
\tabcolsep=5pt
\begin{tabular}{l|r@{}lr@{}l|r@{}lr@{}l|r@{}lr@{}l}
\toprule
& \multicolumn{4}{c|}{$p@1$ [\%]}
& \multicolumn{4}{c|}{$p@3$ [\%]}
& \multicolumn{4}{c}{$p@5$ [\%]} \\
Prediction method & \multicolumn{2}{c}{u-c search} & \multicolumn{2}{c|}{beam search}
& \multicolumn{2}{c}{u-c search} & \multicolumn{2}{c|}{beam search}
& \multicolumn{2}{c}{u-c search} & \multicolumn{2}{c}{beam search} \\
\midrule
\eurlex & $80.17$ & $\pm 0.27$ & \boldmath$80.66$ & \boldmath$\pm 0.16$ & $65.33$ & $\pm 0.53$ & \boldmath$67.76$ & \boldmath$\pm 0.05$ & $53.01$ & $\pm 0.87$ & \boldmath$56.57$ & \boldmath$\pm 0.07$ \\
\amazoncatsmall & $92.40$ & $\pm 0.04$ & \boldmath$92.58$ & \boldmath$\pm 0.02$ & $78.49$ & $\pm 0.02$ & \boldmath$78.53$ & \boldmath$\pm 0.00$ & $63.88$ & $\pm 0.02$ & \boldmath$63.90$ & \boldmath$\pm 0.01$ \\
\wikiten & \boldmath$84.17$ & \boldmath$\pm 0.10$ & \boldmath$84.17$ & \boldmath$\pm 0.03$ & \boldmath$72.43$ & \boldmath$\pm 0.10$ & $72.12$ & $\pm 0.04$ & $63.12$ & $\pm 0.04$ & \boldmath$63.30$ & \boldmath$\pm 0.05$ \\
\deliciouslarge & $46.30$ & $\pm 0.07$ & \boldmath$46.44$ & \boldmath$\pm 0.07$ & \boldmath$39.76$ & \boldmath$\pm 0.08$ & $39.66$ & $\pm 0.06$ & \boldmath$36.54$ & \boldmath$\pm 0.07$ & $36.19$ & $\pm 0.05$ \\
\wikilshtc & \boldmath$62.78$ & \boldmath$\pm 0.03$ & \boldmath$62.78$ & \boldmath$\pm 0.02$ & $41.17$ & $\pm 0.02$ & \boldmath$41.22$ & \boldmath$\pm 0.02$ & $30.25$ & $\pm 0.02$ & \boldmath$30.27$ & \boldmath$\pm 0.01$ \\
\wikipedia & $66.77$ & $\pm 0.08$ & \boldmath$67.05$ & \boldmath$\pm 0.14$ & $47.63$ & $\pm 0.04$ & \boldmath$47.75$ & \boldmath$\pm 0.10$ & $36.94$ & $\pm 0.02$ & \boldmath$36.99$ & \boldmath$\pm 0.08$ \\
\amazon & \boldmath$43.31$ & \boldmath$\pm 0.03$ & $43.13$ & $\pm 0.02$ & \boldmath$38.19$ & \boldmath$\pm 0.03$ & $37.94$ & $\pm 0.03$ & \boldmath$34.31$ & \boldmath$\pm 0.03$ & $34.00$ & $\pm 0.00$ \\
\amazonlarge & \boldmath$46.23$ & \boldmath$\pm 0.01$ & $46.14$ & $\pm 0.01$ & \boldmath$43.48$ & \boldmath$\pm 0.01$ & $43.32$ & $\pm 0.01$ & \boldmath$41.41$ & \boldmath$\pm 0.01$ & $41.20$ & $\pm 0.01$ \\

\bottomrule
\end{tabular}}}

\caption{\Precatk{} of uniform-cost (u-c) search (\Algo{napkinXC}) and beam search (\Algo{Parabel}).}
\label{tab:parabel-vs-napkinxc}
\end{table}

\vspace{-0.5cm}

\begin{figure}[H]
\tikzset{fontscale/.style = {font=\footnotesize}}
\centering
\footnotesize{
\centering
\begin{tikzpicture}

\begin{groupplot}[%
    ,group style={group name=plt_vs_parabel_times, columns=2, rows=4, horizontal sep=1.5cm, vertical sep=1.1cm}
    ,title style={at={(0.5,0.92)}}
    ,width=0.45\linewidth
    ,height=3.75cm
    ,xmode=log
    ,ymode=log
    ,log ticks with fixed point
    ,scaled x ticks=real:1e3
    ,scaled y ticks=real:1e3
]%
\nextgroupplot[title=\eurlex, legend to name=grouplegend, legend style={legend columns=2}, ylabel={$T/N_{\textrm{test}}$ [ms]}]%
            \addplot+[name path=avg_plt, color=putblue, mark options={fill=putblue}, smooth] coordinates {
                (1,0.2957)
                (3,0.2817)
                (5,0.2535)
                (10,0.2450)
                (30,0.2493)
                (50,0.2510)
                (100,0.2518)
                (300,0.2511)
                (500,0.2465)
                (1000,0.2432)
                (3000,0.2464)
                (5000,0.2389)
            };\addlegendentry{u-c search}%
            \addplot+[name path=avg_parabel, color=putred, mark options={fill=putred}, smooth] coordinates {
                (1,2.2536)
                (3,0.9113)
                (5,0.6421)
                (10,0.4494)
                (30,0.2971)
                (50,0.2686)
                (100,0.2474)
                (300,0.2347)
                (500,0.2320)
                (1000,0.2349)
                (3000,0.2292)
                (5000,0.2076)
            };\addlegendentry{beam search}%
                \coordinate (top) at (rel axis cs:0,1);%
\nextgroupplot[title=\amazoncatsmall]%
            \addplot+[name path=avg_plt, color=putblue, mark options={fill=putblue}, smooth] coordinates {
                (1,0.4101)
                (3,0.3873)
                (5,0.3691)
                (10,0.3794)
                (30,0.3657)
                (50,0.3616)
                (100,0.3579)
                (300,0.3548)
                (500,0.3495)
                (1000,0.3461)
                (3000,0.3447)
                (5000,0.3456)
            };%
            \addplot+[name path=avg_parabel, color=putred, mark options={fill=putred}, smooth] coordinates {
                (1,148.8617)
                (3,49.8993)
                (5,29.3805)
                (10,14.8517)
                (30,5.0825)
                (50,3.0828)
                (100,1.5691)
                (300,0.6274)
                (500,0.4164)
                (1000,0.2740)
                (3000,0.1867)
                (5000,0.1683)
            };%
\nextgroupplot[title=\wikiten, ylabel={$T/N_{\textrm{test}}$ [ms]}]%
            \addplot+[name path=avg_plt, color=putblue, mark options={fill=putblue}, smooth] coordinates {
                (1,3.0323)
                (3,2.7032)
                (5,2.7173)
                (10,2.7596)
                (30,2.8192)
                (50,2.8203)
                (100,2.8614)
                (300,2.8739)
                (500,2.8803)
                (1000,2.8846)
                (3000,2.8873)
                (5000,3.1499)
            };%
            \addplot+[name path=avg_parabel, color=putred, mark options={fill=putred}, smooth] coordinates {
                (1,41.7057)
                (3,14.3029)
                (5,8.9992)
                (10,4.7505)
                (30,2.0723)
                (50,1.5648)
                (100,1.1326)
                (300,0.9149)
                (500,0.7943)
                (1000,0.7770)
                (3000,0.8507)
                (5000,0.8561)
            };%
\nextgroupplot[title=\deliciouslarge]%
            \addplot+[name path=avg_plt, color=putblue, mark options={fill=putblue}, smooth] coordinates {
                (1,11.1312)
                (3,11.6259)
                (5,11.0817)
                (10,12.2979)
                (30,12.2910)
                (50,12.5296)
                (100,12.8223)
                (300,12.5379)
                (500,12.5087)
                (1000,12.4725)
                (3000,12.4923)
                (5000,12.8100)
            };%
            \addplot+[name path=avg_parabel, color=putred, mark options={fill=putred}, smooth] coordinates {
                (1,1560.0019)
                (3,544.4581)
                (5,322.3766)
                (10,168.1329)
                (30,54.3223)
                (50,33.1740)
                (100,16.3127)
                (300,5.8310)
                (500,3.8171)
                (1000,2.0999)
                (3000,1.1001)
                (5000,0.8919)
            };%
\nextgroupplot[title=\wikilshtc, ylabel={$T/N_{\textrm{test}}$ [ms]}]%
            \addplot+[name path=avg_plt, color=putblue, mark options={fill=putblue}, smooth] coordinates {
                (1,1.0071)
                (3,0.8268)
                (5,0.8131)
                (10,0.7479)
                (30,0.7709)
                (50,0.7714)
                (100,0.7757)
                (300,0.5530)
                (500,0.6510)
                (1000,0.7551)
                (3000,0.7966)
                (5000,0.7672)
            };%
            \addplot+[name path=avg_parabel, color=putred, mark options={fill=putred}, smooth] coordinates {
                (1,991.0036)
                (3,342.7682)
                (5,203.6264)
                (10,107.9469)
                (30,34.3212)
                (50,20.7011)
                (100,10.2296)
                (300,3.6456)
                (500,2.3110)
                (1000,1.2286)
                (3000,0.5012)
                (5000,0.3523)
            };%
\nextgroupplot[title=\wikipedia]%
            \addplot+[name path=avg_plt, color=putblue, mark options={fill=putblue}, smooth] coordinates {
                (1,4.2496)
                (3,4.7417)
                (5,4.8371)
                (10,4.8371)
                (30,4.8679)
                (50,5.1129)
                (100,4.8773)
                (300,3.9726)
                (500,3.2778)
                (1000,3.6627)
                (3000,3.4296)
                (5000,3.3359)
            };%
            \addplot+[name path=avg_parabel, color=putred, mark options={fill=putred}, smooth] coordinates {
                (1,1709.1639)
                (3,591.7830)
                (5,368.7918)
                (10,197.4506)
                (30,63.2851)
                (50,38.4733)
                (100,19.7521)
                (300,7.0935)
                (500,5.0259)
                (1000,3.4476)
                (3000,1.8901)
                (5000,1.1643)
            };%
\nextgroupplot[title=\amazon, ylabel={$T/N_{\textrm{test}}$ [ms]}, xlabel={$N_{\textrm{test}}$}]%
            \addplot+[name path=avg_plt, color=putblue, mark options={fill=putblue}, smooth] coordinates {
                (1,1.4688)
                (3,1.5686)
                (5,1.4688)
                (10,1.5992)
                (30,1.5923)
                (50,1.5858)
                (100,1.5092)
                (300,1.5576)
                (500,1.5992)
                (1000,1.2781)
                (3000,1.4324)
                (5000,1.4484)
            };%
            \addplot+[name path=avg_parabel, color=putred, mark options={fill=putred}, smooth] coordinates {
                (1,501.8380)
                (3,166.4142)
                (5,99.3791)
                (10,52.0475)
                (30,17.0829)
                (50,9.9819)
                (100,5.2635)
                (300,1.8712)
                (500,1.2076)
                (1000,0.6799)
                (3000,0.3462)
                (5000,0.2845)
            };%
\nextgroupplot[title=\amazonlarge, xlabel={$N_{\textrm{test}}$}]%
            \addplot+[name path=avg_plt, color=putblue, mark options={fill=putblue}, smooth] coordinates {
                (1,1.7153)
                (3,2.1122)
                (5,1.9816)
                (10,2.1595)
                (30,2.7672)
                (50,2.9547)
                (100,2.7200)
                (300,2.5142)
                (500,2.6011)
                (1000,2.5320)
                (3000,2.3624)
                (5000,2.0297)
            };%
            \addplot+[name path=avg_parabel, color=putred, mark options={fill=putred}, smooth] coordinates {
                (1,8380.5069)
                (3,2845.9193)
                (5,1667.2510)
                (10,888.6968)
                (30,291.6177)
                (50,171.1401)
                (100,84.1458)
                (300,28.1344)
                (500,17.5065)
                (1000,8.9507)
                (3000,3.2835)
                (5000,1.9309)
            };%
                \coordinate (bot) at (rel axis cs:1,0);
\end{groupplot}
\path (top|-current bounding box.north) -- node[above]{\ref{grouplegend}} (bot|-current bounding box.north);
\end{tikzpicture}
}
\caption{Average prediction times of uniform-cost search (\Algo{napkinXC}) and beam search (\Algo{Parabel}) as a function of batch size $N_{\textrm{test}}$.}
\label{fig:batches-times}
\end{figure}
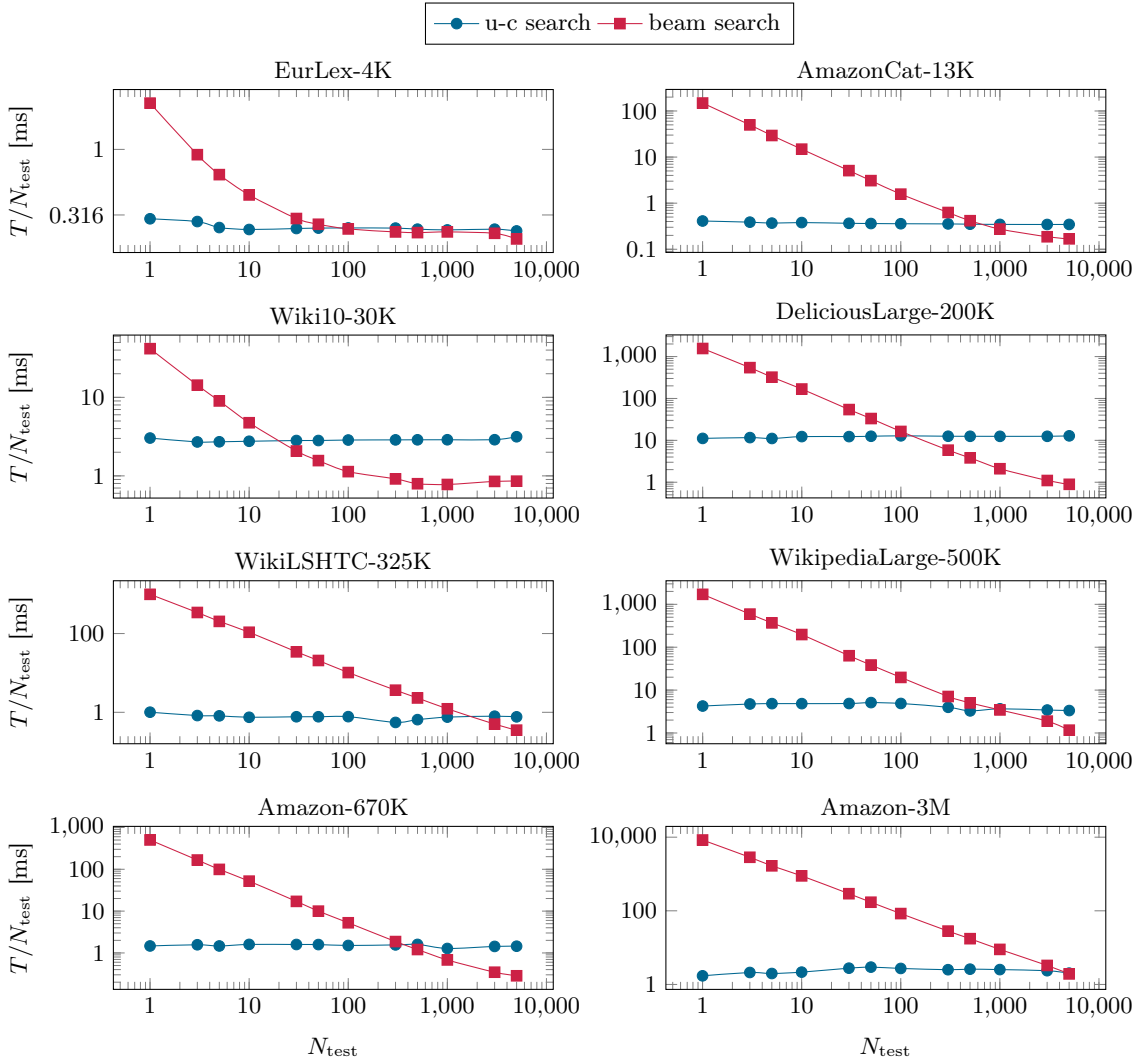
\clearpage
}

We compare two prediction algorithms, the uniform-cost search and the beam search. 
The former is well-suited for online predictions
and is implemented as a default method in \Algo{napkinXC}.
It exploits efficient \Algo{Robin Hood} hash maps to perform fast predictions. 
The latter method, used in \Algo{Parabel}, 
benefits from using larger batches of test examples.
In each node it decompresses its sparse model to a dense form before evaluation of test examples
(see discussion on both algorithms in Section~\ref{subsec:implementation-prediction}).
We use the default size of the beam equal to 10. 
Besides the prediction method, 
\Algo{napkinXC} is set in the experiment to have the same setting as \Algo{Parabel}.
As shown in Table~\ref{tab:parabel-vs-napkinxc}, 
both methods perform very similarly in terms of \precatk{}.
This means that the beam of size 10 is indeed sufficient to approximate well the exact solution.
The small deviations in the results between both prediction methods are likely caused by 
small differences in their implementations and randomness present in the experiment.
Moreover, selecting top $k$ labels with respect to the estimates does not necessarily lead to the best result.
In the remainder, we focus on computational costs. 
Figure~\ref{fig:batches-times} shows the average prediction time per a single test example,
${T}/{N_{\textrm{test}}}$, 
as a function of the batch size $N_{\textrm{test}}$.
For each batch, we create 50 samples of observations by selecting them uniformly from the test set.
We measure the average prediction time over 5 different \Algo{napkinXC} and \Algo{Parabel} models, 
giving 250 measurements in total per each batch size.
The prediction time of the uniform-cost search, processing example by example, 
is independent of the batch size 
and it is lower than 10ms for most of the data sets. 
Beam search, working on batches of examples, 
is more than 100 times slower than the uniform-cost search for small batches.
To reach the prediction time of uniform-cost search it requires often batch sizes greater than 1000.

\subsubsection{Sparse and dense representation}

In the next experiment, we test the performance of \Algo{PLT}s with sparse and dense representation.
To this end, we use \Algo{napkinXC} and  \Algo{extremeText}, respectively. 
Besides representation, we use a similar setting for both algorithms.
Trees are built with hierarchical $k$-means clustering. 
Node classifiers are trained incrementally by minimizing logistic loss, however,
\Algo{napkinXC} uses \Algo{AdaGrad}, 
while \Algo{extremeText} stochastic gradient descent with $L_2$ regularization.
Prediction in both is based on uniform-cost search.

\begin{table}[H]
\footnotesize{
\resizebox{\textwidth}{!}{
\tabcolsep=5pt
\begin{tabular}{l|r@{}lr@{}l|r@{}lr@{}l|r@{}lr@{}l}
\specialrule{0.94pt}{0.4ex}{0.65ex}
Representation & \multicolumn{2}{c}{dense} & \multicolumn{2}{c|}{sparse}
& \multicolumn{2}{c}{dense} & \multicolumn{2}{c|}{sparse}
& \multicolumn{2}{c}{dense} & \multicolumn{2}{c}{sparse} \\
\specialrule{0.94pt}{0.4ex}{0.65ex}
& \multicolumn{4}{c|}{$p@1$ [\%]}
& \multicolumn{4}{c|}{$p@3$ [\%]}
& \multicolumn{4}{c}{$p@5$ [\%]} \\
\midrule
\eurlex & $77.29$ & $\pm 0.21$ & \boldmath$80.43$ & \boldmath$\pm 0.09$ & $64.41$ & $\pm 0.11$ & \boldmath$66.08$ & \boldmath$\pm 0.26$ & $53.56$ & $\pm 0.11$ & \boldmath$53.87$ & \boldmath$\pm 0.58$ \\
\amazoncatsmall & $91.96$ & $\pm 0.02$ & \boldmath$93.23$ & \boldmath$\pm 0.02$ & $77.41$ & $\pm 0.01$ & \boldmath$78.76$ & \boldmath$\pm 0.03$ & $62.75$ & $\pm 0.02$ & \boldmath$64.05$ & \boldmath$\pm 0.02$ \\
\wikiten & \boldmath$85.76$ & \boldmath$\pm 0.06$ & $84.92$ & $\pm 0.10$ & $74.37$ & $\pm 0.10$ & \boldmath$74.52$ & \boldmath$\pm 0.09$ & $64.44$ & $\pm 0.05$ & \boldmath$65.29$ & \boldmath$\pm 0.04$ \\
\deliciouslarge & \boldmath$47.95$ & \boldmath$\pm 0.03$ & $45.27$ & $\pm 0.06$ & \boldmath$41.69$ & \boldmath$\pm 0.02$ & $38.26$ & $\pm 0.03$ & \boldmath$38.60$ & \boldmath$\pm 0.01$ & $34.88$ & $\pm 0.02$ \\
\wikilshtc & $57.58$ & $\pm 0.06$ & \boldmath$60.99$ & \boldmath$\pm 0.04$ & $38.01$ & $\pm 0.04$ & \boldmath$39.85$ & \boldmath$\pm 0.02$ & $28.33$ & $\pm 0.02$ & \boldmath$29.50$ & \boldmath$\pm 0.01$ \\
\wikipedia & $64.56$ & $\pm 0.06$ & \boldmath$65.68$ & \boldmath$\pm 0.15$ & $46.04$ & $\pm 0.06$ & \boldmath$46.62$ & \boldmath$\pm 0.09$ & $36.06$ & $\pm 0.04$ & \boldmath$36.52$ & \boldmath$\pm 0.06$ \\
\amazon & $40.24$ & $\pm 0.03$ & \boldmath$43.82$ & \boldmath$\pm 0.01$ & $35.84$ & $\pm 0.04$ & \boldmath$38.88$ & \boldmath$\pm 0.03$ & $32.61$ & $\pm 0.05$ & \boldmath$35.31$ & \boldmath$\pm 0.03$ \\
\amazonlarge & $39.29$ & $\pm 0.03$ & \boldmath$43.61$ & \boldmath$\pm 0.12$ & $36.53$ & $\pm 0.02$ & \boldmath$40.44$ & \boldmath$\pm 0.09$ & $34.65$ & $\pm 0.01$ & \boldmath$38.32$ & \boldmath$\pm 0.06$ \\

\specialrule{0.94pt}{0.4ex}{0.65ex}
& \multicolumn{4}{c|}{$T_{\textrm{train}}$ [h]}
& \multicolumn{4}{c|}{$T/N_{\textrm{test}}$ [ms]}
& \multicolumn{4}{c}{$M_{\textrm{size}}$ [GB]} \\
\midrule
\eurlex & $0.21$ & $\pm 0.00$ & \boldmath$0.01$ & \boldmath$\pm 0.00$ & $0.36$ & $\pm 0.00$ & \boldmath$0.25$ & \boldmath$\pm 0.02$ & \boldmath$0.02$ & \boldmath$\pm 0.00$ & $0.05$ & $\pm 0.00$ \\
\amazoncatsmall & $10.75$ & $\pm 0.71$ & \boldmath$0.17$ & \boldmath$\pm 0.00$ & \boldmath$0.17$ & \boldmath$\pm 0.01$ & $0.32$ & $\pm 0.02$ & \boldmath$0.41$ & \boldmath$\pm 0.00$ & $0.72$ & $\pm 0.00$ \\
\wikiten & $0.72$ & $\pm 0.06$ & \boldmath$0.11$ & \boldmath$\pm 0.00$ & \boldmath$0.91$ & \boldmath$\pm 0.06$ & $5.24$ & $\pm 0.42$ & \boldmath$0.25$ & \boldmath$\pm 0.00$ & $0.91$ & $\pm 0.00$ \\
\deliciouslarge & $38.32$ & $\pm 1.84$ & \boldmath$2.97$ & \boldmath$\pm 0.03$ & \boldmath$1.69$ & \boldmath$\pm 0.05$ & $11.51$ & $\pm 0.57$ & \boldmath$1.90$ & \boldmath$\pm 0.00$ & $15.05$ & $\pm 0.00$ \\
\wikilshtc & $2.34$ & $\pm 0.17$ & \boldmath$1.52$ & \boldmath$\pm 0.00$ & \boldmath$0.57$ & \boldmath$\pm 0.01$ & $2.44$ & $\pm 0.02$ & \boldmath$3.30$ & \boldmath$\pm 0.00$ & $4.93$ & $\pm 0.00$ \\
\wikipedia & $28.23$ & $\pm 1.41$ & \boldmath$8.11$ & \boldmath$\pm 0.18$ & \boldmath$0.64$ & \boldmath$\pm 0.01$ & $7.86$ & $\pm 0.19$ & \boldmath$5.50$ & \boldmath$\pm 0.00$ & $19.11$ & $\pm 0.00$ \\
\amazon & $6.27$ & $\pm 0.49$ & \boldmath$0.42$ & \boldmath$\pm 0.00$ & \boldmath$1.18$ & \boldmath$\pm 0.01$ & $5.93$ & $\pm 0.11$ & \boldmath$1.60$ & \boldmath$\pm 0.00$ & $6.22$ & $\pm 0.00$ \\
\amazonlarge & $36.07$ & $\pm 2.05$ & \boldmath$4.44$ & \boldmath$\pm 0.10$ & \boldmath$0.91$ & \boldmath$\pm 0.01$ & $4.42$ & $\pm 0.00$ & \boldmath$6.20$ & \boldmath$\pm 0.00$ & $52.16$ & $\pm 0.00$ \\

\specialrule{0.94pt}{0.4ex}{0.65ex}
\end{tabular}}}
\caption{Comparison of dense (\Algo{extremeText}) and sparse (\Algo{napkinXC}) representation.}
\label{tab:representations}
\end{table}

Table~\ref{tab:representations} shows the results. 
\Algo{napkinXC} with sparse representation achieves higher precision$@k$ than \Algo{extremeText} 
on almost all data sets.  
This agrees with a common observation that 
learning a powerful dense representation for XMLC problems is difficult.
Nevertheless, the results of \Algo{extremeText} are approaching those of \Algo{napkinXC}, 
despite its very simple architecture and gradient updates.
Because of the additional layer, \Algo{extremeText} needs more time for training,  
but the dense representation allows for faster predictions 
and smaller models.
Let us comment, however, on the last observation.
The size of a model in \Algo{extremeText} 
is determined by 
the dimension of dense representation, 
the number of labels, features, and tree nodes. 
In \Algo{napkinXC} with sparse representation,
the model size is influenced by the distribution of features among the labels, the tree structure,
the chosen loss function and regularization.
Moreover, aggressive weight pruning can be applied
without a significant drop in predictive performance,
as we show in  Appendix~\ref{app:pruning}.

\subsubsection{Tree structure}
\label{sec:exp-tree-structure}

The choice of the tree structure is crucial 
as it affects all aspects of the performance: 
the accuracy of predictions, execution times, and  model sizes. 
In this experiment, we investigate different tree building strategies 
described in Section~\ref{subsec:tree-structure}.  
We first compare two types of balanced binary trees 
in which labels are split either randomly 
or using $k$-means clustering in a top-down procedure.
In trees of both types, pre-leaf nodes are set to have a high degree equal to 100,
that is, the splitting procedure stops when a cluster contains less than 100 labels
which are then transformed into leaves. 
Both algorithms create trees of the same depth, 
but with a different arrangement of labels. 
The results 
are given in Table~\ref{tab:random-vs-kmeans-pred-comp-perf}.
In all cases, the $k$-means tree outperforms the random one in terms of \precat{k}.
On some data sets this difference is not substantial, 
but in several cases, $k$-means clustering leads to a huge boost of almost 20 percent.
Also training time and model size benefit from the clustering.
The power of $k$-means trees can be explained by the fact 
that co-occurring and similar labels are grouped.
Thanks to this a training example is used in fewer nodes on average, the training tasks in tree nodes become simpler, and less features are necessary to solve them.

\begin{table}[H]
\centering
\footnotesize{
\resizebox{\textwidth}{!}{
\tabcolsep=5pt
\begin{tabular}{l|r@{}lr@{}l|r@{}lr@{}l|r@{}lr@{}l}
\toprule
Tree type
& \multicolumn{2}{c}{random} & \multicolumn{2}{c|}{k-means}
& \multicolumn{2}{c}{random} & \multicolumn{2}{c|}{k-means}
& \multicolumn{2}{c}{random} & \multicolumn{2}{c}{k-means} \\
\midrule
& \multicolumn{4}{c|}{$p@1$ [\%]}
& \multicolumn{4}{c|}{$p@3$ [\%]}
& \multicolumn{4}{c}{$p@5$ [\%]} \\

\midrule
\eurlex & $76.22$ & $\pm 0.21$ & \boldmath$80.51$ & \boldmath$\pm 0.16$ & $54.54$ & $\pm 0.13$ & \boldmath$65.65$ & \boldmath$\pm 0.40$ & $39.20$ & $\pm 0.09$ & \boldmath$53.33$ & \boldmath$\pm 0.68$ \\
\amazoncatsmall & $91.39$ & $\pm 0.04$ & \boldmath$93.04$ & \boldmath$\pm 0.02$ & $75.86$ & $\pm 0.03$ & \boldmath$78.44$ & \boldmath$\pm 0.02$ & $61.00$ & $\pm 0.02$ & \boldmath$63.70$ & \boldmath$\pm 0.02$ \\
\wikiten & $84.26$ & $\pm 0.07$ & \boldmath$85.36$ & \boldmath$\pm 0.09$ & $71.68$ & $\pm 0.06$ & \boldmath$73.90$ & \boldmath$\pm 0.07$ & $60.16$ & $\pm 0.12$ & \boldmath$63.84$ & \boldmath$\pm 0.07$ \\
\deliciouslarge & $49.13$ & $\pm 0.03$ & \boldmath$49.55$ & \boldmath$\pm 0.05$ & $42.68$ & $\pm 0.01$ & \boldmath$43.08$ & \boldmath$\pm 0.03$ & $39.47$ & $\pm 0.02$ & \boldmath$39.90$ & \boldmath$\pm 0.02$ \\
\wikilshtc & $44.04$ & $\pm 0.02$ & \boldmath$61.96$ & \boldmath$\pm 0.03$ & $24.69$ & $\pm 0.45$ & \boldmath$40.77$ & \boldmath$\pm 0.02$ & $18.20$ & $\pm 0.18$ & \boldmath$30.19$ & \boldmath$\pm 0.02$ \\
\wikipedia & $48.40$ & $\pm 0.02$ & \boldmath$66.20$ & \boldmath$\pm 0.05$ & $32.05$ & $\pm 0.01$ & \boldmath$47.14$ & \boldmath$\pm 0.02$ & $24.82$ & $\pm 0.01$ & \boldmath$36.83$ & \boldmath$\pm 0.01$ \\
\amazon & $33.76$ & $\pm 0.06$ & \boldmath$43.54$ & \boldmath$\pm 0.01$ & $28.25$ & $\pm 0.02$ & \boldmath$38.71$ & \boldmath$\pm 0.02$ & $24.88$ & $\pm 0.01$ & \boldmath$35.15$ & \boldmath$\pm 0.03$ \\
\amazonlarge & $37.77$ & $\pm 0.05$ & \boldmath$46.09$ & \boldmath$\pm 0.02$ & $34.55$ & $\pm 0.01$ & \boldmath$43.11$ & \boldmath$\pm 0.01$ & $32.49$ & $\pm 0.01$ & \boldmath$40.98$ & \boldmath$\pm 0.01$ \\

\bottomrule

& \multicolumn{4}{c|}{$T_{\textrm{train}}$ [h]}
& \multicolumn{4}{c|}{$T/N_{\textrm{test}}$ [ms]}
& \multicolumn{4}{c}{$M_{\textrm{size}}$ [GB]} \\
\midrule
\eurlex & \boldmath$0.02$ & \boldmath$\pm 0.00$ & $0.02$ & $\pm 0.00$ & \boldmath$0.27$ & \boldmath$\pm 0.02$ & $0.39$ & $\pm 0.03$ & \boldmath$0.02$ & \boldmath$\pm 0.00$ & $0.02$ & $\pm 0.00$ \\
\amazoncatsmall & $0.84$ & $\pm 0.03$ & \boldmath$0.72$ & \boldmath$\pm 0.02$ & $0.37$ & $\pm 0.02$ & \boldmath$0.32$ & \boldmath$\pm 0.03$ & $0.40$ & $\pm 0.00$ & \boldmath$0.35$ & \boldmath$\pm 0.00$ \\
\wikiten & \boldmath$0.19$ & \boldmath$\pm 0.01$ & $0.21$ & $\pm 0.00$ & \boldmath$2.59$ & \boldmath$\pm 0.21$ & $5.35$ & $\pm 0.32$ & $0.62$ & $\pm 0.00$ & \boldmath$0.58$ & \boldmath$\pm 0.00$ \\
\deliciouslarge & $5.26$ & $\pm 0.26$ & \boldmath$2.58$ & \boldmath$\pm 0.15$ & \boldmath$5.24$ & \boldmath$\pm 0.43$ & $9.89$ & $\pm 0.89$ & $1.30$ & $\pm 0.00$ & \boldmath$0.95$ & \boldmath$\pm 0.00$ \\
\wikilshtc & $3.01$ & $\pm 0.16$ & \boldmath$2.95$ & \boldmath$\pm 0.15$ & \boldmath$1.36$ & \boldmath$\pm 0.13$ & $1.77$ & $\pm 0.11$ & $3.25$ & $\pm 0.00$ & \boldmath$2.73$ & \boldmath$\pm 0.00$ \\
\wikipedia & $22.07$ & $\pm 0.59$ & \boldmath$16.10$ & \boldmath$\pm 0.44$ & $10.65$ & $\pm 0.52$ & \boldmath$6.67$ & \boldmath$\pm 0.23$ & $12.00$ & $\pm 0.00$ & \boldmath$8.89$ & \boldmath$\pm 0.00$ \\
\amazon & $1.02$ & $\pm 0.02$ & \boldmath$0.56$ & \boldmath$\pm 0.00$ & $4.74$ & $\pm 0.22$ & \boldmath$4.13$ & \boldmath$\pm 0.28$ & $3.00$ & $\pm 0.00$ & \boldmath$2.26$ & \boldmath$\pm 0.00$ \\
\amazonlarge & $48.09$ & $\pm 1.72$ & \boldmath$7.07$ & \boldmath$\pm 0.56$ & $8.05$ & $\pm 0.17$ & \boldmath$3.26$ & \boldmath$\pm 0.08$ & $29.00$ & $\pm 0.00$ & \boldmath$20.84$ & \boldmath$\pm 0.00$ \\

\bottomrule
\end{tabular}}}

\caption{\Precatk{} for $k=1,3,5$, training time, average test time per example, and model size for random and k-means trees.}
\label{tab:random-vs-kmeans-pred-comp-perf}
\end{table}

\begin{table}[h]
\centering

\begin{subtable}{1\textwidth}

\footnotesize{
\resizebox{\textwidth}{!}{
\tabcolsep=5pt

\begin{tabular}{l|r@{}lr@{}lr@{}l|r@{}lr@{}lr@{}l}

\specialrule{0.94pt}{0.4ex}{0.65ex}
Arity & \multicolumn{2}{c}{2} & \multicolumn{2}{c}{16} & \multicolumn{2}{c|}{64} 
& \multicolumn{2}{c}{2} & \multicolumn{2}{c}{16} & \multicolumn{2}{c}{64} \\
\specialrule{0.94pt}{0.4ex}{0.65ex}
& \multicolumn{6}{c|}{$p@1$ [\%]}
& \multicolumn{6}{c}{$T/N_{\textrm{test}}$ [ms]} \\
\midrule
\wikilshtc & $61.96$ & $\pm 0.03$ & $63.16$ & $\pm 0.03$ & \boldmath$63.62$ & \boldmath$\pm 0.03$ & $1.77$ & $\pm 0.11$ & \boldmath$1.26$ & \boldmath$\pm 0.03$ & $1.91$ & $\pm 0.19$ \\
\wikipedia & $66.20$ & $\pm 0.05$ & $67.36$ & $\pm 0.10$ & \boldmath$67.49$ & \boldmath$\pm 0.05$ & $6.67$ & $\pm 0.23$ & \boldmath$6.47$ & \boldmath$\pm 0.30$ & $9.02$ & $\pm 0.39$ \\
\amazon & \boldmath$43.54$ & \boldmath$\pm 0.01$ & $43.33$ & $\pm 0.04$ & $43.53$ & $\pm 0.05$ & $4.13$ & $\pm 0.28$ & \boldmath$2.91$ & \boldmath$\pm 0.17$ & $5.12$ & $\pm 0.24$ \\
\amazonlarge & $46.09$ & $\pm 0.02$ & $46.74$ & $\pm 0.01$ & \boldmath$46.97$ & \boldmath$\pm 0.01$ & $3.26$ & $\pm 0.08$ & \boldmath$3.06$ & \boldmath$\pm 0.04$ & $4.19$ & $\pm 0.16$ \\

\specialrule{0.94pt}{0.4ex}{0.65ex}
& \multicolumn{6}{c|}{$T_{\textrm{train}}$ [h]}
& \multicolumn{6}{c}{$M_{\textrm{size}}$ [GB]} \\
\midrule
\wikilshtc & \boldmath$2.95$ & \boldmath$\pm 0.15$ & $3.71$ & $\pm 0.13$ & $7.13$ & $\pm 0.41$ & $2.73$ & $\pm 0.00$ & $2.65$ & $\pm 0.00$ & \boldmath$2.62$ & \boldmath$\pm 0.00$ \\
\wikipedia & \boldmath$16.10$ & \boldmath$\pm 0.44$ & $26.28$ & $\pm 0.40$ & $46.84$ & $\pm 2.94$ & $8.89$ & $\pm 0.00$ & $8.20$ & $\pm 0.00$ & \boldmath$8.13$ & \boldmath$\pm 0.00$ \\
\amazon & \boldmath$0.56$ & \boldmath$\pm 0.00$ & $0.78$ & $\pm 0.03$ & $2.17$ & $\pm 0.12$ & $2.26$ & $\pm 0.00$ & $1.68$ & $\pm 0.00$ & \boldmath$1.60$ & \boldmath$\pm 0.00$ \\
\amazonlarge & \boldmath$7.07$ & \boldmath$\pm 0.56$ & $9.80$ & $\pm 0.10$ & $23.69$ & $\pm 1.09$ & $20.84$ & $\pm 0.00$ & $20.32$ & $\pm 0.00$ & \boldmath$20.17$ & \boldmath$\pm 0.00$ \\

\specialrule{0.94pt}{0.4ex}{0.65ex}
\end{tabular}}}
\caption{Results for arity equal to 2, 16 or 64 and pre-leaf node degree equal to 100.}
\label{tab:k-means-tree-arity-logistic}
\end{subtable}

\vspace{12pt}

\begin{subtable}{\textwidth}

\footnotesize{
\resizebox{\textwidth}{!}{
\tabcolsep=5pt
\begin{tabular}{l|r@{}lr@{}lr@{}l|r@{}lr@{}lr@{}l}
\specialrule{0.94pt}{0.4ex}{0.65ex}
Pre-leaf degree & \multicolumn{2}{c}{25} & \multicolumn{2}{c}{100} & \multicolumn{2}{c|}{400} 
& \multicolumn{2}{c}{25} & \multicolumn{2}{c}{100} & \multicolumn{2}{c}{400} \\
\specialrule{0.94pt}{0.4ex}{0.65ex}
& \multicolumn{6}{c|}{$p@1$ [\%]}
& \multicolumn{6}{c}{$T/N_{\textrm{test}}$ [ms]} \\
\midrule
\wikilshtc & $61.42$ & $\pm 0.03$ & $61.96$ & $\pm 0.03$ & \boldmath$62.16$ & \boldmath$\pm 0.05$ & \boldmath$0.66$ & \boldmath$\pm 0.04$ & $1.77$ & $\pm 0.11$ & $4.21$ & $\pm 0.10$ \\
\wikipedia & $65.72$ & $\pm 0.12$ & \boldmath$66.20$ & \boldmath$\pm 0.05$ & $65.95$ & $\pm 0.10$ & \boldmath$3.52$ & \boldmath$\pm 0.05$ & $6.67$ & $\pm 0.23$ & $20.13$ & $\pm 1.07$ \\
\amazon & $41.83$ & $\pm 0.02$ & \boldmath$43.54$ & \boldmath$\pm 0.01$ & $43.21$ & $\pm 0.03$ & \boldmath$1.61$ & \boldmath$\pm 0.04$ & $4.13$ & $\pm 0.28$ & $11.74$ & $\pm 0.46$ \\
\amazonlarge & $46.07$ & $\pm 0.01$ & $46.09$ & $\pm 0.02$ & \boldmath$46.13$ & \boldmath$\pm 0.01$ & \boldmath$1.47$ & \boldmath$\pm 0.00$ & $3.26$ & $\pm 0.08$ & $11.45$ & $\pm 0.22$ \\

\specialrule{0.94pt}{0.4ex}{0.65ex}
& \multicolumn{6}{c|}{$T_{\textrm{train}}$ [h]}
& \multicolumn{6}{c}{$M_{\textrm{size}}$ [GB]} \\
\midrule
\wikilshtc & \boldmath$2.36$ & \boldmath$\pm 0.05$ & $2.95$ & $\pm 0.15$ & $4.83$ & $\pm 0.29$ & $3.28$ & $\pm 0.00$ & $2.73$ & $\pm 0.00$ & \boldmath$2.47$ & \boldmath$\pm 0.00$ \\
\wikipedia & \boldmath$12.40$ & \boldmath$\pm 0.08$ & $16.10$ & $\pm 0.44$ & $34.15$ & $\pm 0.46$ & $10.60$ & $\pm 0.01$ & $8.89$ & $\pm 0.00$ & \boldmath$7.94$ & \boldmath$\pm 0.00$ \\
\amazon & \boldmath$0.42$ & \boldmath$\pm 0.01$ & $0.56$ & $\pm 0.00$ & $1.01$ & $\pm 0.05$ & $2.20$ & $\pm 0.00$ & $2.26$ & $\pm 0.00$ & \boldmath$1.52$ & \boldmath$\pm 0.00$ \\
\amazonlarge & \boldmath$5.18$ & \boldmath$\pm 0.25$ & $7.07$ & $\pm 0.56$ & $16.41$ & $\pm 0.52$ & $24.00$ & $\pm 0.00$ & $20.84$ & $\pm 0.00$ & \boldmath$19.35$ & \boldmath$\pm 0.01$ \\

\specialrule{0.94pt}{0.4ex}{0.65ex}
\end{tabular}}}
\caption{Results arity equal to 2 and pre-leaf node degree equal to 25, 100, or 400.}
\label{tab:k-means-tree-maxleaves-logistic}
\end{subtable}

\caption{\Precatk{1}, average prediction time per example, training time and model size for $k$-means trees if different depths with logistic loss.}
\label{tab::k-means-tree-depths-logistic}
\end{table}

\ifjmlr
\vspace{-7mm}
\fi

In the next two experiments, 
we evaluate the impact of tree depth on predictive and computational performance of \Algo{PLT}s.
In the first experiment, we increase the degree of tree nodes to 16 and 64, 
but keep the degree of pre-leaves equal to 100.
Such an approach is similar to the one used in \Algo{Bonsai Tree}~\citep{Khandagale_et_al_2019}.
The results for $k$-means trees and logistic loss are given in~Table~\ref{tab:k-means-tree-arity-logistic}. 
In the second experiment, we use binary trees, 
but change the degree of pre-leaves from 100 to 25 and 400. 
The results are given in~Table~\ref{tab:k-means-tree-maxleaves-logistic}.%
\footnote{For completeness, 
we present the results for the squared hinge loss in Appendix~\ref{app:treehinge}.}
In both experiments, \precatk{} slightly increases with a decrease in the tree depth.
This behavior is expected as suggested by the theoretical results from Section~\ref{sec:analysis}. 
The shorter paths should result in tighter upper bounds. 
On the other hand, a shallower tree leads to longer training times, 
as an example is used for training in more nodes.
Notice that the \Algo{1-vs-All} approach can be treated as an extremely shallow tree, with each training example used in all nodes.
Similarly to the training time, we should expect prediction time to increase with the decreasing tree depth. This is, however, clearly visible only in the second experiment,  
in which we change the degree of pre-leaf nodes.
Interestingly, the size of the resulting models does not significantly change 
over the different tree structures.
The larger number of nodes is likely compensated by sparser models.

\subsubsection{Ensemble of \Algo{PLT}s}

In the last experiment focused on design choices, 
we analyze the predictive performance of small ensembles of \Algo{PLT}s. 
In Figure~\ref{fig:ensembles}, we compare ensembles of size 3 and 5 to a single tree.
Each tree is trained using $2$-means clustering with the degree of pre-leaf nodes set to 100. 
The tree nodes are trained using \Algo{LIBLINEAR} with either logistic loss or squared hinge loss.
The results show that the gain of using 3 trees instead of one is much greater than 
the gain of using 5 trees instead of 3. 
This makes the ensemble of size 3 a reasonable trade-off between predictive and computational cost,
as the size of the models and computational cost grow linearly with the number of trees. 
It seems also that ensembles with squared hinge loss gain slightly more 
than the ensembles trained with logistic loss.

\afterpage{%
\clearpage

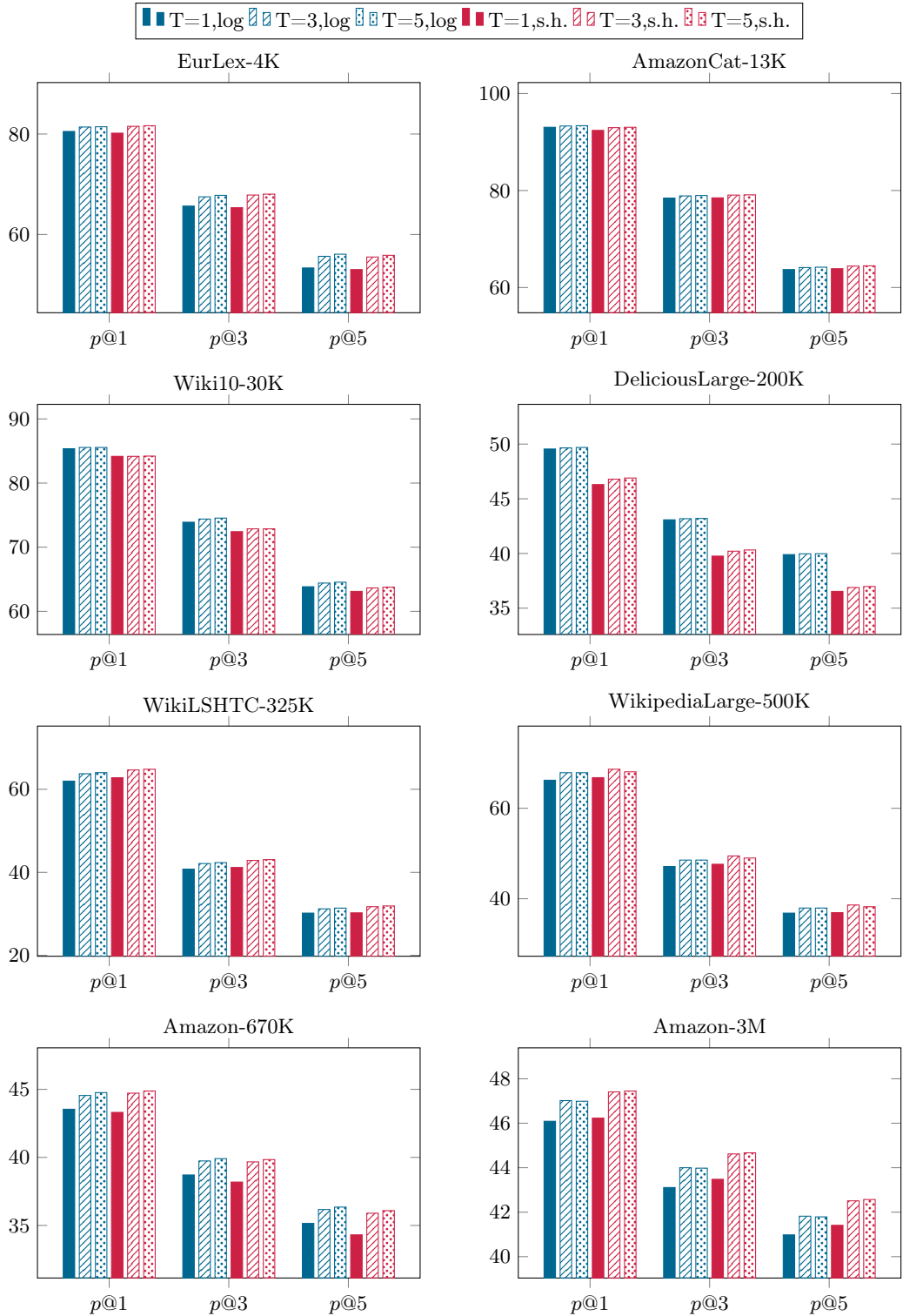
\begin{figure}[htb]
\tikzset{fontscale/.style = {font=\footnotesize}}
\footnotesize

\centering
\begin{tikzpicture}

\begin{groupplot}[%
    ybar
    ,/pgf/bar width=5pt
    ,enlargelimits=0.3
    ,group style={group name=ensembles, columns=2, rows=4, horizontal sep=1.5cm, vertical sep=1.4cm}
    ,title style={at={(0.5,0.97)}}
    ,width=0.45\linewidth
    ,height=5.1cm
    ,symbolic x coords={$p@1$,$p@3$,$p@5$},
    ,xtick=data
]%
\nextgroupplot[title=\eurlex, legend to name=grouplegend, legend style={legend columns=6}]%
            \addplot+[color=putblue] coordinates {
                ($p@1$,80.51)
                ($p@3$,65.65)
                ($p@5$,53.33)
            };\addlegendentry{T=1,log}%
            \addplot+[color=putblue, pattern=north east lines, pattern color=putblue] coordinates {
                ($p@1$,81.39)
                ($p@3$,67.49)
                ($p@5$,55.65)
            };\addlegendentry{T=3,log}%
            \addplot+[color=putblue, pattern=crosshatch dots, pattern color=putblue] coordinates {
                ($p@1$,81.44)
                ($p@3$,67.79)
                ($p@5$,56.07)
            };\addlegendentry{T=5,log}%
            \addplot+[color=putred] coordinates {
                ($p@1$,80.17)
                ($p@3$,65.33)
                ($p@5$,53.01)
            };\addlegendentry{T=1,s.h.}%
            \addplot+[color=putred, pattern=north east lines, pattern color=putred] coordinates {
                ($p@1$,81.53)
                ($p@3$,67.85)
                ($p@5$,55.51)
            };\addlegendentry{T=3,s.h.}%
            \addplot+[color=putred, pattern=crosshatch dots, pattern color=putred] coordinates {
                ($p@1$,81.65)
                ($p@3$,68.03)
                ($p@5$,55.85)
            };\addlegendentry{T=5,s.h.}%
                \coordinate (top) at (rel axis cs:0,1);%
\nextgroupplot[title=\amazoncatsmall]%
            \addplot+[color=putblue] coordinates {
                ($p@1$,93.04)
                ($p@3$,78.44)
                ($p@5$,63.70)
            };%
            \addplot+[color=putblue, pattern=north east lines, pattern color=putblue] coordinates {
                ($p@1$,93.30)
                ($p@3$,78.89)
                ($p@5$,64.12)
            };%
            \addplot+[color=putblue, pattern=crosshatch dots, pattern color=putblue] coordinates {
                ($p@1$,93.36)
                ($p@3$,78.97)
                ($p@5$,64.19)
            };%
            \addplot+[color=putred] coordinates {
                ($p@1$,92.40)
                ($p@3$,78.49)
                ($p@5$,63.88)
            };%
            \addplot+[color=putred, pattern=north east lines, pattern color=putred] coordinates {
                ($p@1$,92.96)
                ($p@3$,79.06)
                ($p@5$,64.42)
            };%
            \addplot+[color=putred, pattern=crosshatch dots, pattern color=putred] coordinates {
                ($p@1$,93.04)
                ($p@3$,79.11)
                ($p@5$,64.48)
            };%
\nextgroupplot[title=\wikiten]%
            \addplot+[color=putblue] coordinates {
                ($p@1$,85.36)
                ($p@3$,73.90)
                ($p@5$,63.84)
            };%
            \addplot+[color=putblue, pattern=north east lines, pattern color=putblue] coordinates {
                ($p@1$,85.55)
                ($p@3$,74.39)
                ($p@5$,64.41)
            };%
            \addplot+[color=putblue, pattern=crosshatch dots, pattern color=putblue] coordinates {
                ($p@1$,85.55)
                ($p@3$,74.54)
                ($p@5$,64.54)
            };%
            \addplot+[color=putred] coordinates {
                ($p@1$,84.17)
                ($p@3$,72.43)
                ($p@5$,63.12)
            };%
            \addplot+[color=putred, pattern=north east lines, pattern color=putred] coordinates {
                ($p@1$,84.17)
                ($p@3$,72.86)
                ($p@5$,63.65)
            };%
            \addplot+[color=putred, pattern=crosshatch dots, pattern color=putred] coordinates {
                ($p@1$,84.21)
                ($p@3$,72.86)
                ($p@5$,63.77)
            };%
\nextgroupplot[title=\deliciouslarge]%
            \addplot+[color=putblue] coordinates {
                ($p@1$,49.55)
                ($p@3$,43.08)
                ($p@5$,39.90)
            };%
            \addplot+[color=putblue, pattern=north east lines, pattern color=putblue] coordinates {
                ($p@1$,49.65)
                ($p@3$,43.18)
                ($p@5$,39.97)
            };%
            \addplot+[color=putblue, pattern=crosshatch dots, pattern color=putblue] coordinates {
                ($p@1$,49.69)
                ($p@3$,43.20)
                ($p@5$,39.98)
            };%
            \addplot+[color=putred] coordinates {
                ($p@1$,46.30)
                ($p@3$,39.76)
                ($p@5$,36.54)
            };%
            \addplot+[color=putred, pattern=north east lines, pattern color=putred] coordinates {
                ($p@1$,46.79)
                ($p@3$,40.21)
                ($p@5$,36.89)
            };%
            \addplot+[color=putred, pattern=crosshatch dots, pattern color=putred] coordinates {
                ($p@1$,46.90)
                ($p@3$,40.32)
                ($p@5$,36.98)
            };%
\nextgroupplot[title=\wikilshtc]%
            \addplot+[color=putblue] coordinates {
                ($p@1$,61.96)
                ($p@3$,40.77)
                ($p@5$,30.19)
            };%
            \addplot+[color=putblue, pattern=north east lines, pattern color=putblue] coordinates {
                ($p@1$,63.73)
                ($p@3$,42.12)
                ($p@5$,31.21)
            };%
            \addplot+[color=putblue, pattern=crosshatch dots, pattern color=putblue] coordinates {
                ($p@1$,64.00)
                ($p@3$,42.34)
                ($p@5$,31.39)
            };%
            \addplot+[color=putred] coordinates {
                ($p@1$,62.78)
                ($p@3$,41.17)
                ($p@5$,30.25)
            };%
            \addplot+[color=putred, pattern=north east lines, pattern color=putred] coordinates {
                ($p@1$,64.65)
                ($p@3$,42.87)
                ($p@5$,31.73)
            };%
            \addplot+[color=putred, pattern=crosshatch dots, pattern color=putred] coordinates {
                ($p@1$,64.82)
                ($p@3$,43.06)
                ($p@5$,31.91)
            };%
\nextgroupplot[title=\wikipedia]%
            \addplot+[color=putblue] coordinates {
                ($p@1$,66.20)
                ($p@3$,47.14)
                ($p@5$,36.83)
            };%
            \addplot+[color=putblue, pattern=north east lines, pattern color=putblue] coordinates {
                ($p@1$,67.86)
                ($p@3$,48.55)
                ($p@5$,37.95)
            };%
            \addplot+[color=putblue, pattern=crosshatch dots, pattern color=putblue] coordinates {
                ($p@1$,67.86)
                ($p@3$,48.55)
                ($p@5$,37.95)
            };%
            \addplot+[color=putred] coordinates {
                ($p@1$,66.77)
                ($p@3$,47.63)
                ($p@5$,36.94)
            };%
            \addplot+[color=putred, pattern=north east lines, pattern color=putred] coordinates {
                ($p@1$,68.63)
                ($p@3$,49.46)
                ($p@5$,38.62)
            };%
            \addplot+[color=putred, pattern=crosshatch dots, pattern color=putred] coordinates {
                ($p@1$,68.06)
                ($p@3$,49.00)
                ($p@5$,38.24)
            };%
\nextgroupplot[title=\amazon]%
            \addplot+[color=putblue] coordinates {
                ($p@1$,43.54)
                ($p@3$,38.71)
                ($p@5$,35.15)
            };%
            \addplot+[color=putblue, pattern=north east lines, pattern color=putblue] coordinates {
                ($p@1$,44.55)
                ($p@3$,39.74)
                ($p@5$,36.17)
            };%
            \addplot+[color=putblue, pattern=crosshatch dots, pattern color=putblue] coordinates {
                ($p@1$,44.76)
                ($p@3$,39.91)
                ($p@5$,36.35)
            };%
            \addplot+[color=putred] coordinates {
                ($p@1$,43.31)
                ($p@3$,38.19)
                ($p@5$,34.31)
            };%
            \addplot+[color=putred, pattern=north east lines, pattern color=putred] coordinates {
                ($p@1$,44.72)
                ($p@3$,39.67)
                ($p@5$,35.90)
            };%
            \addplot+[color=putred, pattern=crosshatch dots, pattern color=putred] coordinates {
                ($p@1$,44.88)
                ($p@3$,39.84)
                ($p@5$,36.09)
            };%
\nextgroupplot[title=\amazonlarge]%
            \addplot+[color=putblue] coordinates {
                ($p@1$,46.09)
                ($p@3$,43.11)
                ($p@5$,40.98)
            };%
            \addplot+[color=putblue, pattern=north east lines, pattern color=putblue] coordinates {
                ($p@1$,47.02)
                ($p@3$,44.00)
                ($p@5$,41.82)
            };%
            \addplot+[color=putblue, pattern=crosshatch dots, pattern color=putblue] coordinates {
                ($p@1$,46.99)
                ($p@3$,43.98)
                ($p@5$,41.79)
            };%
            \addplot+[color=putred] coordinates {
                ($p@1$,46.23)
                ($p@3$,43.48)
                ($p@5$,41.41)
            };%
            \addplot+[color=putred, pattern=north east lines, pattern color=putred] coordinates {
                ($p@1$,47.41)
                ($p@3$,44.62)
                ($p@5$,42.51)
            };%
            \addplot+[color=putred, pattern=crosshatch dots, pattern color=putred] coordinates {
                ($p@1$,47.45)
                ($p@3$,44.67)
                ($p@5$,42.57)
            };%
                \coordinate (bot) at (rel axis cs:1,0);
\end{groupplot}
\path (top|-current bounding box.north) -- node[above]{\ref{grouplegend}} (bot|-current bounding box.north);
\end{tikzpicture}
\caption{\Precatk{} for ensembles of $T=$ 1, 3, and 5 trees trained using either logistic loss (log) or squared hinge loss (s.h.).}
\label{fig:ensembles}
\end{figure}

\clearpage
}

\subsection{Generalized performance metrics}

The previous experiments concern \Algo{PLT}s with top-$k$ predictions suited for \precatk{}.
In this section, we focus on threshold-based predictions and 
generalized performance metrics, discussed in Section~\ref{subsec:analyis-generalized}.
We constrain our analysis to Hamming loss and micro $F_1$-measure. 
A similar experiment for the macro $F_1$-measure has been conducted in~\citep{Jasinska_et_al_2016}. 
For each metric, we report the results of two approaches. 
The first one uses a fixed threshold of 0.5, 
which is theoretically optimal for Hamming loss.
Remark, however, that for estimated probabilities the optimal threshold can be different.
The second one optimizes the micro $F_1$-measure
by tuning one global threshold, as suggested by theory.
To this end, it uses $70\%$ of original training data to train a \Algo{PLT} model,
and the rest to tune the threshold by running 
online F-measure optimization (\Algo{OFO})~\citep{Busa-Fekete_et_al_2015}.
The same method has been used in~\citep{Jasinska_et_al_2016} to optimize the macro $F$-measure.
We repeat computations 5 times and report the average results along with standard errors.
Table~\ref{tab:plt-fmeasure} presents the results.
Notice that for Hamming loss the lower the value the better is the performance,
while for the micro $F$-measures it is the opposite, the higher the value the better.
As expected from the theoretical analysis, 
a procedure suited for a given metric leads to significantly better results,
with only one exception.

\begin{table}[H]
\centering
\footnotesize{
\begin{tabular}{l|r@{}lr@{}l|r@{}lr@{}l}
\toprule
& \multicolumn{4}{c|}{Hamming loss$^1$} 
& \multicolumn{4}{c}{micro-$F_1$ [\%]} 
\\
& \multicolumn{2}{c}{\textbf{thr=0.5}} & \multicolumn{2}{c|}{micro-\Algo{OFO}}
& \multicolumn{2}{c}{thr=0.5} & \multicolumn{2}{c}{\textbf{micro-\Algo{OFO}}}
\\
\midrule
\eurlex & \boldmath$4.01$ & \boldmath$\pm 0.00$ & $7.42$ & $\pm 0.59$ & \boldmath$46.81$ & \boldmath$\pm 0.08$ & $40.30$ & $\pm 2.08$ \\
\amazoncatsmall & \boldmath$2.79$ & \boldmath$\pm 0.00$ & $2.94$ & $\pm 0.01$ & $67.00$ & $\pm 0.02$ & \boldmath$68.55$ & \boldmath$\pm 0.12$ \\
\wikiten & \boldmath$17.43$ & \boldmath$\pm 0.01$ & $23.60$ & $\pm 0.30$ & $24.95$ & $\pm 0.10$ & \boldmath$32.27$ & \boldmath$\pm 0.18$ \\
\deliciouslarge & \boldmath$97.15$ & \boldmath$\pm 0.00$ & $554.37$ & $\pm 0.35$ & $0.84$ & $\pm 0.01$ & \boldmath$13.08$ & \boldmath$\pm 0.00$ \\
\wikilshtc & \boldmath$2.94$ & \boldmath$\pm 0.00$ & $4.14$ & $\pm 0.08$ & $31.76$ & $\pm 0.07$ & \boldmath$32.73$ & \boldmath$\pm 0.44$ \\
\wikipedia & \boldmath$4.12$ & \boldmath$\pm 0.01$ & $4.60$ & $\pm 0.00$ & $31.88$ & $\pm 0.12$ & \boldmath$38.76$ & \boldmath$\pm 0.00$ \\
\amazon & \boldmath$4.69$ & \boldmath$\pm 0.00$ & $6.22$ & $\pm 0.13$ & $18.87$ & $\pm 0.03$ & \boldmath$28.95$ & \boldmath$\pm 0.48$ \\
\amazonlarge & \boldmath$34.87$ & \boldmath$\pm 0.00$ & $73.32$ & $\pm 0.00$ & $12.79$ & $\pm 0.02$ & \boldmath$28.20$ & \boldmath$\pm 0.00$ \\

\bottomrule
\end{tabular}
\vskip4pt
{\footnotesize $^1$ multiplied by the number of labels to avoid presentation of very small numbers}}

\caption{The results of \Algo{PLT}s with threshold-based predictions for Hamming loss, 
micro $F_1$-measures. 
The name of the column with the theoretically optimal tuning strategy for a given metric is given in bold.}
\label{tab:plt-fmeasure}
\end{table}

\subsection{Comparison to hierarchical softmax}

In this experiment, we verify our theoretical findings from~Section~\ref{subsec:hsm}.
As we have shown, 
hierarchical softmax with the pick-one-label heuristic (\Algo{HSM-POL}) leads 
to a suboptimal solution with respect to \precatk{}. 
To demonstrate this empirically, we run two experiments. 
In the first one, 
we compare the performance of \Algo{PLT}s and \Algo{HSM-POL} on synthetic data.
In the second experiment, we evaluate both algorithms on benchmark data sets. 
To conduct the experiments, 
we implemented \Algo{HSM-POL} in \Algo{napkinXC}.
We made it as similar as possible to the implementation of \Algo{PLT}s,
with the only differences coming from the model definition.
For both algorithms, we use \Algo{LIBLINEAR} with L2-regularized logistic loss to train node classifiers.
In the experiment on benchmark data, we additionally use weight pruning to reduce model sizes. 
This is not necessary for synthetic data. 
To simulate the pick-one-label heuristic in batch learning,
we transform a \multilabel{} example with $||\by||_1$ positive labels 
to $||\by||_1$ weighted \multiclass{} examples, 
each with a different positive label assigned and weight equal $\frac{1}{||\by||_1}$.

\begin{table}[h]
\centering
\footnotesize{
\begin{tabular}{l|r@{}l|r@{}l|r|r|r|r}
\toprule
& \multicolumn{2}{c}{HSM-POL} & \multicolumn{2}{|c}{PLT} 
& \multicolumn{1}{|c}{\# of losses} & \multicolumn{1}{|c}{\# of ties} & \multicolumn{1}{|c}{\# of wins} 
& \multicolumn{1}{|c}{$p$-value} \\
\midrule
\multilabel{} dependent       & $71.29$ & $\pm 0.98$ & $72.31$ & $\pm 0.94$ & 5 & 0 & 45 & 4.21e-09 \\ 
\multilabel{} independent     & $32.66$ & $\pm 0.08$ & $32.64$ & $\pm 0.08$ & 25 & 3 & 22 & 0.4799 \\
\multiclass{}                 & $61.23$ & $\pm 1.14$ & $61.23$ & $\pm 1.14$ & 0 & 50 & 0 & - \\
\bottomrule
\end{tabular}}
\caption{\Precat{1} of \Algo{PLT}s and \Algo{HSM-POL} on synthetic data. 
Reported are mean values over 50 runs along with standard errors, 
the number of wins, ties, and losses of \Algo{PLT}s, 
and $p$-value of the sign test.}
\label{tab:synthetic}
\end{table}

The results for \precat{1} are given in Table~\ref{tab:synthetic}. 
We use three types of synthetic data generated from different distributions: 
\multilabel{} with conditionally independent labels, 
\multilabel{} with conditionally dependent labels, 
and \multiclass.  
The detailed description of the data generation process is given in Appendix~\ref{app:synthetic}. 
The presented values are averages over 50 runs along with standard errors. 
Notice, however, that the data generation processes may lead to very diverse problems,
with a different level of noise. 
Therefore, standard errors indicate rather the diversity of the generated problems.
To overcome this issue, we report the number of wins, ties, and losses,
as well as $p$-values of the very conservative sign test. 
On data with conditionally dependent labels, 
\Algo{PLT}s clearly outperform \Algo{HSM-POL} as indicated by the p-value.
This agrees with our theoretical results.
On data with conditionally independent labels, 
both algorithms perform similarly without statistically significant differences.
This also agrees with the theory, 
as we have proven that under label independence \Algo{HSM-POL} performs optimally for \precatk{}.
The results on \multiclass{} data completely match, 
as for this distribution the \Algo{PLT} model boils down to \Algo{HSM}.

\begin{table}[h]
\centering
\footnotesize{
\resizebox{\textwidth}{!}{
\tabcolsep=5pt
\begin{tabular}{l|r@{}lr@{}l|r@{}lr@{}l|r@{}lr@{}l}
\toprule
& \multicolumn{4}{c|}{$p@1$ [\%]}
& \multicolumn{4}{c|}{$r@1$ [\%]}
& \multicolumn{4}{c}{$r@5$ [\%]}  
\\
& \multicolumn{2}{c}{\Algo{HSM}} & \multicolumn{2}{c|}{\Algo{PLT}}
& \multicolumn{2}{c}{\Algo{HSM}} & \multicolumn{2}{c|}{\Algo{PLT}}
& \multicolumn{2}{c}{\Algo{HSM}} & \multicolumn{2}{c}{\Algo{PLT}}
\\
\midrule
\eurlex & $67.89$ & $\pm 0.26$ & \boldmath$80.51$ & \boldmath$\pm 0.16$ & $13.64$ & $\pm 0.06$ & \boldmath$16.20$ & \boldmath$\pm 0.04$ & $44.64$ & $\pm 0.45$ & \boldmath$51.71$ & \boldmath$\pm 0.67$ \\
\amazoncatsmall & $88.19$ & $\pm 0.16$ & \boldmath$93.04$ & \boldmath$\pm 0.02$ & $24.73$ & $\pm 0.06$ & \boldmath$26.37$ & \boldmath$\pm 0.01$ & $69.38$ & $\pm 0.09$ & \boldmath$74.64$ & \boldmath$\pm 0.02$ \\
\wikiten & $54.69$ & $\pm 1.00$ & \boldmath$85.36$ & \boldmath$\pm 0.09$ & $3.18$ & $\pm 0.07$ & \boldmath$5.06$ & \boldmath$\pm 0.01$ & $12.58$ & $\pm 0.18$ & \boldmath$18.26$ & \boldmath$\pm 0.02$ \\
\wikilshtc & $58.35$ & $\pm 0.04$ & \boldmath$61.96$ & \boldmath$\pm 0.03$ & $26.41$ & $\pm 0.02$ & \boldmath$27.41$ & \boldmath$\pm 0.01$ & $49.81$ & $\pm 0.01$ & \boldmath$52.96$ & \boldmath$\pm 0.03$ \\
\wikipedia & $60.48$ & $\pm 0.09$ & \boldmath$66.20$ & \boldmath$\pm 0.05$ & $20.16$ & $\pm 0.03$ & \boldmath$21.50$ & \boldmath$\pm 0.01$ & $43.17$ & $\pm 0.03$ & \boldmath$47.12$ & \boldmath$\pm 0.03$ \\
\amazon & $40.38$ & $\pm 0.04$ & \boldmath$43.54$ & \boldmath$\pm 0.01$ & $8.53$ & $\pm 0.01$ & \boldmath$9.01$ & \boldmath$\pm 0.01$ & $29.52$ & $\pm 0.04$ & \boldmath$32.83$ & \boldmath$\pm 0.02$ \\

\bottomrule
\end{tabular}}}

\caption{Precision$@1$ and recall$@k$ of hierarchical softmax with pick-one-label heuristic (\Algo{HSM}) and \Algo{PLT} on benchmark datasets.}
\label{tab:hsm-benchmark}
\end{table}

Table~\ref{tab:hsm-benchmark} gives the results on benchmark data sets.
The difference in performance between \Algo{PLT}s and \Algo{HSM-POL} is clearly visible.
It is even more substantial than in the previous experiment.
Besides precision$@1$, the table contains also results for recall$@k$ (with $k = 1,5$),
$$
r@k(\by, \bh_{@k}(\bx)) = \frac{1}{||\by||_1} \sum_{j \in \hat \calL_{\bx}} \assert{y_j = 1} \,,
$$
where  $\hat \calL_{\bx} = \{j \in \calL: h_j(\bx) = 1 \}$ 
is a set of $k$ labels predicted by classifier $\bh_{@k}$ for $\bx$. 
The pick-one-label heuristic should lead to optimal results for this metric, 
as shown by~\citet{Menon_et_al_2019}. 
Nevertheless, \Algo{PLT}s obtain better results also for this metric, 
but the difference is much smaller. 
This suggests that indeed \Algo{HSM-POL} can be well-suited for recall$@k$, 
but the pick-one-label heuristic may lead to corrupted learning problems in tree nodes. 
As discussed in~\citep{Menon_et_al_2019}, there exist other strategies for optimizing recall$@k$,
which may perform better than \Algo{PLT}s.

\subsection{Online \Algo{PLT}s}

We empirically verify online probabilistic label trees 
in which both node classifiers and tree structure are built incrementally. 
We implemented the \Algo{OPLT} algorithm, introduced in Section~\ref{sec:oplt}, in \Algo{napkinXC}.
The tree is constructed using the simple complete tree policy from Algorithm~\ref{alg:oplt-apply_policy}.
To train node classifiers, we use \Algo{AdaGrad} with logistic loss.
The incremental learning in the online setting requires quick access to model weights, 
preferably storing all of them at once in memory. 
Unfortunately, maintaining an array for all possible weights in a dense format
would require, for many data sets, thousands of GB of memory.
As described in Section~\ref{sec:sparse-features}, 
either hash maps, such as \Algo{Robin Hood}, or feature hashing should be applied to overcome this problem. 
In the experiment, we compare both approaches.

Feature hashing allows us to directly control the amount of memory used, 
but it may result with many unresolved collisions 
if the allocated space is too small. 
We consider setups with 64GB, 128GB, and 256GB of RAM.
The \Algo{Robin Hood} hash map avoids collisions, 
but does not allow for restraining memory consumption.
The number of hashed features
which can be allocated without collisions
is given in~Table~\ref{tab:oplt-random-features}.
We report this number for each data set and memory setup.
It takes into account memory needed for $2m-1$ nodes,
each containing model weights and cumulative gradients required by \Algo{AdaGrad},
and auxiliary classifiers which number can be limited to $m$ for the chosen tree building policy.
Additionally, we present in the same table 
the amount of memory required by \Algo{OPLT} with \Algo{Robin Hood} 
and \Algo{OPLT} with dense vectors.

\begin{table}[ht]
\centering
\footnotesize{
\begin{tabular}{l|r|rrr|rr}
\toprule
& \multicolumn{1}{c|}{\#features}
& \multicolumn{3}{c|}{\#hashed features} 


& \multicolumn{2}{c}{RAM [GB]} 
\\
& 
& \multicolumn{1}{c}{64GB}    
& \multicolumn{1}{c}{128GB}
& \multicolumn{1}{c|}{256GB}   
& \multicolumn{1}{c}{\Algo{Robin Hood}}
& \multicolumn{1}{c}{dense vector} \\   
\midrule
\eurlex             & 5000    & $\ast$ & $\ast$ & $\ast$     & 0.6 & 0.2  \\
\amazoncatsmall     & 203882  & $\ast$ & $\ast$ & $\ast$ & 9   & 60    \\ 
\wikiten            & 101938  & $\ast$ & $\ast$ & $\ast$ & 18  & 70    \\
\deliciouslarge     & 782585  & 13000  & 26000  & 52000  & 240 & 3593  \\
\wikilshtc          & 1617899 & 8000   & 16000  & 32000  & 30  & 11754 \\
\wikipedia          & 2381304 & 5000   & 10000  & 20000  & 240 & 26670 \\
\amazon             & 135909  & 4000   & 8000   & 16000  & 36  & 2035  \\
\amazonlarge        & 337067  & 1000   & 2000   & 4000   & 280 & 21187 \\
\bottomrule
\end{tabular}
}
\caption{Number of features, hashed features for \Algo{OPLT} with complete tree policy and memory required to train \Algo{OPLT} with complete tree policy with \Algo{Robin Hood} hash maps and dense vectors.
With symbol `$\ast$' we denote data sets for which feature hashing is not needed to fit the available memory.}
\label{tab:oplt-random-features}
\end{table}


To simulate the online/streaming setting, 
the \Algo{OPLT} algorithms run three times over training examples, each time permuted randomly.
We evaluate the performance on the original test sets in terms of \precat{1}.
The results are given in Table~\ref{tab:oplt-random}.
For reference, we also present the results of a batch \Algo{PLT} trained with logistic loss on a complete binary tree.
\Algo{OPLT} with \Algo{Robin Hood} performs similarly to \Algo{PLT}. 
This agrees with the results from Section~\ref{sec:optim-loss}
showing that incremental learning under logistic loss is competitive to its batch counterpart. 
Interestingly, \Algo{Robin Hood} allows us to train \Algo{OPLT} in 256GB of RAM for all data sets, 
with the only exception of Amazon-3M for which 280GB is required.
The performance of \Algo{OPLT} with feature hashing drops significantly for large data sets, 
even when using the same amount of memory as \Algo{OPLT} with \Algo{Robin Hood}. 
One may observe that the smaller is the hashing space compared to the original feature space, 
the larger is the drop.

\begin{table}[H]
\centering
\footnotesize{
\begin{tabular}{l|r@{}lr@{}lr@{}lr@{}l|r@{}l}
\toprule
Algorithm
& \multicolumn{8}{c|}{OPLT} 
& \multicolumn{2}{c}{PLT} \\
Representation & \multicolumn{6}{c|}{feature hashing} & \multicolumn{2}{c|}{\Algo{Robin Hood}} \\
RAM
& \multicolumn{2}{c}{64GB}
& \multicolumn{2}{c}{128GB}
& \multicolumn{2}{c|}{256GB}
& \multicolumn{2}{c|}{unlimited}
& \multicolumn{2}{c}{unlimited} \\

\midrule

\eurlex & \multicolumn{2}{c}{$\ast$} & \multicolumn{2}{c}{$\ast$} & \multicolumn{2}{c}{$\ast$} & $76.69$ & $\pm 0.21$ & \boldmath$76.82$ & \boldmath$\pm 0.35$ \\

\amazoncatsmall & \multicolumn{2}{c}{$\ast$} & \multicolumn{2}{c}{$\ast$} & \multicolumn{2}{c}{$\ast$} & \boldmath$91.35$ & $\pm 0.06$ & \boldmath$91.20$ & $\pm 0.05$ \\

\wikiten & \multicolumn{2}{c}{$\ast$} & \multicolumn{2}{c}{$\ast$} & \multicolumn{2}{c}{$\ast$} & \boldmath$84.41$ & \boldmath$\pm 0.19$ & $82.74$ & $\pm 0.14$\\

\deliciouslarge & $44.61$ & $\pm 0.03$ & $44.52$ & $\pm 0.04$ & $44.58$ & $\pm 0.06$ & $46.29$ & $\pm 0.05$ & \boldmath$47.81$ & \boldmath$\pm 0.03$  \\

\wikilshtc & $32.18$ & $\pm 0.04$ & $34.71$ & $\pm 0.01$ & $36.90$ & $\pm 0.00$ & \boldmath$44.42$ & \boldmath$\pm 0.03$ & $43.91$ & $\pm 0.02$ \\

\wikipedia & $25.39$ & $\pm 0.03$ & $29.04$ & $\pm 0.02$ & $32.90$ & $\pm 0.06$ & \boldmath$49.35$ & \boldmath$\pm 0.03$ & $47.25$ & $\pm 0.02$ \\

\amazon & $23.18$ & $\pm 0.05$ & $26.64$ & $\pm 0.02$ & $29.53$ & $\pm 0.05$ & \boldmath$37.02$ & \boldmath$\pm 0.01$ & $35.12$ & $\pm 0.03$  \\

\amazonlarge & $10.31$ & $\pm 0.01$ & $14.18$ & $\pm 0.01$ & $18.49$ & $\pm 0.03$ & $37.80$ & $\pm 0.01$ & \boldmath$38.05$ & \boldmath$\pm 0.02$ \\
\bottomrule
\end{tabular}
}
\caption{Performance of \Algo{OPLT} with different memory management strategies: 
feature hashing of 64GB, 128GB and 256GB, and \Algo{Robin Hood} hash maps. 
Results of a batch counterpart are given for reference. 
With symbol `$\ast$' we denote data sets where feature hashing is not needed to fit the available memory.}
\label{tab:oplt-random}
\end{table}

A better tree building policy may improve the predictive performance of \Algo{OPLT}.
By comparing the results presented here to the ones of $k$-means trees, 
we observe a large gap. 
The online tree building algorithms are not able to fully eliminate it, 
but we believe that the regret can be much smaller.
Also, memory usage could be improved by better utilization of auxiliary classifiers. 

\subsection{PLT vs. state-of-the-art}
\label{sec:plt-vs-sota}

In the final part of the empirical study, 
we compare \Algo{PLT}s with state-of-the-art algorithms.
In the comparison, we use two decision tree methods.
Their main difference to label trees is that they split the feature space, 
not the set of labels. 
\Algo{FastXML}, introduced in~\citep{Prabhu_Varma_2014},
uses sparse linear classifiers in internal tree nodes, 
also trained using~\Algo{LIBLINEAR}. 
Each linear classifier decides between two classes, the  left or the right child.
These two classes are initiated by a random assignment of training examples to the children nodes.
In the next steps, the assignment is reshaped by optimizing the normalized discounted cumulative gain (nDCG) over both children. 
Once the assignment stabilizes, a sparse linear classifier is trained using logistic loss.
To improve the overall accuracy \Algo{FastXML} uses an ensemble of trees.
\Algo{PfastreXML}~\citep{Jain_et_al_2016} is a modification of \Algo{FastXML}
that optimizes propensity scored nDCG at each tree node and re-ranks the predicted labels.
Besides decision trees, we also use two \Algo{1-vs-All} algorithms
which are known to be the best \emph{no-deep} (or CPU-based) XMLC methods.
\Algo{DiSMEC} trains a single classifier per label under $L_2$ regularized squared hinge loss, also using \Algo{LIBLINEAR}.
It prunes weights of final models at threshold equal 0.01 to reduce the memory needed to store a \Algo{1-vs-All} classifier. 
It uses distributed training over multiple cores and processors to speed up computations. 
\Algo{PPD-Sparse}~\citep{Yen_et_al_2017}, in turn, 
parallelizes \Algo{PD-Sparse}~\citep{Yen_et_al_2016} 
which optimizes a max-margin loss by exploiting the primal-dual sparsity, 
resulting from the use of the max-margin loss under $L_1$ regularization, 
given that for each training example the set of highly scored incorrect labels is small.
We exclude from the comparison all XMLC algorithms based on complex deep networks,
requiring the use of GPUs and raw versions of text data. 
We remark, however, that some of such methods, for example \Algo{XML-CNN}~\citep{Liu_et_al_2017},
perform worse than the best methods used in this study.
As instances of \Algo{PLT}s, 
we use \Algo{Parabel} and \Algo{napkinXC}. 
For the former, we use an ensemble of three trees trained with squared hinge loss.
This is the first label tree algorithm being competitive to state-of-the-art,
as reported in~\cite{Prabhu_et_al_2018}.
For \Algo{napkinXC}, we use a configuration, suggested by the results of the previous experiments,
which uses also an ensemble of three trees, but with arity of 16, 
providing a significant predictive performance boost over binary trees, 
at the same keeping training and prediction times reasonably low.
For training node classifiers, we use \Algo{LIBLINEAR} with logistic loss for 3 datasets (\amazoncatsmall, \wikiten{} and \deliciouslarge) and squared hinge loss for the rest of the datasets.

The results are given in Table~\ref{tab:plt-vs-sota}.
We report \precat{k}, training and prediction times, and model sizes.
For each not deterministic algorithm, we repeat the experiment 5 times and report means with standard errors. 
We use original implementations of all competitors. 
The hyperparameters used to tune the final models are given in Appendix~\ref{app:hiperparams}. 
For \Algo{DiSMEC} and \Algo{PPDSparse} we report the best results found in the literature,
namely from \citep{Babbar_Scholkopf_2017, Yen_et_al_2017, Prabhu_et_al_2018, xml-repo}.
We use the provided implementations to approximate training and prediction times on our hardware.
%
From the results, we see that \Algo{PLT}s are indeed competitive to the \Algo{1-vs-All} approaches,
achieving the best \precat{1} on 5 from 8 data sets 
and is only slightly worse on the rest of the data sets. 
They 
outperform the decision tree-based methods.
\Algo{PLT}s are almost always the fastest in training and prediction 
and achieve the smallest model sizes.
They can be even a thousand times faster in training and prediction than \Algo{1-vs-All}. 
The variant of \Algo{napkinXC} used in this experiment outperforms \Algo{Parabel} in terms of \precat{k}
by sacrificing the computational performance of training and prediction. 
However, it can predict in an online setting at the same time often consuming less memory.
%
    
\begin{center}

{\footnotesize \tabcolsep=5pt
\begin{longtable}[h]{ l|r@{}lr@{}lr@{}l|r@{}lr@{}lr@{}l }

\specialrule{0.94pt}{0.4ex}{0.65ex}
\multicolumn{1}{c|}{} 
& \multicolumn{2}{c}{$p@1$ [\%]} 
& \multicolumn{2}{c}{$p@3$ [\%]} 
& \multicolumn{2}{c|}{$p@5$ [\%]} 
& \multicolumn{2}{c}{$T_{\textrm{train}}$ [h]} 
& \multicolumn{2}{c}{$T/N_{\textrm{test}}$ [ms]} 
& \multicolumn{2}{c}{$M_{\textrm{size}}$ [GB]} \\

\specialrule{0.94pt}{0.4ex}{0.65ex}
\multicolumn{13}{c}{\eurlex} \\
\midrule
\Algo{FastXML} & $71.26$ & $\pm 0.19$ & $59.80$ & $\pm 0.12$ & $50.28$ & $\pm 0.02$ & $0.07$ & $\pm 0.00$ & $0.97$ & $\pm 0.15$ & $0.22$ & $\pm 0.00$ \\
\Algo{PfastreXML} & $70.21$ & $\pm 0.09$ & $59.26$ & $\pm 0.10$ & $50.59$ & $\pm 0.08$ & $0.08$ & $\pm 0.00$ & $1.30$ & $\pm 0.09$ & $0.26$ & $\pm 0.00$ \\
\Algo{PPDSparse} & \multicolumn{2}{c}{\boldmath$83.83$} & \multicolumn{2}{c}{\boldmath$70.72$} & \multicolumn{2}{c|}{\boldmath$59.21$} & \multicolumn{2}{c}{$\approx0.02$} & \multicolumn{2}{c}{$\approx0.70$} & \multicolumn{2}{c}{$0.07$} \\
\Algo{DiSMEC} & \multicolumn{2}{c}{$83.67$} & \multicolumn{2}{c}{$70.70$} & \multicolumn{2}{c|}{$59.14$} & \multicolumn{2}{c}{$\approx0.70$} & \multicolumn{2}{c}{$\approx4.60$} & \multicolumn{2}{c}{$0.04$} \\

\midrule
\Algo{Parabel}-T=3 & $81.80$ & $\pm 0.10$ & $68.67$ & $\pm 0.03$ & $57.45$ & $\pm 0.06$ & \boldmath$0.02$ & \boldmath$\pm 0.00$ & \boldmath$0.93$ & \boldmath$\pm 0.04$ & $0.03$ & $\pm 0.00$ \\
\Algo{nXC}-T=3 & $81.94$ & $\pm 0.24$ & $68.94$ & $\pm 0.07$ & $57.49$ & $\pm 0.14$ & $0.03$ & $\pm 0.00$ & $0.97$ & $\pm 0.06$ & \boldmath$0.02$ & \boldmath$\pm 0.00$ \\

\specialrule{0.94pt}{0.4ex}{0.65ex}
\multicolumn{13}{c}{\amazoncatsmall} \\
\midrule
\Algo{FastXML} & $93.03$ & $\pm 0.00$ & $78.22$ & $\pm 0.01$ & $63.38$ & $\pm 0.00$ & $5.53$ & $\pm 0.15$ & $1.06$ & $\pm 0.08$ & $18.35$ & $\pm 0.00$ \\
\Algo{PfastreXML} & $85.62$ & $\pm 0.01$ & $75.31$ & $\pm 0.00$ & $62.83$ & $\pm 0.01$ & $5.45$ & $\pm 0.12$ & \boldmath$0.99$ & \boldmath$\pm 0.06$ & $19.01$ & $\pm 0.00$ \\
\Algo{PPDSparse} & \multicolumn{2}{c}{$92.72$} & \multicolumn{2}{c}{$78.14$} & \multicolumn{2}{c|}{$63.41$} & \multicolumn{2}{c}{$\approx2.97$} & \multicolumn{2}{c}{$\approx1.20$} & \multicolumn{2}{c}{\boldmath$0.50$} \\
\Algo{DiSMEC} & \multicolumn{2}{c}{$92.72$} & \multicolumn{2}{c}{$78.11$} & \multicolumn{2}{c|}{$63.40$} & \multicolumn{2}{c}{$\approx138.60$} & \multicolumn{2}{c}{$\approx2.9$} & \multicolumn{2}{c}{$1.50$} \\

\midrule
\Algo{Parabel}-T=3 & $93.24$ & $\pm 0.01$ & $79.17$ & $\pm 0.00$ & $64.51$ & $\pm 0.00$ & \boldmath$0.64$ & \boldmath$\pm 0.03$ & $1.05$ & $\pm 0.04$ & $0.62$ & $\pm 0.00$ \\
\Algo{nXC}-T=3 & \boldmath$93.37$ & \boldmath$\pm 0.05$ & \boldmath$79.01$ & \boldmath$\pm 0.03$ & \boldmath$64.27$ & \boldmath$\pm 0.04$ & $2.30$ & $\pm 0.13$ & \boldmath$0.99$ & \boldmath$\pm 0.10$ & $1.01$ & $\pm 0.00$ \\

\ifjmlr

\else
\specialrule{0.94pt}{0.4ex}{0.65ex}
\pagebreak

\specialrule{0.94pt}{0.4ex}{0.65ex}
\multicolumn{1}{c|}{} 
& \multicolumn{2}{c}{$p@1$ [\%]} 
& \multicolumn{2}{c}{$p@3$ [\%]} 
& \multicolumn{2}{c|}{$p@5$ [\%]} 
& \multicolumn{2}{c}{$T_{\textrm{train}}$ [h]} 
& \multicolumn{2}{c}{$T/N_{\textrm{test}}$ [ms]} 
& \multicolumn{2}{c}{$M_{\textrm{size}}$ [GB]} \\

\fi

\specialrule{0.94pt}{0.4ex}{0.65ex}
\multicolumn{13}{c}{\wikiten} \\
\midrule
\Algo{FastXML} & $82.97$ & $\pm 0.02$ & $67.58$ & $\pm 0.07$ & $57.68$ & $\pm 0.03$ & $0.23$ & $\pm 0.01$ & $8.21$ & $\pm 0.52$ & $0.54$ & $\pm 0.00$ \\
\Algo{PfastreXML} & $75.58$ & $\pm 0.07$ & $64.38$ & $\pm 0.11$ & $57.25$ & $\pm 0.07$ & $0.23$ & $\pm 0.00$ & $10.40$ & $\pm 0.41$ & $1.13$ & $\pm 0.00$ \\
\Algo{PPDSparse} & \multicolumn{2}{c}{$73.80$} & \multicolumn{2}{c}{$60.90$} & \multicolumn{2}{c|}{$50.40$} & \multicolumn{2}{c}{$\approx1.20$} & \multicolumn{2}{c}{$\approx22.00$} & \multicolumn{2}{c}{$0.80$} \\
\Algo{DiSMEC} & \multicolumn{2}{c}{$85.20$} & \multicolumn{2}{c}{\boldmath$74.60$} & \multicolumn{2}{c|}{\boldmath$65.90$} & \multicolumn{2}{c}{$\approx26.80$} & \multicolumn{2}{c}{$\approx112.40$} & \multicolumn{2}{c}{$2.40$} \\

\midrule
\Algo{Parabel}-T=3 & $84.49$ & $\pm 0.05$ & $72.57$ & $\pm 0.04$ & $63.66$ & $\pm 0.10$ & \boldmath$0.20$ & \boldmath$\pm 0.00$ & \boldmath$2.67$ & \boldmath$\pm 0.06$ & \boldmath$0.18$ & \boldmath$\pm 0.00$ \\
\Algo{nXC}-T=3 & \boldmath$85.90$ & \boldmath$\pm 0.02$ & $74.45$ & $\pm 0.11$ & $64.84$ & $\pm 0.09$ & $0.39$ & $\pm 0.01$ & $11.76$ & $\pm 0.19$ & $2.16$ & $\pm 0.00$ \\

\specialrule{0.94pt}{0.4ex}{0.65ex}
\multicolumn{13}{c}{\deliciouslarge} \\
\midrule
\Algo{FastXML} & $43.17$ & $\pm 0.03$ & $38.70$ & $\pm 0.01$ & $36.22$ & $\pm 0.02$ & $3.86$ & $\pm 0.09$ & $12.27$ & $\pm 0.36$ & $6.95$ & $\pm 0.00$ \\
\Algo{PfastreXML} & $17.44$ & $\pm 0.02$ & $17.28$ & $\pm 0.01$ & $17.19$ & $\pm 0.01$ & $3.71$ & $\pm 0.02$ & $19.64$ & $\pm 0.35$ & $15.34$ & $\pm 0.00$ \\
\Algo{PPDSparse} & \multicolumn{2}{c}{$45.05$} & \multicolumn{2}{c}{$38.34$} & \multicolumn{2}{c|}{$34.90$} & \multicolumn{2}{c}{$\approx17.00$} & \multicolumn{2}{c}{$\approx64.00$} & \multicolumn{2}{c}{$3.40$} \\
\Algo{DiSMEC} & \multicolumn{2}{c}{$45.50$} & \multicolumn{2}{c}{$38.70$} & \multicolumn{2}{c|}{$35.50$} & \multicolumn{2}{c}{$\approx24000.00$} & \multicolumn{2}{c}{$\approx68.20$} & \multicolumn{2}{c}{$160.10$} \\

\midrule
\Algo{Parabel}-T=3 & $46.62$ & $\pm 0.02$ & $39.78$ & $\pm 0.04$ & $36.37$ & $\pm 0.04$ & $9.01$ & $\pm 0.20$ & \boldmath$2.61$ & \boldmath$\pm 0.03$ & $6.36$ & $\pm 0.00$ \\
\Algo{nXC}-T=3 & \boldmath$49.65$ & \boldmath$\pm 0.03$ & \boldmath$43.18$ & \boldmath$\pm 0.02$ & \boldmath$39.97$ & \boldmath$\pm 0.01$ & \boldmath$7.90$ & \boldmath$\pm 0.48$ & $31.10$ & $\pm 2.87$ & \boldmath$2.86$ & \boldmath$\pm 0.00$ \\



\specialrule{0.94pt}{0.4ex}{0.65ex}\multicolumn{13}{c}{\wikilshtc} \\
\midrule
\Algo{FastXML} & $49.85$ & $\pm 0.00$ & $33.16$ & $\pm 0.01$ & $24.49$ & $\pm 0.01$ & $6.41$ & $\pm 0.13$ & $4.10$ & $\pm 0.04$ & $12.93$ & $\pm 0.00$ \\
\Algo{PfastreXML} & $58.50$ & $\pm 0.02$ & $37.69$ & $\pm 0.01$ & $27.57$ & $\pm 0.01$ & $6.25$ & $\pm 0.13$ & $4.00$ & $\pm 0.20$ & $14.20$ & $\pm 0.00$ \\
\Algo{PPDSparse} & \multicolumn{2}{c}{$64.13$} & \multicolumn{2}{c}{$42.10$} & \multicolumn{2}{c|}{$31.14$} & \multicolumn{2}{c}{$\approx16.00$} & \multicolumn{2}{c}{$\approx51.00$} & \multicolumn{2}{c}{$5.10$} \\
\Algo{DiSMEC} & \multicolumn{2}{c}{$64.94$} & \multicolumn{2}{c}{$42.71$} & \multicolumn{2}{c|}{$31.50$} & \multicolumn{2}{c}{$\approx2320.00$} & \multicolumn{2}{c}{$\approx340.00$} & \multicolumn{2}{c}{$3.80$} \\

\midrule
\Algo{Parabel}-T=3 & $64.95$ & $\pm 0.02$ & $43.21$ & $\pm 0.02$ & $32.01$ & $\pm 0.01$ & \boldmath$0.81$ & \boldmath$\pm 0.02$ & \boldmath$1.27$ & \boldmath$\pm 0.03$ & $3.10$ & $\pm 0.00$ \\
\Algo{nXC}-T=3 & \boldmath$65.57$ & \boldmath$\pm 0.10$ & \boldmath$43.64$ & \boldmath$\pm 0.11$ & \boldmath$32.33$ & \boldmath$\pm 0.11$ & $7.10$ & $\pm 0.13$ & $1.70$ & $\pm 0.13$ & \boldmath$2.68$ & \boldmath$\pm 0.00$ \\

\ifjmlr
\specialrule{0.94pt}{0.4ex}{0.65ex}
\pagebreak

\specialrule{0.94pt}{0.4ex}{0.65ex}
\multicolumn{1}{c|}{} 
& \multicolumn{2}{c}{$p@1$ [\%]} 
& \multicolumn{2}{c}{$p@3$ [\%]} 
& \multicolumn{2}{c|}{$p@5$ [\%]} 
& \multicolumn{2}{c}{$T_{\textrm{train}}$ [h]} 
& \multicolumn{2}{c}{$T/N_{\textrm{test}}$ [ms]} 
& \multicolumn{2}{c}{$M_{\textrm{size}}$ [GB]} \\

\fi

\specialrule{0.94pt}{0.4ex}{0.65ex}
\multicolumn{13}{c}{\wikipedia} \\
\midrule
\Algo{FastXML} & $49.32$ & $\pm 0.03$ & $33.48$ & $\pm 0.03$ & $25.84$ & $\pm 0.01$ & $51.48$ & $\pm 0.65$ & $15.35$ & $\pm 0.56$ & $59.69$ & $\pm 0.01$ \\
\Algo{PfastreXML} & $59.58$ & $\pm 0.02$ & $40.26$ & $\pm 0.01$ & $30.73$ & $\pm 0.01$ & $51.07$ & $\pm 0.92$ & $15.24$ & $\pm 0.24$ & $69.33$ & $\pm 0.01$ \\
\Algo{PPDSparse} & \multicolumn{2}{c}{$70.16$} & \multicolumn{2}{c}{$50.57$} & \multicolumn{2}{c|}{$39.66$} & \multicolumn{2}{c}{$\approx26.00$} & \multicolumn{2}{c}{$\approx130.00$} & \multicolumn{2}{c}{$4.00$} \\
\Algo{DiSMEC} & \multicolumn{2}{c}{\boldmath$70.20$} & \multicolumn{2}{c}{\boldmath$50.60$} & \multicolumn{2}{c|}{\boldmath$39.70$} & \multicolumn{2}{c}{$\approx26800.00$} & \multicolumn{2}{c}{$\approx1200.00$} & \multicolumn{2}{c}{$14.80$} \\

\midrule
\Algo{Parabel}-T=3 & $68.66$ & $\pm 0.06$ & $49.48$ & $\pm 0.05$ & $38.60$ & $\pm 0.04$ & \boldmath$7.33$ & \boldmath$\pm 0.12$ & \boldmath$3.44$ & \boldmath$\pm 0.13$ & $5.69$ & $\pm 0.00$ \\
\Algo{nXC}-T=3 & $69.24$ & $\pm 0.20$ & $49.82$ & $\pm 0.16$ & $38.81$ & $\pm 0.14$ & $41.11$ & $\pm 1.34$ & $5.53$ & $\pm 0.10$ & \boldmath$4.68$ & \boldmath$\pm 0.01$ \\

\specialrule{0.94pt}{0.4ex}{0.65ex}
\multicolumn{13}{c}{\amazon} \\
\midrule
\Algo{FastXML} & $36.90$ & $\pm 0.02$ & $33.22$ & $\pm 0.01$ & $30.44$ & $\pm 0.01$ & $2.80$ & $\pm 0.03$ & $8.57$ & $\pm 0.20$ & $9.54$ & $\pm 0.00$ \\
\Algo{PfastreXML} & $36.97$ & $\pm 0.02$ & $34.18$ & $\pm 0.01$ & $32.05$ & $\pm 0.01$ & $3.01$ & $\pm 0.03$ & $9.96$ & $\pm 0.14$ & $10.98$ & $\pm 0.00$ \\
\Algo{PPDSparse} & \multicolumn{2}{c}{$45.32$} & \multicolumn{2}{c}{$40.37$} & \multicolumn{2}{c|}{$36.92$} & \multicolumn{2}{c}{$\approx2.00$} & \multicolumn{2}{c}{$\approx90.00$} & \multicolumn{2}{c}{$6.00$} \\
\Algo{DiSMEC} & \multicolumn{2}{c}{\boldmath$45.37$} & \multicolumn{2}{c}{\boldmath$40.40$} & \multicolumn{2}{c|}{\boldmath$36.96$} & \multicolumn{2}{c}{$\approx1830.00$} & \multicolumn{2}{c}{$\approx380.00$} & \multicolumn{2}{c}{$3.80$} \\

\midrule
\Algo{Parabel}-T=3 & $44.70$ & $\pm 0.04$ & $39.66$ & $\pm 0.04$ & $35.85$ & $\pm 0.04$ & \boldmath$0.39$ & \boldmath$\pm 0.00$ & \boldmath$1.57$ & \boldmath$\pm 0.05$ & $1.95$ & $\pm 0.00$ \\
\Algo{nXC}-T=3 & $45.10$ & $\pm 0.11$ & $40.00$ & $\pm 0.12$ & $36.22$ & $\pm 0.13$ & $2.17$ & $\pm 0.10$ & $1.84$ & $\pm 0.42$ & \boldmath$1.66$ & \boldmath$\pm 0.00$ \\

\specialrule{0.94pt}{0.4ex}{0.65ex}
\multicolumn{13}{c}{\amazonlarge} \\
\midrule
\Algo{FastXML} & $45.26$ & $\pm 0.01$ & $41.96$ & $\pm 0.00$ & $39.80$ & $\pm 0.01$ & $18.19$ & $\pm 1.01$ & $68.77$ & $\pm 4.16$ & $30.70$ & $\pm 0.00$ \\
\Algo{PfastreXML} & $32.62$ & $\pm 0.01$ & $32.67$ & $\pm 0.01$ & $32.35$ & $\pm 0.01$ & $19.07$ & $\pm 0.92$ & $78.83$ & $\pm 3.93$ & $41.88$ & $\pm 0.00$ \\
\Algo{PPDSparse} & \multicolumn{2}{c}{-} & \multicolumn{2}{c}{-} & \multicolumn{2}{c|}{-} & \multicolumn{2}{c}{-} & \multicolumn{2}{c}{-} & \multicolumn{2}{c}{-} \\
\Algo{DiSMEC} & \multicolumn{2}{c}{$47.77$} & \multicolumn{2}{c}{$44.96$} & \multicolumn{2}{c|}{$42.80$} & \multicolumn{2}{c}{$\approx18800.00$} & \multicolumn{2}{c}{$\approx2050.00$} & \multicolumn{2}{c}{$39.70$} \\

\midrule
\Algo{Parabel}-T=3 & $47.52$ & $\pm 0.01$ & $44.69$ & $\pm 0.01$ & $42.57$ & $\pm 0.00$ & \boldmath$5.20$ & \boldmath$\pm 0.01$ & \boldmath$1.53$ & \boldmath$\pm 0.02$ & $31.43$ & $\pm 0.00$ \\
\Algo{nXC}-T=3 & \boldmath$47.83$ & \boldmath$\pm 0.09$ & \boldmath$45.08$ & \boldmath$\pm 0.09$ & \boldmath$42.98$ & \boldmath$\pm 0.09$ & $25.43$ & $\pm 1.02$ & $4.93$ & $\pm 0.60$ & \boldmath$28.08$ & \boldmath$\pm 0.00$ \\

\specialrule{0.94pt}{0.4ex}{0.65ex}
\caption{\Algo{PLT}s compared to state-of-the-art algorithms.}
\label{tab:plt-vs-sota}
\end{longtable}
}

\end{center}

\section{Summary}
\label{sec:summary}

We presented and investigated probabilistic label trees, 
a computationally efficient and statistically well-justified model 
for solving extreme \multilabel{} classification problems. 
%
The in-depth analysis shows that \Algo{PLT}s can scale logarithmically with the number of labels
and are suitable for optimizing a wide spectrum of performance metrics 
commonly used in extreme \multilabel{} classification. 
We also considered a fully online algorithm which incrementally trains both 
node classifiers and tree structure in a streaming setting, 
without any prior knowledge of training examples and the set of labels.
The presented discussion on existing implementations of a \Algo{PLT} model 
systematizes the knowledge about different design choices
and allows for a better understanding of their performance. 
We introduced \Algo{napkinXC}, a new modular implementation of \Algo{PLT}s.
It supports batch and incremental training, the fully online setting,
sparse and dense representation of features, 
as well as prediction suited for various performance metrics.
Thanks to this it can be easily tailored for a wide spectrum of applications. 
In the comprehensive experimental study, 
we showed that \Algo{PLT}s are indeed a state-of-the-art method 
performing on par with the \Algo{1-vs-All} approaches, 
but being at the same time orders of magnitude faster.
The existing implementations of the \Algo{PLT} model along with the empirical results
indicate that this approach is currently the most successful for extreme  \multilabel{} classification.

We hope that our work will contribute to development of label tree methods, 
by creating a common basis for this approach. 
Nevertheless, there exist many open problems related to probabilistic label trees. 
The tree structure learning is one of them. 
The $k$-means trees have made a significant contribution, 
but certainly they are not the ultimate solution to this problem.
The ideal tree structure should also take training and prediction costs into account,
as well as the long-tail labels.
Another challenge are new tree building policies for online probabilistic label trees.
An open problem is to make \Algo{PLT}s suitable for such performance metrics as 
recall$@k$ and NDCG$@k$. 
Another interesting research direction is the use of \Algo{PLT}s as an efficient index structure 
for negative sampling in the \Algo{1-vs-All} approaches.

\paragraph{Acknowledgements}{The work of Kalina Jasinska-Kobus was supported by the Polish National Science Center under grant 
no.~2017/25/N/ST6/00747. Computational experiments have been performed in Poznan Supercomputing and Networking Center.
}

\clearpage
\appendix
\renewcommand{\thesection}{\Alph{section}}
\renewcommand{\thesubsection}{\Alph{section}.\arabic{subsection}}

\section{Proofs of the results from Section~\ref{subsec:complexity_analysis}}
\label{app:complexity_analysis}

We present proofs of Proposition~\ref{prop:cost_upperbound} and Theorem~\ref{thm:comp_tresh_pred}
concerning the computational costs of \Algo{PLT}s.
The first proof has been originally published in~\citep{Busa-Fekete_et_al_2019}.
The second is based on a proof of a more general result concerning actual label probabilities 
from the same paper.

\costupperbound* 
\begin{proof}
First notice that a training example is always used in the root node, either as a positive example $(\bx, 1)$, if $\|\by\|_1 > 0$, 
or as a negative example $(\bx, 0)$, if  $\|\by\|_1 = 0$. Therefore the cost is bounded by 1.
If $\|\by\|_1 > 0$, the training example is also used as a positive example in all the nodes on paths from the root to leaves corresponding to labels $j$ for which $y_j = 1$ in $\by$. 
As the root has been already counted, we have at most $\depth_T = \max_v \lenpath_v - 1$ such nodes for each positive label in $\by$. 
Moreover, the training example is used as a negative example in all siblings of the nodes on the paths determined above, 
unless it is already a positive example in the sibling node. 
The highest degree of a node in the tree is $\deg_T$.
Taking the above into account, the cost $c(T, \by)$ is bounded from above by $1 + \|\by\|_1 \cdot \depth_T \cdot \deg_T$. 
The bound is tight, for example, if $\|\by\|_1 = 1$ and $T$ is a perfect $\deg_T$-ary tree 
(all non-leaf nodes have an equal degree and the paths to the root from all leaves are of the same length).
\end{proof}

\compthreshpred*
\begin{proof}
The proof is similar to the one of Theorem~6.1 in~\citep{Busa-Fekete_et_al_2019}. 
{As stated before the theorem, we assume that the estimates are properly normalized, 
that is, they satisfy:}
\begin{equation}
\heta_v(\bx) \le \min \left \{1, \sum_{v' \in \childs{v}} \heta_{v'}(\bx)  \right \} \,,
\label{eqn:node_heta_prob_ub}
\end{equation}
and
\begin{equation}
\max \left \{\heta_{v'}(\bx), v' \in \childs{v} \right \} \le \heta_v(\bx)  \,.
\label{eqn:node_heta_prob_lb}
\end{equation}
Consider the subtree $T'$ of $T$, which consists of all nodes $\node \in \nodes_T$ 
for which $\heta_v(\bx) \ge \tau$. 
If there are no such nodes, from the pseudocode of Algorithm~\ref{alg:tbs_prediction}, 
we see that only the root classifier is called. 
The upperbound~(\ref{eq:comp_tresh_pred}) in this case obviously holds. 
However, it might not be tight 
as $\heta_v(\bx) < \tau$ does not imply $\hat P \le \tau$ because of (\ref{eqn:node_heta_prob_ub}). 

If $T'$ has at least one node, Algorithm~\ref{alg:tbs_prediction} visits each node of $T'$ 
(calls a corresponding classifier and add the node to a stack), 
since for each parent node we have (\ref{eqn:node_heta_prob_lb}). 
Moreover, Algorithm~\ref{alg:tbs_prediction} visits all children of nodes $T'$ 
(some of them are already in $T'$). 
Let the subtree $T''$ consist of all nodes of $T'$ and their child nodes. 
Certainly $T' \subseteq T'' \subseteq T$.
To prove the theorem we count first the number of nodes in $T'$ and then the number of nodes in $T''$, 
which gives as the final result. 

If the number of nodes in $T'$ is greater than or equal to 1, then certainly $r_{T}$ is in $T'$. 
Let us consider next the number of leaves of $T'$. 
Observe that $\sum_{v \in L_{T'}} \heta_v(\bx) \le \hat P$. 
This is because $\sum_{v \in L_{T'}} \heta_v(\bx) \le \sum_{v \in L_{T}} \heta_v(\bx) \le \hat P$, 
that is, $v \in L_{T'}$ might be an internal node in $T$ 
and its $\heta_v(\bx)$ is at most the sum of probability estimates of the leaves 
underneath $v$ according to (\ref{eqn:node_heta_prob_ub}). 
From this we get the following upper bound on the number of leaves in $T'$:
\begin{equation}
|L_{T'}| \le \lfloor \hat P/\tau \rfloor \,.
\label{eqn:num_leaves_ub}
\end{equation}
Since the degree of internal nodes in $T'$ might be $1$, 
to upperbound the number of all nodes in $T'$ 
we count the number of nodes on all paths from leaves to the root, 
but counting the root node only once:
$$
|V_{T'}| \le 1 + \sum_{v \in L_{T'}} (\lenpath_v - 1) \,.
$$
Next, notice that for each $v \in T'$ its all siblings are in $T''$ unless $v$ is the root node. 
This is because if non-root node $v$ is in $T'$ then its parent is also in $T'$ according to (\ref{eqn:node_heta_prob_lb}) 
and $T''$ contains all child nodes of nodes in $T'$.  
The rest of nodes in $T''$ are the child nodes of leaves of $T'$, 
unless a leaf of $T'$ is also a leaf of $T$. 
Therefore, we have 
$$
|V_{T''}| \le 1 +  \sum_{v \in L_{T'}} \deg_T (\lenpath_v - 1) + \sum_{v \in L_{T'}} \deg_T \assert{v \not \in L_{T}} \,,
$$
with $\deg_T$ being the highest possible degree of a node.
Since (\ref{eqn:num_leaves_ub}) and
$$
\lenpath_v - 1 + \assert{v \not \in L_{T}} \le \depth_T \,,
$$ 
that is, the longest path cannot be longer than the depth of the tree plus 1, 
we finally get:
$$
|V_{T''}| \le  1 +  \lfloor \hat P/\tau \rfloor \cdot \depth_T \cdot \deg_T \,. 
$$
This ends the proof as the number of nodes in $T''$ is equivalent 
to the number of calls to the node classifiers, that is, $c_{\tau}(T,\bx)$.
 \end{proof}

\section{Proofs of the results from Section~\ref{subsec:analysis-marginal}}
\label{app:analysis-marginal}

We prove here Theorem~\ref{thm:total_estimation_bound}, 
the main result of~Section~\ref{subsec:analysis-marginal}.
To this end we first show two additional results. 
The first lemma concerns expectation
of $\eta_{\pa{v'}}(\bx) \left | \eta(\bx,v') - \heta(\bx, v') \right |$, 
the weighted $L_1$ error in node $v$ used in upper bounds 
from Lemma~\ref{lma:node_estimation_regret} and Corollary~\ref{cor:marginal-conditional-rec}.
We express this expectation by the expected $L_1$ error in node $v$ 
multiplied by $\prob(z_{\pa{\bx}} = 1)$. 
Based on this result, we proof the second lemma in which 
we bound the expected $L_1$-estimation error of label $j$ 
by a weighted sum of expected $L_1$ errors on $\Path{\leafnode_j}$. 

%
\begin{lemma}
\label{lma:node_estimation_equality}
For any tree $T$ and distribution $\prob(\bx,\by)$, the following holds for $v \in V_T$:
\begin{equation*}
 \mathbb{E}_{\bx \sim \prob(\bx)} \left [ \eta_{\pa{v}}(\bx) \left | \eta(\bx,v) - \heta(\bx, v)  \right | \right ] = \prob(z_{\pa{v}} = 1) \mathbb{E}_{\bx \sim \prob(\bx \vert z_{\pa{v}} =1)} \left [ \left | \eta(\bx,v) - \heta(\bx, v) \right | \right ] \,,
\end{equation*}
where for the root node $\prob(z_{\pa{r_T}} = 1) = 1$.
\end{lemma}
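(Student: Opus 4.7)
The plan is to establish the identity by an application of Bayes' rule, treating $\eta_{\pa{v}}(\bx) = \prob(z_{\pa{v}} = 1 \given \bx)$ as the Radon–Nikodym derivative that converts the marginal distribution $\prob(\bx)$ into the conditional distribution $\prob(\bx \given z_{\pa{v}} = 1)$, up to the normalizing constant $\prob(z_{\pa{v}} = 1)$.

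First, I would recall that by definition of the vector $\bz$ given in~(\ref{eqn:z}), the quantity $\eta_{\pa{v}}(\bx)$ coincides with $\prob(z_{\pa{v}} = 1 \given \bx)$, as already stated just after the factorization~(\ref{eqn:plt_factorization}). Then Bayes' rule gives
$$
\eta_{\pa{v}}(\bx)\, \prob(\bx) \;=\; \prob(z_{\pa{v}} = 1 \given \bx)\, \prob(\bx) \;=\; \prob(z_{\pa{v}} = 1)\, \prob(\bx \given z_{\pa{v}} = 1)\,,
$$
provided $\prob(z_{\pa{v}} = 1) > 0$. The edge case of the root is handled separately: by the convention $\eta_{\pa{r_T}}(\bx) = 1$ stated in Lemma~\ref{lma:node_estimation_regret}, together with $\prob(z_{\pa{r_T}} = 1) = 1$, both sides of the claimed identity reduce to $\mathbb{E}_{\bx \sim \prob(\bx)}[|\eta(\bx, r_T) - \heta(\bx, r_T)|]$, so the statement is trivial there.

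Next, I would plug the above density identity into the left-hand side. Writing the expectation as an integral (or a sum, if $\calX$ is discrete) and setting $g(\bx) = |\eta(\bx, v) - \heta(\bx, v)|$ for brevity, we obtain
\begin{align*}
\mathbb{E}_{\bx \sim \prob(\bx)}\left[\eta_{\pa{v}}(\bx)\, g(\bx)\right]
&= \int g(\bx)\, \eta_{\pa{v}}(\bx)\, \prob(\bx)\, d\bx \\
&= \prob(z_{\pa{v}} = 1) \int g(\bx)\, \prob(\bx \given z_{\pa{v}} = 1)\, d\bx \\
&= \prob(z_{\pa{v}} = 1)\, \mathbb{E}_{\bx \sim \prob(\bx \given z_{\pa{v}} = 1)}\left[g(\bx)\right]\,,
\end{align*}
which is precisely the claimed equality.

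There is no real obstacle here; the only subtlety is the degenerate case $\prob(z_{\pa{v}} = 1) = 0$, which I would dispense with in a single sentence by noting that in that case $\eta_{\pa{v}}(\bx) = 0$ almost surely under $\prob(\bx)$ (since its integral against $\prob(\bx)$ equals $\prob(z_{\pa{v}} = 1) = 0$ and $\eta_{\pa{v}}(\bx) \ge 0$), so the left-hand side vanishes, and the right-hand side is defined to be $0$ by the convention $0 \cdot \mathbb{E}[\,\cdot\,] = 0$. This makes the identity hold vacuously, completing the proof.
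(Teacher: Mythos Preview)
Your proof is correct and follows essentially the same approach as the paper: write the expectation as an integral, replace $\eta_{\pa{v}}(\bx)$ by $\prob(z_{\pa{v}}=1\mid\bx)$, apply Bayes' theorem to convert $\prob(\bx)\prob(z_{\pa{v}}=1\mid\bx)$ into $\prob(z_{\pa{v}}=1)\prob(\bx\mid z_{\pa{v}}=1)$, and recognize the resulting integral as the conditional expectation. Your version is in fact slightly more careful than the paper's, since you explicitly dispatch the root-node convention and the degenerate case $\prob(z_{\pa{v}}=1)=0$, neither of which the paper addresses.
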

\begin{proof}
By using the definition of expectation, replacing $\eta_{\pa{v}}(\bx)$ by its definition, applying Bayes' theorem, and rearranging terms, we obtain: 
\begin{eqnarray*}
\mathbb{E}_{\bx \sim \prob(\bx)}\!\! \left [ \eta_{\pa{v}}(\bx) \left | \eta(\bx,v) - \heta(\bx, v)  \right | \right ] 
& \!\!\!\!\!=\!\!\!\! & \int \!\! \prob(\bx) \eta_{\pa{v}}(\bx) \left | \eta(\bx,v) - \heta(\bx, v)  \right | d\bx \\
& \!\!\!\!=\!\!\!\! & \int \!\! \prob(\bx) \prob(z_{\pa{v}}=1 \vert \bx) \left | \eta(\bx,v) - \heta(\bx, v)  \right | d\bx \\
& \!\!\!\!=\!\!\!\! & \prob(z_{\pa{v}}=1) \!\! \int \!\! \prob(\bx \vert z_{\pa{v}}=1) \left | \eta(\bx,v) - \heta(\bx, v)  \right | d\bx \,.
\end{eqnarray*}
Since 
$$
\int \prob(\bx \vert z_{\pa{v}}=1) \left | \eta(\bx,v) - \heta(\bx, v)  \right | d\bx
$$
is nothing else than the expected $L_1$ estimation error in node $\node$, denoted by 
$$
\mathbb{E}_{\bx \sim \prob(\bx \vert z_{\pa{v}}=1)} \left [ \left | \eta(\bx,v) - \heta(\bx, v)  \right | \right ] \,,
$$
we obtain the final result.
\end{proof}
\begin{lemma}
\label{lma:label_estimation_bound}
For any tree $T$ and distribution $\prob(\bx,\by)$ the following holds for $j \in \labels$:
\begin{equation*}
\mathbb{E}_{\bx \sim \prob(\bx)} \left [ \left | \eta_j(\bx) - \heta_j(\bx) \right | \right ]  \leq 
\!\!\! \sum_{v \in \Path{\leafnode_j}} \!\!\! \prob(z_{\pa{v}} = 1) \mathbb{E}_{\bx \sim \prob(\bx \vert z_{\pa{v}} =1)} \left [ \left | \eta(\bx,v) - \heta(\bx, v) \right | \right ] \,,
\end{equation*}
where for the root node $\prob(z_{\pa{r_T}} = 1) = 1$.
\end{lemma}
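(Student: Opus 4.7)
The plan is to derive the lemma by composing two results already available: the pointwise bound from Corollary~\ref{cor:marginal-conditional-rec} and the identity from Lemma~\ref{lma:node_estimation_equality}. The proof should be short and essentially mechanical, since all the substantive work has been done earlier.

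First I would start from Corollary~\ref{cor:marginal-conditional-rec}, which states that for every $\bx$,
$$
\left | \eta_{j}(\bx) - \heta_{j}(\bx) \right |  \leq  \sum_{v \in \Path{\leafnode_j}} \eta_{\pa{v}}(\bx) \left | \eta(\bx,v) - \heta(\bx, v)  \right |.
$$
Then I would take expectation with respect to $\bx \sim \prob(\bx)$ on both sides. Since the inequality holds pointwise and the right-hand side is a non-negative random variable, the direction of the inequality is preserved under expectation. Using linearity of expectation (the path $\Path{\leafnode_j}$ is deterministic given the tree), I can exchange the sum and the expectation to obtain
$$
\mathbb{E}_{\bx \sim \prob(\bx)} \left [ \left | \eta_j(\bx) - \heta_j(\bx) \right | \right ]  \leq  \sum_{v \in \Path{\leafnode_j}} \mathbb{E}_{\bx \sim \prob(\bx)} \left [ \eta_{\pa{v}}(\bx) \left | \eta(\bx,v) - \heta(\bx, v) \right | \right ].
$$

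Next I would apply Lemma~\ref{lma:node_estimation_equality} term by term to rewrite each summand, which gives exactly
$$
\mathbb{E}_{\bx \sim \prob(\bx)} \left [ \eta_{\pa{v}}(\bx) \left | \eta(\bx,v) - \heta(\bx, v) \right | \right ] = \prob(z_{\pa{v}} = 1) \, \mathbb{E}_{\bx \sim \prob(\bx \vert z_{\pa{v}} =1)} \left [ \left | \eta(\bx,v) - \heta(\bx, v) \right | \right ],
$$
with the convention $\prob(z_{\pa{r_T}} = 1) = 1$ for the root node, which is consistent with the base case $\eta_{\pa{r_T}}(\bx) = 1$ used in Corollary~\ref{cor:marginal-conditional-rec}. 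Substituting yields the claimed bound.

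There is essentially no hard step here: the only thing to be careful about is that Corollary~\ref{cor:marginal-conditional-rec} is stated pointwise, so moving to expectations requires only non-negativity of each factor and the deterministic nature of the path, and then the reweighting by $\prob(z_{\pa{v}} = 1)$ is handled cleanly by Lemma~\ref{lma:node_estimation_equality}. The main conceptual point worth emphasizing is the convention for the root, which makes the bound uniform across all nodes on the path.
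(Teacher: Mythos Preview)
Your proposal is correct and follows essentially the same approach as the paper: take expectations of both sides of Corollary~\ref{cor:marginal-conditional-rec}, use linearity to pull the expectation inside the sum, and then apply Lemma~\ref{lma:node_estimation_equality} term by term.
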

\begin{proof}
Take expectation of both hand sides of (\ref{eqn:estimation_bound}) and use linearity of expectation for the right hand side:
\begin{eqnarray*}
& & \mathbb{E}_{\bx \sim \prob(\bx)} \left [ \left | \eta_j(\bx) - \heta_j(\bx) \right | \right ]  \\
& & \quad\quad \leq \mathbb{E}_{\bx \sim \prob(\bx)} \left [ \sum_{v \in \Path{\leafnode_j}} \eta_{\pa{v}}(\bx) \left | \eta(\bx,v) - \heta(\bx, v)  \right | \right ] \\
& & \quad\quad = \sum_{v \in \Path{\leafnode_j}} \mathbb{E}_{\bx \sim \prob(\bx)} \left [ \eta_{\pa{v}}(\bx) \left | \eta(\bx,v) - \heta(\bx, v)  \right | \right ]
\end{eqnarray*}
The rest follows from Lemma~\ref{lma:node_estimation_equality}.
\end{proof}

Using the above results, we finally prove Theorem~\ref{thm:total_estimation_bound}.
It bounds the expectation of the $L_1$-estimation error averaged over all labels
by the expected $L_1$-estimation errors of node classifiers.
The expectation is defined over the entire distribution $\Pr(\bx)$.
We present the result in a general form of a weighted average
as such form is used later in proofs for the generalized performance metrics.


\thmtotalestimationbound*

\begin{proof}
From Lemma~\ref{lma:label_estimation_bound} we obtain:
\begin{eqnarray*}
& & \frac{1}{m} \sum_{j=1}^m  W_j \mathbb{E}_{\bx \sim \prob(\bx)} \left [ \left | \eta_j(\bx) - \heta_j(\bx) \right | \right ] \leq \\ 
& & \quad \quad \frac{1}{m} \sum_{j=1}^m W_j \!\!\! \sum_{v \in \Path{\leafnode_j}} \!\!\! \prob(z_{\pa{v}} = 1) \mathbb{E}_{\bx \sim \prob(\bx \vert z_{\pa{v}} =1)} \left [ \left | \eta(\bx,v) - \heta(\bx, v) \right | \right ] \,.
\end{eqnarray*}
The RHS can be further transformed to:
\begin{eqnarray*}
& & \quad \quad \frac{1}{m} \sum_{j=1}^m \sum_{v \in \Path{\leafnode_j}} \!\!\! W_j \prob(z_{\pa{v}} = 1) \mathbb{E}_{\bx \sim \prob(\bx \vert z_{\pa{v}} =1)} \left [ \left | \eta(\bx,v) - \heta(\bx, v) \right | \right ] = \\
& & \quad \quad \frac{1}{m} \sum_{\node \in \nodes} \prob(z_{\pa{v}} = 1) \mathbb{E}_{\bx \sim \prob(\bx \vert z_{\pa{v}} =1)} \left [ \left | \eta(\bx,v) - \heta(\bx, v) \right | \right ] \sum_{j \in \leaves_\node} W_j \,.
\end{eqnarray*}
where the last equation follows from the fact that each $\node \in \nodes_T$ appears in the double sum $|L_v|$ times (where $|L_v|$ is the number of leaves in a subtree rooted in $\node$; in other words, this is the number of paths from leaves to the root that contain node $\node$). Changing the double sum to sum over all nodes and multiplying the expected $L_1$ estimation error for $\node$ by $\sum_{j \in \leaves_\node} W_j$ gives the final result.
\end{proof}

\section{Proofs of the results from Section~\ref{subsec:strongly_proper_composite_losses}}
\label{app:strongly_proper_composite_losses}

We present the proof of Theorem~\ref{thm:total_regret_bound}.
It expresses the bound from Theorem~\ref{thm:total_estimation_bound}
in terms of node regrets of a strongly proper composite loss function.


\thmtotalregretbound*

\begin{proof}
As $\psi$ is an invertible function satisfying $f(\bx) = \psi(\prob(y = 1\given \bx))$, we can assume that $\heta(\bx, \node) = \psi^{-1}(f_v(\bx))$. We then obtain from~(\ref{eqn:regret_bound_for_spc_losses}):

\begin{equation}
\eta_{\pa{\node}}(\bx) \left | \eta(\bx, \node)  - \heta(\bx, \node) \right | \le \eta_{\pa{\node}}(\bx) \sqrt{ \frac{2}{\lambda}} \sqrt{\reg_{\ell_c}(f_v \given \bx)} \,,
\label{eqn:node_regret_bound_for_spc_losses}
\end{equation}
for any $\node \in \nodes_T$.
We take the expectation with respect to $\prob(\bx)$ of (\ref{eqn:node_regret_bound_for_spc_losses}). Based on Lemma~\ref{lma:node_estimation_equality},
given in Appendix~\ref{app:analysis-marginal},
the left hand side is equal to: 
$$
 \prob(z_{\pa{v}} = 1) \mathbb{E}_{\bx \sim \prob(\bx \vert z_{\pa{v}} =1)} \left [ \left | \eta(\bx,v) - \heta(\bx, v) \right | \right ] \,.
$$
For the left hand side we obtain the following upper bound: 
\begin{eqnarray*}
\mathbb{E}_{\bx \sim \prob(\bx)} \left [ \eta_{\pa{\node}}(\bx) \sqrt{ \frac{2}{\lambda}} \sqrt{\reg_{\ell_c}(f_v \given \bx)} \right ] 
& = & \sqrt{ \frac{2}{\lambda}} \mathbb{E}_{\bx \sim \prob(\bx)} \left [ \prob(z_{\pa{v}}=1 \vert \bx) \sqrt{\reg_{\ell_c}(f_v \given \bx)} \right ] \\
& = & \sqrt{ \frac{2}{\lambda}} \mathbb{E}_{\bx \sim \prob(\bx)} \left [ \sqrt{\prob(z_{\pa{v}}=1 \vert \bx)^2 \reg_{\ell_c}(f_v \given \bx)} \right ] \\
& \le & \sqrt{ \frac{2}{\lambda}} \mathbb{E}_{\bx \sim \prob(\bx)} \left [ \sqrt{ \prob(z_{\pa{v}}=1 \vert \bx) \reg_{\ell_c}(f_v \given \bx)} \right ] \,.
\end{eqnarray*}
Using Jensen's inequality we further get:
$$
\sqrt{ \frac{2}{\lambda}} \mathbb{E}_{\bx \sim \prob(\bx)} \left [ \sqrt{ \prob(z_{\pa{v}}=1 \vert \bx) \reg_{\ell_c}(f_v \given \bx)} \right ] 
 \le \sqrt{\frac{2}{\lambda} \mathbb{E}_{\bx \sim \prob(\bx)} \left [ \prob(z_{\pa{v}}=1 \vert \bx) \reg_{\ell_c}(f_v \given \bx) \right ] }
$$
The next step is similarly to the proof of Lemma~\ref{lma:node_estimation_regret}. 
We use first the definition of expectation, then Bayes' theorem, and finally we rearrange the terms:
\begin{eqnarray*}
\sqrt{ \frac{2}{\lambda} \mathbb{E}_{\bx \sim \prob(\bx)} \left [ \prob(z_{\pa{v}}\!=\!1 \vert \bx) \reg_{\ell_c}(f_v \given \bx) \right ] } 
& \!\!\!\! = \!\!\!\! &  \sqrt{ \frac{2}{\lambda} \! \int \!\! \prob(\bx) \prob(z_{\pa{v}}\!=\!1 \vert \bx) \reg_{\ell_c}(f_v \given \bx)  d\bx }  \\
& \!\!\!\! = \!\!\!\! &  \sqrt{ \frac{2}{\lambda} \! \int \!\! \prob(z_{\pa{v}}=1) \prob(\bx \vert z_{\pa{v}}\!=\!1) \reg_{\ell_c}(f_v \given \bx) d\bx }  \\
& \!\!\!\! = \!\!\!\! &  \sqrt{ \frac{2}{\lambda} \prob(z_{\pa{v}}=1) \!\! \int \!\! \prob(\bx \vert z_{\pa{v}}\!=\!1) \reg_{\ell_c}(f_v \given \bx) d\bx }  \,. 
\end{eqnarray*}
\noindent
Notice that:
$$
\int \prob(\bx \vert z_{\pa{v}}=1) \reg_\ell(f_v \given \bx) d\bx = 
\mathbb{E}_{\bx \sim \prob(\bx \vert z_{\pa{v}=1})} \left [  \reg_{\ell_c}(f_v \given \bx) \right ]
$$
\noindent
This is the expected regret of $f_\node$ taken over $\prob(\bx, z_\node \given z_{\pa{\node}} = 1)$, denoted by $\reg_{\ell_c}(f_v)$. 
We thus obtain the following by taking the expectation of (\ref{eqn:node_regret_bound_for_spc_losses}):
$$
\prob(z_{\pa{v}} = 1) \mathbb{E}_{\bx \sim \prob(\bx \vert z_{\pa{v}} =1)} \left [ | \eta(\bx,v') - \heta(\bx, v') | \right ] \le 
\sqrt{ \frac{2}{\lambda}} \sqrt{ \prob(z_{\pa{v}}=1)\reg_{\ell_c}(f_v) } 
$$
By using the above in Eq.~(\ref{eqn:total_estimation_bound}) from Theorem~\ref{thm:total_estimation_bound}, we obtain the final result.
\end{proof}

\section{Proofs of the results from Section~\ref{subsec:analyis-generalized}}
\label{app:analysis-generalized}

This appendix contains proofs of Theorems~\ref{thm:psi_macro_regret}~and~\ref{thm:psi_micro_regret}
which state the regret bounds of \Algo{PLT}s for generalized performance metrics. 
We start with presenting two other results being the building blocks of the main proofs.
Both are based on~\citep{Kotlowski_Dembczynski_2017}. 
The first one states that the regret for a cost-sensitive binary classification 
can be upperbounded by $L_1$ estimation error of the conditional probabilities 
by using a proper threshold that corresponds to the misclassification cost.
The second one shows that the regret of the generic function $\Psi(\FP, \FN)$ can be upperbounded
by the regret of the cost-sensitive binary classification 
with costs being a function of the optimal value of $\Psi(\FP, \FN)$. 

Given a real number $\alpha \in [0,1]$, 
let us first define an $\alpha$-cost-sensitive loss function for a single binary label $y$, 
$\ell_\alpha: \{0,1\} \times \{0,1\} \rightarrow [0,2]$, as:
$$
\ell_\alpha(y, \hy) = 2 \alpha \assert{y = 0} \assert{\hy = 1} + 2 (1 - \alpha) \assert{y = 1} \assert{\hy = 0}
$$
The cost-sensitive loss assigns different costs of misclassification depending on whether the label is relevant ($y=1$) or not ($y=-1$).
The multiplier of 2 makes $\ell_{0.5}(y, \hy)$ to be the typical binary 0/1 loss.
Given classifier $h$, the $\alpha$-cost-sensitive risk of $h$ is:
\begin{equation}
R_{\alpha}(h) = \mathbb{E}_{(y,\bx ) \sim \prob(y,\bx)}[\ell_\alpha(y, \hy)] = 2\alpha\FP(h) + 2(1-\alpha)\FN(h)
\label{eqn:alpha-risk}
\end{equation}
The $\alpha$-cost-sensitive regret of $h$ is then: 
\begin{equation}
\reg_{\alpha}(h) = R_{\alpha}(h) - R_{\alpha}(h^*_\alpha) \,,
\label{eqn:alpha-regret}
\end{equation}
where $h^*_\alpha = \argmin_h R_{\alpha}(h)$. 

\begin{restatable}{proposition}{generalizedalphamarginals}
\label{prop:alpha_marginals}
For any distribution $\prob$ over $(y, \bx) \in \{0,1\} \times \calX$, with $\eta(\bx) = \prob(y = 1 \given \bx)$, any $\alpha \in [0, 1]$, and classifier $h$, such that $h(\bx) = \assert{\heta(\bx) > \alpha}$ with $\heta(\bx) \in [0,1]$, the following holds:
$$
\reg_\alpha(h) \le 2 \mathbb{E}_{\bx \sim \prob(\bx))} [ |\eta(\bx) - \heta(\bx)| ]
$$
\end{restatable}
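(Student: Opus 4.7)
The plan is to establish the bound pointwise by analyzing the conditional $\alpha$-cost-sensitive regret at a fixed $\bx$, and then take expectation. Starting from the definition of $\ell_\alpha$, the conditional risk can be written as
$$
R_\alpha(h \given \bx) = 2\alpha(1-\eta(\bx))\assert{h(\bx)=1} + 2(1-\alpha)\eta(\bx)\assert{h(\bx)=0}\,,
$$
from which a direct comparison of the two candidate predictions shows that the pointwise Bayes classifier is $h^*_\alpha(\bx) = \assert{\eta(\bx) > \alpha}$ (with ties broken arbitrarily).

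Next, I would compute the conditional regret explicitly. When $h(\bx) = h^*_\alpha(\bx)$ the regret vanishes, and when they disagree it equals $|2\alpha(1-\eta(\bx)) - 2(1-\alpha)\eta(\bx)| = 2|\eta(\bx) - \alpha|$. Hence
$$
\reg_\alpha(h \given \bx) = 2|\eta(\bx) - \alpha|\cdot \assert{h(\bx) \neq h^*_\alpha(\bx)}\,.
$$

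The crucial observation is the following: since $h(\bx) = \assert{\heta(\bx) > \alpha}$ and $h^*_\alpha(\bx) = \assert{\eta(\bx) > \alpha}$, the two classifiers disagree only when $\alpha$ lies (weakly) between $\eta(\bx)$ and $\heta(\bx)$. In that case $|\eta(\bx) - \alpha| \le |\eta(\bx) - \heta(\bx)|$, so
$$
\reg_\alpha(h \given \bx) \le 2|\eta(\bx) - \heta(\bx)|\,.
$$
Taking expectation over $\bx \sim \prob(\bx)$ on both sides yields the claim.

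I do not foresee a real obstacle: the argument is entirely pointwise and the only non-routine step is the geometric observation that a disagreement between the thresholded $\eta$ and the thresholded $\heta$ forces $\alpha$ to sit between them, which trivially gives $|\eta(\bx) - \alpha| \le |\eta(\bx) - \heta(\bx)|$. The proof is essentially a one-line application of this fact after identifying the conditional Bayes classifier.
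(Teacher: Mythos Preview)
Your proposal is correct and follows essentially the same approach as the paper: compute the conditional risk, identify $h^*_\alpha(\bx)=\assert{\eta(\bx)>\alpha}$, obtain the conditional regret $2|\eta(\bx)-\alpha|\cdot\assert{h(\bx)\neq h^*_\alpha(\bx)}$, use the observation that a disagreement forces $\eta(\bx)$ and $\heta(\bx)$ to lie on opposite sides of $\alpha$ so that $|\eta(\bx)-\alpha|\le|\eta(\bx)-\heta(\bx)|$, and take expectation.
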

\begin{proof}
The proof is a part of the derivation of the bound from Proposition~2 in~\cite{Kotlowski_Dembczynski_2017}.
Given $\eta \in [0, 1]$ and $h \in [0, 1]$, the conditional $\alpha$-cost-sensitive risk is:
$$
R_\alpha(h \given \bx) = \mathbb{E}_{y \sim \prob(y \given \bx)} \left [ \ell_\alpha(y, h) \right ] = 
2\alpha(1 - \eta) \assert{h = 1} + 2(1 - \alpha)\eta \assert{h = 0} \,.
$$
Let $h^*_\alpha \in \argmin_h \riskcond_\alpha(\eta, h)$. It is easy to check that one of possible solutions is 
\begin{equation}
h^*_\alpha = \assert{\eta > \alpha} \,. 
\label{eqn:optimal-alpha}
\end{equation}
The $\alpha$-conditional cost-sensitive regret is 
$$
\reg_\alpha(h \given \bx) = R_\alpha(h \given \bx) - R_\alpha(h^*_\alpha \given \bx).
$$
If $h = h^*_\alpha$, then $\reg_\alpha(\eta, h) = 0$, otherwise, $\reg_\alpha(\eta, h) = 2|\eta - \alpha|$, so
$$
\reg_\alpha(h \given \bx) = 2 \assert{h \neq h^*_\alpha}|\eta - \alpha|.
$$
In the statement of the theorem, we assume $h(\bx) = \assert{\heta(\bx) > \alpha}$, for some $\heta(\bx) \in [0,1]$, 
that is, $h(\bx)$ has the same form as $h^*_\alpha(\bx)$ in (\ref{eqn:optimal-alpha}).
For such $h(\bx)$ we have:
\begin{equation*}
\reg_\alpha(h \given \bx) \le 2 |\eta - \heta| \,.
\end{equation*}
This statement trivially holds when $h = h^*_\alpha$. 
If $h \neq h^*_\alpha$, then $\eta$ and $\hat \eta$ are on the opposite sides of $\alpha$, 
hence $|\eta - \alpha| \le |\eta - \hat \eta|$.

The unconditional statement is obtained by taking the expectation with respect to $\bx$ of both sides of the above equation:
$$
\reg_\alpha(h) 
= \mathbb{E}_{\bx \sim \prob(\bx)} [ \reg_\alpha(h \given \bx) ]  
\le  2 \mathbb{E}_{\bx \sim \prob(\bx)} [ |\eta(\bx) - \hat \eta(\bx)| ] \,.
$$
\end{proof}

The second result is a modified version of Proposition~1 from \citep{Kotlowski_Dembczynski_2017},
which in turn generalizes Proposition~6 in \citep{Parambath_et_al_2014}.
\begin{proposition}
\label{prop:psi_alpha}
Let $\Psi$ be a linear-factorial function as defined in (\ref{eqn:psi}) 
with the denominator bounded away from 0 by $\gamma$ as in (\ref{eqn:gamma}).
Take any real values $\FP$, $\FN$ and $\FP^*$, $\FN^*$ in the domain of $\Psi$ such that: 
$$
\Psi(\FP^*, \FN^*) - \Psi(\FP, \FN) \ge 0 \,.
$$
Then, we obtain:
$$
\Psi(\FP^*, \FN^*) - \Psi(\FP, \FN) \le C(\alpha^*_\Psi(\FP - \FP^*) + (1 - \alpha^*_\Psi) (\FN - \FN^*)) \,
$$
where:
$$
\alpha^*_\Psi = \frac{\Psi(\FP^*, \FN^*)  b_1 - a_1}{\Psi(\FP^*, \FN^*) (b_1 + b_2) - (a_1 + a_2)} \,,
$$
and
$$
C = \frac{1}{\gamma} \left( \Psi(\FP^*, \FN^*)  \left(b_1 + b_2\right) - \left(a_1 + a_2\right) \right ) > 0 \,.
$$
\end{proposition}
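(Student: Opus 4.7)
The plan is a direct algebraic manipulation that rewrites the difference $\Psi(\FP^*,\FN^*) - \Psi(\FP,\FN)$ in a form that exposes the cost-sensitive combination on the right-hand side. Writing $\Psi^\ast = \Psi(\FP^*,\FN^*)$ and $N = b_0 + b_1 \FP + b_2 \FN$, I would first bring the difference over the common denominator $N$:
\begin{equation*}
\Psi^\ast - \Psi(\FP,\FN) = \frac{\Psi^\ast b_0 - a_0 + (\Psi^\ast b_1 - a_1)\FP + (\Psi^\ast b_2 - a_2)\FN}{N}.
\end{equation*}
Next, I would use the identity $\Psi^\ast (b_0 + b_1 \FP^* + b_2 \FN^*) = a_0 + a_1 \FP^* + a_2 \FN^*$ to replace the constant term $\Psi^\ast b_0 - a_0$ with $-(\Psi^\ast b_1 - a_1)\FP^* - (\Psi^\ast b_2 - a_2)\FN^*$, which collapses the numerator into
\begin{equation*}
(\Psi^\ast b_1 - a_1)(\FP - \FP^*) + (\Psi^\ast b_2 - a_2)(\FN - \FN^*).
\end{equation*}

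The second step is to match the coefficients with $\alpha^*_\Psi$. By the very definition of $\alpha^*_\Psi$ one has $\Psi^\ast b_1 - a_1 = \alpha^*_\Psi \cdot \bigl(\Psi^\ast(b_1+b_2) - (a_1+a_2)\bigr)$ and $\Psi^\ast b_2 - a_2 = (1-\alpha^*_\Psi) \cdot \bigl(\Psi^\ast(b_1+b_2) - (a_1+a_2)\bigr)$, so that
\begin{equation*}
\Psi^\ast - \Psi(\FP,\FN) = \frac{\Psi^\ast(b_1+b_2) - (a_1+a_2)}{N}\bigl(\alpha^*_\Psi(\FP - \FP^*) + (1-\alpha^*_\Psi)(\FN - \FN^*)\bigr).
\end{equation*}

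The third step is to replace $1/N$ by $1/\gamma$. Because $\Psi$ is non-increasing in both arguments, the partial-derivative computation $\partial_\FP \Psi = (a_1 - \Psi b_1)/N$ (and analogously for $\FN$) forces $\Psi^\ast b_1 - a_1 \ge 0$ and $\Psi^\ast b_2 - a_2 \ge 0$; hence $\alpha^*_\Psi \in [0,1]$ and the prefactor $\Psi^\ast(b_1+b_2) - (a_1+a_2) \ge 0$, i.e.\ $\gamma C \ge 0$. The hypothesis $\Psi^\ast - \Psi(\FP,\FN)\ge 0$ combined with the non-negativity of this prefactor and of $N$ then forces the bracket $\alpha^*_\Psi(\FP-\FP^*) + (1-\alpha^*_\Psi)(\FN-\FN^*)$ to be non-negative. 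Since $N \ge \gamma > 0$, replacing $N$ by $\gamma$ can only increase the right-hand side, yielding the claimed bound with constant $C$.

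No real obstacle is expected; the only subtlety is the sign argument that justifies the final inequality $1/N \le 1/\gamma$. This rests on carefully using (i) monotonicity of $\Psi$ to get non-negativity of the coefficients $\Psi^\ast b_i - a_i$, and (ii) the assumption $\Psi^\ast - \Psi(\FP,\FN)\ge 0$ to deduce non-negativity of the linear-in-$(\FP,\FN)$ bracket, without which the replacement of $N$ by its lower bound $\gamma$ would reverse the direction of the inequality.
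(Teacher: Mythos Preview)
Your proposal is correct and follows essentially the same approach as the paper: the same algebraic rewriting of $\Psi^* - \Psi$ as a linear combination of $(\FP-\FP^*)$ and $(\FN-\FN^*)$ over the denominator, the same partial-derivative argument for non-negativity of the coefficients $\Psi^* b_i - a_i$, and the same use of $B\ge\gamma$ together with $\Psi^*-\Psi\ge 0$ to pass to the constant $C$. The only cosmetic difference is that you normalize to $\alpha^*_\Psi$ before replacing the denominator by $\gamma$, whereas the paper does it afterward.
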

\begin{proof}
For the sake of clarity, we use a shorthand notation 
$\Psi^* = \Psi(\FP^*, \FN^*)$, $\Psi = \Psi(\FP, \FN)$, 
$A = a_0 + a_1\FP + a_2\FN$, $B = b_0 + b_1\FP + b_2\FN$, 
for the numerator and denominator of $\Psi$, 
and analogously $A^*$ and $B^*$ for $\Psi^*$. 
With this notation, we have:
\begin{eqnarray}
\Psi^* - \Psi & = & \frac{\Psi^* B - A}{B}
= \frac{\Psi^* B - A -  \overbrace{\left ( \Psi^* B^* - A^* \right )}^{=0} }{B} \nonumber \\
& = & \frac{\Psi^* (B - B^*) - (A - A^*) }{B} \nonumber \\ 
& = & \frac{(\Psi^*b_1 - a_1) (\FP - \FP^*) + (\Psi^*b_2 - a_2) (\FN - \FN^*)}{B} \nonumber \\
& \le & \frac{(\Psi^*b_1 - a_1) (\FP - \FP^*) + (\Psi^*b_2 - a_2) (\FN - \FN^*)}{\gamma} \,,
\label{eqn:psi_regret_upper_bound}
\end{eqnarray}
where the last inequality follows from the assumptions that 
$B \ge \gamma$ and $\Psi^* - \Psi \ge 0$. 
Since $\Psi$ is non-increasing in $\FP$ and $\FN$, we have:
$$
\frac{\partial \Psi^*}{\partial \FP^*} 
= \frac{a_1 B^* - b_1 A^*}{(B^*)^2} 
= \frac{a_1 - b_1 \Psi^*}{B^*} \le 0
$$
and similarly $\frac{\partial \Psi^*}{\partial \FN^*} = \frac{a_2 - b_2 \Psi^*}{B^*} \le 0$.
This and the assumption $B^* \ge \gamma$ implies that 
both $\Psi\*b_1 - a_1$ and $\Psi^*b_2 - a_2$ are non-negative. 
If we normalize them by defining:
$$
\alpha^*_\Psi = \frac{\Psi^*  b_1 - a_1}{\Psi^* (b_1 + b_2) - (a_1 + a_2)} \,,
$$
we obtain then from (\ref{eqn:psi_regret_upper_bound}):
$$
\Psi^* - \Psi \le C(\alpha^*_\Psi(\FP - \FP^*) + (1 - \alpha^*_\Psi) (\FN - \FN^*))\,
$$
with $C$ being $\frac{1}{\gamma} \left( \Psi^* \left(b_1 + b_2\right) - \left(a_1 + a_2\right) \right )$.
\end{proof}


With the above results we can prove the main theorems of Section~\ref{subsec:analyis-generalized}.

\psimacroregret*

\begin{proof}
From the definitions of the macro-average performance measure (\ref{eqn:psimacro}) 
and the regret of $\Psi_{\textrm{macro}}$ (\ref{eqn:macro-regret}), 
as well as from Proposition~\ref{prop:psi_alpha} we have for any $\bh(\bx) = (h_1(\bx), h_2(\bx), \ldots, h_m(\bx))$ that:
$$
\reg_{\Psi_{\mathrm{macro}}}(\bh) \le \frac{1}{m} \sum_{j=1}^m C_j (\alpha^*_\Psi(\FP_j - \FP_j^*) + 
(1 - \alpha^*_\Psi) (\FN_j - \FN_j^*)) \,, 
$$
with $\FP_j$ and $\FN_j$ being the false positives and false negatives of $h_{j}$. 
It can be easily notice $(\alpha^*_\Psi(\FP_j - \FP_j^*) + (1 - \alpha^*_\Psi) (\FN_j - \FN_j^*))$ is half of the $\alpha^*_\Psi$-regret~(\ref{eqn:alpha-regret}) for label $j$. Therefore, we can write:
$$
\reg_{\Psi_{\mathrm{macro}}}(\bh) \le \frac{1}{2m} \sum_{j=1}^m C_j \reg_{\alpha^*_\Psi}(h_j) \,.
$$
If we now take $h_j = h_{j,\alpha^*_{\Psi,j}}$, 
then by using Proposition~\ref{prop:alpha_marginals} 
and the bound~(\ref{eqn:generalized_total_regret_bound}) from Theorem~\ref{thm:total_regret_bound} we obtain:
\begin{eqnarray*}
\reg_{\Psi_{\mathrm{macro}}}(\bh_{\balpha^*_\Psi}) & \le &  \frac{1}{m} \sum_{j=1}^m C_j \mathbb{E}_{\bx \sim \prob(\bx))} [ |\eta(\bx) - \heta(\bx)| ] \\
 & \le & \frac{\sqrt{2}}{m\sqrt{\lambda}} \sum_{\node \in \nodes} \sqrt{ \prob(z_{\pa{v}} = 1) \reg_{\lossfunc_c}(f_v)} \sum_{j \in L_v} C_j \,.
\end{eqnarray*}
Finally, since 
$$
\btau^* = \argmax_{\btau} \Psi_{\textrm{macro}}(\bh_{\btau}) = \argmin_{\btau} \reg_{\Psi_{\mathrm{macro}}}(\bh_{\btau})\,,
$$ 
we have that $\reg_{\Psi_{\mathrm{macro}}}(\bh_{\btau^*}) \le \reg_{\Psi_{\mathrm{macro}}}(\bh_{\balpha^*_\Psi})$.
\end{proof}

\psimicroregret*
\begin{proof}
Using Proposition~\ref{prop:psi_alpha}, which applies to any real values $\FP$, $\FN$, $\FP^*$, $\FN^*$, 
and from definitions of the micro-averaged performance measure~(\ref{eqn:psimicro}) 
and the regret of $\Psi_{\textrm{micro}}$~(\ref{eqn:micro-regret}), 
we can write for any $\bh(\bx) = (h_1(\bx), h_2(\bx), \ldots, h_m(\bx))$ that: 
\begin{eqnarray*}
\reg_{\Psi_{\mathrm{micro}}}(\bh) & = & \Psi(\bar \FP(\bh^*_{\alpha^*_{\Psi}}), \bar \FN(\bh^*_{\alpha^*_{\Psi}})) - \Psi(\bar \FP(\bh), \bar \FN(\bh)) \\
& \le &  C\big( \alpha^*_\Psi(\bar \FP(\bh) - \bar \FP (\bh^*_{\alpha^*_{\Psi}})) + (1 - \alpha^*_\Psi)(\bar \FN(\bh) - \bar \FN (\bh^*_{\alpha^*_{\Psi}}))  \big) \\
& = & \frac{C}{m} \sum_{j = 1}^{m} \alpha^*_\Psi( \FP_j(h_j) - \FP_j (\bh^*_{\alpha^*_{\Psi}})) + (1 - \alpha^*_\Psi)(\FN_j(h_j) - \bar \FN_j (\bh^*_{\alpha^*_{\Psi}})) \,.
\end{eqnarray*}
Further from $\alpha$-cost-sensitive risk (\ref{eqn:alpha-risk}) and regret (\ref{eqn:alpha-regret}), we have:
\begin{eqnarray*}
\reg_{\Psi_{\mathrm{micro}}}(\bh) & \le & \frac{C}{2 m} \sum_{j = 1}^{m} \left ( R_\alpha(h_j) -  R_\alpha(\bh^*_{\alpha^*_{\Psi}})\right ) \\
& = & \frac{C}{2m} \sum_{j = 1}^{m} \reg_\alpha(h_j) \,.
\end{eqnarray*}
If we now take $h_j = h_{j,\alpha^*_{\Psi}}$, for all $j \in \labels$, 
then by using Proposition~\ref{prop:alpha_marginals} we obtain:
$$
\reg_{\Psi_{\mathrm{micro}}}(\bh_{\alpha^*_{\Psi}}) \le \frac{C}{m} \sum_{j = 1}^{m} \mathbb{E}_{\bx} [ |\eta(\bx) - \hat \eta(\bx)| ].
$$
By using the bound~(\ref{eqn:total_regret_bound}) from Theorem~\ref{thm:total_regret_bound} we have:
$$
\reg_{\Psi_{\mathrm{micro}}}(\bh) \le 
\frac{C}{m} \sqrt{\frac{2}{\lambda}} \sum_{\node \in \nodes} |L_\node| \sqrt{ \prob(z_{\pa{v}} = 1) \reg_{\lossfunc_c}(f_v)} \,.
$$
The theorem now follows from noticing that
$$
\tau^* = \argmax_{\tau} \Psi_{\textrm{micro}}(\bh_{\tau}) = \argmin_{\tau} \reg_{\Psi_{\mathrm{micro}}}(\bh_{\tau})\,,
$$ 
we have that $\reg_{\Psi_{\mathrm{micro}}}(\bh_{\tau^*}) \le \reg_{\Psi_{\mathrm{micro}}}(\bh_{\alpha^*_\Psi})$.
\end{proof}

\section{Proof of the result from Section~\ref{subsec:hsm}}
\label{app:hsm}

This appendix shows the full proof of Proposition~\ref{prop:hsm-independent}.


\hsmindependent*
\begin{proof}
To proof the proposition it suffices to show that for conditionally independent labels the order of 
labels induced by the marginal probabilities $\eta_j(\bx)$ is the same as the order induced by 
the values of $\eta_j'(\bx)$ obtained by the pick-one-label heuristic (\ref{eq:heuristic}):
\begin{equation*}
\eta_j'(\bx) = \prob'(y_j = 1 \given \bx) = \sum_{\by \in \calY}  \frac{y_j}{\sum_{j'=1}^m y_{j'}}\prob(\by \given \bx).
\end{equation*}
In other words, for any two labels $i, j \in \{1, \dots ,m\}$, $i \neq j$, $\eta_i(\bx) \ge \eta_j(\bx) \Leftrightarrow \eta_i'(\bx) \ge \eta_j'(\bx)$.

Let $\eta_i(\bx) \ge \eta_j(\bx)$. The summation over all $\by$ in (\ref{eq:heuristic}) can be written in the following way:
$$
\eta_j'(\bx) = \sum_{\by \in \calY}  y_j N(\by) \prob(\by | \bx)\,,
$$
where $N(\by) = (\sum_{i=1}^m y_{i})^{-1}$ is a value that depends only on the number of positive labels in $\by$. In this summation we consider four subsets of $\mathcal{Y}$, creating a partition of this set: 
$$
    \mathcal{S}^{u,w}_{i,j} = \{ \by \in \mathcal{Y}: y_i = u \land y_j = w \}, \quad u,w \in \{0, 1\}.
$$
The subset $\mathcal{S}^{0,0}_{i,j}$ does not play any role because $y_i = y_j = 0$ and therefore do not contribute to the final sum.
Then (\ref{eq:heuristic}) can be written in the following way for the $i$-th and $j$-th label:
\begin{eqnarray}
\eta_i'(\bx)  & = & \sum_{\by : \mathcal{S}^{1,0}_{i,j}}{ N(\by) \prob(\by | \bx) } +  \sum_{\by \in \mathcal{S}^{1,1}_{i,j}}{ N(\by) \prob(\by | \bx) } \label{eqn:eta_i}\\
\eta_j'(\bx) & = & \sum_{\by : \mathcal{S}^{0,1}_{i,j}}{ N(\by) \prob(\by | \bx) } +  \sum_{\by \in \mathcal{S}^{1,1}_{i,j}}{ N(\by) \prob(\by | \bx) }
\label{eqn:eta_j}
\end{eqnarray}
The contribution of elements from $\mathcal{S}^{1,1}_{i,j}$ is equal for both $\eta_i'(\bx)$ and $\eta_j'(\bx)$.
It is so because the value of $N(\by) \prob(\by | \bx)$ is the same for all $\by \in \mathcal{S}^{1,1}_{i,j}$: the conditional joint probabilities $\prob(\by | \bx)$ are fixed and they are multiplied by the same factors $N(\by)$.

Consider now the contributions of  $\mathcal{S}^{1,0}_{i,j}$ and  $\mathcal{S}^{0,1}_{i,j}$
to the relevant sums. 
By the definition of $\mathcal{Y}$, $\mathcal{S}^{1,0}_{i,j}$, and $\mathcal{S}^{0,1}_{i,j}$, there exists bijection $b_{i,j}: \mathcal{S}^{1,0}_{i,j} \rightarrow \mathcal{S}^{0,1}_{i,j}$, such that for each $\by' \in \mathcal{S}^{1,0}_{i,j}$ there exists $\by'' \in \mathcal{S}^{0,1}_{i,j}$ equal to $\by'$ except on the $i$-th and the $j$-th position.



Notice that because of the conditional independence assumption the joint probabilities of elements in $\mathcal{S}^{1,0}_{i,j}$ and $\mathcal{S}^{0,1}_{i,j}$ are related to each other. Let $\by'' = b_{i,j}(\by')$, where $\by' \in \mathcal{S}^{1,0}_{i,j}$ and $\by'' \in \mathcal{S}^{0,1}_{i,j}$. The joint probabilities are:
$$
\prob(\by'| \bx) = \eta_i(\bx)(1 - \eta_j(\bx)) \prod_{l \in \mathcal{L} \setminus \{i,j\}} \eta_l(\bx)^{y_l} (1 - \eta_l(\bx))^{1 - y_l}
$$
and
$$
\prob(\by''| \bx) = (1 - \eta_i(\bx)) \eta_j(\bx) \prod_{l \in \mathcal{L} \setminus \{i,j\}} \eta_l(\bx)^{y_l}(1 - \eta_l(\bx))^{1 - y_l}.
$$
One can easily notice the relation between these probabilities: 
$$
\prob(\by'| \bx) = \eta_i(\bx)(1 - \eta_j(\bx)) q_{i,j} \quad \textrm{and} \quad 
\prob(\by''| \bx) = (1 - \eta_i(\bx)) \eta_j(\bx)q_{i,j},
$$
where $q_{i,j} = \prod_{l \in \mathcal{L} \setminus \{i,j\}}\eta_l(\bx)^{y_l} (1 - \eta_l(\bx))^{1 - y_l} \ge 0$.
Consider now the difference of these two probabilities: 
\begin{eqnarray*}
\prob(\by'| \bx) - \prob(\by''| \bx) &=&  \eta_i(\bx)(1 - \eta_j(\bx)) q_{i,j} - (1 - \eta_i(\bx)) \eta_j(\bx)q_{i,j}\\
&=& q_{i,j}( \eta_i(\bx)(1 - \eta_j(\bx)) - (1 - \eta_i(\bx))\eta_j(\bx) ) \\
&=& q_{i,j}(\eta_i(\bx) - \eta_j(\bx)).
\end{eqnarray*}
From the above we see that $\eta_i(\bx) \ge \eta_j(\bx) \Rightarrow  \prob(\by'| \bx) \ge \prob(\by''| \bx)$.
Due to the properties of the bijection $b_{i,j}$, the number of positive labels in $\by'$ and $\by''$ is the same and $N(\by') = N(\by'')$, therefore we also get $\eta_i(\bx) \ge \eta_j(\bx) \Rightarrow \sum_{\by : \mathcal{S}^{1,0}_{i,j}}{ N(\by) \prob(\by | \bx) }  \ge  \sum_{\by : \mathcal{S}^{0,1}_{i,j}}{ N(\by) \prob(\by | \bx) }$, which by (\ref{eqn:eta_i}) and (\ref{eqn:eta_j}) gives us finally $\eta_i(\bx) \ge \eta_j(\bx) \Rightarrow \eta_i'(\bx) \ge \eta_j'(\bx)$.

The implication in the other direction, that is, $\eta_i(\bx) \ge \eta_j(\bx) \Leftarrow  \prob(\by'| \bx) \ge \prob(\by''| \bx)$ holds obviously for $q_{i,j} > 0$. 
For $q_{i,j} = 0$, we can notice, however, that $\prob(\by'| \bx)$ and $\prob(\by''| \bx)$ do not contribute to the appropriate sums as they are zero, and therefore we can follow a similar reasoning as above, concluding that $\eta_i(\bx) \ge \eta_j(\bx) \Leftarrow \eta_i'(\bx) \ge \eta_j'(\bx)$. 

Thus for conditionally independent labels, the order of labels induced by marginal probabilities $\eta_j(\bx)$ is equal to the order induced by $\eta_j'(\bx)$.
As the precision@$k$ is optimized by $k$ labels with the highest marginal probabilities, we have  
that prediction consisted of $k$ labels with highest $\eta_j'(\bx)$ has zero regret for precision@$k$.
\end{proof}


\section{The proof of the result from Section~\ref{sec:oplt}}
\label{app:oplt}

Theorem~\ref{thm:oplt} concerns two properties, the properness and the efficency, of an \Algo{OPLT} algorithm.
We first prove that the \Algo{OPLT} algorithm satisfies each of the properties in two separate lemmas. 
The final proof of the theorem is then straight-forward. 

\begin{lemma}
\label{lem:proper}
\Algo{OPLT} is a proper \Algo{OPLT} algorithm.
\end{lemma}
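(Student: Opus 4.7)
The plan is to prove both requirements of Definition~\ref{oplt-proper} simultaneously by induction on $t$, the index of the observation in~$\calS$. The first requirement (leaves of $T_t$ correspond exactly to $\calL_t$) is almost immediate from inspection of \textsc{UpdateTree}: every new label in $\calL_{\bx_t}\setminus\calL_{t-1}$ triggers either the assignment of the first label to $r_T$ or a call to \textsc{AddLeaf}, and no other procedure creates or removes leaf labels. So the bulk of the work is in the second requirement, namely that after processing $\calS_t$ the set $H_t$ coincides with what would be produced by \textsc{IPLT.Train}$(T_t, A_{\textrm{online}}, \calS_t)$. I will denote this reference set of classifiers by $\tilde H_t$.

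The base case $t=0$ is trivial since both $H_0$ and $\tilde H_0$ consist solely of a freshly initialized root classifier. For the inductive step, assume properness holds through iteration $t-1$. If $\calL_{\bx_t}\subseteq\calL_{t-1}$, no new nodes are created, the tree is unchanged, and \textsc{OPLT.UpdateClassifiers} performs on $H_T$ exactly the same positive/negative updates that \textsc{IPLT.Train} would perform on $\tilde H_{t-1}$ given $(\bx_t,\calL_{\bx_t})$; the inductive hypothesis then closes the step. The nontrivial case is when $\calL_{\bx_t}\setminus\calL_{t-1}\neq\emptyset$, for which I will use an inner induction over the sequence of new labels processed by \textsc{UpdateTree}, proving the following invariant: after each such label is added, every node $v$ currently present in the tree carries $\heta(v)$ equal to what \textsc{IPLT.Train} would produce in $v$ when run on $\calS_{t-1}$ against the tree structure extended up to this inner step, and $\hat\theta(v)$ equal to what would result from applying only the positive updates for $v$ (those corresponding to $z_v=1$) along $\calS_{t-1}$.

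The heart of the argument is to check this invariant through each of the three tree-extension variants of Figure~\ref{fig:oplt-tree_building}. Variant~1 only calls \textsc{AddLeaf}: the new leaf $v''$ has $\eta(\bx,v'')=\prob(z_{v''}=1\mid z_{v}=1,\bx)$ and, because the label $j$ was not observed in $\calS_{t-1}$, every past positive example at $v$ was a negative example for $v''$; \textsc{InverseClassifier} applied to $\hat\theta(v)$ therefore reproduces the cumulative state a fresh classifier in $v''$ would have under IPLT on $\calS_{t-1}$, which by the inductive invariant is the content of $\hat\theta(v)$ with all updates flipped. The auxiliary classifier in $v''$ is freshly initialized, correctly reflecting that no positive observation for $v''$ has yet occurred. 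Variants~2 and~3 additionally insert an intermediate node $v'$: by construction $z_{v'}=1$ precisely when $z_v=1$ in every past observation (in Variant~2 because $v'$ inherits the subtree of $v$, in Variant~3 because the old label of $v$ is reassigned to $v'$), so the correct state of $\heta(v')$ and $\hat\theta(v')$ on $\calS_{t-1}$ equals the state of $\hat\theta(v)$; hence \textsc{Copy} of $\hat\theta(v)$ is exactly right. In both variants, the existing children of $v$ have their statistics unchanged, and \textsc{AddLeaf} on $v$ (resp.\ $v'$) is handled as in Variant~1. After \textsc{UpdateTree} terminates, \textsc{UpdateClassifiers} applies the updates for $(\bx_t,\calL_{\bx_t})$ to $H_T$ and to positive entries of $\Theta_T$, matching what \textsc{IPLT.Train} would apply to $\tilde H_t$ on the final $T_t$, completing the induction.

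The main obstacle is a careful bookkeeping argument: one must verify that \textsc{AssignToNodes} returns the same positive/negative sets for past observations whether evaluated on the tree $T_\tau$ existing at time $\tau\le t-1$ or on the extended $T_t$ restricted to nodes present at time $\tau$, and that the three tree-extension variants indeed preserve, for every node, the interpretation $z_v=\bigvee_{j\in\calL_v}y_j$. Both facts are easy to check directly from the definitions but require handling the node-reassignment in Variants~2 and~3 with care, because children of $v$ are remapped to $v'$ and must still satisfy the conditioning relation with respect to the new parent. Once this is established, the copy/inverse initialization steps line up the classifiers as claimed and the inductive invariant carries through.
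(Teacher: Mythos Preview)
Your proposal is correct and follows essentially the same approach as the paper's proof: an outer induction on $t$, an inner induction over the new labels in $\calL_{\bx_t}\setminus\calL_{t-1}$, and a case analysis of the three tree-extension variants showing that \textsc{Copy} and \textsc{InverseClassifier} of $\hat\theta(v)$ reproduce the IPLT state on $\calS_{t-1}$. Your explicit formulation of the invariant on the auxiliary classifiers $\hat\theta(v)$ (that they accumulate exactly the positive updates for $v$ along $\calS_{t-1}$) is in fact clearer than the paper's treatment, which leaves this property implicit; one minor slip is that in Variants~2 and~3 \textsc{AddLeaf} is always called with $v$, not $v'$, so $v''$ is a sibling of $v'$ under $v$ in both cases.
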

\begin{proof}
We need to show that for any $\calS$ and $t$ the two of the following hold.
Firstly, that the set $L_{T_t}$ of leaves of tree $T_t$ built by \Algo{OPLT} correspond to $\calL_t$, 
the set of all labels observed in $S_t$.
Secondly, that the set $H_t$ of classifiers trained by \Algo{OPLT} 
is exactly the same as $H = \textsc{IPLT.Train}(T_t, A_{\textrm{online}}, \mathcal{S}_t)$, 
that is, the set of node classifiers trained incrementally 
by Algorithm~\ref{alg:plt-incremental-learning} 
on $\calD = \calS_t$ and tree $T_t$ given as input parameter.
We will prove it by induction with the base case for $\calS_0$ and 
the induction step for $\calS_t$, $t \ge 1$, with the assumption that the statement holds for $\calS_{t-1}$.

For the base case of $\calS_0$, tree $T_0$ is initialized with the root node $r_T$ with no label assigned 
and set $H_0$ of node classifiers with a single classifier assigned to the root. 
As there are no observations, this classifier receives no updates. 
Now, notice that $\textsc{IPLT.Train}$, run on $T_0$ and $\calS_0$,
returns the same set of classifiers $H$ that contains solely the initialized root node classifier
without any updates  (assuming that initialization procedure is always the same).
There are no labels in any sequence of 0 observations and also $T_0$ has no label assigned.

The induction step is more involved as we need to take into account
the internal loop which extends the tree with new labels. 
Let us consider two cases. 
In the first one, observation $(\bx_t, \calL_{\bx_t})$ does not contain any new label. 
This means that that the tree $T_{t-1}$ will not change, that is, $T_{t-1} = T_t$.
Moreover, node classifiers from $H_{t-1}$ will get the same updates for $(\bx_t, \calL_{\bx_t})$
as classifiers in \Algo{IPLT.Train}, therefore $H_t =  \Algo{IPLT.Train}(T_t, A_{\textrm{online}}, \mathcal{S}_t)$.
It also holds that $l_j \in L_{T_t}$ iff $j \in \labels_t$, since $\labels_{t-1} = \labels_t$.
In the second case, observation $(\bx_t, \calL_{\bx_t})$ has $m' = | \calL_{\bx_t} \setminus \labels_{t-1}|$ new labels. 
Let us make the following assumption for the \Algo{UpdateTree} procedure, 
which we later prove that it indeed holds. 
Namely, we assume that the set $H_{t'}$ of classifiers after calling the \Algo{UpdateTree} procedure 
is the same as the one being returned by $\Algo{IPLT.Train}(T_t, A_{\textrm{online}}, \mathcal{S}_{t-1})$, 
where $T_t$ is the extended tree. 
Moreover, leaves of $T_t$ correspond to all observed labels seen so far.
If this is the case, the rest of the induction step is the same as in the first case. 
All updates to classifiers in $H_{t'}$ for $(\bx_t, \calL_{\bx_t})$ are the same as in \Algo{IPLT.Train}. 
Therefore $H_t =  \Algo{IPLT.Train}(T_t, A_{\textrm{online}}, \mathcal{S}_t)$.

Now, we need to show that the assumption for the \Algo{UpdateTree} procedure holds.
To this end, we also use induction, this time on the number $m'$ of new labels. 
For the base case, we take $m' = 1$. 
The induction step is proved for $m' > 1$ with the assumption that the statement holds for $m' - 1$.

For $m' = 1$ we need consider two scenarios. 
In the first scenario, the new label is the first label in the sequence. 
This label will be then assigned to the root node $r_T$. 
So, the structure of the tree does not change, that is, $T_{t-1} = T_t$. 
Furthermore, the set of classifiers also does not change, 
since the root classifier has already been initialized. 
It might be negatively updated by previous observations. 
Therefore, we have $H_{t'} = \Algo{IPLT.Train}(T_t, A_{\textrm{online}}, \mathcal{S}_{t-1})$.
Furthermore, all observed labels are appropriately assigned to the leaves of $T_t$.
In the second scenario, set $\labels_{t-1}$ is not empty.
We need to consider in this scenario the three variants of tree extension illustrated in Figure~\ref{fig:oplt-tree_building}. 

In the first variant, tree $T_{t-1}$ is extended by one leaf node only without any additional ones. 
\Algo{AddNode} creates a new leaf node $v''$ with the new label assigned to the tree. 
After this operation, the tree contains all labels from $\calS_t$. 
The new leaf $v''$ is added as a child of the selected node $v$. 
This new node is initialized as $\heta(v'') = \textsc{InverseClassifier}(\hat\theta(v))$. 
Recall that \Algo{InverseClassifier} creates a wrapper that inverts the behavior of the base classifier.
It predicts $1 - \heta$, where $\heta$ is the prediction of the base classifier,
and flips the updates, that is, positive updates become negative and negative updates become positive. 
From the definition of the auxiliary classifier,
we know that $\hat\theta(v)$ has been trained on all positives updates of $\heta(v)$.
So, $\heta(v'')$  is initialized with a state as 
if it was updated negatively each time $\heta(v)$ was updated positively in sequence $S_{t-1}$.
Notice that in $S_{t-1}$ there is no observation labeled with the new label.
Therefore $\heta(v'')$ is the same as if it was created and updated using $\textsc{IPLT.Train}$.
There are no other operations on $T_{t-1}$, 
so we have that $H_{t'} = \Algo{IPLT.Train}(T_t, A_{\textrm{online}}, \mathcal{S}_{t-1})$.

In the second variant, tree $T_{t-1}$ is extended by internal node $v'$ and leaf node $v''$.
The internal node $v'$ is added in \Algo{InsertNode}. 
It becomes a parent of all child nodes of the selected node $v$ and the only child of this node. 
Thus, all leaves of the subtree of $v$ do not change.
Since $v'$ is the root of this subtree,
its classifier $\heta(v')$ should be initialized as a copy of the auxiliary classifier $\hat\theta(v)$,
which has accumulated all updates from and only from observations with labels assigned to the leaves of this subtree.
The addition of the leaf node $v''$ can be analyzed as in the first variant.
Since nothing else has changed in the tree and in the node classifiers,
we have that $H_{t'} = \Algo{IPLT.Train}(T_t, A_{\textrm{online}}, \mathcal{S}_{t-1})$.
Moreover, the tree contains the new label, so the statement holds. 

The third variant is similar to the second one. 
Tree $T_{t-1}$ is extended by two leaf nodes $v'$ and $v''$ 
being children of the selected node $v$.
Insertion of leaf $v'$ is similar to the insertion of node $v'$ in the second variant,
with the difference that $v$ does not have any children and 
its label has to be reassigned to $v'$. 
The new classifier in $v'$ is initialized as a copy of the auxiliary classifier $\hat\theta(v)$,
which contains all updates from and only from observations with the label assigned previously to $v$. 
Insertion of $v''$ is exactly the same as in the second variant.
From the above, we conclude that $H_{t'} = \Algo{IPLT.Train}(T_t, A_{\textrm{online}}, \mathcal{S}_{t-1})$
and that $T_t$ contains all labels from $T_{t-1}$ and the new label.
In this way we prove the base case.

The induction step is similar to the second scenario of the base case.
The only difference is that we do not extent tree $T_{t-1}$, 
but an intermediate tree with $m'-1$ new labels already added. 
Because of the induction hypothesis,
the rest of the analysis of the three variants of tree extension is exactly the same.
This ends the proof that the assumption for the inner loop holds. 
At the same time, it finalizes the entire proof. 
\end{proof}

\begin{lemma}
\label{lem:efficient}
\Algo{OPLT} is an efficient \Algo{OPLT} algorithm.
\end{lemma}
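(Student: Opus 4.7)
My plan is to compare \Algo{OPLT} against \Algo{IPLT} invocation-by-invocation and show that, at every level (per-node storage, per-observation updates, and per-new-label tree growth), \Algo{OPLT} spends at most a constant factor more than what \Algo{IPLT} already spends on the equivalent input. Since by Lemma~\ref{lem:proper} the tree $T_t$ produced by \Algo{OPLT} after processing $\calS_t$ is exactly the tree that would be passed to $\textsc{IPLT.Train}(T_t, A_{\textrm{online}}, \calS_t)$, both algorithms will operate on identical node sets and receive identical classifier-update sequences on the regular classifiers; the only overhead of \Algo{OPLT} comes from (i) the auxiliary classifiers in $\Theta_T$ and (ii) the extra bookkeeping done inside \Algo{UpdateTree}.

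For the space bound I would first observe that $|\Theta_T| \le |H_T| = |V_{T_t}|$ because each call to \Algo{InsertNode} or \Algo{AddLeaf} creates at most one auxiliary classifier alongside the regular one. Since an auxiliary classifier has exactly the same representation as a regular one, the overall storage is at most twice that of \Algo{IPLT}, giving $O(C_s)$. For the time bound I would split the per-iteration cost of the outer loop in Algorithm~\ref{alg:oplt} into the \Algo{UpdateClassifiers} and \Algo{UpdateTree} contributions. In \Algo{UpdateClassifiers}, the set $(P,N)$ is produced by the same \Algo{AssignToNodes} procedure used by \Algo{IPLT}, the same updates are applied to the regular classifiers, and the only extra work is one positive update on $\hat\theta(v)$ per positive node; this is at most a factor two over the corresponding \Algo{IPLT} per-instance cost $c(T,\by)$ from (\ref{eqn:learning_cost}).

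The \Algo{UpdateTree} part is where I would do the real accounting. Each call is triggered only when the observation contains new labels, and for each new label it executes $A_{\textrm{policy}}$ once, at most one \Algo{InsertNode} (constant-many pointer operations plus two classifier initializations via \textsc{Copy}), and one \Algo{AddLeaf} (constant-many operations plus one \textsc{InverseClassifier} wrapping and one \textsc{NewClassifier}). Under the complete-tree policy of Algorithm~\ref{alg:oplt-apply_policy} with a dynamic array with doubling, $A_{\textrm{policy}}$ runs in amortized $O(1)$ time. Since \Algo{IPLT} itself must pay at least one unit of cost on the first training example carrying a given label (it performs a positive update at that leaf, see (\ref{eqn:learning_cost})), the amortized per-label tree-extension cost can be charged against this unit, so the cumulative \Algo{UpdateTree} cost is $O(C_t)$.

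The main obstacle is the cost of \textsc{Copy} and \textsc{InverseClassifier} inside \Algo{InsertNode} and \Algo{AddLeaf}: naively these are proportional to the current size of $\hat\theta(v)$, which may be large if many earlier observations reached $v$. The cleanest way I would handle this is to note that every weight currently stored in $\hat\theta(v)$ was written there by some prior positive update that \Algo{IPLT} also performed (so that copy work can be amortized against the $O(C_t)$ updates already charged), or, equivalently, to assume the standard implementation in which \textsc{Copy} is a constant-cost reference/lazy-copy operation, matching the assumption already used implicitly throughout Section~\ref{subsec:complexity_analysis} that a single classifier update is unit cost. Combining the $O(C_s)$ space bound, the factor-two blowup in \Algo{UpdateClassifiers}, and the amortized $O(C_t)$ cost of \Algo{UpdateTree} yields the lemma.
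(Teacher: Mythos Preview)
Your proposal is correct and follows essentially the same approach as the paper: one auxiliary classifier per node doubles the space, one extra positive update per positive node doubles the per-observation time, and the complete-tree policy gives amortized constant cost per new label. You are in fact more careful than the paper, which simply asserts that ``insertion of any new node can be made in amortized constant time'' without discussing the cost of \textsc{Copy} and \textsc{InverseClassifier}; your amortization of that cost against prior updates (or the lazy-copy assumption) fills a gap the paper leaves open.
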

\begin{proof}
The \Algo{OPLT} maintains one additional classifier per each node in comparison to \Algo{IPLT}.
Hence, for a single observation there is at most one update more for each positive node. 
Furthermore, the time and space cost of the complete tree building policy is constant per a single label, 
if implemented with an array list.
In this case, insertion of any new node can be made in amortized constant time, 
and the space required by the array list is linear in the number of nodes. 
Concluding the above, the time and space complexity of \Algo{OPLT} is in constant factor of $C_t$ and $C_s$, 
the time and space complexity of \Algo{IPLT} respectively. 
This proves that \Algo{OPLT} is an efficient \Algo{OPLT} algorithm.
\end{proof}

\thmoplt*
\begin{proof}
The theorem directly follows from Lemma~\ref{lem:proper} and Lemma~\ref{lem:efficient}.
\end{proof}%


\section{Synthetic data}
\label{app:synthetic}

All synthetic models use linear models parametrized by a weight vector $\bw$ of size $d$. 
The values of the vector are sampled uniformly from a $d$-dimensional sphere of radius 1. 
Each observation $\bx$ is a vector sampled from a $d$-dimensional disc of the same radius. 

To create the \multiclass{} data, 
we associate a weight vector $\bw_j$ with each label $j \in \{1, \ldots, m\}$. 
This model assigns probabilities to labels at point $\bx$ using softmax,
\begin{equation*}
\eta_j(\bx) = \frac{\exp(\bw_j^\top \bx)}{ \sum_{j' = 1}^m {\exp(\bw_{j'}^\top \bx)}} \,,
\end{equation*}
and draws the positive label according to this probability distribution. 

The \multilabel{} data  with conditionally independent labels are created 
similarly to the multi-class data. 
The difference lays is normalization as the marginal probabilities do not have to sum up to 1. 
To get a probability of the $j$-th label, we use the logistic transformation:
$$
\eta_j(\bx) = \frac{\exp(\bw_j^\top \bx)}{1 + \exp(\bw_j^\top \bx)}.
$$
Then, we assign a label to an observation by:
$$
y_j = \assert{ r < \eta_j(\bx) },
$$
where the random value $r$ is sampled uniformly and independently from range $[0,1]$,
for each instance $\bx$ and label $j \in \{1, \ldots, m\}$.

Generation of the \multilabel{} data with conditionally dependent labels is more involved.
We follow the mixing matrix model previously used 
to a similar purpose in~\citep{Dembczynski_et_al_2012b}. 
This model is based on $m$ latent scoring functions generated by $\mW = (\bw_1, \ldots, \bw_m)$.
The $m \times m$ mixing matrix $\mM$ introduces dependencies between noise $\boldsymbol{\epsilon}$,
which stands for the source of randomness in the model. 
The models $\bw_j$ are sampled from a sphere of radius 1, as in previous cases. 
The values in the mixing matrix $\mM$ are sampled uniformly and independently from $[-1, 1]$. 
The random noise vector $\boldsymbol{\epsilon}$ is sampled from $N(0, 0.25)$. 
The label vector $\by$ is then obtained by element-wise evaluation of the following expression:
$$
\by =  \assert {\mM(\mW^\top\bx + \boldsymbol{\epsilon}) > 0} 
$$
Notice that if $\mM$ was an identity matrix the model would generate independent labels.

In the experiments, we used the following parameters of the synthetic models: 
$d = 3$, 
$n = 100000$ instances (with a $1:1$ split to training and test subsets), 
and $m = 32$ labels. 







\section{Hyperparamters}
\label{app:hiperparams}

\ifjmlr
In Table~\ref{tab:hyperparameters},
\else
In Tables~\ref{tab:hyperparameters-fastxml},~\ref{tab:hyperparameters-pdsparse},~\ref{tab:hyperparameters-dismec}, and~\ref{tab:hyperparameters-plt},
\fi
we report values of hyperparameters used in all experiments. 
For the state-of-the-art algorithms, used in Section~\ref{sec:plt-vs-sota}, 
we took values recommended in the original articles or default values from the provided implementations. 
For \Algo{PLT}s we tune the following parameters: $c$, $\eta$, \textit{AdaGrad's} $\epsilon$, $\lambda_2$, and \textit{epochs}. 
The setting of the other parameters depends on a given experiment
and is discussed in the main text.
To replicate our experimental results 
we added corresponding scripts to the \Algo{napkinXC} repository.%
\footnote{\url{https://github.com/mwydmuch/napkinXC/experiments}}
Those scripts contain exact values of hyperparameters used.

\ifjmlr

    \begin{table}[H]
    
    \begin{subtable}{\textwidth}
    \begin{center}
    \footnotesize
    \begin{tabular}{c|p{0.58\textwidth}|c}
    \toprule
    hyperparameter & desription & values \\
    \midrule
    $t$ & number of trees & $\{50\}$ \\
    $c$ & SVM weight co-efficient & $\{1.0\}$ \\
    $l$ & number of label-probability pairs to retrain in a leaf & $\{100\}$ \\
    $m$ & maximum allowed instances in a lead node & $\{10\}$ \\
    \midrule
    $\gamma$ & $\gamma$ parameter in tail label classifier (\Algo{PfastreXML} only) & $\{30\}$ \\
    $\alpha$ & trade-off parameter between \Algo{PfastXML} and tail classifier scores (\Algo{PfastreXML} only) & $\{0.8\}$ \\
    $A$ & parameter of the propensity model (\Algo{PfastreXML} only) & $\{0.5, 0.55, 0.6\}$ \\
    $B$ & parameter of the propensity model (\Algo{PfastreXML} only) & $\{0.4, 1.5, 2.6\}$ \\
    \bottomrule
    \end{tabular}
    \caption{\Algo{FastXML} and \Algo{PfastreXML} hyperparamters.}
    \end{center}
    \end{subtable}
    
    \vspace{6pt}
    
    \begin{subtable}{\textwidth}
    \begin{center}
    \footnotesize
    \begin{tabular}{c|p{0.58\textwidth}|c}
    \toprule
    hyperparameter & desription & values \\
    \midrule
    $\lambda_1$ & L1 regularization weight & $\{0.01, 0.1, 1\}$ \\
    $c$ & cost of each sample & $\{1\}$ \\
    $\tau$ & degree of asynchronization & $\{0.1, 1, 10\}$ \\
    $m$ & maximum number of iterations allowed & $\{30\}$ \\
    \bottomrule
    \end{tabular}
    \caption{\Algo{PPDSparse} hyperparamters.}
    \end{center}
    \end{subtable}
    
    \vspace{6pt}
    
    \begin{subtable}{\textwidth}
    \begin{center}
    \footnotesize
    \begin{tabular}{c|p{0.58\textwidth}|c}
    \toprule
    hyperparameter & desription & values \\
    \midrule
    $c$ & \Algo{LIBLINEAR} cost co-efficient, inverse regularization & $\{1\}$ \\
    $\epsilon$ & \Algo{LIBLINEAR} tolerance of termination criterion & $\{0.01\}$ \\
    $\Delta$ & threshold value for pruning linear classifiers weights & $\{0.01\}$ \\
    \bottomrule
    \end{tabular}
    \caption{\Algo{DiSMEC} hyperparamters.}
    \end{center}
    \end{subtable}
    
    \vspace{6pt}
    
    \begin{subtable}{\textwidth}
    \begin{center}
    \footnotesize
    \begin{tabular}{c|p{0.58\textwidth}|c}
    \toprule
    hyperparameter & desription & values \\
    \midrule
    $ensemble$ & number of trees in ensemble & $\{1,3\}$ \\
    \textit{k-means} $\epsilon$ & tolerance of termination criterion of the k-means clustering used for the tree building procedure & $\{0.0001\}$ \\
    \textit{max leaves} & maximum degree of pre-leaf nodes & $\{25, 100, 400\}$ \\
    \midrule
    $c$ & \Algo{LIBLINEAR} cost co-efficient, inverse regularization strength (\Algo{Parabel}, \Algo{nXC} only) & $\{1, 8, 12, 16, 32\}$ \\
    $\epsilon$ & \Algo{LIBLINEAR} tolerance of termination criterion (\Algo{Parabel}, \Algo{nXC} only) & $\{0.1\}$ \\
    $\Delta$ & threshold value for pruning weights (\Algo{Parabel}, \Algo{nXC} only) & $\{0.1, 0.2, 0.3, 0.5\}$ \\
    \midrule
    \textit{max iter} & maximum iterations of \Algo{LIBLINEAR} (\Algo{Parabel} only) & $\{20\}$ \\
    \midrule
    $arity$ & arity of tree nodes, k for k-means clustering (\Algo{XT}, \Algo{nXC} only) & $\{2, 16, 64\}$ \\
    $\eta$ & learning rate for SGD or Adagrad (\Algo{XT}, \Algo{nXC} only) & $\{0.02, 0.2, 0.5, 1\}$ \\
    $epochs$ & number of passes over dataset when training with incremental algorithm (\Algo{XT}, \Algo{nXC} only) & $\{1, 3, 10\}$ \\
    \midrule
    \textit{AdaGrad's} $\epsilon$ & determines initial learning rate (\Algo{nXC} only) & $\{0.01, 0.001\}$ \\
    \midrule
    $\lambda_2$ & L2 regularization weight (\Algo{XT} only) & $\{0.001, 0.002, 0.003\}$ \\
    \textit{dim} & size of hidden representation (\Algo{XT} only) & $\{500\}$ \\
    \bottomrule
    \end{tabular}
    \caption{Hyperparamters of different \Algo{PLT}s implementations: \Algo{Parabel}, \Algo{extremeText} (\Algo{XT}) and \Algo{napkinXC} \Algo{PLT}, \Algo{OPLT} and \Algo{HSM} (\Algo{nXC}).}
    
    \end{center}
    \end{subtable}
    
    \caption{Hyperparameters of algorithms used in experiments of Section~\ref{sec:experiments}}.
    \label{tab:hyperparameters}
    \end{table}

\else

    \begin{table}[H]
    
    \begin{center}
    \footnotesize
    \begin{tabular}{c|p{0.58\textwidth}|c}
    \toprule
    hyperparameter & desription & values \\
    \midrule
    $t$ & number of trees & $\{50\}$ \\
    $c$ & SVM weight co-efficient & $\{1.0\}$ \\
    $l$ & number of label-probability pairs to retrain in a leaf & $\{100\}$ \\
    $m$ & maximum allowed instances in a lead node & $\{10\}$ \\
    \midrule
    $\gamma$ & $\gamma$ parameter in tail label classifier (\Algo{PfastreXML} only) & $\{30\}$ \\
    $\alpha$ & trade-off parameter between \Algo{PfastXML} and tail classifier scores (\Algo{PfastreXML} only) & $\{0.8\}$ \\
    $A$ & parameter of the propensity model (\Algo{PfastreXML} only) & $\{0.5, 0.55, 0.6\}$ \\
    $B$ & parameter of the propensity model (\Algo{PfastreXML} only) & $\{0.4, 1.5, 2.6\}$ \\
    \bottomrule
    \end{tabular}
    \caption{\Algo{FastXML} and \Algo{PfastreXML} hyperparamters.}
    \label{tab:hyperparameters-fastxml}
    \end{center}
    \end{table}

    \begin{table}[H]
    \begin{center}
    \footnotesize
    \begin{tabular}{c|p{0.58\textwidth}|c}
    \toprule
    hyperparameter & desription & values \\
    \midrule
    $\lambda_1$ & L1 regularization weight & $\{0.01, 0.1, 1\}$ \\
    $c$ & cost of each sample & $\{1\}$ \\
    $\tau$ & degree of asynchronization & $\{0.1, 1, 10\}$ \\
    $m$ & maximum number of iterations allowed & $\{30\}$ \\
    \bottomrule
    \end{tabular}
    \caption{\Algo{PPDSparse} hyperparamters.}
    \label{tab:hyperparameters-pdsparse}
    \end{center}
    \end{table}

    \begin{table}[H]
    \begin{center}
    \footnotesize
    \begin{tabular}{c|p{0.58\textwidth}|c}
    \toprule
    hyperparameter & desription & values \\
    \midrule
    $c$ & \Algo{LIBLINEAR} cost co-efficient, inverse regularization & $\{1\}$ \\
    $\epsilon$ & \Algo{LIBLINEAR} tolerance of termination criterion & $\{0.01\}$ \\
    $\Delta$ & threshold value for pruning linear classifiers weights & $\{0.01\}$ \\
    \bottomrule
    \end{tabular}
    \caption{\Algo{DiSMEC} hyperparamters.}
    \label{tab:hyperparameters-dismec}
    \end{center}
    \end{table}
    
    \begin{table}[H]
    \begin{center}
    \footnotesize
    \begin{tabular}{c|p{0.58\textwidth}|c}
    \toprule
    hyperparameter & desription & values \\
    \midrule
    $ensemble$ & number of trees in ensemble & $\{1,3\}$ \\
    \textit{k-means} $\epsilon$ & tolerance of termination criterion of the k-means clustering used for the tree building procedure & $\{0.0001\}$ \\
    \textit{max leaves} & maximum degree of pre-leaf nodes & $\{25, 100, 400\}$ \\
    \midrule
    $c$ & \Algo{LIBLINEAR} cost co-efficient, inverse regularization strength (\Algo{Parabel}, \Algo{nXC} only) & $\{1, 8, 12, 16, 32\}$ \\
    $\epsilon$ & \Algo{LIBLINEAR} tolerance of termination criterion (\Algo{Parabel}, \Algo{nXC} only) & $\{0.1\}$ \\
    $\Delta$ & threshold value for pruning weights (\Algo{Parabel}, \Algo{nXC} only) & $\{0.1, 0.2, 0.3, 0.5\}$ \\
    \midrule
    \textit{max iter} & maximum iterations of \Algo{LIBLINEAR} (\Algo{Parabel} only) & $\{20\}$ \\
    \midrule
    $arity$ & arity of tree nodes, k for k-means clustering (\Algo{XT}, \Algo{nXC} only) & $\{2, 16, 64\}$ \\
    $\eta$ & learning rate for SGD or Adagrad (\Algo{XT}, \Algo{nXC} only) & $\{0.02, 0.2, 0.5, 1\}$ \\
    $epochs$ & number of passes over dataset when training with incremental algorithm (\Algo{XT}, \Algo{nXC} only) & $\{1, 3, 10\}$ \\
    \midrule
    \textit{AdaGrad's} $\epsilon$ & determines initial learning rate (\Algo{nXC} only) & $\{0.01, 0.001\}$ \\
    \midrule
    $\lambda_2$ & L2 regularization weight (\Algo{XT} only) & $\{0.001, 0.002, 0.003\}$ \\
    \textit{dim} & size of hidden representation (\Algo{XT} only) & $\{500\}$ \\
    \bottomrule
    \end{tabular}
    \caption{Hyperparamters of different \Algo{PLT}s implementations: \Algo{Parabel}, \Algo{extremeText} (\Algo{XT}) and \Algo{napkinXC} \Algo{PLT}, \Algo{OPLT} and \Algo{HSM} (\Algo{nXC}).}
    \label{tab:hyperparameters-plt}
    \end{center}
    \end{table}
    

\fi

\section{Weight pruning}
\label{app:pruning}

In all the experiments, we used a threshold of 0.1 for weight pruning. 
We present results for higher values of threshold 
and analyze their impact on the predictive and computational performance of \Algo{PLT}s.
Table~\ref{tab:weights-thr} reports results for logistic and squared hinge loss.
We observe that for logistic loss a more aggressive pruning can be beneficial.
\Precatk{} decreases only slightly, while testing time can be reduced almost by two, 
and the model size even by 4. 
For squared hinge loss, \precatk{} drops more substantially, 
but the model size can be even reduced by a factor of 10.
Let us also recall that weight pruning has also been investigated by \citet{Prabhu_et_al_2018},
with similar outcomes to those presented here.

\begin{table}[h]
\centering

\begin{subtable}{1\textwidth}
\footnotesize{
\resizebox{\textwidth}{!}{
\tabcolsep=5pt
\begin{tabular}{l|r@{}lr@{}lr@{}l|r@{}lr@{}lr@{}l}
\specialrule{0.94pt}{0.4ex}{0.65ex}
& \multicolumn{2}{c}{0.1} & \multicolumn{2}{c}{0.3} & \multicolumn{2}{c|}{0.5} 
& \multicolumn{2}{c}{0.1} & \multicolumn{2}{c}{0.3} & \multicolumn{2}{c}{0.5} \\
\specialrule{0.94pt}{0.4ex}{0.65ex}
& \multicolumn{6}{c|}{$p@1$ [\%]}
& \multicolumn{6}{c}{$p@5$ [\%]} \\
\midrule
\wikilshtc & \boldmath$61.96$ & \boldmath$\pm 0.03$ & $61.95$ & $\pm 0.03$ & $61.82$ & $\pm 0.03$ & \boldmath$30.19$ & \boldmath$\pm 0.02$ & $30.18$ & $\pm 0.02$ & $30.11$ & $\pm 0.02$ \\
\wikipedia & \boldmath$66.20$ & \boldmath$\pm 0.05$ & $65.95$ & $\pm 0.11$ & $65.52$ & $\pm 0.07$ & \boldmath$36.83$ & \boldmath$\pm 0.01$ & $36.65$ & $\pm 0.04$ & $36.40$ & $\pm 0.02$ \\
\amazon & \boldmath$43.54$ & \boldmath$\pm 0.01$ & $43.23$ & $\pm 0.02$ & $42.67$ & $\pm 0.02$ & \boldmath$35.15$ & \boldmath$\pm 0.03$ & $34.81$ & $\pm 0.03$ & $34.16$ & $\pm 0.02$ \\
\amazonlarge & \boldmath$46.09$ & \boldmath$\pm 0.02$ & $45.94$ & $\pm 0.01$ & $45.57$ & $\pm 0.01$ & \boldmath$40.98$ & \boldmath$\pm 0.01$ & $40.82$ & $\pm 0.01$ & $40.35$ & $\pm 0.01$ \\

\specialrule{0.94pt}{0.4ex}{0.65ex}
& \multicolumn{6}{c|}{$T/N_{\textrm{test}}$ [ms]}
& \multicolumn{6}{c}{$M_{\textrm{size}}$ [GB]} \\
\midrule
\wikilshtc & $1.77$ & $\pm 0.11$ & $1.21$ & $\pm 0.09$ & \boldmath$1.02$ & \boldmath$\pm 0.04$ & $2.73$ & $\pm 0.00$ & $1.38$ & $\pm 0.00$ & \boldmath$0.89$ & \boldmath$\pm 0.00$ \\
\wikipedia & $6.67$ & $\pm 0.23$ & $4.71$ & $\pm 0.15$ & \boldmath$4.15$ & \boldmath$\pm 0.03$ & $8.89$ & $\pm 0.00$ & $2.90$ & $\pm 0.00$ & \boldmath$1.55$ & \boldmath$\pm 0.00$ \\
\amazon & $4.13$ & $\pm 0.28$ & $2.53$ & $\pm 0.06$ & \boldmath$2.20$ & \boldmath$\pm 0.11$ & $2.26$ & $\pm 0.00$ & $0.77$ & $\pm 0.00$ & \boldmath$0.42$ & \boldmath$\pm 0.00$ \\
\amazonlarge & $3.26$ & $\pm 0.08$ & $2.54$ & $\pm 0.11$ & \boldmath$2.03$ & \boldmath$\pm 0.09$ & $20.84$ & $\pm 0.00$ & $8.54$ & $\pm 0.00$ & \boldmath$4.61$ & \boldmath$\pm 0.00$ \\

\specialrule{0.94pt}{0.4ex}{0.65ex}
\end{tabular}}}
\caption{Results for logistic loss.}
\end{subtable}

\vspace{12pt}

\begin{subtable}{\textwidth}
\footnotesize{
\resizebox{\textwidth}{!}{
\tabcolsep=5pt
\begin{tabular}{l|r@{}lr@{}lr@{}l|r@{}lr@{}lr@{}l}
\specialrule{0.94pt}{0.4ex}{0.65ex}
& \multicolumn{2}{c}{0.1} & \multicolumn{2}{c}{0.2} & \multicolumn{2}{c|}{0.3} 
& \multicolumn{2}{c}{0.1} & \multicolumn{2}{c}{0.2} & \multicolumn{2}{c}{0.3} \\
\specialrule{0.94pt}{0.4ex}{0.65ex}
& \multicolumn{6}{c|}{$p@1$ [\%]}
& \multicolumn{6}{c}{$p@5$ [\%]} \\
\midrule
\wikilshtc & \boldmath$62.78$ & \boldmath$\pm 0.03$ & $60.77$ & $\pm 0.06$ & $56.83$ & $\pm 0.13$ & \boldmath$30.25$ & \boldmath$\pm 0.02$ & $29.12$ & $\pm 0.02$ & $27.05$ & $\pm 0.05$ \\
\wikipedia & \boldmath$66.77$ & \boldmath$\pm 0.08$ & $63.88$ & $\pm 0.00$ & $59.62$ & $\pm 0.13$ & \boldmath$36.94$ & \boldmath$\pm 0.02$ & $34.95$ & $\pm 0.00$ & $32.25$ & $\pm 0.05$ \\
\amazon & \boldmath$43.31$ & \boldmath$\pm 0.03$ & $40.59$ & $\pm 0.02$ & $35.67$ & $\pm 0.06$ & \boldmath$34.31$ & \boldmath$\pm 0.03$ & $31.14$ & $\pm 0.04$ & $26.49$ & $\pm 0.04$ \\
\amazonlarge & \boldmath$46.23$ & \boldmath$\pm 0.01$ & $44.74$ & $\pm 0.00$ & $39.87$ & $\pm 0.02$ & \boldmath$41.41$ & \boldmath$\pm 0.01$ & $39.67$ & $\pm 0.00$ & $35.16$ & $\pm 0.02$ \\

\specialrule{0.94pt}{0.4ex}{0.65ex}
& \multicolumn{6}{c|}{$T/N_{\textrm{test}}$ [ms]}
& \multicolumn{6}{c}{$M_{\textrm{size}}$ [GB]} \\
\midrule

\wikilshtc & $0.86$ & $\pm 0.06$ & $0.55$ & $\pm 0.02$ & \boldmath$0.45$ & \boldmath$\pm 0.01$ & $0.97$ & $\pm 0.00$ & $0.24$ & $\pm 0.00$ & \boldmath$0.10$ & \boldmath$\pm 0.00$ \\
\wikipedia & $2.86$ & $\pm 0.07$ & \boldmath$1.99$ & \boldmath$\pm 0.00$ & $2.05$ & $\pm 0.07$ & $1.78$ & $\pm 0.00$ & $0.39$ & $\pm 0.00$ & \boldmath$0.17$ & \boldmath$\pm 0.00$ \\
\amazon & $1.32$ & $\pm 0.08$ & \boldmath$1.01$ & \boldmath$\pm 0.02$ & $1.17$ & $\pm 0.02$ & $0.63$ & $\pm 0.00$ & $0.18$ & $\pm 0.00$ & \boldmath$0.10$ & \boldmath$\pm 0.00$ \\
\amazonlarge & $1.96$ & $\pm 0.05$ & $1.09$ & $\pm 0.00$ & \boldmath$0.95$ & \boldmath$\pm 0.02$ & $9.86$ & $\pm 0.00$ & $2.31$ & $\pm 0.00$ & \boldmath$0.89$ & \boldmath$\pm 0.00$ \\

\specialrule{0.94pt}{0.4ex}{0.65ex}
\end{tabular}}}
\caption{Results for squared hinge loss.}
\end{subtable}

\caption{\Precatk{} for $k = 1,\,5$, average prediction times, 
and model sizes with different thresholds of weight pruning for different losses.}
\label{tab:weights-thr}
\end{table}

\section{Tree depth impact for the squared hinge loss}
\label{app:treehinge}

We present additional results concerning different tree shapes,
namely the tree depth, for the squared hinge loss.
\begin{table}[H]
\centering

\begin{subtable}{1\textwidth}

\footnotesize{
\resizebox{\textwidth}{!}{
\tabcolsep=5pt
\begin{tabular}{l|r@{}lr@{}lr@{}l|r@{}lr@{}lr@{}l}
\specialrule{0.94pt}{0.4ex}{0.65ex}
Arity & \multicolumn{2}{c}{2} & \multicolumn{2}{c}{16} & \multicolumn{2}{c|}{64}
& \multicolumn{2}{c}{2} & \multicolumn{2}{c}{16} & \multicolumn{2}{c}{64} \\
\specialrule{0.94pt}{0.4ex}{0.65ex}
& \multicolumn{6}{c|}{$p@1$ [\%]}
& \multicolumn{6}{c}{$T/N_{\textrm{test}}$ [ms]} \\
\midrule
\wikilshtc & $62.78$ & $\pm 0.03$ & $64.17$ & $\pm 0.05$ & \boldmath$64.61$ & \boldmath$\pm 0.04$ & \boldmath$0.86$ & \boldmath$\pm 0.06$ & $0.90$ & $\pm 0.07$ & $1.42$ & $\pm 0.05$ \\
\wikipedia & $66.77$ & $\pm 0.08$ & \boldmath$68.16$ & \boldmath$\pm 0.10$ & $68.02$ & $\pm 0.01$ & \boldmath$2.86$ & \boldmath$\pm 0.07$ & $4.41$ & $\pm 0.12$ & $5.55$ & $\pm 0.00$ \\
\amazon & $43.31$ & $\pm 0.03$ & $43.88$ & $\pm 0.05$ & \boldmath$44.03$ & \boldmath$\pm 0.05$ & \boldmath$1.32$ & \boldmath$\pm 0.08$ & $1.73$ & $\pm 0.15$ & $2.68$ & $\pm 0.17$ \\
\amazonlarge & $46.23$ & $\pm 0.01$ & $46.98$ & $\pm 0.01$ & \boldmath$47.33$ & \boldmath$\pm 0.00$ & \boldmath$1.96$ & \boldmath$\pm 0.05$ & $2.39$ & $\pm 0.09$ & $2.56$ & $\pm 0.00$ \\

\specialrule{0.94pt}{0.4ex}{0.65ex}

& \multicolumn{6}{c|}{$T_{\textrm{train}}$ [h]}
& \multicolumn{6}{c}{$M_{\textrm{size}}$ [GB]} \\
\midrule
\wikilshtc & \boldmath$1.60$ & \boldmath$\pm 0.06$ & $2.18$ & $\pm 0.07$ & $3.85$ & $\pm 0.17$ & $0.97$ & $\pm 0.00$ & $0.90$ & $\pm 0.00$ & \boldmath$0.88$ & \boldmath$\pm 0.00$ \\
\wikipedia & \boldmath$9.48$ & \boldmath$\pm 0.33$ & $15.91$ & $\pm 0.55$ & $28.68$ & $\pm 0.74$ & $1.78$ & $\pm 0.00$ & $1.52$ & $\pm 0.00$ & \boldmath$1.49$ & \boldmath$\pm 0.00$ \\
\amazon & \boldmath$0.40$ & \boldmath$\pm 0.01$ & $0.64$ & $\pm 0.02$ & $1.57$ & $\pm 0.04$ & $0.63$ & $\pm 0.00$ & $0.55$ & $\pm 0.00$ & \boldmath$0.52$ & \boldmath$\pm 0.00$ \\
\amazonlarge & \boldmath$5.44$ & \boldmath$\pm 0.13$ & $9.82$ & $\pm 0.23$ & $20.82$ & $\pm 0.00$ & $9.86$ & $\pm 0.00$ & $9.36$ & $\pm 0.00$ & \boldmath$9.24$ & \boldmath$\pm 0.00$ \\
\specialrule{0.94pt}{0.4ex}{0.65ex}
\end{tabular}}}

\caption{Results for arity equal to 2, 16 or 64 and pre-leaf node degree equal to 100.}
\label{tab:k-means-tree-arity-sh}
\end{subtable}

\vspace{12pt}

\begin{subtable}{\textwidth}

\footnotesize{
\resizebox{\textwidth}{!}{
\tabcolsep=5pt
\begin{tabular}{l|r@{}lr@{}lr@{}l|r@{}lr@{}lr@{}l}
\specialrule{0.94pt}{0.4ex}{0.65ex}
Pre-leaf degree & \multicolumn{2}{c}{25} & \multicolumn{2}{c}{100} & \multicolumn{2}{c|}{400} 
& \multicolumn{2}{c}{25} & \multicolumn{2}{c}{100} & \multicolumn{2}{c}{400} \\
\specialrule{0.94pt}{0.4ex}{0.65ex}
& \multicolumn{6}{c|}{$p@1$ [\%]}
& \multicolumn{6}{c}{$T/N_{\textrm{test}}$ [ms]} \\
\midrule
\wikilshtc & $61.96$ & $\pm 0.03$ & $62.78$ & $\pm 0.03$ & \boldmath$63.19$ & \boldmath$\pm 0.03$ & \boldmath$0.51$ & \boldmath$\pm 0.03$ & $0.86$ & $\pm 0.06$ & $1.78$ & $\pm 0.03$ \\
\wikipedia & $66.01$ & $\pm 0.10$ & $66.77$ & $\pm 0.08$ & \boldmath$66.90$ & \boldmath$\pm 0.06$ & \boldmath$2.29$ & \boldmath$\pm 0.03$ & $2.86$ & $\pm 0.07$ & $6.14$ & $\pm 0.11$ \\
\amazon & $42.93$ & $\pm 0.02$ & \boldmath$43.31$ & \boldmath$\pm 0.03$ & $43.25$ & $\pm 0.04$ & \boldmath$1.12$ & \boldmath$\pm 0.06$ & $1.32$ & $\pm 0.08$ & $2.43$ & $\pm 0.08$ \\
\amazonlarge & $45.84$ & $\pm 0.02$ & $46.23$ & $\pm 0.01$ & \boldmath$46.72$ & \boldmath$\pm 0.01$ & \boldmath$1.27$ & \boldmath$\pm 0.10$ & $1.96$ & $\pm 0.05$ & $5.61$ & $\pm 0.23$ \\

\specialrule{0.94pt}{0.4ex}{0.65ex}
& \multicolumn{6}{c|}{$T_{\textrm{train}}$ [h]}
& \multicolumn{6}{c}{$M_{\textrm{size}}$ [GB]} \\
\midrule
\wikilshtc & \boldmath$1.54$ & \boldmath$\pm 0.03$ & $1.60$ & $\pm 0.06$ & $2.28$ & $\pm 0.10$ & $1.20$ & $\pm 0.00$ & $0.97$ & $\pm 0.00$ & \boldmath$0.84$ & \boldmath$\pm 0.00$ \\
\wikipedia & \boldmath$7.41$ & \boldmath$\pm 0.14$ & $9.48$ & $\pm 0.33$ & $19.32$ & $\pm 0.31$ & $2.23$ & $\pm 0.00$ & $1.78$ & $\pm 0.00$ & \boldmath$1.49$ & \boldmath$\pm 0.00$ \\
\amazon & \boldmath$0.39$ & \boldmath$\pm 0.01$ & $0.40$ & $\pm 0.01$ & $0.68$ & $\pm 0.02$ & $0.80$ & $\pm 0.00$ & $0.63$ & $\pm 0.00$ & \boldmath$0.51$ & \boldmath$\pm 0.00$ \\
\amazonlarge & \boldmath$4.80$ & \boldmath$\pm 0.29$ & $5.44$ & $\pm 0.13$ & $12.45$ & $\pm 0.74$ & $11.80$ & $\pm 0.00$ & $9.86$ & $\pm 0.00$ & \boldmath$8.60$ & \boldmath$\pm 0.00$ \\

\specialrule{0.94pt}{0.4ex}{0.65ex}
\end{tabular}}}
\caption{Results arity equal to 2 and pre-leaf node degree equal to 25, 100, or 400.}
\label{tab:k-means-tree-maxleaves-sh}
\end{subtable}

\caption{\Precatk{1}, average prediction time per example, training time and model size for $k$-means trees if different depths with squared hinge loss.}
\label{tab::k-means-tree-depths-sh}
\end{table}

\vskip 0.2in
\bibliography{references_jmlr}

\begin{thebibliography}{54}
\providecommand{\natexlab}[1]{#1}
\providecommand{\url}[1]{\texttt{#1}}
\expandafter\ifx\csname urlstyle\endcsname\relax
  \providecommand{\doi}[1]{doi: #1}\else
  \providecommand{\doi}{doi: \begingroup \urlstyle{rm}\Url}\fi

\bibitem[Agarwal(2014)]{Agarwal_2014}
S.~Agarwal.
\newblock Surrogate regret bounds for bipartite ranking via strongly proper
  losses.
\newblock \emph{Journal of Machine Learning Research}, 15:\penalty0 1653--1674,
  2014.

\bibitem[Agrawal et~al.(2013)Agrawal, Gupta, Prabhu, and
  Varma]{Agrawal_et_al_2013}
R.~Agrawal, A.~Gupta, Y.~Prabhu, and M.~Varma.
\newblock Multi-label learning with millions of labels: Recommending advertiser
  bid phrases for web pages.
\newblock In \emph{Proceedings of the 22nd International Conference on World
  Wide Web}, page 13–24, New York, NY, USA, 2013. Association for Computing
  Machinery.

\bibitem[Babbar and Sch\"{o}lkopf(2017)]{Babbar_Scholkopf_2017}
R.~Babbar and B.~Sch\"{o}lkopf.
\newblock Dismec: Distributed sparse machines for extreme multi-label
  classification.
\newblock In \emph{Proceedings of the Tenth ACM International Conference on Web
  Search and Data Mining}, page 721–729, New York, NY, USA, 2017. Association
  for Computing Machinery.

\bibitem[Bengio et~al.(2010)Bengio, Weston, and Grangier]{Bengio_et_al_2010}
S.~Bengio, J.~Weston, and D.~Grangier.
\newblock Label embedding trees for large multi-class tasks.
\newblock In \emph{Advances in Neural Information Processing Systems 23}, pages
  163--171. Curran Associates, Inc., 2010.

\bibitem[Beygelzimer et~al.(2009{\natexlab{a}})Beygelzimer, Langford, Lifshits,
  Sorkin, and Strehl]{Beygelzimer_et_al_2009b}
A.~Beygelzimer, J.~Langford, Y.~Lifshits, G.~Sorkin, and A.~Strehl.
\newblock Conditional probability tree estimation analysis and algorithms.
\newblock In \emph{Proceedings of the Twenty-Fifth Conference on Uncertainty in
  Artificial Intelligence}, page 51–58, Arlington, Virginia, USA,
  2009{\natexlab{a}}. AUAI Press.

\bibitem[Beygelzimer et~al.(2009{\natexlab{b}})Beygelzimer, Langford, and
  Ravikumar]{Beygelzimer_et_al_2009a}
A.~Beygelzimer, J.~Langford, and P.~Ravikumar.
\newblock Error-correcting tournaments.
\newblock In \emph{Proceedings of the 20th International Conference on
  Algorithmic Learning Theory}, page 247–262, Berlin, Heidelberg,
  2009{\natexlab{b}}. Springer-Verlag.

\bibitem[Beygelzimer et~al.(2016)Beygelzimer, Daumé, Langford, and
  Mineiro]{learning_reductions}
A.~Beygelzimer, H.~Daumé, J.~Langford, and P.~Mineiro.
\newblock Learning reductions that really work.
\newblock \emph{Proceedings of the IEEE}, 104:\penalty0 136--147, 2016.

\bibitem[Bhatia et~al.(2016)Bhatia, Dahiya, Jain, Mittal, Prabhu, and
  Varma]{xml-repo}
K.~Bhatia, K.~Dahiya, H.~Jain, A.~Mittal, Y.~Prabhu, and M.~Varma.
\newblock The extreme classification repository: Multi-label datasets and code,
  2016.
\newblock URL \url{http://manikvarma.org/downloads/XC/XMLRepository.html}.

\bibitem[Busa-Fekete et~al.(2015)Busa-Fekete, Sz\"{o}r\'{e}nyi, Dembczynski,
  and H\"{u}llermeier]{Busa-Fekete_et_al_2015}
R.~Busa-Fekete, B.~Sz\"{o}r\'{e}nyi, K.~Dembczynski, and E.~H\"{u}llermeier.
\newblock Online f-measure optimization.
\newblock In \emph{Advances in Neural Information Processing Systems 28}, pages
  595--603. Curran Associates, Inc., 2015.

\bibitem[Busa{-}Fekete et~al.(2019)Busa{-}Fekete, Dembczynski, Golovnev,
  Jasinska, Kuznetsov, Sviridenko, and Xu]{Busa-Fekete_et_al_2019}
R.~Busa{-}Fekete, K.~Dembczynski, A.~Golovnev, K.~Jasinska, M.~Kuznetsov,
  M.~Sviridenko, and C.~Xu.
\newblock On the computational complexity of the probabilistic label tree
  algorithms.
\newblock \emph{CoRR}, abs/1906.00294, 2019.

\bibitem[Celis et~al.(1985)Celis, Larson, and Munro]{Celis_et_al_1985}
P.~Celis, P.-A. Larson, and J.~I. Munro.
\newblock Robin hood hashing.
\newblock In \emph{Proceedings of the 26th Annual Symposium on Foundations of
  Computer Science}, page 281–288, USA, 1985. IEEE Computer Society.

\bibitem[De~Boom et~al.(2016)De~Boom, Van~Canneyt, Demeester, and
  Dhoedt]{De_Boom_et_al_2016}
C.~De~Boom, S.~Van~Canneyt, T.~Demeester, and B.~Dhoedt.
\newblock Representation learning for very short texts using weighted word
  embedding aggregation.
\newblock \emph{Pattern Recognition Letters}, 80:\penalty0 150–156, 2016.

\bibitem[Dekel and Shamir(2010)]{Dekel_Shamir_2010}
O.~Dekel and O.~Shamir.
\newblock Multiclass-multilabel classification with more classes than examples.
\newblock In \emph{Proceedings of the Thirteenth International Conference on
  Artificial Intelligence and Statistics}, pages 137--144, Chia Laguna Resort,
  Sardinia, Italy, 2010. PMLR.

\bibitem[Dembczy\'nski et~al.(2010)Dembczy\'nski, Cheng, and
  H\"{u}llermeier]{Dembczynski_et_al_2010c}
K.~Dembczy\'nski, W.~Cheng, and E.~H\"{u}llermeier.
\newblock Bayes optimal multilabel classification via probabilistic classifier
  chains.
\newblock In \emph{Proceedings of the 27th International Conference on
  International Conference on Machine Learning}, page 279–286, Madison, WI,
  USA, 2010. Omnipress.

\bibitem[Dembczy\'nski et~al.(2012)Dembczy\'nski, Kot\l{}owski, and
  H\"{u}llermeier]{Dembczynski_et_al_2012b}
K.~Dembczy\'nski, W.~Kot\l{}owski, and E.~H\"{u}llermeier.
\newblock Consistent multilabel ranking through univariate loss minimization.
\newblock In \emph{Proceedings of the 29th International Coference on
  International Conference on Machine Learning}, page 1347–1354, Madison, WI,
  USA, 2012. Omnipress.

\bibitem[Dembczyński et~al.(2016)Dembczyński, Kotłowski, Waegeman,
  Busa-Fekete, and Hüllermeier]{Dembczynski_et_al_2016}
K.~Dembczyński, W.~Kotłowski, W.~Waegeman, R.~Busa-Fekete, and
  E.~Hüllermeier.
\newblock Consistency of probabilistic classifier trees.
\newblock In \emph{ECML PKDD 2016 : machine learning and knowledge discovery in
  databases}, pages 511--526. Springer, 2016.

\bibitem[Deng et~al.(2011)Deng, Satheesh, Berg, and Li]{Deng_et_al_2011}
J.~Deng, S.~Satheesh, A.~C. Berg, and F.~Li.
\newblock Fast and balanced: Efficient label tree learning for large scale
  object recognition.
\newblock In \emph{Advances in Neural Information Processing Systems 24}, pages
  567--575. Curran Associates, Inc., 2011.

\bibitem[Duchi and Singer(2009)]{fobos}
J.~Duchi and Y.~Singer.
\newblock Efficient online and batch learning using forward backward splitting.
\newblock \emph{Journal of Machine Learning Research}, 10:\penalty0 2899--2934,
  2009.

\bibitem[Duchi et~al.(2011)Duchi, Hazan, and Singer]{adagrad}
J.~Duchi, E.~Hazan, and Y.~Singer.
\newblock Adaptive subgradient methods for online learning and stochastic
  optimization.
\newblock \emph{Journal of Machine Learning Research}, 12:\penalty0 2121--2159,
  2011.

\bibitem[Fan et~al.(2008)Fan, Chang, Hsieh, Wang, and Lin]{LIBLINEAR}
R.-E. Fan, K.-W. Chang, C.-J. Hsieh, X.-R. Wang, and C.-J. Lin.
\newblock Liblinear: A library for large linear classification.
\newblock \emph{Journal of Machine Learning Research}, 9:\penalty0 1871–1874,
  2008.

\bibitem[Fox(1997)]{Fox_1997}
J.~Fox.
\newblock \emph{Applied regression analysis, linear models, and related
  methods}.
\newblock Sage, 1997.

\bibitem[Grave et~al.(2017)Grave, Joulin, Ciss\'{e}, Facebook AI~Research, and
  J\'{e}gou]{Grave_et_al_2017}
E.~Grave, A.~Joulin, M.~Ciss\'{e}, D.~G. Facebook AI~Research, and
  H.~J\'{e}gou.
\newblock Efficient softmax approximation for gpus.
\newblock In \emph{Proceedings of the 34th International Conference on Machine
  Learning - volume 70}, page 1302–1310. JMLR.org, 2017.

\bibitem[Hochreiter and Schmidhuber(1997)]{Hochreiter_Schmidhuber_1997}
S.~Hochreiter and J.~Schmidhuber.
\newblock Long short-term memory.
\newblock \emph{Neural Computation}, 9:\penalty0 1735–1780, 1997.

\bibitem[Jain et~al.(2016)Jain, Prabhu, and Varma]{Jain_et_al_2016}
H.~Jain, Y.~Prabhu, and M.~Varma.
\newblock Extreme multi-label loss functions for recommendation, tagging,
  ranking \& other missing label applications.
\newblock In \emph{Proceedings of the 22nd ACM SIGKDD International Conference
  on Knowledge Discovery and Data Mining}, page 935–944, New York, NY, USA,
  2016. Association for Computing Machinery.

\bibitem[Jasinska(2018)]{Jasinska_2018}
K.~Jasinska.
\newblock Efficient exact batch prediction for label trees.
\newblock In \emph{Extreme Multilabel Classification for Social Media at The
  Web Conference}, 2018.

\bibitem[Jasinska et~al.(2016)Jasinska, Dembczynski, Busa-Fekete, Pfannschmidt,
  Klerx, and Hullermeier]{Jasinska_et_al_2016}
K.~Jasinska, K.~Dembczynski, R.~Busa-Fekete, K.~Pfannschmidt, T.~Klerx, and
  E.~Hullermeier.
\newblock Extreme f-measure maximization using sparse probability estimates.
\newblock In \emph{Proceedings of The 33rd International Conference on Machine
  Learning}, pages 1435--1444, New York, USA, 2016. PMLR.

\bibitem[Jasinska-Kobus et~al.(2020)Jasinska-Kobus, Wydmuch, Thiruvenkatachari,
  and Dembczy\'nski]{Jasinska_et_al_2020}
K.~Jasinska-Kobus, M.~Wydmuch, D.~Thiruvenkatachari, and K.~Dembczy\'nski.
\newblock Online probabilistic label trees.
\newblock \emph{CoRR}, abs/2007.04451, 2020.

\bibitem[Jernite et~al.(2017)Jernite, Choromanska, and
  Sontag]{Jernite_et_al_2017}
Y.~Jernite, A.~Choromanska, and D.~Sontag.
\newblock Simultaneous learning of trees and representations for extreme
  classification and density estimation.
\newblock In \emph{Proceedings of the 34th International Conference on Machine
  Learning - volume 70}, page 1665–1674. JMLR.org, 2017.

\bibitem[Joulin et~al.(2017)Joulin, Grave, Bojanowski, and
  Mikolov]{Joulin_et_al_2016}
A.~Joulin, E.~Grave, P.~Bojanowski, and T.~Mikolov.
\newblock Bag of tricks for efficient text classification.
\newblock In \emph{Proceedings of the 15th Conference of the European Chapter
  of the Association for Computational Linguistics: volume 2, Short Papers},
  pages 427--431, Valencia, Spain, 2017. Association for Computational
  Linguistics.

\bibitem[Khandagale et~al.(2019)Khandagale, Xiao, and
  Babbar]{Khandagale_et_al_2019}
S.~Khandagale, H.~Xiao, and R.~Babbar.
\newblock Bonsai - diverse and shallow trees for extreme multi-label
  classification.
\newblock \emph{CoRR}, abs/1904.08249, 2019.

\bibitem[Kot{\l}owski and Dembczy{\'{n}}ski(2017)]{Kotlowski_Dembczynski_2017}
W.~Kot{\l}owski and K.~Dembczy{\'{n}}ski.
\newblock Surrogate regret bounds for generalized classification performance
  metrics.
\newblock \emph{Machine Learning}, 10:\penalty0 549--572, 2017.

\bibitem[Kotlowski and Dembczyński(2016)]{Kotlowski_Dembczynski_2016}
W.~Kotlowski and K.~Dembczyński.
\newblock Surrogate regret bounds for generalized classification performance
  metrics.
\newblock In \emph{Asian Conference on Machine Learning}, pages 301--316, Hong
  Kong, 2016. PMLR.

\bibitem[Koyejo et~al.(2015)Koyejo, Natarajan, Ravikumar, and
  Dhillon]{Koyejo_et_al_2015}
O.~O. Koyejo, N.~Natarajan, P.~K. Ravikumar, and I.~S. Dhillon.
\newblock Consistent multilabel classification.
\newblock In \emph{Advances in Neural Information Processing Systems 28}, pages
  3321--3329. Curran Associates, Inc., 2015.

\bibitem[Kurzynski(1988)]{Kurzynski_1988}
M.~Kurzynski.
\newblock On the multistage bayes classifier.
\newblock \emph{Pattern Recognition}, 21:\penalty0 355–365, 1988.

\bibitem[Langford et~al.(2007)Langford, Strehl, and Li]{vw}
J.~Langford, A.~Strehl, and L.~Li.
\newblock Vowpal wabbit, 2007.
\newblock URL \url{http://hunch.net/~vw/}.

\bibitem[Li and Lin(2014)]{Li_Lin_2014}
C.-L. Li and H.-T. Lin.
\newblock Condensed filter tree for cost-sensitive multi-label classification.
\newblock In \emph{Proceedings of the 31st International Conference on Machine
  Learning}, pages 423--431, Bejing, China, 2014. PMLR.

\bibitem[Liu et~al.(2017)Liu, Chang, Wu, and Yang]{Liu_et_al_2017}
J.~Liu, W.-C. Chang, Y.~Wu, and Y.~Yang.
\newblock Deep learning for extreme multi-label text classification.
\newblock In \emph{Proceedings of the 40th International ACM SIGIR Conference
  on Research and Development in Information Retrieval}, page 115–124, New
  York, NY, USA, 2017. Association for Computing Machinery.

\bibitem[Menon et~al.(2019)Menon, Rawat, Reddi, and Kumar]{Menon_et_al_2019}
A.~K. Menon, A.~S. Rawat, S.~Reddi, and S.~Kumar.
\newblock Multilabel reductions: what is my loss optimising?
\newblock In \emph{Advances in Neural Information Processing Systems 32}, pages
  10600--10611. Curran Associates, Inc., 2019.

\bibitem[Mikolov et~al.(2013)Mikolov, Sutskever, Chen, Corrado, and
  Dean]{Mikolov_et_al_2013}
T.~Mikolov, I.~Sutskever, K.~Chen, G.~S. Corrado, and J.~Dean.
\newblock Distributed representations of words and phrases and their
  compositionality.
\newblock In \emph{Advances in Neural Information Processing Systems 26}, pages
  3111--3119. Curran Associates, Inc., 2013.

\bibitem[Morin and Bengio(2005)]{Morin_Bengio_2005}
F.~Morin and Y.~Bengio.
\newblock Hierarchical probabilistic neural network language model.
\newblock In \emph{Proceedings of the Tenth International Workshop on
  Artificial Intelligence and Statistics}, pages 246--252. Society for
  Artificial Intelligence and Statistics, 2005.

\bibitem[Pennington et~al.(2014)Pennington, Socher, and
  Manning]{Pennigton_et_al_2014}
J.~Pennington, R.~Socher, and C.~Manning.
\newblock Glove: Global vectors for word representation.
\newblock In \emph{Proceedings of the 2014 Conference on Empirical Methods in
  Natural Language Processing ({EMNLP})}, pages 1532--1543, Doha, Qatar, 2014.
  Association for Computational Linguistics.

\bibitem[Prabhu and Varma(2014)]{Prabhu_Varma_2014}
Y.~Prabhu and M.~Varma.
\newblock Fastxml: A fast, accurate and stable tree-classifier for extreme
  multi-label learning.
\newblock In \emph{Proceedings of the 20th ACM SIGKDD International Conference
  on Knowledge Discovery and Data Mining}, page 263–272, New York, NY, USA,
  2014. Association for Computing Machinery.

\bibitem[Prabhu et~al.(2018)Prabhu, Kag, Harsola, Agrawal, and
  Varma]{Prabhu_et_al_2018}
Y.~Prabhu, A.~Kag, S.~Harsola, R.~Agrawal, and M.~Varma.
\newblock Parabel: Partitioned label trees for extreme classification with
  application to dynamic search advertising.
\newblock In \emph{Proceedings of the 2018 World Wide Web Conference}, page
  993–1002, Republic and Canton of Geneva, CHE, 2018. International World
  Wide Web Conferences Steering Committee.

\bibitem[Puthiya~Parambath et~al.(2014)Puthiya~Parambath, Usunier, and
  Grandvalet]{Parambath_et_al_2014}
S.~Puthiya~Parambath, N.~Usunier, and Y.~Grandvalet.
\newblock Optimizing f-measures by cost-sensitive classification.
\newblock In \emph{Advances in Neural Information Processing Systems 27}, pages
  2123--2131. Curran Associates, Inc., 2014.

\bibitem[Russell and Norvig(2009)]{Russell_Norvig_2009}
S.~Russell and P.~Norvig.
\newblock \emph{Artificial Intelligence: A Modern Approach}.
\newblock Prentice Hall Press, Upper Saddle River, New Jersey, USA, 2009.

\bibitem[Tagami(2017)]{Tagami_2017}
Y.~Tagami.
\newblock Annexml: Approximate nearest neighbor search for extreme multi-label
  classification.
\newblock In \emph{Proceedings of the 23rd ACM SIGKDD International Conference
  on Knowledge Discovery and Data Mining}, page 455–464, New York, NY, USA,
  2017. Association for Computing Machinery.

\bibitem[Tsoumakas et~al.(2008)Tsoumakas, Katakis, and
  Vlahavas]{Tsoumakas_et_al_2008}
G.~Tsoumakas, I.~Katakis, and I.~Vlahavas.
\newblock Effective and efficient multilabel classification in domains with
  large number of labels.
\newblock In \emph{Proceedings of ECML/PKDD 2008 Workshop on Mining
  Multidimensional Data (MMD’08)}, 2008.

\bibitem[Weinberger et~al.(2009)Weinberger, Dasgupta, Langford, Smola, and
  Attenberg]{Weinberger_et_al_2009}
K.~Weinberger, A.~Dasgupta, J.~Langford, A.~Smola, and J.~Attenberg.
\newblock Feature hashing for large scale multitask learning.
\newblock In \emph{Proceedings of the 26th Annual International Conference on
  Machine Learning}, page 1113–1120, New York, NY, USA, 2009. Association for
  Computing Machinery.

\bibitem[Weston et~al.(2013)Weston, Makadia, and Yee]{Weston_et_al_2013}
J.~Weston, A.~Makadia, and H.~Yee.
\newblock Label partitioning for sublinear ranking.
\newblock In \emph{Proceedings of the 30th International Conference on Machine
  Learning}, pages 181--189, Atlanta, Georgia, USA, 2013. PMLR.

\bibitem[Wydmuch et~al.(2018)Wydmuch, Jasinska, Kuznetsov, Busa-Fekete, and
  Dembczynski]{Wydmuch_et_al_2018}
M.~Wydmuch, K.~Jasinska, M.~Kuznetsov, R.~Busa-Fekete, and K.~Dembczynski.
\newblock A no-regret generalization of hierarchical softmax to extreme
  multi-label classification.
\newblock In \emph{Advances in Neural Information Processing Systems 31}, pages
  6355--6366. Curran Associates, Inc., 2018.

\bibitem[Yen et~al.(2017)Yen, Huang, Dai, Ravikumar, Dhillon, and
  Xing]{Yen_et_al_2017}
I.~E. Yen, X.~Huang, W.~Dai, P.~Ravikumar, I.~Dhillon, and E.~Xing.
\newblock Ppdsparse: A parallel primal-dual sparse method for extreme
  classification.
\newblock In \emph{Proceedings of the 23rd ACM SIGKDD International Conference
  on Knowledge Discovery and Data Mining}, page 545–553. Association for
  Computing Machinery, 2017.

\bibitem[Yen et~al.(2016)Yen, Huang, Ravikumar, Zhong, and
  Dhillon]{Yen_et_al_2016}
I.~E.-H. Yen, X.~Huang, P.~Ravikumar, K.~Zhong, and I.~Dhillon.
\newblock Pd-sparse : A primal and dual sparse approach to extreme multiclass
  and multilabel classification.
\newblock In \emph{Proceedings of The 33rd International Conference on Machine
  Learning}, pages 3069--3077, New York, New York, USA, 2016. PMLR.

\bibitem[You et~al.(2019)You, Zhang, Wang, Dai, Mamitsuka, and
  Zhu]{You_et_al_2019}
R.~You, Z.~Zhang, Z.~Wang, S.~Dai, H.~Mamitsuka, and S.~Zhu.
\newblock Attentionxml: Label tree-based attention-aware deep model for
  high-performance extreme multi-label text classification.
\newblock In \emph{Advances in Neural Information Processing Systems 32}, pages
  5820--5830. Curran Associates, Inc., 2019.

\bibitem[Zhuo et~al.(2020)Zhuo, Xu, Dai, Zhu, Li, Xu, and Gai]{Zhuo_et_al_2020}
J.~Zhuo, Z.~Xu, W.~Dai, H.~Zhu, H.~Li, J.~Xu, and K.~Gai.
\newblock Learning optimal tree models under beam search.
\newblock In \emph{Proceedings of the 37th International Conference on Machine
  Learning}, Vienna, Austria, 2020. PMLR.

\end{thebibliography}
\bibliographystyle{abbrvnat}

\end{document}